\documentclass[12pt]{article}
\usepackage{enumerate}
\usepackage{natbib}
\usepackage{multibib}

\newcites{Supp}{Supplementary References}
\usepackage{caption}
\usepackage{subcaption}
\usepackage{url} 

\usepackage{microtype}
\usepackage{graphicx}
\usepackage{multirow}
\usepackage{booktabs} 

\usepackage{lineno}
\usepackage{algorithm2e}
\usepackage{float}
\usepackage{algpseudocode}
\RestyleAlgo{ruled}

\usepackage{enumitem}
\usepackage{lineno}
\usepackage{amsmath}
\usepackage{amssymb}
\usepackage{svg}
\usepackage{amsthm}
\usepackage{yhmath}
\usepackage{mathtools}
\DeclarePairedDelimiter\ceil{\lceil}{\rceil}

\theoremstyle{definition}

\newtheorem{definition}{Definition}
\newtheorem{proposition}{Proposition}
\newtheorem*{proposition*}{Proposition}
\newtheorem{lemma}{Lemma}
\newtheorem*{lemma*}{Lemma}
\newtheorem{remark}{Remark}

\newtheorem{theorem}{Theorem}
\newtheorem*{theorem*}{Theorem}
\newtheorem{corollary}{Corollary}
\newtheorem*{corollary*}{Corollary}

\newtheorem*{prf*}{Proof}

\usepackage{hyperref}

\newcommand{\bw}{{\bf w}}
\newcommand{\bX}{{\bf X}}
\newcommand{\bv}{{\bf v}}
\newcommand{\bW}{{\bf W}}
\newcommand{\bV}{{\bf V}}
\newcommand{\pr}{{\partial}}
\newcommand{\var}{\text{Var}}
\newcommand{\cov}{\text{Cov}}
\newcommand{\expect}{{\bf E}}
\newcommand{\bE}{{\bf E}}
\newcommand{\Qal}{Q^{\alpha}}

\newcommand{\zetaeps}{{\zeta^{\epsilon}_t}}

\newcommand{\Qalf}{Q^{\alpha_1}}
\newcommand{\Qals}{Q^{\alpha_2}}
\newcommand{\Zal}{Z^{\alpha}}

\newcommand{\Lhat}{\hat{L}}

\newcommand{\calS}{{\cal S}_t}

\newcommand{\Dt}{D^{(t)}}
\newcommand{\Dto}{D^{(t+1)}}
\newcommand{\Gto}{G^{(t+1)}}
\newcommand{\Lti}{L_i^{(t)}}

\newcommand{\Gt}{G^{(t)}}

\newcommand{\VW}{V_{\text{W}}}
\newcommand{\VD}{V_{\text{D}}}

\newcommand{\diag}{\text{diag}}

\pdfminorversion=4
\newcommand{\blind}{0}

\addtolength{\oddsidemargin}{-.5in}%
\addtolength{\evensidemargin}{-1in}%
\addtolength{\textwidth}{1in}%
\addtolength{\textheight}{1.7in}%
\addtolength{\topmargin}{-1in}%

\date{}

\begin{document}

\def\spacingset#1{\renewcommand{\baselinestretch}%
{#1}\small\normalsize} \spacingset{1}


\if0\blind
{
   \title{\bf Parallelly Tempered Generative Adversarial Nets: Toward Stabilized Gradients}
  \author{Jinwon Sohn \hspace{.2cm}\\
    Booth School of Business, University of Chicago\\
    and \\
    Qifan Song \\
    Department of Statistics, Purdue University}
  \maketitle
} \fi

\if1\blind
{
  \bigskip
  \bigskip
  \bigskip
  \begin{center}
    {\LARGE\bf Parallelly Tempered Generative Adversarial Nets: Toward Stabilized Gradients}
\end{center}
  \medskip
} \fi

\bigskip
\begin{abstract}

A generative adversarial network (GAN) has been a representative backbone model in generative artificial intelligence (AI) because of its powerful performance in capturing intricate data-generating processes. However, the GAN training is well-known for its notorious training instability, usually characterized by the occurrence of mode collapse. Through the lens of gradients' variance, this work particularly analyzes the training instability and inefficiency in the presence of mode collapse by linking it to multimodality in the target distribution. To ease the raised training issues from severe multimodality, we introduce a novel GAN training framework that leverages a series of tempered distributions produced via convex interpolation. With our newly developed GAN objective function, the generator can learn all the tempered distributions simultaneously, conceptually resonating with the parallel tempering in statistics. Our simulation studies demonstrate the superiority of our approach over existing popular training strategies in both image and tabular data synthesis. We theoretically analyze that such significant improvement can arise from reducing the variance of gradient estimates by using the tempered distributions. Finally, we further develop a variant of the proposed framework aimed at generating fair synthetic data which is one of the growing interests in the field of trustworthy AI.
\thispagestyle{empty}
\end{abstract}

\noindent%
{\it Keywords:}  Generative Adversarial Network, Parallel Tempering, Fair Data Generation, Variance Reduction
\vfill

\newpage
\pagenumbering{arabic}

\spacingset{1.9} 
\section{Introduction}
\label{introduction}

The generative adversarial network (GAN) framework has emerged as a powerful and flexible tool for synthetic data generation in various domains. The GAN framework consists of two competing networks $D\in {\cal D}$ (i.e., the critic) and $G\in{\cal G}$ (i.e., the generator) where $\cal D$ and $\cal G$ have neural-net families. Let $X\in {\cal X}= \mathbb{R}^{d_X}$ and $Z\in {\cal Z}= \mathbb{R}^{d_Z}$ be random variables with density $p_X$ and $p_Z$ respectively. We denote by $G:\mathbb{R}^{d_Z} \rightarrow \mathbb{R}^{d_X}$ the generator that aims to transforms the reference variable $Z$ so that $G(Z)\overset{d}{=} X$. The essence of learning (or estimating) $G$ starts from approximating a divergence $d(p_X,p_{G(Z)})$ between two probability distributions by ${\cal D}$, i.e., $d_{\cal D}(p_X,p_{G(Z)})$ between two probability distributions characterized by ${\cal D}$. Then it finds the optimal $G$ that achieves $d_{\cal D}(p_X,p_{G(Z)})=0$. In accordance with the types of ${\cal D}$ and specification of $d_{\cal D}$, diverse probability metrics, such as the Jensen-Shannon divergence \citep[JSD,][]{good:etal:14}, the 1-Wasserstein distance \citep{arjo:etal:17}, $f$-divergence \citep{nowo:etal:16}, and so forth, are available. 

Despite its great potential as a high-quality data synthesizer, GAN has been known to be brutally unstable and easily fall into non-convergence because of its min-max (or adversarial) optimization structure. To resolve this training issue, a plethora of research has discovered better training tricks or network architectures, mostly based on empirical findings and specifically for image data synthesis \citep{jabb:etal:21}. From a more fundamental perspective, \cite{mesc:etal:18} suggested penalizing the average gradient norm of $D$, so that there would be no power to break the equilibrium between $D$ and $G$. \cite{zhou:etal:19} discussed that unstable training may arise from the flow of meaningless gradients from $D$ to $G$, which can be handled by enforcing ${\cal D}$ to be Lipschitz. To see more relevant studies, refer to \cite{roth:etal:17,gulr:etal:17}.

Tempering (or smoothing) $p_X$ has also been studied as an effective strategy to stabilize the GAN training mainly for the information-based divergence.  \cite{arjo:bott:17} theoretically justified that the JSD-based GAN training with some random noise being annealed $\epsilon \rightarrow 0$ during training, i.e., $\min_G d_{\cal D}(p_{X+\epsilon},p_{G(Z)+\epsilon})$, can improve optimization. They show that expanding supports of $X$ and $G(Z)$ by adding $\epsilon$ can remedy the gradient instability induced by the support mismatch between $X$ and $G(Z)$, thus enabling $D$ and $G$ to yield meaningful gradients. Based on this study, \cite{sajj:etal:18} attempted to anneal a functional noise of $X$ created by an auxiliary neural network during the GAN training with JSD. \cite{jenn:fava:19} approached $\min_G d_{\cal D}(p_{X+\epsilon},p_{G(Z)+\epsilon})$ with JSD where a noise $\epsilon$ is learned by an extra network instead of following an annealing schedule. 

While the existing studies benefit from resolving the unstable optimization issue associated with support mismatch when using JSD, this work discovers a stable training mechanism by tempering a multimodal $p_X$ via a convex interpolation scheme in GAN with the scaled 1-Wasserstein distance. Based on this novel discovery of smoothing $p_X$ in GAN training, we eventually devise an efficient and stable GAN framework by learning multiple levels of tempered distributions simultaneously, so it is called parallelly tempered generative adversarial networks (PTGAN). 

Section~\ref{sec:mech} explains the estimation mechanism of the GAN training and discusses a source of training instability, focusing on characterizing the gradients' variance of $D$. To our knowledge, this is the first work to theoretically analyze the GAN mechanism through the lens of the gradients' variance of $D$. Section~\ref{sec:tempered_dist} defines the convex interpolation that creates a tempered density of $p_X$ and analyzes how it alleviates the multimodality of $p_X$. Section~\ref{sec:parallel} specifically designs the proposed framework that incorporates the interpolation scheme without an annealing schedule. While our approach inherently possesses a bias-variance trade-off in updating the critic, our model achieves the nearly optimal minimax rate. Section~\ref{sec:simul} verifies that PTGAN significantly outperforms popular competing models in various benchmark data sets. Moreover, we apply PTGAN to a fair data-generation task to which trustworthy AI has paid great attention recently. In Section~\ref{sec:discussion}, we discuss the importance of GAN training and the merits of this study despite the recent development and success of diffusion-based generative modeling \citep{song:etal:20,ho:etal:20}.

\section{Estimation Mechanism of GAN}
\label{sec:mech}
\subsection{Neural distance}

To define a target distance metric $d_{\cal D}$, let's consider families of fully connected neural networks: ${\cal D}= \{D(x) = w^{\top}_d \kappa_{d-1}(W_{d-1}\kappa_{d-2}(\cdots W_1 x)) :  \bw=(W_1,\dots, W_{d-1}, w_d)\in \bW\}$ and ${\cal G}=\{G(z) = v^{\top}_g \psi_{g-1}(V_{g-1}\psi_{g-2}(\cdots V_1 z)) : \bv=(V_1,\dots, V_{g-1}, v_g)\in \bV\}$, 
where $w_d\in \mathbb{R}^{N^D_d\times 1}$, $v_g\in \mathbb{R}^{N^G_g\times d_X}$, $W_i \in \mathbb{R}^{N^D_{i+1}\times N^D_i}$ for $i=1,\dots,d-1$, $V_j \in \mathbb{R}^{N^G_{j+1}\times N^G_j}$ for $j=1,\dots,g-1$, $N^D_1 = d_X$,  and $N^G_1 = d_Z$; $\kappa_i$ and $\psi_j$ are element-wise non-linear activation functions. For simplicity, the above representations of ${\cal D}$ and ${\cal G}$ don't explicitly include bias nodes, as bias nodes can be induced by augmenting the network input by a constant. The neural distance \citep{aror:etal:17} is defined as follows.
\begin{definition}
\label{def:nd}
    Let $X\sim p_X$ and $Z\sim p_Z$. For a given monotone concave function $\phi:\mathbb{R} \rightarrow \mathbb{R}$, $G\in {\cal G}$, and a network class ${\cal D}$, $d_{\cal D}(p_X,p_{G(Z)})=\sup_{D \in {\cal D}} \{\bE [\phi(D(X))] + \bE [\phi(1-D(G(Z)))]\}$ is called a neural distance between $X$ and $G(Z)$.
\end{definition}

The specification of $\phi$ determines the type of discrepancy. This work considers $\phi(x)=x$, leading to the scaled 1-Wasserstein distance which is approximated by 
\begin{align}
    \label{eq:nd}
    d_{\cal D}(p_X,p_{G(Z)})=\sup_{D \in {\cal D}}  \{\bE[D(X)] - \bE[D(G(Z))]\},
\end{align}
via the Kantorovich-Rubinstein duality. Note that for $d_{\cal D}$ to approximate JSD, one specifies $\phi(x)=\log(x)$ and the sigmoid output of $D$. For \eqref{eq:nd}, the optimal critic function $D^*$ is the maximizer of $L(D,G)=\bE[D(X)]-\bE[D(G(Z))]$ such that $d_{\cal D}(p_{X},p_{G(Z)})=\expect [D^*(X)]-\expect [D^*(G(Z))]$, and the optimal generator $G^*$ is the minimizer of $d_{\cal D}(p_X,p_{G(Z)})$. For theoretical analysis in the remaining sections, the following assumptions are made:
\begin{enumerate}[label={(A\arabic*)}]
    \item Bounded parameter: $\bW = \bigotimes_{i=1}^{d-1}\{W_i \in \mathbb{R}^{N^D_{i+1} \times N^D_{i}} : \lVert W_i\rVert_F \leq M_w(i) \} \bigotimes \{w_d \in \mathbb{R}^{N^D_d \times 1} : \lVert w_d \rVert \leq M_w(d)\}$ and $\bV = \bigotimes_{j=1}^{g-1}\{V_j \in \mathbb{R}^{N^G_{j+1} \times N^G_{j}}: \lVert V_j\rVert_F \leq M_v(j) \} \bigotimes \{v_g \in \mathbb{R}^{N^G_g\times d_X}: \lVert v_g \rVert \leq M_v(g)\}$ with constants $M_w(\cdot)$ and $M_v(\cdot)$. Note $\lVert \cdot\rVert_F$ and $\lVert \cdot\rVert$ denote the Frobenius and the Euclidean norm, respectively. 

    \item Lipschitz activation: $\kappa_i$ and  $\psi_j$ are $K_{\kappa}(i)$- and $K_{\psi}(j)$-Lipschitz functions for all $i,j$, i.e., $\lVert \psi_j(x)-\psi_j(y)\rVert \leq K_{\psi}(j)\lVert x-y\rVert $ for any $x,y \in \mathbb{R}$, and also for $\kappa_i$ as well.
    
    \item Bounded support: ${\cal X}\subset\{\lVert x\rVert \leq B_X, \, x\in \mathbb{R}^{d_X}\}$ and ${\cal Z}\subset\{\lVert z\rVert \leq B_Z, \, z\in \mathbb{R}^{d_Z}\}$.
    
\end{enumerate}
These assumptions can be readily satisfied. For (A1), ad hoc training techniques such as weight clipping or weight normalization \citep{miya:etal:18} can be used. For (A2), popular activation functions such as ReLU, Leaky ReLU (lReLU), Tanh, etc., are 1-Lipschitz. Finally, For (A3), it is common to normalize the input in the deep learning literature, e.g., $-1 \leq X \leq 1$ (a.k.a. min-max normalization), and place a uniform distribution to $Z$.  

\subsection{Adversarial estimation}
\label{sec:adver_est}
\subsubsection{Iterative gradient-based estimation from minibatches}
\label{sec:grad_des}

Let $X_1,\dots,X_n$ and $Z_1,\dots,Z_m$ be i.i.d. samples from $p_X$ and $p_Z$, and define $\bX_{1:n} = \{X_1,\dots,X_n\}$. Since $p_X$ is unknown and $D$, $G$ may be non-linear, we rely on iterative gradient-based updates using the empirical estimator $\hat{L}_b(D,G)=\sum_{i=1}^{n_{b}}D(X_i)/n_b-\sum_{j=1}^{m_{b}} D(G(Z_j))/m_b$ of $L$
where $\{X_1,\dots,X_{n_b}\} \subset \bX_{1:n}$ is a minibatch with $\max\{n_b, m_b\} \ll \min\{n, m\}$. To estimate $D^*$ and $G^*$, we alternate gradient ascent/descent:
\begin{align}
\label{grad_desc}
    \bw^{(t+1)}=\bw^{(t)} + \gamma_D \dfrac{\partial}{\partial \bw}\hat{L}_b(D^{(t)},G^{(t)}),\quad 
    \bv^{(t+1)}=\bv^{(t)} - \gamma_G \dfrac{\partial}{\partial \bv}\hat{L}_b(D^{(t+1)},G^{(t)}),    
\end{align}
where $D^{(t)}$ and $G^{(t)}$ (or $\bw^{(t)}$, $\bv^{(t)}$) are $t$th iterates and $\gamma_D$, $\gamma_G$ are learning rates. This paper expresses $D^{(t)}$ and $\bw^{(t)}$, as well as $G^{(t)}$ and $\bv^{(t)}$ interchangeably when causing no confusion in the context. Also, we set $m_b=n_b$ for simplicity. 

\subsubsection{Estimation dynamics}

To describe the estimation behavior, we denote ${\cal S}$ as the high-density support region of $X$, so that the probability density of $p_X$ out of ${\cal S}$ is reasonably small. In the extreme case, we may directly consider that $\cal X$ is a disconnected compact set (with each disconnected component representing a mode with finite support of $p_X$) and $\cal S=\cal X$. To characterize the distribution modes learned by $\Gt$ (i.e., a subset of ${\cal S}$), we define the recovered support of $p_X$ by $\Gt$ as ${\cal S}_t = {\cal S} \cap \{\Gt(z), z\in \cal Z\}$, and $\calS^c={\cal S}\setminus{\cal S}_t$ the remaining support which is missing by $\Gt$. Unless ${\cal S}$ is fully recovered, we say that $\Gt$  is incomplete for $G^*$. For ease of later discussion, we denote $p_{{\cal S}_t}$ and $p_{{\cal S}^c_t}$ as normalized probability density functions of $p_X$ restricted on ${\cal S}_t$ and ${\cal S}^c_t$ respectively, meaning $p_{\calS}$ represents the recovered distribution components of the target $p_X$ by $\Gt$. 

When $\Gt$ has such incomplete support recovery, $\Dt \rightarrow \Dto$ is necessarily updated by either maximizing $\sum_{i=1}^{n_b} \Dt(X_i)/n_b$ or minimizing $\sum_{i=1}^{m_b} \Dt(\Gt(Z_i))/m_b$. This opposite directional optimization encourages $\Dto$ to assign higher values to $X_{i}$ and lower values to $\Gt(Z_i)$, so $\Dto$ appropriately identifies the discrepancy between the support of $\Gt(Z)$ and $\calS^c$ by its value. The desirable $\Dto$ then guides the direction and size of the generator's gradients for the update of $\Gt \rightarrow \Gto$ such that $\Gto$ accounts for more areas in ${\cal S}$. More specifically, for any $z \in {\cal Z}$ in the minibatch, its contribution to the gradient of $\bv^{(t)}$ with respect to $\Lhat_b$ is $\partial \Lhat_b/\partial \bv=-\left[\nabla_{g} \Dto(g)\right]_{g=\Gt(z)} \cdot \partial \Gt(z)/\partial \bv$. The gradient component $\nabla_{g} \Dto(g)$ instructs the update of $\Gt$ so that $\Gto$ will have a higher $\Dto$ values (likely move toward $\calS^c$ if $\Dto$ successfully characterizes the gap between $\Gt(Z)$ and $\calS^c$). In this regard, accurate estimation for the neural distance (or equivalently $\bw^{(t+1)}$) is of great importance. In an ideal situation, the generator eventually fully recovers ${\cal S}$ after a sufficient number of iterations $T$, i.e., ${\cal S}_T \approx {\cal S}$, reaching out the approximate equilibrium $L(D',G^{(T)}) \lesssim L(D^{(T)},G^{(T)}) \lesssim L(D^{(T)},G')$ for any $D'\in {\cal D}$ and $G'\in {\cal G}$ where $\lesssim$ denotes that $\leq$ approximately holds. Thus, $G^{(T)}$ is regarded as an empirical estimator $\hat{G}^*$ for $G^*$.

Unfortunately, this adversarial process tends to easily forget some of the captured areas in ${\cal S}_{t'}$ at a later $t ( > t')$th iteration, particularly when $p_X$ is multimodal. For a toy example in Figure~\ref{fig:mode_collapse}, the generator revolves around unimodal distributions.  This \emph{mode collapse} behavior is a persistent challenge in the GAN literature. 
\cite{good:16} suggested that the conventional update scheme \eqref{grad_desc} may inadvertently address a max-min problem, which encourages the generator to produce only the most probable modes with the highest values. From the optimal transport perspective, \cite{an:etal:19} showed that, when ${\cal X}$ is non-convex, the neural-net generator $G$ is discontinuous at certain singularities in ${\cal Z}$, so $G$ may just represent one side of the discontinuity. To mitigate mode collapse, various studies have been proposed \citep{zhou:etal:19,kim:etal:23}, but the exact cause of mode collapse still remains unknown.

\begin{figure}[t]
\centering
\includegraphics[width=1.0\textwidth]{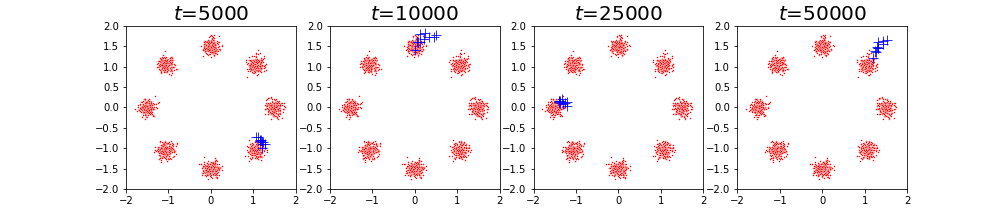}\vspace{-0.2in}
\caption{Mode collapse behavior: target distribution (plain dots) and $G^{(t)}(Z)$ (cross dots). Each unimodal distribution is 1.5 away from the origin and has a variance of 0.01. For ${\cal D}$ and ${\cal G}$, $N^D_{i+1}=N^G_{i+1}=256$ and $\kappa_i(x)=\max\{x,0\}$ (ReLU) for $i=1,\dots,d-1$ (and $g-1$ respectively) where $d=g=5$ and $d_Z=4$. Both $n_b$ and $m_b$ are set to be 100.}
\label{fig:mode_collapse}
\end{figure}

\subsection{Inefficient estimation under severe multimodality}
\label{sec:inef_est}

This section explores how mode collapse worsens GAN training by focusing on the behavior of gradients' variance of $D$ w.r.t. the discrepancy between $\Gt$ and $\calS^c$. Our analysis begins by observing that mode collapse gives rise to multimodality discrepancy between $\Gt$ and $\calS^c$. Keeping in mind this relationship, we first demonstrate that the neural distance is a well-suited metric for assessing multimodality, and then clarify that the degree of multimodality is closely associated with the size of gradients' variance. 

\subsubsection{Assessing the degree of multimodality}
\label{sec:multimodal}

In statistics, $p_X$ is called a multimodal distribution when its density function has more than one peak. For example, the density of $p_X$ having two modalities can be written $p_{X}(x)=\sum_{k=1}^2 p_k(x;\mu_k,\sigma_k)/2$ with mean $\mu_k$ and standard deviation $\sigma_k$ for $k=1,2$. To simplify a discussion, we suppose $\sigma=\sigma_1=\sigma_2$. As a way of measuring bimodality of $p_X$, one can come up with the ratio of between-variability and within-variability like $|\mu_1 - \mu_2|/\sigma$ \citep{ashm:etal:94, wang:etal:09}. Likewise, the neural distance represents the degree of bimodality as shown in the following Proposition~\ref{prop:nd_wd}, but it mainly reflects the between-variability. 
\begin{proposition}
\label{prop:nd_wd}
    Under (A1-2), the neural distance between probability distributions $p_1$ and $p_2$ is associated with the 1-Wasserstein distance $d_{W_1}$ as $d_{W_1}(p_1,p_2)-\omega_{\cal D}\leq d_{\cal D}(p_1,p_2)\leq \prod_{s=1}^d M_w(s) \prod_{u=1}^{d-1} K_{\kappa}(u) d_{W_1}(p_1,p_2)$ where $\omega_{\cal D}$ stands for the universal approximation capability of $\cal D$, i.e., the neural networks in $\cal D$ can approximate any 1-Lipschitz function with $\omega_{\cal D}$ error (please refer to SM~\ref{supp:prop1} for a more concrete explanation of $\omega_{\cal D}$).  For the example, $|\mu_1-\mu_2|\leq d_{W_1}(p_1,p_2)\leq \sqrt{2\sigma^2+|\mu_1-\mu_2|^2}$ holds.
\end{proposition}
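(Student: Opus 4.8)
The plan is to prove the two-sided inequality through the Kantorovich--Rubinstein duality, handling the upper and lower bounds separately, and then to specialize to the two-component example by elementary coupling estimates.

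\textbf{Upper bound.} First I would verify that every $D\in{\cal D}$ is Lipschitz with constant at most $L_{\cal D}:=\prod_{s=1}^d M_w(s)\prod_{u=1}^{d-1}K_{\kappa}(u)$. This is a routine consequence of (A1)--(A2) and the chain rule: a linear layer $x\mapsto W_i x$ has operator norm at most $\lVert W_i\rVert_F\le M_w(i)$, the output head $h\mapsto w_d^{\top}h$ has Lipschitz constant $\lVert w_d\rVert\le M_w(d)$, and composing these maps with the $K_{\kappa}(i)$-Lipschitz activations multiplies the constants. Hence $D/L_{\cal D}$ is $1$-Lipschitz, so writing $X_k\sim p_k$ and using $d_{W_1}(p_1,p_2)=\sup_{\lVert f\rVert_{\mathrm{Lip}}\le 1}\{\bE[f(X_1)]-\bE[f(X_2)]\}$ gives $\bE[D(X_1)]-\bE[D(X_2)]\le L_{\cal D}\,d_{W_1}(p_1,p_2)$; taking the supremum over $D\in{\cal D}$ yields $d_{\cal D}(p_1,p_2)\le L_{\cal D}\,d_{W_1}(p_1,p_2)$.

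\textbf{Lower bound.} Here I would start from a $1$-Lipschitz function $f^{\star}$ that (nearly) attains the KR supremum, so that $\bE[f^{\star}(X_1)]-\bE[f^{\star}(X_2)]$ is arbitrarily close to $d_{W_1}(p_1,p_2)$, and then invoke the universal approximation property of ${\cal D}$: there exists $D\in{\cal D}$ approximating $f^{\star}$ within the error budget $\omega_{\cal D}$ (uniformly on the compact set ${\cal X}$ by (A3)), so that $\bE[D(X_1)]-\bE[D(X_2)]\ge\bE[f^{\star}(X_1)]-\bE[f^{\star}(X_2)]-\omega_{\cal D}$. Since $d_{\cal D}(p_1,p_2)\ge\bE[D(X_1)]-\bE[D(X_2)]$, letting $f^{\star}$ approach the supremum gives $d_{\cal D}(p_1,p_2)\ge d_{W_1}(p_1,p_2)-\omega_{\cal D}$. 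The delicate point, and the one I expect to be the main obstacle, is precisely this step: one must define $\omega_{\cal D}$ so that it simultaneously encodes the approximation accuracy and the compatibility of the approximating weights with the norm constraints in (A1), since tighter bounds $M_w(\cdot)$ would force a larger $\omega_{\cal D}$. I would carry out this bookkeeping in SM~\ref{supp:prop1} and only use the stated defining property of $\omega_{\cal D}$ here.

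\textbf{The two-component example.} For $p_1=p_1(\cdot;\mu_1,\sigma)$ and $p_2=p_2(\cdot;\mu_2,\sigma)$, both bounds follow from elementary coupling arguments. For the lower bound, any coupling $\pi$ of $(X_1,X_2)$ satisfies $\bE_{\pi}\lVert X_1-X_2\rVert\ge\lVert\bE_{\pi}X_1-\bE_{\pi}X_2\rVert=|\mu_1-\mu_2|$ by Jensen's inequality and convexity of the norm; taking the infimum over $\pi$ gives $|\mu_1-\mu_2|\le d_{W_1}(p_1,p_2)$ (equivalently, the $1$-Lipschitz affine test function in the direction of $\mu_1-\mu_2$ already certifies this). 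For the upper bound, evaluating the transport cost at the independent coupling $\pi_0=p_1\otimes p_2$ and applying Jensen's inequality gives $d_{W_1}(p_1,p_2)\le\bE_{\pi_0}\lVert X_1-X_2\rVert\le\bigl(\bE_{\pi_0}\lVert X_1-X_2\rVert^2\bigr)^{1/2}$, and expanding $X_1-X_2=(X_1-\mu_1)-(X_2-\mu_2)+(\mu_1-\mu_2)$ with independence and $\var(X_1)=\var(X_2)=\sigma^2$ yields $\bE_{\pi_0}\lVert X_1-X_2\rVert^2=2\sigma^2+|\mu_1-\mu_2|^2$, hence the claimed bound. These example computations are routine once the general inequality is in place.
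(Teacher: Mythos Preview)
Your proposal is correct and mirrors the paper's proof essentially step for step: the upper bound via the Lipschitz constant $L_{\cal D}=\prod_s M_w(s)\prod_u K_\kappa(u)$ and KR duality, the lower bound via universal approximation of a near-optimal $1$-Lipschitz witness (the paper sets the uniform error to $\omega_{\cal D}/2$ so the two expectation errors sum to $\omega_{\cal D}$, which is the bookkeeping you flag), and the example bounds via Jensen for the lower side and the independent coupling plus $W_1\le W_2$ for the upper side. The only cosmetic difference is that the paper phrases the upper bound through an arbitrary coupling rather than explicitly invoking KR duality, but the content is identical.
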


\noindent 
Note that the between-variability part $|\mu_1-\mu_2|$ becomes equal to the classic bimodality $|\mu_1-\mu_2|/\sigma$ if data is scaled such as $X/\sigma$. Also, the relationship generalizes to a finite-dimensional case $X\sim p_1,Y\sim p_2$ with vector-valued $\mu_1,\mu_2$ {on account of $\lVert \mu_1-\mu_2\rVert \leq d_{W_1}(p_1,p_2)\leq \sqrt{\mathrm{Tr}(\mathrm{Cov}(X))+\mathrm{Tr}(\mathrm{Cov}(Y))+\lVert \mu_1-\mu_2\rVert^2}$}, emphasizing the role of the between-variability. Hence, it is reasonable to posit that the neural distance generally expresses the degree of \emph{multimodality} between any distinguishable distributions $p_1$ and $p_2$. That is, the larger value of $d_{\cal D}(p_1,p_2)$ intuitively implies that $p_X$ is exposed to substantial multimodality. 


\subsubsection{Inflation of gradients' variance under mode collapse}
\label{sec:grad_var_inflation}
Such characterization of multimodality by the neural distance is a key step to examine the GAN training mechanism \eqref{grad_desc} in the presence of mode collapse. That is, we say that $d_{\cal D}(p_{\calS^c},p_{\Gt(Z)})$ signifies the degree of multimodality in mode collapse. Here, we investigate the gradients' variance of $\Dt$ and discover that the severe multimodality induced by incomplete $\Gt$ can cause the GAN training process to be statistically inefficient or unstable. This analysis ultimately justifies that the use of tempered distributions of $p_X$ can significantly improve GAN training. 

At first, we derive a lower bound of the gradients' variance of $\Dt$. Due to its highly non-linear structure, we particularly focus on the last weight matrix $w_d^{(t)}$, while $W_l^{(t)}$, for $l=1,\dots,d-1$, are expected to show a similar tendency due to the connectivity of $W_l^{(t)}$ to $w_d^{(t)}$ via backpropagation. As a first step, we design a classification rule in terms of the $t$th loss function. Let $\Lti=\Dt(X_i)-\bE[\Dt(\Gt(Z))]$ for the $i$th entity. Then, as shown in Figure~\ref{fig:value_variance} with the 2-mixture $p_X$ with $\sigma^2=0.01$, $\bE[\Lti|\Lti\geq \epsilon]$ implicitly measures the remaining distance $d_{\cal D}(p_{\calS^c},p_{\Gt(Z)})$ for some small $\epsilon$, which allows us to explain the below lower bound by the neural distance. The details of Figure~\ref{fig:value_variance} appear in SM~\ref{appen:fig}.
\begin{proposition}
\label{prop:grad_lwbd}
Assume $\lVert w_d^{(t)}\rVert>0$. Let $\zetaeps = P[\Lti\leq \epsilon]$,  $\sigma_{\calS}^2=\var[\Lti|\Lti\leq \epsilon]$, $\sigma_{\calS^c}^2=\var[\Lti|\Lti> \epsilon]$, and $\sigma^2_{\Gt}=\var[\Dt(\Gt(Z_j))]$. If $\bE[\Lti| \Lti \leq \epsilon]=0$ for some $\epsilon>0$, the norm of the covariance of $w_d$'s gradient at the $t$th iteration is bounded below by
    \begin{align}
    \label{eqn:grad_lwbd}
            \left\lVert \text{Cov}\left(\dfrac{\partial\hat{L}_b(\Dt,\Gt)}{\partial  w_d}\right)\right\rVert_{2}\geq \dfrac{\zetaeps(1-\zetaeps) \bE[\Lti|\Lti\geq \epsilon]^2 +{\zetaeps\sigma_{\calS}^2 +(1-\zetaeps)\sigma_{\calS^c}^2} +  \sigma^2_{\Gt}}{n_b \lVert  w_d^{(t)} \rVert^2},
    \end{align}
    where $\lVert \cdot\rVert_2$ for the covariance matrix is the induced 2-norm. 
\end{proposition}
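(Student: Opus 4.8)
The plan is to reduce the statement to one Rayleigh-quotient estimate for the largest eigenvalue of a positive semidefinite (PSD) matrix, followed by the law of total variance. First I would compute the gradient in closed form. Writing $h^{(t)}(x)=\kappa_{d-1}(W_{d-1}^{(t)}\kappa_{d-2}(\cdots W_1^{(t)}x))$ for the output of the last hidden layer of $\Dt$, we have $\Dt(x)=w_d^{(t)\top}h^{(t)}(x)$, so $\partial \Dt(x)/\partial w_d=h^{(t)}(x)$ and hence
\[
\frac{\partial \Lhat_b(\Dt,\Gt)}{\partial w_d}=\frac{1}{n_b}\sum_{i=1}^{n_b}h^{(t)}(X_i)-\frac{1}{n_b}\sum_{j=1}^{n_b}h^{(t)}(\Gt(Z_j)).
\]
Conditioning on $(\Dt,\Gt)$ and treating the minibatch entries as independent copies of $X\sim p_X$ and $Z\sim p_Z$, with the $X$'s independent of the $Z$'s, the covariance of this gradient is $\tfrac1{n_b}\bigl(\cov(h^{(t)}(X))+\cov(h^{(t)}(\Gt(Z)))\bigr)$, a sum of two PSD matrices.

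Next I would bound its induced $2$-norm, i.e.\ its largest eigenvalue, from below by the Rayleigh quotient along the unit vector $u=w_d^{(t)}/\lVert w_d^{(t)}\rVert$, which is well defined precisely because the hypothesis $\lVert w_d^{(t)}\rVert>0$ rules out $w_d^{(t)}=0$. Since $u^\top\cov(h^{(t)}(X))\,u=\var\bigl(w_d^{(t)\top}h^{(t)}(X)\bigr)/\lVert w_d^{(t)}\rVert^2=\var(\Dt(X))/\lVert w_d^{(t)}\rVert^2$ and likewise $u^\top\cov(h^{(t)}(\Gt(Z)))\,u=\sigma^2_{\Gt}/\lVert w_d^{(t)}\rVert^2$, this yields
\[
\left\lVert \cov\!\left(\frac{\partial \Lhat_b(\Dt,\Gt)}{\partial w_d}\right)\right\rVert_2\;\geq\;\frac{\var(\Dt(X))+\sigma^2_{\Gt}}{n_b\,\lVert w_d^{(t)}\rVert^2}.
\]
Because $\Lti=\Dt(X_i)-\bE[\Dt(\Gt(Z))]$ differs from $\Dt(X_i)$ only by an additive constant, $\var(\Dt(X))=\var(\Lti)$, so it remains to identify $\var(\Lti)$ with the first three terms in the numerator of \eqref{eqn:grad_lwbd}.

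For that last step I would apply the law of total variance to $\Lti$ with respect to the two-cell partition $\{\Lti\leq\epsilon\}$, $\{\Lti>\epsilon\}$, whose probabilities are $\zetaeps$ and $1-\zetaeps$. The expected within-cell variance is $\zetaeps\sigma_{\calS}^2+(1-\zetaeps)\sigma_{\calS^c}^2$. For the between-cell term, the conditional mean $\bE[\Lti\mid\text{cell}]$ is a two-valued random variable equal to $\bE[\Lti\mid\Lti\leq\epsilon]=0$ with probability $\zetaeps$ and $\bE[\Lti\mid\Lti>\epsilon]$ with probability $1-\zetaeps$ --- this is the only place the hypothesis $\bE[\Lti\mid\Lti\leq\epsilon]=0$ enters --- so its variance is $\zetaeps(1-\zetaeps)\bE[\Lti\mid\Lti\geq\epsilon]^2$. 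Adding the two contributions gives $\var(\Lti)=\zetaeps(1-\zetaeps)\bE[\Lti\mid\Lti\geq\epsilon]^2+\zetaeps\sigma_{\calS}^2+(1-\zetaeps)\sigma_{\calS^c}^2$, and substituting into the previous display completes the proof.

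The argument is largely mechanical; the point that deserves care is the probabilistic model for the minibatch. The stated lower bound relies on treating the minibatch entries as i.i.d.\ draws from $p_X$ and $p_Z$; if instead one sampled without replacement from the fixed pool $\bX_{1:n}$, a finite-population correction would shrink $\cov$ and the inequality would hold only in the regime $n_b\ll n$ (or under the i.i.d.\ idealization), so I would state this convention explicitly at the outset. A purely cosmetic issue is the interchange of ``$\leq\epsilon$'' and ``$<\epsilon$'' (equivalently ``$\geq\epsilon$'' and ``$>\epsilon$'') in conditioning on $\epsilon$, which is immaterial: one fixes the cells as $\{\Lti\leq\epsilon\}$ and its complement throughout.
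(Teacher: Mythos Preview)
Your proposal is correct and follows essentially the same route as the paper: the paper also reduces to $\var(\hat L_b)/\lVert w_d^{(t)}\rVert^2$ (phrased via Cauchy--Schwarz and the induced-norm definition rather than the Rayleigh quotient, which is the same bound for PSD matrices) and then applies the law of total variance over the partition $\{\Lti\le\epsilon\},\{\Lti>\epsilon\}$. Your explicit remark about the i.i.d.\ minibatch convention is a helpful addition that the paper leaves implicit.
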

\noindent Proposition~\ref{prop:grad_lwbd} implies that the gradient update of $w_d^{(t)}$ becomes more noisy (hence the estimation of the neural distance becomes statistically inefficient) if $\Gt$ and $\calS^c$ are distant (induced by more severe multimodality of $p_X$) and $\Gt$ partially recovers $\cal S$ (a moderate $\zetaeps\in(0,1)$), which eventually leads to unstable and inefficient GAN training. From this perspective, we say that \emph{the GAN training for $p_X$ with severe multimodality is essentially much harder than with less multimodality.} As empirical evidence in Figure~\ref{fig:value_variance}, the case of $\mu_2=3.0$, which has larger multimodality, involves a much larger gradient variance. 


Next, an upper bound of the gradients' variance of $\Dt$ is derived. To simplify notation, let's denote by $W^{(t)}_{l,r,c}$ the $(r,c)$th entry of $W^{(t)}_l$ for $l=1,\dots,d$ where the third index of $W^{(t)}_{d,r,c}$ is regarded dummy for $w_d^{(t)}$, i.e., $w^{(t)}_{d,r}=W^{(t)}_{d,r,c}$ and by $W_{l,r,\cdot}^{(t)}$ its $r$th row vector. SM~\ref{supp:disc_thm1} shows the specific form of the below constants $C_{\bw}^{(t)}(l)$ and $C_{\kappa,j}^{(t)}(l)$ ($j=1,2,3$) and their implication depending on the type of activation. For example, $C_{\bw}^{(t)}(l)=\lVert W_{l+1,r,\cdot}^{(t)\top} \rVert \prod_{j=l+2}^{d-1} \lVert W_{j}^{(t)} \rVert_F \lVert w_{d}^{(t)} \rVert$ for $l\leq d-3$ reflects the size of $\Dt$ in backpropagation, and it can be positively related to the norm $M_w(\cdot)$. $C_{\kappa,3}^{(t)}(l)$ compares $p_{X}$ and $p_{\Gt(Z)}$ by the covariance between the $(l-1)$th hidden layer and backpropation gradients after it, which vanishes as $X\overset{d}{\approx}\Gt(Z)$.
\begin{theorem}
\label{prop:grad_upbd1}
Under (A1-3), $\left\lvert \pr \Lhat_b(\Dt,\Gt)/\pr W_{l,r,c}\right\rvert/C_{\bw}^{(t)}(l)$ is bounded by
\begin{align}
\label{eqn:grad_upbd1}
 \leq C_{\kappa,1}^{(t)}(l)d_{\cal D}(p_X,p_{\Gt(Z)}) +  
C_{\kappa,2}^{(t)}(l)d_{\kappa}(p_X,p_{\Gt(Z)}) + C_{\kappa,3}^{(t)}(l) +O_p\left(1/{\sqrt{n_b}}\right), 
\end{align}
for any $l,r,c$, where the constant $C_{\kappa,j}^{(t)}(l)$ for $j=1,2,3$ relies on the type of activation. The discrepancy $d_{\kappa}$ is the 1-Wasserstein distance $d_{W_1}$ or the total variation $d_{\text{TV}}$, respectively, depending on the Lipschitzness or boundness of the activation's derivative $\kappa'_l(x)$ for all $l$. 
\end{theorem}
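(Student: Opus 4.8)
The plan is to write the minibatch gradient explicitly through backpropagation, replace the two empirical averages by their population expectations (this creates the $O_p(1/\sqrt{n_b})$ term), pull out the deterministic backpropagation magnitude $C_{\bw}^{(t)}(l)$, and then split the remaining population quantity into a neural-distance piece, an activation-derivative piece, and a covariance residual.

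First, for fixed $x$, backpropagation gives $\partial \Dt(x)/\partial W_{l,r,c}=\delta^{(t)}_{l,r}(x)\,h^{(t)}_{l-1,c}(x)$, where $h_{l-1}(x)$ is the $(l-1)$th hidden activation ($h_0(x)=x$, $a_l(x)=W_l h_{l-1}(x)$) and $\delta_l(x)=\nabla_{a_l}\Dt(x)$ is the backpropagated signal, obeying $\delta_{d-1}(x)=\diag(\kappa'_{d-1}(a_{d-1}(x)))\,w_d$ and $\delta_l(x)=\diag(\kappa'_l(a_l(x)))\,W_{l+1}^{\top}\delta_{l+1}(x)$. Thus $\partial\hat L_b/\partial W_{l,r,c}$ is the difference of the empirical averages of $x\mapsto\delta^{(t)}_{l,r}(x)h^{(t)}_{l-1,c}(x)$ over the real and fake minibatches. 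Under (A1)--(A3), conditional on the current iterates $(\Dt,\Gt)$ this function is uniformly bounded (activations are bounded on the bounded support, activation derivatives by their Lipschitz constants, weights by (A1)) with bound a fixed multiple of $C_{\bw}^{(t)}(l)$; since $m_b=n_b$ and the samples are i.i.d., a Hoeffding or central-limit argument yields $\partial\hat L_b/\partial W_{l,r,c}=\bE_X[\delta^{(t)}_{l,r}(X)h^{(t)}_{l-1,c}(X)]-\bE[\delta^{(t)}_{l,r}(\Gt(Z))h^{(t)}_{l-1,c}(\Gt(Z))]+O_p(C_{\bw}^{(t)}(l)/\sqrt{n_b})$, after which we divide through by $C_{\bw}^{(t)}(l)$.

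The core step is bounding $\bE_X[\delta_{l,r}h_{l-1,c}]-\bE_{\Gt(Z)}[\delta_{l,r}h_{l-1,c}]$. Centring the backpropagation factor about its mean, with $\bar\delta_X:=\bE_X[\delta_{l,r}]$ and $\bar\delta_G:=\bE_{\Gt(Z)}[\delta_{l,r}]$, one gets
\begin{align*}
\bE_X[\delta_{l,r}h_{l-1,c}]-\bE_{\Gt(Z)}[\delta_{l,r}h_{l-1,c}]
 &= \big(\cov_X(\delta_{l,r},h_{l-1,c})-\cov_{\Gt(Z)}(\delta_{l,r},h_{l-1,c})\big)\\
 &\quad +\bar\delta_X\big(\bE_X[h_{l-1,c}]-\bE_{\Gt(Z)}[h_{l-1,c}]\big)
  +\bE_{\Gt(Z)}[h_{l-1,c}]\,\big(\bar\delta_X-\bar\delta_G\big).
\end{align*}
In the second term, $h_{l-1,c}$ is a single hidden coordinate, hence (for the piecewise-linear activations permitted by (A2), representing the missing layers through the fact that the identity is a linear combination of $\kappa(\pm x)$ and padding the widths) a member of $\cal D$ up to a constant built from $M_w(\cdot)$ and $K_\kappa(\cdot)$; so $|\bE_X[h_{l-1,c}]-\bE_{\Gt(Z)}[h_{l-1,c}]|\le C\,d_{\cal D}(p_X,p_{\Gt(Z)})$, which together with the bound on $\bar\delta_X$ is the $C_{\kappa,1}^{(t)}(l)\,d_{\cal D}$ term. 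In the third term, $x\mapsto\delta_{l,r}(x)$ is a product of linear maps and the bounded factors $\kappa'_u(a_u(x))$ for $u\ge l$; if each $\kappa'_u$ is Lipschitz then $\delta_{l,r}$ is Lipschitz on the bounded support, so $|\bar\delta_X-\bar\delta_G|\le C\,d_{W_1}(p_X,p_{\Gt(Z)})$, whereas if the $\kappa'_u$ are merely bounded (ReLU, leaky ReLU: $\kappa'_u$ is a step function) one pairs $\|\delta_{l,r}\|_\infty$ with the definition of total variation to obtain $|\bar\delta_X-\bar\delta_G|\le C\,d_{\text{TV}}(p_X,p_{\Gt(Z)})$; multiplying by the bound on $\bE_{\Gt(Z)}[h_{l-1,c}]$ gives the $C_{\kappa,2}^{(t)}(l)\,d_\kappa$ term. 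The first term is the covariance residual $C_{\kappa,3}^{(t)}(l)$: it is a single-distribution comparison of the $(l-1)$th hidden layer with the backpropagated gradient $\delta_{l,r}$ at layer $l$, bounded under (A1)--(A3), and it vanishes when $X\overset{d}{=}\Gt(Z)$ since then $p_X=p_{\Gt(Z)}$. Combining the three bounds with the concentration remainder yields \eqref{eqn:grad_upbd1}, and the explicit forms of $C_{\bw}^{(t)}(l)$ and $C_{\kappa,j}^{(t)}(l)$ drop out of the Lipschitz-constant and operator-norm bookkeeping (recorded in the supplement).

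The main obstacle is controlling the activation-derivative factor in the third term: for the nonsmooth activations allowed by (A2) the map $x\mapsto\delta_{l,r}(x)$ is discontinuous, so no $d_{W_1}$ bound is available and one must route through $d_{\text{TV}}$ --- this is exactly why (A3)'s uniform support bound is needed and why the statement carries the $d_{W_1}$-or-$d_{\text{TV}}$ dichotomy. A secondary point of care is realising $h_{l-1,c}$ as an element of $\cal D$ with the claimed constant (and, for non-piecewise-linear activations, falling back on the plain $d_{W_1}$ Lipschitz bound for this factor as well, so that $C_{\kappa,1}^{(t)}(l)$ is read accordingly), and arranging the centring so that the leftover is honestly a single-distribution covariance.
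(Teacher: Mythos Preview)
Your proposal is correct and follows the same overall route as the paper: backpropagation expression, concentration to population, a three-term splitting into a hidden-mean difference, an activation-derivative difference, and a covariance residual, then bounding each piece. Two points of comparison are worth recording. First, the paper symmetrises your asymmetric centring by averaging the two decompositions (one centred on $\bar\delta_X$, one on $\bar\delta_G$), which is how the tidy constants $(\bE[p(X)]+\bE[p(G(Z))])/2$ and $(\bE[A_{l-1,c}(X)]+\bE[A_{l-1,c}(G(Z))])/2$ appear; your version is fine but yields slightly different $C_{\kappa,j}$. Second, your embedding of $h_{l-1,c}$ into $\cal D$ by ``padding with identity layers'' is exactly the paper's Lemma in SM~\ref{supp:key_lemmas} for ReLU (there it is phrased as choosing $w_d=(0,\dots,M_w(d),\dots,0)$ and using $\kappa(c\,A_{l-1,c})=c\,A_{l-1,c}$ since ReLU outputs are nonnegative---your $\kappa(\pm x)$ identity is not needed). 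For non-piecewise-linear activations, however, your fallback to a plain $d_{W_1}$ bound on this factor does \emph{not} recover the $d_{\cal D}$ term in \eqref{eqn:grad_upbd1}; the paper instead assumes the additional condition $\kappa'_l(x)\ge C_{\kappa'}(l)>0$ and peels layers via the mean value theorem, picking up the factors $1/C_{\kappa'}(j)$ in $C_{\kappa,1}^{(t)}(l)$ (see SM~\ref{supp:disc_thm1}). That extra hypothesis is what lets the first term stay $d_{\cal D}$ rather than degrade to $d_{W_1}$.
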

\noindent The square of the upper bound in \eqref{eqn:grad_upbd1} becomes the bounds for gradients' variance since $\var[X]\leq \bE[\lvert X \rvert^2]$. Moreover, because $p_X(x)=\zeta_t p_{\calS}(x) + (1-\zeta_t) p_{\calS^c}(x)$ for all $x\in {\cal S}$ where $\zeta_t$ is the proportion of the recovered support, we see $d_{\cal D}(p_X,p_{\Gt(Z)})\leq \zeta_t d_{\cal D}(p_{\calS},p_{\Gt(Z)}) + (1-\zeta_t) d_{\cal D}(p_{\calS^c},p_{\Gt(Z)})$. Therefore, the upper bound \eqref{eqn:grad_upbd1} relates to the degree of mode collapse or multimodality of $p_X$ that is coherently characterized by $d_{\cal D}(p_{\calS^c},p_{\Gt(Z)})$. Together with the lower bound result, we remark the importance of this remaining distance. 
\begin{remark}
    Proposition~\ref{prop:grad_lwbd} and Theorem~\ref{prop:grad_upbd1} highlight that the remaining distance is an essential part of capturing the gradients' variance. For instance, for $w_d$, $C_{\bw}^{(t)}(d)=1$,  $C_{\kappa,1}^{(t)}=1/M_w(d)$, and $C_{\kappa,j}^{(t)}=0$ for $j=2,3$ are set, and thus we observe $\bE[L_i^{(t)}|L_i^{(t)}\geq \epsilon]^2/\lVert w_d^{(t)} \rVert^2$ in \eqref{eqn:grad_lwbd} conceptually corresponds to $d_{\cal D}(p_{\calS^c},p_{\Gt(Z)})/M_w(d)$ in \eqref{eqn:grad_upbd1}.
\end{remark}
Note that under mode collapse, $d_{\cal D}(p_{\calS},p_{\Gt(Z)})$ is generally negligible compared with $d_{\cal D}(p_{\calS^c},p_{\Gt(Z)})$. The same arguments also hold for $d_{\kappa}$ since it is positively related to $d_{\cal D}$ in general. There can be situations where the discrepancy between $p_{\calS}$ and $p_{\Gt(Z)}$ is not ignorable, such as when $\Gt$ generates synthetic samples out of ${\cal S}$. In any case, the incomplete training of $\Gt$ may cause the inefficient estimation of the gradients due to the enlarged remaining distance. 

The derived bounds above provide some insights to stabilize the general GAN training. Basically, \eqref{eqn:grad_lwbd} and \eqref{eqn:grad_upbd1} hint that the norm of the weight matrix should not decay faster than $d_{\cal D}(p_{\calS^c},p_{\Gt(Z)})$ during training to avoid the inflation of gradients' variance. Allowing for large $M_w(l)$ in ${\cal D}$ may settle this issue, but the technical constants could become too large, which may cause unstable training. Hence, it is desirable to have a reasonable size of the weight matrices during training, so that the gradients' variance does not inflate or shrink too much as the depth of $D$ gets deeper. This non-trivial observation justifies why the popular training tricks, such as normalizing weight matrices \citep{miya:etal:18} or imposing a penalty on $D$ \citep{mesc:etal:18, zhou:etal:19}, are practically able to stabilize the adversarial optimization to some extent. Also, our argument supports encouraging a good initialization of $\Gt$ \citep{zhao:etal:23} because in general $d_{\cal D}(p_{{\cal S}_0^c},p_{G^{(0)}(Z)})$ is likely to be huge. \emph{Moreover, the bounds rationalize that the gradient estimates can enjoy variance reduction effects if it is possible to intrinsically lower $d_{\cal D}(p_{\calS^c},p_{\Gt(Z)})$ with the norm of weight matrices controlled during the training.} 

\begin{figure}[t]
\centering
\includegraphics[width=1.0\textwidth]{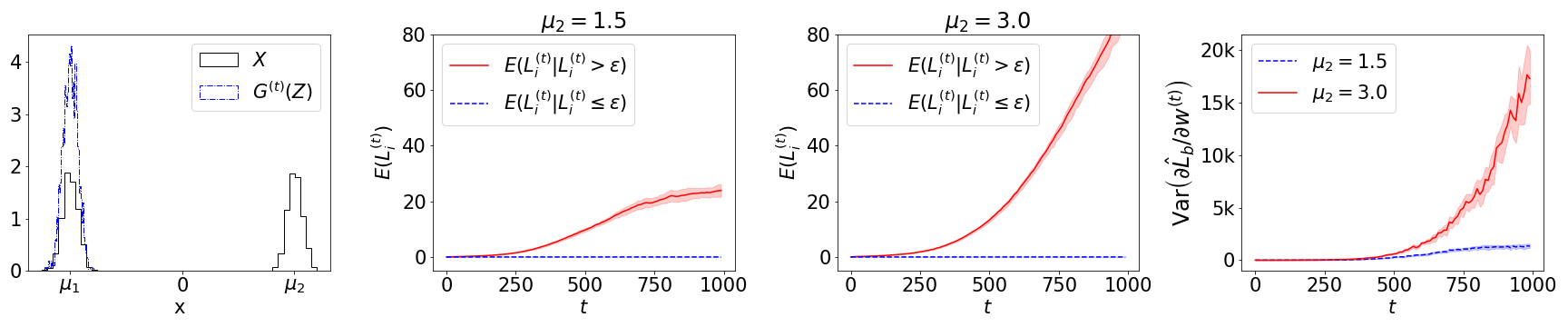}
\vspace{-0.6in}
\caption{Mode collapse induces multimodality. The leftmost depicts mode collapse where $\mu_1=-\mu_2$. In all $t$, $\Dt$ is updated, but $\Gt$ is fixed to generate the left mode. 
The middle two panels illustrate the values of $\bE[L_i^{(t)}]$ along iterations when $\mu_2=1.5$ and $\mu_2=3.0$ respectively. The rightmost draws the behavior of the gradient variance for each case.
}
\label{fig:value_variance}
\end{figure}

\section{Tempered Distributions via Convex Interpolation}

In the previous section, we discussed how mode collapse leads to inefficient training of $\Dt$. A similar phenomenon, i.e., the local trapping problem, occurs in the Bayesian sampling of multimodal posterior distributions \citep{neal:96, lian:etal:14}. To tackle this problem, an annealing strategy creates a population of sampling targets with changing levels of multimodality. 
The same idea is also expected to be effective for GAN training because a tempered $p_X$ would involve smaller $d_{\cal D}(p_{\calS^c},p_{\Gt(Z)})$ even though mode collapse occurs. This work creates the tempered distributions by interpolating input variables. 

\label{sec:tempered_dist}
\subsection{Convex interpolation between data points}

To temper the unknown $p_X$, we define an auxiliary random variable that represents a tempered (or intermediate) distribution. The $i$th weighted random variable $\Qal_i$ is defined as 
\begin{align}
\label{def:convex}
    Q_i^{\alpha} = \alpha X_{i_1} + (1-\alpha) X_{i_2},
\end{align}
where $X_{i_1}, X_{i_2}$ are two random elements in $\bX_{1:n}$ and $\alpha\sim p_{\alpha}$ on $[0,1]$. The density of $Q_i^{\alpha}$ is denoted by $p_{\Qal}$ whose support is inside ${\cal Q}$ defined as the convex hull of $\mathcal X$ and its size is bounded by $\lVert \Qal\rVert\leq B_X$ as well. Figure~\ref{fig:tempered_dist} illustrates the distribution of $\Qal$ where $X$ follows the 2- and 8-component mixture distribution respectively with $\alpha \sim {\rm Unif}(0,1)$. Evidently, $\Qal$ has a more tempered distribution than $X$ because \emph{the created convex bridge connecting every pair of modes significantly reduces multimodality.} Note that placing such a convex support $\cal Q$ in GAN optimization helps avoid the discontinuity issue of the generator raised by \citep{an:etal:19}. A similar interpolation idea was attempted by the Mixup approach \citep{zhan:etal:17} but used the linear combination between $X_i$ and $G(Z_i)$ for GAN training. SM~\ref{supp:mixup} provides an in-depth discussion advocating our approach for GAN training. 

\subsection{Reduction of multimodality}
\label{sec:reduc_mul}
To see how the smoothing mechanism \eqref{def:convex} reduces multimodality more concretely, we bring the 2-mixture example $p_X(x)=\sum_{k=1}^2 p_k(x;\mu_k,\sigma)/2$. Given $\Qal=\alpha X_1+ (1-\alpha)X_2$ with two i.i.d. copies $X_1,X_2\sim p_X$ and $\alpha \sim {\rm Unif}(0,1)$, we design a 2-component mixture with density $p_{\Qal}(x)=\sum_{k=1}^2 p_k^{\alpha}(x;\mu_k^*,\sigma^*)/2$, where $p_1^{\alpha}$ (and $p_2^{\alpha}$) is the density of $\alpha u + (1-\alpha)X_2\sim p_1^{\alpha}$ (and $\alpha v + (1-\alpha)X_2\sim p_2^{\alpha}$) with $u\sim p_1$ (and $v \sim p_2$) and $\alpha \sim {\rm Unif}(0.5,1)$.
\begin{proposition}
\label{prop:reduction}
Following Proposition~\ref{prop:nd_wd}, the bounds of $d_{\cal D}(p_1^{\alpha},p_{2}^{\alpha})$ and $d_{W_1}(p_1^{\alpha},p_{2}^{\alpha})$ are written in terms of $|\mu_1^*-\mu_2^*|$ and $\sigma^*$ where $\lvert \mu_1^*-\mu_2^*\rvert=3\lvert \mu_1 -\mu_2\rvert/4$ and $\sigma^*=\sqrt{3\sigma^2/4+5|\mu_1-\mu_2|^2/192}$. Thus, if $\omega_{\cal D}$ and $M_w(l)$ are the same for $d_{\cal D}(p_1^{\alpha},p_{2}^{\alpha})$ and $d_{\cal D}(p_1,p_{2})$, $\Qal$ has less bimodality than $X$ in terms of $d_{\cal D}$ and $d_{W_1}$ due to $\lvert \mu_1^*-\mu_2^*\rvert< \lvert \mu_1 - \mu_2 \rvert$ (and also $\lvert \mu_1^*-\mu_2^*\rvert /\sigma^* < \lvert \mu_1 - \mu_2 \rvert / \sigma$). 
\end{proposition}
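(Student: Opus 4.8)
The plan is to treat Proposition~\ref{prop:reduction} as a two‑step exercise: first pin down the first two moments of the design components $p_1^{\alpha}$ and $p_2^{\alpha}$, and then substitute those moments into the sandwich bounds already established in Proposition~\ref{prop:nd_wd}. Throughout I would use that $\alpha$, $u$ (or $v$), and $X_2$ are mutually independent, and I would first record the elementary moments of $\alpha\sim\mathrm{Unif}(0.5,1)$, namely $\bE[\alpha]=3/4$, $\var[\alpha]=1/48$, $\bE[\alpha^2]=7/12$, $\bE[(1-\alpha)^2]=1/12$, together with the mixture identities $\bE[X_2]=(\mu_1+\mu_2)/2$ and $\var[X_2]=\sigma^2+|\mu_1-\mu_2|^2/4$ for $X_2\sim p_X=(p_1+p_2)/2$.

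For the means, by linearity $\mu_1^{*}=\bE[\alpha u+(1-\alpha)X_2]=\bE[\alpha]\mu_1+(1-\bE[\alpha])\bE[X_2]=\tfrac78\mu_1+\tfrac18\mu_2$, and by the symmetry $u\leftrightarrow v$ (i.e., $\mu_1\leftrightarrow\mu_2$) one gets $\mu_2^{*}=\tfrac78\mu_2+\tfrac18\mu_1$; subtracting yields $\mu_1^{*}-\mu_2^{*}=\tfrac34(\mu_1-\mu_2)$, which already gives $|\mu_1^{*}-\mu_2^{*}|<|\mu_1-\mu_2|$. For the common variance I would apply the law of total variance conditioning on $\alpha$: since $\bE[\alpha u+(1-\alpha)X_2\mid\alpha]=(\mu_1+\mu_2)/2+\alpha(\mu_1-\mu_2)/2$ and $\var[\alpha u+(1-\alpha)X_2\mid\alpha]=\alpha^2\sigma^2+(1-\alpha)^2\var[X_2]$, it follows that $\sigma^{*2}=\bE[\alpha^2]\sigma^2+\bE[(1-\alpha)^2]\var[X_2]+\var[\alpha]\,|\mu_1-\mu_2|^2/4$; substituting the moments above and collecting terms produces the stated $\sigma^{*}=\sqrt{3\sigma^2/4+5|\mu_1-\mu_2|^2/192}$. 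The same computation with $u$ replaced by $v$ is symmetric in $\mu_1,\mu_2$, so $p_2^{\alpha}$ has the identical variance, which is what makes the single $\sigma^{*}$ in $p_{\Qal}=(p_1^{\alpha}+p_2^{\alpha})/2$ well defined.

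With $\mu_1^{*},\mu_2^{*},\sigma^{*}$ in hand, the conclusion is a direct substitution into Proposition~\ref{prop:nd_wd}. Since $p_1^{\alpha}$ and $p_2^{\alpha}$ share the common variance $\sigma^{*2}$, the ``example'' bound there applies verbatim with $(\mu_1,\mu_2,\sigma)$ replaced by $(\mu_1^{*},\mu_2^{*},\sigma^{*})$, giving $|\mu_1^{*}-\mu_2^{*}|\le d_{W_1}(p_1^{\alpha},p_2^{\alpha})\le\sqrt{2\sigma^{*2}+|\mu_1^{*}-\mu_2^{*}|^2}$, and then $d_{W_1}(p_1^{\alpha},p_2^{\alpha})-\omega_{\cal D}\le d_{\cal D}(p_1^{\alpha},p_2^{\alpha})\le\prod_{s=1}^{d}M_w(s)\prod_{u=1}^{d-1}K_{\kappa}(u)\,d_{W_1}(p_1^{\alpha},p_2^{\alpha})$. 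Holding $\omega_{\cal D}$ and the constants $M_w(l)$ fixed, both the lower endpoint (through $|\mu_1^{*}-\mu_2^{*}|=\tfrac34|\mu_1-\mu_2|$) and the upper endpoint (through $2\sigma^{*2}+|\mu_1^{*}-\mu_2^{*}|^2<2\sigma^2+|\mu_1-\mu_2|^2$, checked directly from the closed forms) are strictly smaller than the corresponding bounds for $d_{\cal D}(p_1,p_2)$ and $d_{W_1}(p_1,p_2)$; the scaled‑bimodality claim $|\mu_1^{*}-\mu_2^{*}|/\sigma^{*}<|\mu_1-\mu_2|/\sigma$ reduces to $\tfrac{9}{16}\sigma^2<\sigma^{*2}$, which is immediate because $\sigma^{*2}\ge\tfrac34\sigma^2$.

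I do not expect a genuine obstacle here: the proposition is essentially bookkeeping layered on Proposition~\ref{prop:nd_wd}. The one place to be careful is the variance step, because $X_2$ is drawn from the \emph{full} mixture $p_X$ rather than from a single component, so $\var[X_2]$ must carry the between‑mode term $|\mu_1-\mu_2|^2/4$; omitting it would corrupt the $|\mu_1-\mu_2|^2$ coefficient of $\sigma^{*2}$. A secondary point worth stating explicitly is that the proposition compares the two‑sided \emph{bounds} (and in particular the between‑variability factor, which Section~\ref{sec:multimodal} identified as the essential ingredient), not the neural distances themselves, so the argument concludes once the monotone substitution into Proposition~\ref{prop:nd_wd} has been made.
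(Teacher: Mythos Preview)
Your proposal is correct and follows essentially the same route as the paper: compute $\mu_1^{*},\mu_2^{*}$ by linearity, compute $\sigma^{*2}$ via the law of total variance conditioning on $\alpha$ (using the same auxiliary moments $\bE[\alpha^2]=7/12$, $\bE[(1-\alpha)^2]=1/12$, $\var(\alpha)=1/48$, and the mixture moments of $X_2$), and then feed the results into Proposition~\ref{prop:nd_wd}. Your explicit verification of the bound comparisons and of $|\mu_1^{*}-\mu_2^{*}|/\sigma^{*}<|\mu_1-\mu_2|/\sigma$ goes slightly beyond what the paper writes out, but the underlying argument is identical.
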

\noindent Note that there can be other ways to define the mixture components $p_1^{\alpha}$ and $p_2^{\alpha}$, but the same conclusion is induced in general. $d_{\text{TV}}(p_1^{\alpha},p_2^{\alpha})\leq d_{\text{TV}}(p_1,p_2)$ is also expected since divergences are positively correlated with each other in general \citep{chae:walk:20}.

This observation helps differentiate the convex interpolation from adding a random noise $\epsilon$ to $X$ \citep{arjo:bott:17} in terms of the mechanism of easing multimodality. While adding noise increases the within-variability only, the convex interpolation not only expands the support of $X$ but also directly lessens the between-variability by building bridges connecting separate local modes, which significantly contributes to reducing multimodality. The following remark more specifically discusses this property for the toy example. 
\begin{remark}
\label{remark:reduction}
Let's define ${\cal S}_{\epsilon}$ as the support of $X+\epsilon$ for some level of $\epsilon$ and $G^{(t)}_{\epsilon}$ as the generator for $p_{X+\epsilon}$. We consider $p_{X+\epsilon}(x)=\sum_{k=1}^2 p_k^{\epsilon}(x;\mu_k,\sigma_{\epsilon})$ where $\sigma_{\epsilon}^2=\sigma^2 + {\rm Var}(\epsilon)$ and define the missing support ${\cal S}_{t,\epsilon}^c={\cal S}_{\epsilon}\setminus({\cal S}_{\epsilon} \cap \{G^{(t)}_{\epsilon}(z),z\in {\cal Z}\})$. For comparison, let's denote by $G^{(t)}_o$, $G^{(t)}_{\epsilon}$, and $G^{(t)}_{\alpha}$ the $t$th generator for the case of learning $p_X$, $p_{X + \epsilon}$, and $p_{\Qal}$ respectively. Likewise, we set ${\cal S}_{t,\alpha}^c = {\cal  Q}\setminus({\cal  Q}\cap \{G^{(t)}_{\alpha}(z),z\in {\cal Z}\})$ as the missing support in learning $p_{\Qal}$.

Now suppose that $G^{(t)}_o$, $G^{(t)}_{\epsilon}$, and $G^{(t)}_{\alpha}$ recover the right mode, i.e., $p_1=p_{\calS^c}$ and $p_2=p_{G^{(t)}_o}$; $p_1^{\alpha}=p_{{\cal S}_{t,\alpha}^c}$ and $p_2^{\alpha}=p_{G^{(t)}_{\alpha}}$; and $p_1^{\epsilon}=p_{{\cal S}_{t,\epsilon}^c}$ and $p_2^{\epsilon}=p_{G^{(t)}_{\epsilon}}$. Then Proposition \ref{prop:reduction} suggests that $d_{\cal D}(p_{{\cal S}_{t,\alpha}^c},p_{G^{(t)}_{\alpha}(Z)})$ tends to have a smaller remaining distance than $d_{\cal D}(p_{\calS^c}, p_{G^{(t)}_o(Z)})$ and also than $d_{\cal D}(p_{{\cal S}_{t,\epsilon}^c}, p_{G^{(t)}_{\epsilon}(Z)})$ particularly for the case without scaling data. 
The same arguments follow for $d_{W_1}$ as well. Simply adding noise may not effectively decrease the multimodality in view of the neural distance because it just increases the within-variability (e.g., the upper bound in Proposition \ref{prop:nd_wd} increases more when $X$ is not standardized). Thus, training with $\Qal$ is likely to involve smaller gradients' variance than with $X$ or $X+\epsilon$ in general, according to the discussion in Section~\ref{sec:grad_var_inflation}. The same tendency is foreseeable in a higher dimension $X\in\mathbb{R}^{d_X}$ for $d_X>1$ because the between-variability $\lVert \mu_1^*-\mu_2^* \rVert=3\lVert \mu_1-\mu_2 \rVert/4$ decreases.
\end{remark}
We name this particular property as \emph{mode connectivity}, highlighting that \emph{training with $\Qal$ would have a smaller remaining neural distance than training with only $X$ or $X+\epsilon$ when the generator is incomplete}. 
We believe that the same argument holds even when $p_X$ has more underlying unimodal distributions in higher dimensions because the core concept of creating $\Qal$ underpins whether it connects pairs of disconnected distributions in $p_X$, leading to a decrease of the remaining neural distance. 

\begin{figure}[t]
\centering
\includegraphics[width=1.0\textwidth]{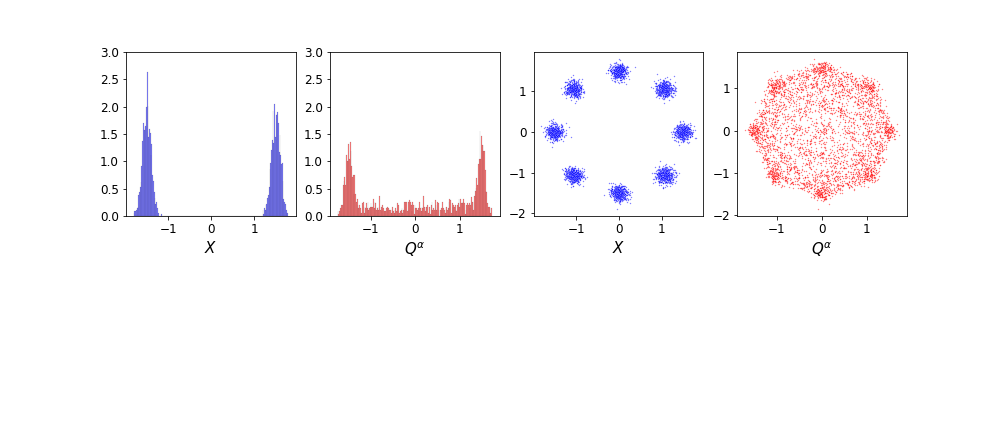}
\vspace{-1.7in}
\caption{Tempered distributions of $p_X$: the first and third panels depict histograms of $X$, with 2- and 8-component mixtures respectively, and the next of each represents $\Qal$.
}
\label{fig:tempered_dist}
\end{figure}

\section{Parallel Estimation}
\label{sec:parallel}

As leveraging the tempered distributions $p_{\Qal}$ for stabilized training and eventually for learning $p_X$ more effectively, we design a novel framework that trains a target generator to learn the joint distribution $p_{\Qal,\alpha}$. In contrast to the previous works relying on annealing strategies \citep{arjo:bott:17,sajj:etal:18} or finding a specific temperature \citep{jenn:fava:19}, our framework does not rely on either of these techniques.

\subsection{Joint optimization with randomized temperature}


A major goal of our GAN framework is to obtain a data generator that can produce synthetic samples following the target distribution $p_X$. At the same time, downstream applications may need to collect samples of $p_{\alpha X_1 + (1-\alpha)X_2}$ for a certain level of $\alpha$. For this purpose, we formulate the neural distance $d_{\cal D}(p_{\Qal,\alpha}, p_{G(Z, \alpha),\alpha}) = \sup_{D \in {\cal D}}\{{\bf E}_{\Qal,\alpha} [D(\Qal, \alpha)] - {\bf E}_{Z,\alpha}[D(G(Z,\alpha),\alpha)]\}$ so that the trained $G$ returns an intermediate distribution of $\alpha X_1 + (1-\alpha) X_2$ for any $\alpha$. In this revised framework, $W_1$ and $V_1$ in ${\cal D}$ and ${\cal G}$ respectively adopt one more input dimension for receiving $\alpha$. Our training framework, therefore, is to solve 
\begin{align}
\label{builtin}
    \min_{G\in {\cal G}} d_{\cal D}(p_{\Qal,\alpha}, p_{G(Z, \alpha),\alpha}).
\end{align}
If $\alpha=1$ or $\alpha=0$, the new optimization \eqref{builtin} reduces to the original problem minimizing \eqref{eq:nd}; the original $\Lhat_b$ can be written as $\hat{L}_b^1(\Dt,\Gt)=\sum_{i=1}^{n_b}\Dt(X_i,1)/n_b-\sum_{j=1}^{m_b} \Dt(\Gt(Z_j,1),1)/m_b$.
The optimization of $D^*$ and $G^*$ for \eqref{builtin} is carried out via the gradient-based update \eqref{grad_desc} w.r.t. $\hat{L}_b^{\alpha}(\Dt,\Gt)=\sum_{i=1}^{n_b}\Dt(Q^{\alpha_i}_i,\alpha_i)/n_b-\sum_{j=1}^{m_b} \Dt(\Gt(Z_j,\alpha_j),\alpha_j)/m_b$ where $Q^{\alpha_i}_i=\alpha_i X_{i_1} + (1-\alpha_i) X_{i_2}$ for randomly chosen $X_{i_1},X_{i_2} \in \bX_{1:n}$, $Z_j$ from $p_{Z}$, and $\alpha_i,\alpha_j$ from $p_{\alpha}$. Because $\bw^{(t)}$ and $\bv^{(t)}$ are updated to reflect the distributions having different levels of smoothness concurrently in solving \eqref{builtin}, we call it a parallel tempering scheme. 

The distributional symmetry of $\Qal$ imposes a constraint on $D$ and $G$ as to the use of $\alpha$. Considering $\alpha X_1 + (1-\alpha)X_2 \overset{d}{=} (1-\alpha) X_1 + \alpha X_2$ holds for any $0\leq \alpha\leq 1$, ensuring $D(\Qal,\alpha)\overset{d}{=}D(Q^{1-\alpha},1-\alpha)$ and $G(Z,\alpha) \overset{d}{=} G(Z,1-\alpha)$ is desirable. By devising a transformation function $t(x)$ symmetric at $0.5$ and plugging it into $D(\Qal,t(\alpha))\overset{d}{=}D(Q^{1-\alpha},t(1-\alpha))$ and $G(Z,t(\alpha))\overset{d}{=} G(Z,t(1-\alpha))$, the constraint can be satisfied. This work adopts $t(x)=-2\lvert x-0.5\rvert+1$ for simulation studies.  

In the ideal case, the perfectly learned $G(Z,\alpha)$ recovers the ground-truth marginal distributions $p_{\Qal,\alpha}$ for all levels of $\alpha$. However, it might not be achievable in practice due to the finite sample size, limited computing resources, and so forth. For instance, if $\alpha \sim {\rm Unif}(0,1)$, the training objective may need far longer iterations to successfully learn $\Qal$ at $\alpha=1$ or $\alpha=0$ because, especially under high multimodality, the sampled $X$ from $p_X$ lies in the boundary region of ${\cal Q}$. For these reasons, we suggest using a mixture-type distribution for $\alpha$ defined as  
\begin{align}
\label{eqn:alpha_dist}
    \alpha \sim r \delta_1(\cdot) + (1-r) p_{\alpha^*}(\cdot),
\end{align}
where $0\leq r\leq 1$, $\alpha^*\sim {\rm Unif}(0,1)$, and $\delta_1$ is a Dirac measure. Such specification of $p_{\alpha}$ naturally encourages the training process to concentrate more on the marginal distribution at $\alpha=1$. Note $r$ stands for the proportion of data instances picked from $p_X$ in minibatches. 
 
\subsection{Reduction of gradients' variance}

\begin{figure}[t]
\centering
\includegraphics[width=1.0\textwidth]{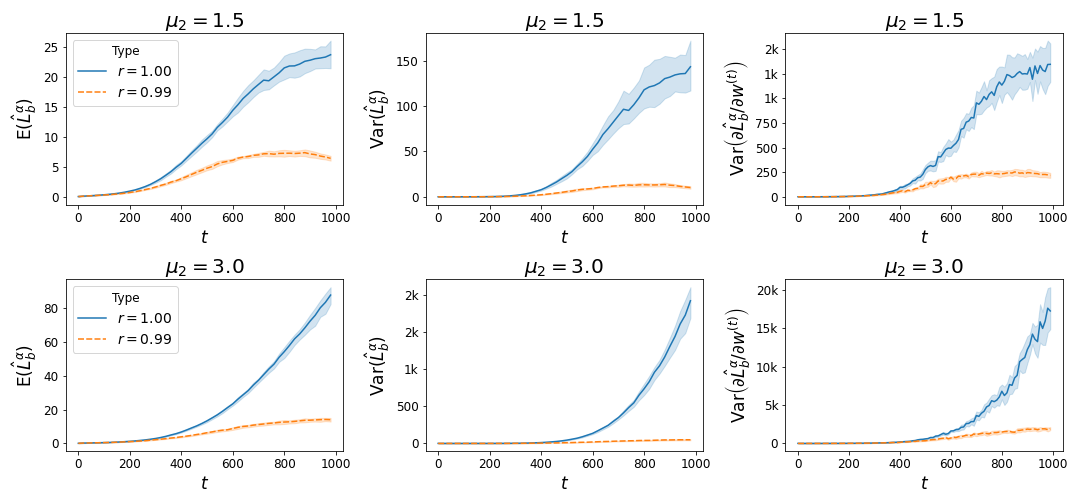}
\vspace{-0.6in}
\caption{Comparison of $\hat{L}^{\alpha}_b(\Dt,\Gt)$ and $\hat{L}_b^1(\Dt,\Gt)$ on the toy example in Figure~\ref{fig:value_variance} with the same structure of ${\cal D}$: in the case of $r=1$,  minibatches consist of original samples. For the case of $r=0.99$, there is only 1\% of interpolated samples in each minibatch.}
\label{fig:ref_choice}
\end{figure}

This section shows that our parallel tempering framework can enjoy improved training stability by reducing the variance of the gradients. Our analysis focuses on comparing the gradients of $\Dt$ when it comes with $\hat{L}_b^{\alpha}(\Dt,\Gt)$ or $\hat{L}_b^1(\Dt,\Gt)$, that is, parallel tempering training versus vanilla training. For fair and explicit comparison, we examine the behavior of one-step update $\bw^{(t+1)}$ for $\Lhat_{b}^{\alpha}$ and $\Lhat_b^1$ respectively, updated from the same configuration of $\bw^{(t)}$. To be specific, we compare $\bw^{(t+1)}|_{p_{\alpha}}=\bw^{(t)}+\gamma_D \partial \hat{L}_b^{\alpha}(\Dt,\Gt)/\partial \bw$ and $\bw^{(t+1)}|_{\delta_1}=\bw^{(t)}+\gamma_D \partial \hat{L}_b^1(\Dt,\Gt) / \partial \bw$.
The use of $\Qal$ introduces some ``bias'' $\bE[\bw^{(t+1)}|_{p_\alpha}]-\bE[\bw^{(t+1)}|_{\delta_1}]$ if the ultimate goal is to train a generative model under $\alpha=1$ (i.e., $\bE[\bw^{(t+1)}|_{\delta_1}]$ is the ``gold standard'' gradient). This bias disappears when $r=1$ but would exist when $0 \leq r <1$. This bias could be negative for learning the target marginal distribution $p_{X,1}$ because the tempering approach essentially learns the joint density of $p_{\Qal,\alpha}$. However, on the bright side, we find that the gradients' variance can substantially decrease with $r<1$ as the following corollary and remark substantiate.
\begin{corollary}
\label{prop:grad_upbd2_2}
Suppose (A1-3) holds. With $C_{\kappa,j}^{(t)}(l)$ for $j=1,2,3$ and $C_{\bw}^{(t)}(l)$ in Theorem~\ref{prop:grad_upbd1}, for any $l,r,c$, $\left\lvert \pr \Lhat_b(\Dt,\Gt)/\pr W_{l,r,c}\right\rvert/C_{\bw}^{(t)}(l)$ is bounded by
\begin{align*}
 \leq C_{\kappa,1}^{(t)}(l)d_{\cal D}(p_{\Qal,\alpha},p_{\Gt(Z,\alpha),\alpha}) +  
C_{\kappa,2}^{(t)}(l)d_{\kappa}(p_{\Qal,\alpha},p_{\Gt(Z,\alpha),\alpha}) + C_{\kappa,3}^{(t)}(l) +O_p\left(1/{\sqrt{n_b}}\right).
\end{align*}
\end{corollary}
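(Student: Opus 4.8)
The plan is to recognize that Corollary~\ref{prop:grad_upbd2_2} is nothing but Theorem~\ref{prop:grad_upbd1} applied to an \emph{augmented} GAN problem, so the whole argument reduces to checking that the hypotheses of the theorem are inherited by that augmented problem and that its technical constants are unchanged. Write $\tilde X_i=(\Qal_i,\alpha_i)\in\mathbb{R}^{d_X+1}$ for the tempered data sample and $\tilde G^{(t)}(Z_j,\alpha_j)=(\Gt(Z_j,\alpha_j),\alpha_j)\in\mathbb{R}^{d_X+1}$ for the corresponding generated point. Then the tempered minibatch loss (written $\Lhat_b$ in the corollary) satisfies $\hat{L}^{\alpha}_b(\Dt,\Gt)=\tfrac1{n_b}\sum_{i=1}^{n_b}\Dt(\tilde X_i)-\tfrac1{n_b}\sum_{j=1}^{n_b}\Dt(\tilde G^{(t)}(Z_j,\alpha_j))$, i.e., it has the exact form of $\Lhat_b$ with $p_X$ replaced by $p_{\Qal,\alpha}$, with $p_{\Gt(Z)}$ replaced by $p_{\Gt(Z,\alpha),\alpha}$, and with a critic whose first weight matrix $W_1$ carries one extra column to absorb the $\alpha$-coordinate.

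First I would verify (A1)--(A3) for the augmented problem. For (A3): $\lVert\Qal\rVert\le B_X$ (recorded after \eqref{def:convex}) and $\alpha\in[0,1]$, so $\tilde X$ is supported in the ball of radius $\sqrt{B_X^2+1}$; likewise the augmented reference input $(Z,\alpha)$ is bounded, hence by the (A1)--(A2) Lipschitz composition so is $\tilde G^{(t)}(Z,\alpha)$. For (A1): appending one column to $W_1$ keeps the parameter set of the product-ball form with the same (or a slightly enlarged fixed) Frobenius bound; (A2) is untouched since the activations $\kappa_i$ are the same. Crucially, the constants $C_{\bw}^{(t)}(l)$ and $C_{\kappa,j}^{(t)}(l)$ of Theorem~\ref{prop:grad_upbd1} are defined purely in terms of $\bw^{(t)}$ and the activation type (as detailed in the supplementary material) and do not encode $p_X$ or $B_X$, so they are literally the same symbols here.

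Next I would invoke Theorem~\ref{prop:grad_upbd1} on the augmented pair $(p_{\Qal,\alpha},p_{\Gt(Z,\alpha),\alpha})$. Its proof splits $\pr\Lhat_b/\pr W_{l,r,c}$ into (i) a population difference $\bE[\pr\Dt(\tilde X)/\pr W_{l,r,c}]-\bE[\pr\Dt(\tilde G^{(t)}(Z,\alpha))/\pr W_{l,r,c}]$ and (ii) an empirical-fluctuation remainder; part (i) is then bounded, via the chain rule through the layers together with the Lipschitzness/boundedness of the $\kappa_l$ and $\kappa_l'$ and the weight bounds, by $C_{\kappa,1}^{(t)}(l)\,d_{\cal D}+C_{\kappa,2}^{(t)}(l)\,d_\kappa+C_{\kappa,3}^{(t)}(l)$ between the two distributions. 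None of these steps uses any structural feature of the distributions beyond boundedness, which we have just checked, and since the $\kappa_l$ are unchanged the same dichotomy ($d_\kappa=d_{W_1}$ or $d_{\mathrm{TV}}$) applies. The $O_p(1/\sqrt{n_b})$ remainder survives because the minibatch still carries $n_b$ bounded summands: $\{\tilde X_i\}$ and $\{\tilde G^{(t)}(Z_j,\alpha_j)\}$ are i.i.d. (or, if a given $X$ reappears across interpolation pairs, exchangeable with bounded range), so a standard $L^2$/Hoeffding estimate gives the claimed order. Collecting (i) and (ii) yields the displayed inequality.

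The only real friction I expect is bookkeeping around the constants: one must confirm that $C_{\kappa,3}^{(t)}(l)$ — a covariance between the $(l-1)$th hidden activation and the backpropagated post-layer gradient — remains finite and still vanishes when $p_{\Qal,\alpha}\overset{d}{\approx}p_{\Gt(Z,\alpha),\alpha}$ after the input augmentation, and that the hidden constant in the $O_p(1/\sqrt{n_b})$ term depends on $\alpha$ only through the now-bounded support. Both are immediate once the constants are written out, so the corollary follows.
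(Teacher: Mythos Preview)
Your proposal is correct and matches the paper's approach: the paper's proof of Corollary~\ref{prop:grad_upbd2_2} is the single sentence ``By following the proof scheme of Theorem~\ref{prop:grad_upbd1}, the result in the main text can be obtained,'' and your write-up is precisely the detailed verification that the augmented inputs $(\Qal,\alpha)$ and $(\Gt(Z,\alpha),\alpha)$ satisfy (A1)--(A3) so that the same chain of inequalities goes through with the same constants.
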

\begin{remark}
\label{rem:var_reduction}
    Under \eqref{eqn:alpha_dist}, if $d_{\cal D}(p_{Q^{\alpha^*},\alpha^*},p_{\Gt(Z,\alpha^*),\alpha^*})\leq d_{\cal D}(p_{X,1},p_{\Gt(Z,1),1})$, it is likely that $d_{\kappa}(p_{Q^{\alpha^*},\alpha^*},p_{\Gt(Z,\alpha^*),\alpha^*})\leq d_{\kappa}(p_{X,1},p_{\Gt(Z,1),1})$ due to the positive relationship between $d_{\cal D}$ and $d_{\kappa}$. Thus Corollary~\ref{prop:grad_upbd2_2} suggests that the variance can decrease. In this regard, $r$ should be carefully tuned to effectively balance the bias–variance trade-off in training $D$ and $G$ to maximize the utility of generated data. The divergences at $\alpha=1$ are generally larger than the ones defined on $\alpha^* \sim p_{\alpha^*}$, e.g., during early training course or mode collapse. 
\end{remark}
Figure~\ref{fig:ref_choice} compares the expectation and the (gradients') variance of $\hat{L}_b^{\alpha}(\Dt,\Gt)$ and $\hat{L}_b^1(\Dt,\Gt)$ on the toy example introduced in Figure~\ref{fig:value_variance}.  In Figure~\ref{fig:ref_choice}, we see that the estimates of $d_{\cal D}$, i.e., $\bE[\hat{L}_b^{\alpha}]$, become substantially smaller with $r=0.99$ and in more severe multimodality as well, validating the variance reduction of the gradients suggested by Remark \ref{rem:var_reduction}. Moreover, the (gradients') variance of $\Lhat^{\alpha}_b$ and $\Lhat_b^1$ closely resemble the behavior of $\bE[\Lhat_b^{\alpha}]$ and $\bE[\Lhat_b^1]$, which also supports Corollary~\ref{prop:grad_upbd2_2} that compares the size of gradients' variance by the upper bound. More specific analysis within a linear class of $D$ appears in SM~\ref{supp:ad_var_linear}, which shows a consistent conclusion.

The use of $\Qal$, however, may not always bring the variance reduction effects. There might be the reverse relationship $d_{\cal D}(p_{Q^{\alpha^*},\alpha^*},p_{\Gt(Z,\alpha^*),\alpha^*}) > d_{\cal D}(p_{X,1},p_{\Gt(Z,1),1})$. 
For instance, if $\Gt$ already covers ${\cal S}$ well, the condition may not hold. In this case, actually, the training does not obstinately need the variance reduction because the neural distance $d_{\cal D}(p_{X,1},p_{\Gt(Z,1),1})$ (and also $d_{\kappa}$) is sufficiently minimized yielding the small variance according to Corollary~\ref{prop:grad_upbd2_2}. 

Reducing gradients' variance has been mostly addressed in the optimization literature. \cite{yang:etal:20} shows that solving nonconvex-nonconcave min-max problems has $O\left(\sigma^2_{\text{grad}}/{t}\right)$ convergence rate under the two-sided Polyak-{\L}ojasiewicz condition w.r.t. the loss function where $\sigma^2_{\text{grad}}$ upper bounds gradients' variance. For more details, refer to \cite{yang:etal:20} and references therein. To emphasize the importance of reducing the gradients' variance, this work provides empirical evidence in SM~\ref{supp:noisy_gradient}, displaying the failure of general GAN training when the critic $\Dt$ is exposed to high variance of gradients.

\subsection{Coherent training via regularization}
\label{sec:penalty}

Although training with $p_{\Qal,\alpha}$ offers such statistical benefits, the optimization in \eqref{builtin} may fall into separate GAN training for each $\alpha$, i.e., learning $\Qal|\alpha$ individually. In such cases, there might be no guarantee that the use of $\Qal$ contributes to the GAN training for $p_X$. For example, if training converges at $\alpha_1$ but not at $\alpha_2 \neq \alpha_1$, then $G$ sharing network parameters for all $\alpha$ may sacrifice the equilibrium at $\alpha_1$ to improve convergence at $\alpha_2$.



To prevent such a separate training system, we devise a novel penalty to maximize the potential of the parallel training \eqref{builtin}. In a nutshell, an additional condition to $D$ is imposed such that the learning process across different temperatures is at a similar pace. Given two samples $X_1$ and $X_2$ drawn from two distribution modes of target distributions, thus $\alpha X_1 + (1-\alpha)X_2$ represents a sample from one distribution mode of $p_{Q^{\alpha}}$ for any $\alpha$. The idea is to synchronize the learning pace across different $\alpha$.
As $D(\alpha X_1 + (1-\alpha)X_2, \alpha)$ relates to how good the generator $G(\cdot,\alpha)$ learns the distribution mode represented by $\alpha X_1 + (1-\alpha)X_2$, we regularize the coherency of $D$ values for all $\alpha$'s. That is, 
for $\Qalf=\alpha_1 X_1 + (1-\alpha_1)X_2$ and $\Qals=\alpha_2 X_1 + (1-\alpha_2)X_2$ under the same $(X_1, X_2)$, we hope that $D(\Qalf, \alpha_1)\approx D(\Qals, \alpha_2)$. Through the mean value theorem, we brutally approximate the difference between $D(\Qalf, \alpha_1)$ and $D(\Qals, \alpha_2)$ by $\nabla_{{Q}^{\tilde\alpha}} D({Q}^{\tilde\alpha}, \tilde{\alpha})\cdot(\Qalf - \Qals)$ where $\tilde{\alpha}=\nu \alpha_1 + (1-\nu) \alpha_2$  for some $0\leq \nu\leq 1$, and thus place the following novel penalty when updating $D$,
\begin{align}
\label{penalty}
H=\lambda \bE_{\alpha_1,\alpha_2,\nu}\bE_{\Qalf,\Qals}\left[\left(\nabla_{{Q}^{\tilde\alpha}} D({Q}^{\tilde\alpha}, \tilde{\alpha})\cdot(\Qalf - \Qals)\right)^2\right], 
\end{align}
where $\alpha_1 \sim p_{\alpha}$, $\alpha_2 \sim {\rm Unif}(0,1)$, $\nu\sim{\rm Unif}(0,1)$, and $\lambda$ is a hyperparameter to determine the penalty's impact. 

\begin{remark}
Note that  $\nabla_{{Q}^{\tilde\alpha}} D({Q}^{\tilde\alpha}, \tilde{\alpha})\cdot(\Qalf - \Qals)$ can be rewritten as 
 $(\alpha_1-\alpha_2)\nabla_{{Q}^{\tilde\alpha}} D({Q}^{\tilde\alpha}, \tilde{\alpha})\cdot(X_1 - X_2)$. Thus, if $X_1$ and $X_2$ are far away from each other, i.e., two distant distribution modes of $p_X$, the penalty is larger. Intuitively, the proposed coherency penalty is a weighted penalty w.r.t. between-mode distance, so it accommodates the multimodality of $p_X$.
\end{remark}

Additionally, the penalty helps avoid compromising the convergence of other temperatures. Intuitively, the penalty encourages $\bE_{\alpha_1,\alpha_2}\bE_{\Qalf,\Qals}[\lVert D(\Qalf,\alpha_1)-D(\Qals,\alpha_2)\rVert^2]$ to decrease and thus contributes to diminishing $\bE_{\Qal}[\lVert \nabla_{\Qal} D(\Qal, \alpha)\rVert^2]$ for all $\alpha$ simultaneously, so the training at least locally converges to the equilibrium for all $\alpha$ by \cite{mesc:etal:18}. Ideally, there would be no momentum to the escape of equilibria across all $\alpha$. The penalty also naturally helps control the size of weight matrices, so it further contributes to stabilizing the GAN training as discussed in Section~\ref{sec:grad_var_inflation}.

The implementation of our method consists mainly of three steps. In every iteration, the $n_b$ size minibatches of $\Qalf$, $\Qals$, and $Z$ are created respectively with $\alpha_1 \sim p_{\alpha}$, $\alpha_2 \sim {\rm Unif}(0,1)$, which secondly are used to evaluate $\hat{L}_b$ with $\Qalf$ and $Z$; and the penalty \eqref{penalty} with $\Qalf$ and $\Qals$. Then it executes the gradient ascent/descent for the critic and the generator, respectively.  
To see detailed implementations and possible variations, refer to Algorithm~\ref{alg:ptgan} in SM~\ref{supp:algorithm}. This work uses $\lambda=100$ as a default. 

\subsection{Statistical analysis}

In this section, we analyze the proposed distance in \eqref{builtin} and its estimation error within the size-independent sample complexity framework \citep{golo:etal:18,ji:etal:21}. We show that the estimated generator, which globally minimizes the neural distance in \eqref{builtin} for parallel training, achieves nearly min-max optimality. The employed theoretical framework readily adapts deep and wide neural networks by characterizing the sample complexity via the norm of weight matrices. 

To begin with, the set of i.i.d. samples of $\Qal$ are constructed from $\bX_{1:n}$. Without loss of generality, the sample size $n$ is assumed even, so there is $n_e=n/2$ number of i.i.d. $\Qal$ samples constructed by $Q^{\alpha_i}_i = \alpha_i X_{2i-1} + (1-\alpha_i) X_{2i}$ for all $i=1,\dots,n_e$.  Let's denote by $d_{\cal D}(\hat{p}_{\Qal,\alpha},\hat{p}_{G(Z,\alpha),\alpha})=\sup_{D\in {\cal D}}\{\sum_{i=1}^{n_e} D(Q^{\alpha_i}_i,\alpha_i)/n_e - \sum_{j=1}^{m} D(G(Z_j,\alpha_j),\alpha_j)/m\}$ the empirical neural distance where $\hat{p}$ implies the empirical mass function, and the estimator $\hat{G}^*$ is determined by minimizing $d_{\cal D}(\hat{p}_{\Qal,\alpha},\hat{p}_{G(Z,\alpha),\alpha})$. Note that the following sample complexity analysis does not consider the minibatch scheme. The estimation error of $\hat{G}^*$ can be characterized by the population-level neural distance. By referring to the work of \cite{ji:etal:21}, we specifically write the estimation error as $d_{\cal D}(p_{\Qal,\alpha},p_{\hat{G}^*(Z,\alpha),\alpha})-\inf_{G\in{\cal G}}d_{\cal D}(p_{\Qal,\alpha},p_{G(Z,\alpha),\alpha})$ where $\inf_{G\in{\cal G}}d_{\cal D}(p_{\Qal,\alpha},p_{G(Z,\alpha),\alpha})$ represents the approximation error.
For the simplicity of analysis, the proposed penalty term (\ref{penalty}) is not considered.


First, we find that the estimation error is bounded by the properties of ${\cal D}$ and the sample size. 
We further assume:
\begin{enumerate}[label={(A\arabic*)}]
    \setcounter{enumi}{3}
    \item The activation functions $\kappa_i$ and $\psi_j$ are positive homogeneous for all $i$ and $j$, i.e., $\kappa_i(cx)=c\kappa_i(x)$ and $\psi_j(cx)=c\psi_j(x)$ for any $c \geq 0$ and $x \in \mathbb{R}$.
\end{enumerate}
ReLU and lReLU are representative examples that satisfy this condition.
\begin{theorem}
\label{cor:ae}
Under (A1-4) and $n_e/m\rightarrow 0$, the estimation error is bounded above by 
\begin{align}
\label{eqn:ae}
    d(p_{Q^{\alpha},\alpha}, p_{\hat{G}^*(Z,\alpha),\alpha})-\inf_{G\in{\cal G}}d_{\cal D}(p_{\Qal,\alpha},p_{G(Z,\alpha),\alpha})\leq C_{\text{UB}}\dfrac{\sqrt{B_X^2 + 1}}{\sqrt{n_e}},
\end{align}
where $C_{\text{UB}}=\prod_{l=1}^d M_w(l) \prod_{s=1}^{d-1} K_{\kappa}(s)(4\sqrt{3d} + 2\sqrt{\log(1/\eta)}))$ with the probability $1-2\eta$.
\end{theorem}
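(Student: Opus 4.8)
The plan is to run the standard neural-distance generalization argument of \cite{ji:etal:21} on top of a size-independent Rademacher complexity bound of the kind \cite{golo:etal:18} that the positive homogeneity (A4) unlocks. First I would decompose the estimation error by an oracle/empirical-minimizer argument: for fixed $\epsilon>0$ pick a near-minimizer $G^{\dagger}\in{\cal G}$ of the population distance, i.e. $d_{\cal D}(p_{\Qal,\alpha},p_{G^{\dagger}(Z,\alpha),\alpha})\le\inf_{G\in{\cal G}}d_{\cal D}(p_{\Qal,\alpha},p_{G(Z,\alpha),\alpha})+\epsilon$. Writing $\Delta(G)=d_{\cal D}(p_{\Qal,\alpha},p_{G(Z,\alpha),\alpha})-d_{\cal D}(\hat p_{\Qal,\alpha},\hat p_{G(Z,\alpha),\alpha})$ and adding and subtracting the empirical distances at $\hat G^*$ and at $G^{\dagger}$, the optimality of $\hat G^*$ for the empirical distance makes the leftover empirical-distance difference $\le 0$, leaving $d_{\cal D}(p_{\Qal,\alpha},p_{\hat G^*(Z,\alpha),\alpha})-\inf_{G\in{\cal G}}d_{\cal D}(p_{\Qal,\alpha},p_{G(Z,\alpha),\alpha})\le\Delta(\hat G^*)-\Delta(G^{\dagger})+\epsilon\le 2\sup_{G\in{\cal G}}|\Delta(G)|+\epsilon$; letting $\epsilon\downarrow0$ reduces the theorem to a uniform deviation bound on $\sup_{G}|\Delta(G)|$.

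Next I would peel off the generator side. Using $d_{\cal D}(p,q)=\sup_{D\in{\cal D}}\{{\bf E}_p[D]-{\bf E}_q[D]\}$ and $|\sup_D a(D)-\sup_D b(D)|\le\sup_D|a(D)-b(D)|$, one gets $\sup_{G}|\Delta(G)|\le T_1+T_2$ with the real-data term $T_1=\sup_{D\in{\cal D}}|{\bf E}_{\Qal,\alpha}[D(\Qal,\alpha)]-n_e^{-1}\sum_{i=1}^{n_e}D(Q_i^{\alpha_i},\alpha_i)|$ and the synthetic-data term $T_2=\sup_{G\in{\cal G}}\sup_{D\in{\cal D}}|{\bf E}_{Z,\alpha}[D(G(Z,\alpha),\alpha)]-m^{-1}\sum_{j=1}^{m}D(G(Z_j,\alpha_j),\alpha_j)|$. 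Since the $Q_i^{\alpha_i}=\alpha_iX_{2i-1}+(1-\alpha_i)X_{2i}$ use disjoint index pairs they are i.i.d., so I would symmetrize: ${\bf E}[T_1]\le 2\,\mathcal{R}_{n_e}({\cal D})$, the Rademacher complexity of ${\cal D}$ on inputs $(\Qal,\alpha)\in{\cal Q}\times[0,1]$, which by (A3) satisfy $\|(\Qal,\alpha)\|\le\sqrt{B_X^2+1}$. Under (A1), (A2) and (A4) — positive homogeneity forces $\kappa_i(0)=0$, so the bias-free nets satisfy $D(0)=0$, and $\|W_i\|_{\mathrm{op}}\le\|W_i\|_F\le M_w(i)$ — the layer-peeling bound of \cite{golo:etal:18} gives $\mathcal{R}_{n_e}({\cal D})\le\sqrt{3d}\,\prod_{l=1}^{d}M_w(l)\prod_{s=1}^{d-1}K_{\kappa}(s)\,\sqrt{B_X^2+1}/\sqrt{n_e}$, with no dependence on the widths $N^D_i$. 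The same reasoning on the composed class gives ${\bf E}[T_2]=O(1/\sqrt{m})$ (with a constant also involving $M_v,K_{\psi},B_Z$), which under $n_e/m\to0$ is $o(1/\sqrt{n_e})$ and hence absorbed, so it never enters $C_{\text{UB}}$.

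To upgrade to high probability I would invoke McDiarmid's bounded-difference inequality: since each $D\in{\cal D}$ is bounded on the domain by $L_{\cal D}=\prod_{l=1}^{d}M_w(l)\prod_{s=1}^{d-1}K_{\kappa}(s)\sqrt{B_X^2+1}$, altering one $X_k$ or one $\alpha_i$ perturbs $T_1$ by $O(L_{\cal D}/n_e)$, so $T_1\le{\bf E}[T_1]+O(L_{\cal D}\sqrt{\log(1/\eta)/n_e})$ with probability $1-\eta$, and the matching statement for $T_2$ costs the other $\eta$, giving probability $1-2\eta$. Collecting the factor $2$ from the oracle decomposition, the factor $2$ from symmetrization, the $\sqrt{3d}$ from the Rademacher bound, and the $\sqrt{\log(1/\eta)}$ from McDiarmid yields \eqref{eqn:ae} with $C_{\text{UB}}=\prod_{l=1}^{d}M_w(l)\prod_{s=1}^{d-1}K_{\kappa}(s)(4\sqrt{3d}+2\sqrt{\log(1/\eta)})$.

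The hard part will be the Rademacher bound: carefully running the positive-homogeneity peeling of \cite{golo:etal:18} for Frobenius-norm-constrained layers so that the depth $d$ enters only polynomially (and the width not at all) and the constant comes out to exactly $\sqrt{3d}$, while checking that appending the $\alpha$-coordinate merely inflates the input radius from $B_X$ to $\sqrt{B_X^2+1}$. Verifying that $T_2$ is genuinely dominated under $n_e/m\to0$ and that the $Q_i^{\alpha_i}$ are truly i.i.d. are secondary but necessary technicalities; the remainder is bookkeeping of constants.
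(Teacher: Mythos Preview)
Your proposal is correct and follows essentially the same route as the paper: the paper packages the argument as Lemma~\ref{lemma:ee} (a three-term telescoping of the estimation error, symmetrization, and McDiarmid) together with Lemma~\ref{lem:rade} (the positive-homogeneity layer peeling giving the $\sqrt{3d}\,\sqrt{B_X^2+1}/\sqrt{n_e}$ Rademacher bound for ${\cal D}$ and an $O(1/\sqrt{m})$ bound for the composite class ${\cal O}$), and then drops the ${\cal R}({\cal O})$ and $1/\sqrt{m}$ terms under $n_e/m\to 0$ exactly as you do. Your two-step oracle decomposition is a cosmetic rephrasing of their three-term telescoping and lands on the same constants.
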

\noindent Interestingly, the estimation error may not increase much although the critic uses a deeper network since the error depends on $\sqrt{d}$. The assumption that $m$ scales faster than $n_e$ is mild in the sense that the algorithm obtains i.i.d. samples $Z_i\sim p_Z$ in every iteration. Note (A4) can be eased to $\kappa_i(0)=\psi_j(0)=0$ if $W_i$ and $V_j$ have a bounded maximal 1-norm. Refer to Remark~\ref{remark:dropa4} in SM to see further discussion.


The approximation error becomes negligible as the capacity of ${\cal G}$ increases. Denote by $V_{\text{D}}$ and $V_{\text{W}}$ the depth and width of $G\in {\cal G}$ which corresponds to the number of weight matrices and the maximal size of hidden neurons in one layer $\max_{2\leq j\leq g}\{N^G_j\}$, respectively.
\begin{proposition}
\label{prop:ae}
    Suppose $p_X$ is supported within $[0,1]^{d_X}$, $Z\in \mathbb{R}^2$ is absolutely continuous on $\mathbb{R}^2$, ${\cal G}$ uses the ReLU activation function, and (A1) holds. For sufficiently large $\VD$ and $\VW$, the approximation error is then bounded by  
    \begin{align}
    \label{eqn:qpprox}
        \inf_{G \in {\cal G}} d_{\cal D}(p_{\Qal,\alpha},p_{G(Z,\alpha),\alpha}) \leq \prod_{l=1}^d M_w(l) C_{d_X} {(\lceil V_{\text{W}}/2 \rceil^2V_{\text{D}})^{-1/d_X}},
    \end{align}
    where $C_{d_X}$ is a constant that depends on the size of the input dimension $d_X$ only.
\end{proposition}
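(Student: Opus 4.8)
The plan is to combine the Lipschitz comparison between the neural distance and the $1$-Wasserstein distance from Proposition~\ref{prop:nd_wd} with an off-the-shelf universal approximation guarantee for deep ReLU generative networks, crucially exploiting that the temperature $\alpha$ is fed \emph{directly} into $G$ and hence never needs to be approximated. First, since ${\cal G}$ uses the $1$-Lipschitz ReLU activation, and assuming (as is standard here) that the critic activations are also $1$-Lipschitz so that $\prod_{u=1}^{d-1}K_\kappa(u)=1$, applying the upper bound of Proposition~\ref{prop:nd_wd} to the pair $(p_{\Qal,\alpha},p_{G(Z,\alpha),\alpha})$ gives $d_{\cal D}(p_{\Qal,\alpha},p_{G(Z,\alpha),\alpha})\le \prod_{l=1}^d M_w(l)\, d_{W_1}(p_{\Qal,\alpha},p_{G(Z,\alpha),\alpha})$. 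It therefore suffices to construct a single $G\in{\cal G}$ of width $\VW$ and depth $\VD$ with $d_{W_1}(p_{\Qal,\alpha},p_{G(Z,\alpha),\alpha})\le C_{d_X}(\lceil \VW/2\rceil^2\VD)^{-1/d_X}$.

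Second, I would reduce this joint Wasserstein distance to a conditional one. The laws of $(\Qal,\alpha)$ and $(G(Z,\alpha),\alpha)$ share the same $\alpha$-marginal $p_\alpha$, and in both the last coordinate is literally $\alpha$; gluing, for each fixed $\alpha$, an optimal coupling of the conditional laws $p_{\Qal\mid\alpha}$ and $p_{G(Z,\alpha)\mid\alpha}$ with the diagonal coupling of $\alpha$ with itself produces an admissible coupling of the two joints whose cost equals $\bE_\alpha\big[d_{W_1}(p_{\Qal\mid\alpha},p_{G(Z,\alpha)\mid\alpha})\big]$ (the $\alpha$-part of the transport cost vanishes). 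Hence $d_{W_1}(p_{\Qal,\alpha},p_{G(Z,\alpha),\alpha})\le \bE_\alpha\big[d_{W_1}(p_{\Qal\mid\alpha},p_{G(Z,\alpha)\mid\alpha})\big]$, and since each target conditional $p_{\Qal\mid\alpha}$ is the law of a convex combination of two i.i.d.\ draws from $p_X$ --- supported in the convex set $[0,1]^{d_X}$ --- the effective approximation dimension is $d_X$, not $d_X+1$, which is precisely what the exponent in the claim reflects.

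Third, I would build $G$ explicitly. After an initial, ReLU-realizable change of variables (using absolute continuity of $Z$) one may assume $Z=(Z^{(1)},Z^{(2)})$ has independent, one-dimensional, absolutely continuous coordinates. Invoking the standard approximation result for ReLU generators --- a network of width $\asymp\lceil\VW/2\rceil$ and depth $\asymp\VD$ pushing a one-dimensional absolutely continuous source onto a distribution on $[0,1]^{d_X}$ within $W_1$-error $\varepsilon:=C_{d_X}(\lceil\VW/2\rceil^2\VD)^{-1/d_X}$ --- produces two such subnetworks $T_1,T_2$, fed by $Z^{(1)}$ and $Z^{(2)}$ respectively, each approximating $p_X$; run in parallel they occupy total width $\asymp\VW$. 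I would then let $G(\cdot,\alpha)$ be a ReLU network that approximates $\alpha\,T_1(Z^{(1)})+(1-\alpha)\,T_2(Z^{(2)})$ coordinatewise to uniform accuracy $\delta\le\varepsilon$; this costs only $O(d_X)$ extra width and $O(\log(1/\delta))$ extra depth through a Yarotsky-type multiplication gadget, all absorbed into the hypothesis of sufficiently large $\VD,\VW$ (and, if needed, into the weight constants $M_v(\cdot)$ of (A1) being taken large enough). For each fixed $\alpha$, coupling $X_1\leftrightarrow T_1(Z^{(1)})$ and $X_2\leftrightarrow T_2(Z^{(2)})$ by the product of the two individually optimal couplings --- legitimate because $X_1\perp X_2$ and $Z^{(1)}\perp Z^{(2)}$ --- yields $d_{W_1}\big(p_{\Qal\mid\alpha},\,\mathrm{Law}(\alpha T_1(Z^{(1)})+(1-\alpha)T_2(Z^{(2)}))\big)\le \alpha\varepsilon+(1-\alpha)\varepsilon=\varepsilon$, while the uniform multiplication error adds at most $\delta$; taking $\bE_\alpha$ and folding absolute constants into $C_{d_X}$ completes the argument.

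The main obstacle is the third step: securing (or quoting with the stated constants) the precise ``width-squared times depth'' rate for pushing a low-dimensional absolutely continuous source onto an arbitrary distribution supported on $[0,1]^{d_X}$ --- this is where the genuine dimension dependence $(\cdot)^{-1/d_X}$ and the factor $\lceil\VW/2\rceil^2$ originate, via space-filling / histogram-transport constructions --- together with the bookkeeping needed to verify that the reparametrization of $Z$ and the multiplication-by-$\alpha$ gadget can be realized inside the constrained class ${\cal G}$ without inflating the bound. Granting that approximation rate as a black box, the remaining steps are routine coupling estimates and triangle inequalities.
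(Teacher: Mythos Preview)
Your proposal is correct and follows essentially the same route as the paper's proof: both first pass from $d_{\cal D}$ to $d_{W_1}$ via the Lipschitz bound, condition on $\alpha$, realize $G$ as an (approximate) convex combination $\alpha\,T_1(Z^{(1)})+(1-\alpha)\,T_2(Z^{(2)})$ of two parallel sub-generators each occupying width $\lceil \VW/2\rceil$, control the multiplication-by-$\alpha$ via a Yarotsky-type product gadget whose error decays geometrically in depth and is therefore negligible, and then invoke the Huang et al.\ rate $C_{d_X}(\lceil \VW/2\rceil^2 \VD)^{-1/d_X}$ for each sub-generator together with the product coupling $X_k\leftrightarrow T_k(Z^{(k)})$. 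The only cosmetic difference is ordering: the paper conditions on $\alpha$ at the level of $d_{\cal D}$ before passing to $d_{W_1}$, whereas you do the reverse; your explicit note about needing $Z^{(1)}\perp Z^{(2)}$ and absolutely continuous 1D marginals is a point the paper leaves more implicit.
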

\noindent The approximation error is primarily influenced by the dimension of the target distribution, but, as \cite{huan;etal;22} justified, $d_X$ appearing in the exponent is reduced to the intrinsic dimension of $p_X$, which is usually smaller than $d_X$. The essence of our proof leverages two small sub-generators $\tilde{G}_1,\tilde{G}_2$, smaller than $G\in {\cal G}$, that approximate $p_X$ and observes that $G$ can approximate the linear interpolation $\alpha\tilde{G}_1 + (1-\alpha)\tilde{G}_2$. Hence, $G$ approximates the distribution of $\Qal$. Refer to SM~\ref{supp:prop:approx} to see the proof in detail.

Lastly, we present the minimax lower bound in the following Theorem~\ref{thm:lbd}. Suppose that ${\cal P}_{{\cal Q},[0,1]}$ is the family of Borel probability measures over the domain ${\cal Q}\times [0,1]$.
\begin{theorem}
\label{thm:lbd}
    Under (A1) and (A3), let $\hat{p}_{n_e}$ be any estimator of the target distribution $p_{\Qal,\alpha}$ constructed based on the $n_e$ size of 
    random samples. Then, 
    \begin{align}
    \label{eqn:lbd}
        & \inf_{\hat{p}_{n_e}} \sup_{p_{\Qal,\alpha} \in {\cal P}_{{\cal Q},[0,1]}}  P\left[d_{\cal D}(p_{\Qal,\alpha},\hat{p}_{n_e}) \geq \dfrac{C_{\text{LB}}}{\sqrt{n_e}} \right] > 0.55, 
    \end{align}
    where $C_{\text{LB}}=\log2|c(C_{X}^2)+c(B_X^2)+c(1-B_X^2)+c(-B_X^2)|/160$ with $C_{X}=\sqrt{B_X^2 + 1}$ and $c(x)=M_w(d)(\kappa_{d-1}(\cdots \kappa_1(M_w(1)x/C_{X})))$.
\end{theorem}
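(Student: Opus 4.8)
\textbf{Proof proposal for Theorem~\ref{thm:lbd} (minimax lower bound).}

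The plan is to use Le Cam's two-point method (or more precisely a variant of Fano/Le Cam with a small finite family of hypotheses), which is the standard device for proving minimax lower bounds of the form $c/\sqrt{n_e}$ for estimation in a metric that behaves like a bounded-Lipschitz/Wasserstein-type distance. First I would construct two (or a handful of) candidate distributions $p_0, p_1 \in {\cal P}_{{\cal Q},[0,1]}$ supported on the domain ${\cal Q}\times[0,1]$ that are (i) statistically close, i.e. the KL divergence or Hellinger distance between the $n_e$-fold products satisfies $\mathrm{KL}(p_0^{\otimes n_e}\,\|\,p_1^{\otimes n_e}) = n_e\,\mathrm{KL}(p_0\|p_1) = O(1)$, yet (ii) well-separated in the neural distance, $d_{\cal D}(p_0,p_1) \geq 2\,C_{\text{LB}}/\sqrt{n_e}$. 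The natural choice, given the appearance of $C_X = \sqrt{B_X^2+1}$ and the points $\pm B_X^2$, $1-B_X^2$ in the definition of $C_{\text{LB}}$, is to take $p_0$ and $p_1$ to be mixtures of point masses (or narrow bumps) placed at the extreme corners of ${\cal Q}\times[0,1]$ — e.g. at locations whose image under a one-node network $x\mapsto M_w(d)\kappa_{d-1}(\cdots\kappa_1(M_w(1)x/C_X))$ realizes the values $c(B_X^2)$, $c(-B_X^2)$, etc. — with mixing weights that differ by $O(1/\sqrt{n_e})$. This perturbation-of-weights construction makes the KL between $p_0$ and $p_1$ scale like $1/n_e$ (so the product KL is $O(1)$) while the difference in expectations of a fixed admissible critic $D\in{\cal D}$ scales like $1/\sqrt{n_e}$ times the spread of $D$-values across those corners, which is exactly what the constant $c(\cdot)$ bookkeeps.

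The key steps, in order, are: (1) exhibit an explicit critic $D^\star\in{\cal D}$ (a one-hidden-unit-per-layer network obeying the norm bounds (A1) and acting on inputs in the ball of radius $B_X$ from (A3)) for which $\bE_{p_1}[D^\star] - \bE_{p_0}[D^\star]$ is a fixed positive multiple of the amplitude $|c(C_X^2)+c(B_X^2)+c(1-B_X^2)+c(-B_X^2)|$ — this is where the precise algebraic form of $C_{\text{LB}}$ comes from, and it lower-bounds $d_{\cal D}(p_0,p_1)$ by definition of the sup over ${\cal D}$; (2) compute or bound $\mathrm{KL}(p_0\|p_1)$ (or the Hellinger/$\chi^2$ distance) for the chosen mixing-weight perturbation $\varepsilon = \Theta(1/\sqrt{n_e})$ and check it is $\le$ const$/n_e$, using $\log 2$ as the numerical benchmark appearing in $C_{\text{LB}}$; (3) apply Le Cam's inequality: for any estimator $\hat p_{n_e}$, $\inf_{\hat p}\max_{j\in\{0,1\}} P_j[d_{\cal D}(p_j,\hat p)\ge d_{\cal D}(p_0,p_1)/2] \ge \tfrac12(1 - \mathrm{TV}(p_0^{\otimes n_e},p_1^{\otimes n_e})) \ge \tfrac12(1-\sqrt{\tfrac12\mathrm{KL}})$, and choose the $O(1)$ constants so this exceeds $0.55$ (hence the $1/160$ and $\log 2$ factors in $C_{\text{LB}}$ — they are tuned precisely so the Pinsker/Bretagnolle--Huber bound clears the $0.55$ threshold); (4) note the sup over ${\cal P}_{{\cal Q},[0,1]}$ in the theorem is at least the max over $\{p_0,p_1\}$, and the triangle inequality in $d_{\cal D}$ converts the two-point testing bound into the stated estimation lower bound.

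The main obstacle I expect is step (2) combined with the bookkeeping in step (1): one must pick the perturbed mixture so that \emph{simultaneously} the per-sample KL is $\le$ (small const)$/n_e$ and the critic-gap is $\ge$ (const)$/\sqrt{n_e}$ with the \emph{same} constants that produce exactly $C_{\text{LB}}$, while also respecting that admissible critics must be globally defined on the whole ball — so the extreme-point values $c(\pm B_X^2)$, $c(1-B_X^2)$, $c(C_X^2)$ have to be genuinely attainable differences of a single $D^\star\in{\cal D}$ rather than of four different networks. Handling the $\alpha$-coordinate is a mild additional wrinkle: since ${\cal Q}\times[0,1]$ includes the temperature axis, the construction should place mass at both $\alpha=0$ and $\alpha=1$ (or use $\alpha$ as one more coordinate fed into $W_1$), and one checks the norm constraint (A1) still only sees the augmented input of norm $\le C_X=\sqrt{B_X^2+1}$, which is why $C_X$ rather than $B_X$ appears. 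A secondary subtlety is that $d_{\cal D}$ is only a pseudometric (it can vanish on distinct measures if ${\cal D}$ is not rich enough), so one must make sure the chosen $p_0,p_1$ are distinguished by ${\cal D}$ — guaranteed here by the explicit $D^\star$ — and that the triangle-inequality step in (4) is valid for the pseudometric, which it is since $d_{\cal D}$ is a sup of differences of linear functionals and hence satisfies the triangle inequality.
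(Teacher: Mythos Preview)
Your construction is essentially the paper's: discrete distributions supported on the four corners $(\pm q_1,\alpha)$ with $\alpha\in\{0,1\}$, mixing weights perturbed by $\delta\asymp 1/\sqrt{n_e}$, an explicit one-node-per-layer critic $D^\star$ (with $W_1^\dagger$ proportional to $(q_1,1)/C_X$) realizing the values $c(C_X^2),c(B_X^2),c(1-B_X^2),c(-B_X^2)$, and a KL bound of order $\delta^2$ per sample. All of that matches.

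There is, however, a genuine gap in step~(3). The two-point Le~Cam inequality you wrote,
\[
\inf_{\hat p}\max_{j\in\{0,1\}} P_j\bigl[d_{\cal D}(p_j,\hat p)\ge d_{\cal D}(p_0,p_1)/2\bigr]\ \ge\ \tfrac12\bigl(1-\mathrm{TV}(p_0^{\otimes n_e},p_1^{\otimes n_e})\bigr),
\]
is bounded above by $1/2$ regardless of how small you make the KL. No choice of constants can push this past $0.55$. The paper instead uses Fano's lemma with \emph{three} hypotheses $p_0,p_1,p_2$ (indexed by $u\in\{0,1,2\}$ with weights $1/4\pm u\delta$), for which the Fano bound
\[
\frac{\sqrt{M}}{1+\sqrt{M}}\Bigl(1-2\alpha-\tfrac{2\alpha}{\log M}\Bigr)
\]
with $M=2$ gives the prefactor $\sqrt{2}/(1+\sqrt{2})\approx 0.586$; taking $\delta=\log 2/(80\sqrt{n_e})$ makes $\alpha=\log 2/80$, and the resulting bound clears $0.55$. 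You even gesture at ``a variant of Fano/Le~Cam with a small finite family'' in your first sentence---that instinct is correct, and the fix is simply to commit to the three-hypothesis Fano version rather than the two-point Le~Cam form. Everything else in your plan (the corner construction, the explicit critic, the KL computation, the role of $C_X=\sqrt{B_X^2+1}$ from the augmented input) goes through unchanged.
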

\noindent 
Provided that the minimax convergence and approximation results for the original GAN model by \cite{ji:etal:21} and \cite{huan;etal;22} substitute $\sqrt{B_X^2+1}$ in \eqref{eqn:ae} and \eqref{eqn:lbd} for $B_X$, $n_e$ for $n/2$, and $\lceil\VW/2\rceil$ in \eqref{eqn:qpprox} for $\VW$, our parallel tempering structure might involve slightly higher errors. If $p_X$ is relatively simple (e.g., unimodal or mild multimodal), making it easier for GAN training to achieve global optimality, we acknowledge that the original training might be more efficient than ours. However, when $p_X$ is highly noisy and severely multimodal, GAN training is prone to falling into local optima, involving further unstable training, such as mode collapse. Then our parallel tempering technique offers stabilized gradients such that the GAN training reaches optimal equilibrium stably while still achieving the same minimax convergence rate $\sqrt{n}$ for global optimality.


\section{Simulation Studies}
\label{sec:simul}
This section handles complex real-world datasets. For simpler targets like the 8-mixture distribution (Figure~\ref{fig:mode_collapse}), our method performs well, with results in SM~\ref{supp:mixup} (Figure~\ref{fig:mixture_comp}). To highlight the inherent improvement from our method, we minimally use extra training tricks rather than aim for state-of-the-art records. Evaluation scores are averaged over 10 independent runs, with standard deviations shown in parentheses. Simulation details, such as architectures, metrics, baselines, optimizers, etc, are provided in SM~\ref{supp:simulation}.


\subsection{Data generation}
\label{sec:data_gen}


\paragraph{Image Data Generation} 
We evaluate generative performance on {\bf CIFAR10}, {\bf BloodMnist}, and {\bf CelebA-HQ}. CIFAR10 contains $32\times 32\times 3$ images from 10 classes, while BloodMnist (from MedMNIST \citep{yang:etal:23}) consists of $64\times 64\times 3$ images across 8 blood cell types. CelebA-HQ provides $256\times 256\times 3$ high-quality celebrity images with 40 facial attributes. These classes or attributes induce multimodal $p_X$. We evaluate Inception Score (IS) and Fr\'echet Inception Distance (FID), computed via InceptionV3 pretrained on ImageNet \citep{szeg:etal:16}, and fine-tuned for single-label (BloodMnist) or multi-label (CelebA-HQ) tasks. Higher IS and lower FID indicate better performance.


For CIFAR10 and BloodMnist, PTGAN is compared to generally applicable decent competitors. The spectral normalization \citep[SN,][]{miya:etal:18} frequently used in powerful models, e.g., StyleGAN-XL \citep{saue:etal:22}, is contrasted. As the strongest penalty-based GAN framework to our knowledge, the Lipschitz GAN \citep{zhou:etal:19}, imposing a maximum penalty (MP) of $D$'s gradient norm, is chosen as a competitor. 
For fair comparison, the CNN-based structures of $D$ and $G$ used in \cite{miya:etal:18} are employed by all approaches, and $n_b$ is set to 100. PT and CP represent the proposed objective \eqref{builtin} and the coherency penalty \eqref{penalty} respectively. To investigate CP's effects, we test PT with MP and the common gradient penalty \citep[GP,][]{gulr:etal:17} only suitable for the scaled Wasserstein distance. For CIFAR10 and BloodMnist, we choose $r\in \{0.9,0.99\}$ that maximizes the evaluation metrics. Notably, PTGAN (PT+CP) defeats the competitors in the combinations of the two benchmark data sets and GAN metrics (Table~\ref{simul:cifar10}). In particular, PTGAN achieves notable IS/FID scores when coupled with CP. Table~\ref{tab:impact_r} in SM~\ref{supp:image_additional} shows the scores of $r\in \{0.9,0.99,1\}$, showing $r=1$ yields similar performance with MP.

For CelebA-HQ, we modify the CNN-based structures of $D$ and $G$ to adapt to the high-resolution images. $n_b$ is set to 50 for a feasible computation, and $r=0.98$, i.e., there is only one interpolated image in every minibatch. Table~\ref{simul:celeba} compares the Lipschitz GAN only since the Lipschitz GAN is already shown to be stronger than other methods in Table~\ref{simul:cifar10}. To evaluate FID tailored to CelebA-HQ, we fine-tune the InceptionV3 model to predict facial attributes simultaneously through multi-label learning. Since the concept of the IS metric is based on single-label classification, we instead present the original FID only as a reference.

The decent performance of our approach is supported by Figure~\ref{fig:variance_l}, showing significant variance reduction with $r < 1$ in all data sets when training with the ND metric. Figure~\ref{fig:celeba} visually qualifies generated PTGAN images for each case. To accommodate the page limitation, more illustrations are postponed to SM~\ref{supp:image_additional}.

\begin{table}[ht!]
\caption{Summary of IS/FID: 
The GAN models are also trained with other popular metrics: the Jensen-Shannon divergence \citep[JSD,][]{good:etal:14} and the Pearson $\chi^2$-divergence \citep[PD,][]{mao:etal:17}. ND abbreviates the neural distance.}
\label{simul:cifar10}
\vspace{-0.1in}
\centering
\scriptsize
\begin{tabular}{c|c|cc|cc}
\hline
&& \multicolumn{2}{c|}{\textbf{CIFAR10}} & \multicolumn{2}{c}{\textbf{BloodMnist}}  \\ 
\hline
$d_{\cal D}$ & Type & IS ($\uparrow$) & FID ($\downarrow$) & IS ($\uparrow$) & FID ($\downarrow$) \\ 
\hline
\hline
\multirow{4}{*}{JSD} 
& SN   & 6.513 (0.350)   &  34.205 (4.563)  &  4.513 (1.050)  &  2578.92 (7383.72)    \\
& MP   & 6.768 (0.081)   &  30.209  (0.550) &  5.103 (0.040) &  47.823 (1.237)    \\
& PT + MP &  6.727 (0.067) &  30.314 (0.663) &  5.102 (0.036)    &  47.131 (2.067)     \\
& PT + CP &  \bf{7.349 (0.110)} &   \bf{24.060 (0.815)} &  \bf{5.252 (0.086)}  &  \bf{41.390 (1.464)}  \\
\hline
\multirow{4}{*}{PD} 
& SN   & 6.611 (0.336)  &  33.959 (5.361) &  4.122 (1.539)   &  7948 (21084.409)  \\
& MP  & 6.850 (0.117)  &  29.563 (0.589) &  4.997 (0.045)   &  49.754 (1.145)    \\
& PT + MP  & 6.779 (0.082)  & 29.932 (0.669)  &  5.015 (0.027)    &  50.387 (1.316)    \\
& PT + CP  & \bf{7.429 (0.084)}  &  \bf{23.280 (0.883)}  &  \bf{5.208 (0.039)}   &  \bf{40.966 (1.201)}    \\
\hline
\multirow{6}{*}{ND}  
& SN   & 5.591 (0.198) & 45.868 (2.148)  &  3.711 (0.257)  &  176.040 (33.278)    \\
& MP   & 6.929 (0.123) & 28.777 (1.010)  &   5.006 (0.034)   &  47.951 (1.164)    \\
& GP   & 6.797 (0.106) & 29.814 (0.933) &  4.967 (0.037)  & 51.636 (2.067)      \\
& PT + MP & 6.923 (0.089) &  28.422 (0.961)  &  4.997 (0.025)  &  49.0136 (0.821)    \\
& PT + GP & 6.767 (0.096) & 29.731 (0.661)  &  4.995 (0.052)  &  51.994 (1.114) \\
& PT + CP  & \bf{7.292 (0.090)} & \bf{24.838 (0.866)}  &  \bf{5.071 (0.058)}   &  \bf{41.990 (0.897)} \\
\hline
\end{tabular}
\end{table}

\begin{table}[ht!]
\caption{Summary of FID for \textbf{CelebA-HQ}: FID and MLL-FID are calculated from the pre-trained and the fine-tuned InceptionV3 model via multi-label learning (MLL), respectively.}
\label{simul:celeba}
\vspace{-0.1in}
\centering
\scriptsize
\begin{tabular}{c|c|cc}
\hline
$d_{\cal D}$ & Type & FID ($\downarrow$) & MLL-FID ($\downarrow$)  \\ 
\hline
\hline
\multirow{2}{*}{ND}  
& MP   & 26.859 (0.789) & 23.596 (0.584)  \\
& PT + CP & \bf{24.787 (1.054)} & \bf{20.164 (0.803)} \\
\hline
\end{tabular}
\end{table}

\begin{figure}[ht!]
\centering
\includegraphics[width=1.0\textwidth]{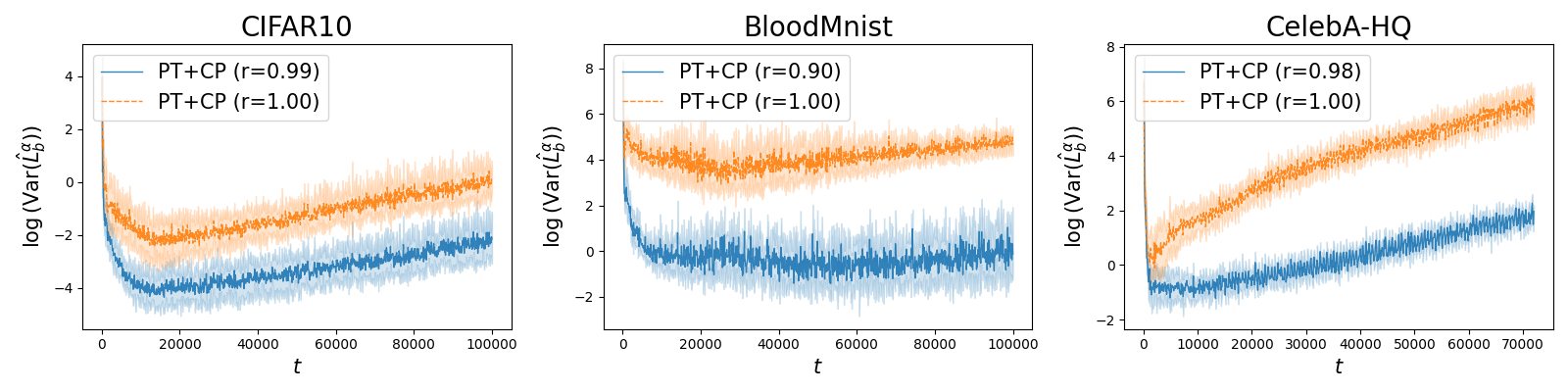}\vspace{-0.10in}
\caption{Variance reduction: the logarithm of $\var[\Lhat_b^{\alpha}]$ on ND over training iterations. Shaded areas indicate one standard deviation from the straight average lines.}
\label{fig:variance_l}
\end{figure}

\begin{figure}[ht!]
\centering
\begin{subfigure}[b]{1.00\textwidth}
    \includegraphics[width=\textwidth]{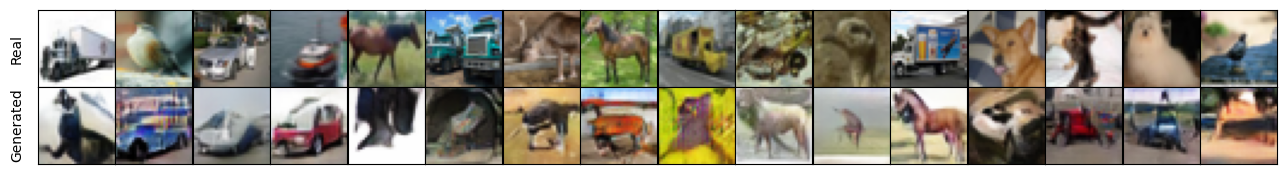}
    \vspace{-0.55in}
    \caption{16 Real/Generated images of CIFAR10}
\end{subfigure}
\begin{subfigure}[b]{1.00\textwidth}
    \includegraphics[width=\textwidth]{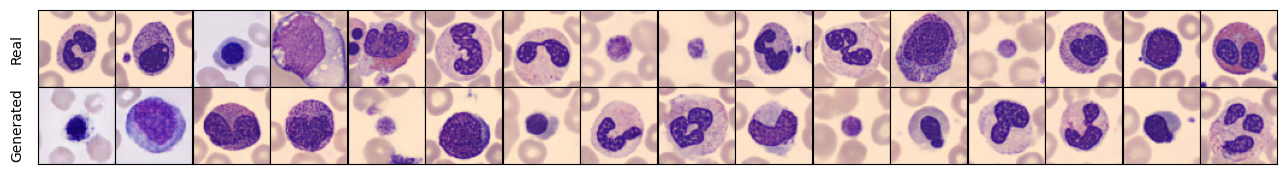}
    \vspace{-0.55in}
    \caption{16 Real/Generated images of BloodMnist}
\end{subfigure}
\begin{subfigure}[b]{1.00\textwidth}
    \includegraphics[width=\textwidth]{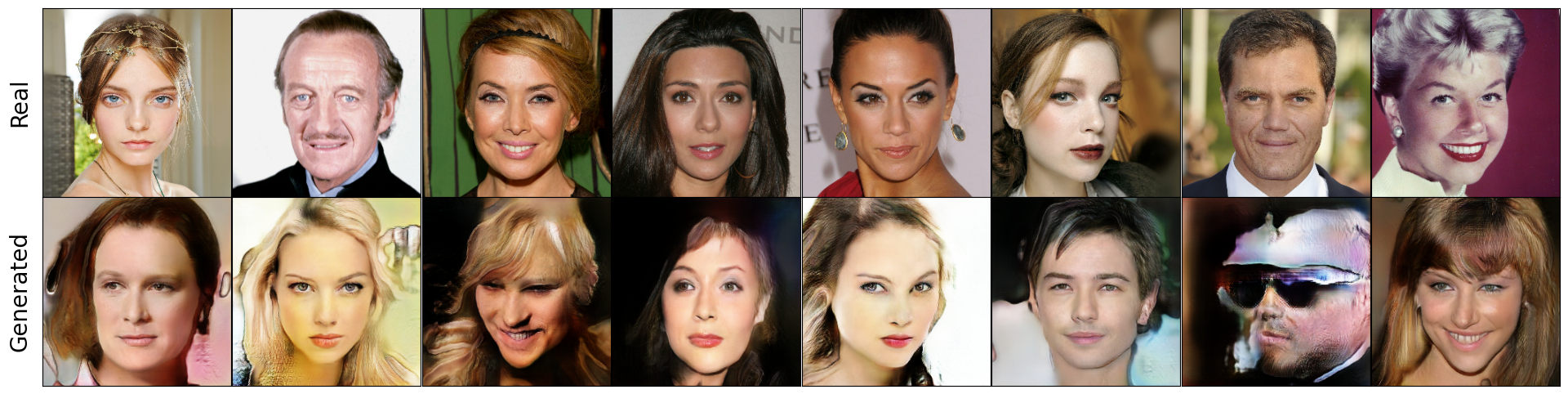}
    \vspace{-0.55in}
    \caption{8 Real/Generated images of CelebA-HQ}
\end{subfigure}
\vspace{-0.5in}
\caption{Real/Generated images are randomly picked from the original images and the generated images of PTGAN from the last iterate of $\Gt$ at $\alpha=1$}
\label{fig:celeba}
\end{figure}

\paragraph{Tabular Data Generation}  PTGAN's performance is assessed in generating tabular data for supervised learning, on three benchmark datasets: {\bf Adult} 
for income prediction, {\bf Credit Card Default} 
for default prediction, and {\bf Law School Admission} 
for admission prediction. Each dataset is split 90\%/10\% for training ${\mathtt D}_{\text{train}}$ and test  data ${\mathtt D}_{\text{test}}$ respectively. GAN models with fully connected layers for both $D$ and $G$ are trained on ${\mathtt D}_{\text{train}}$ with ND. For PTGAN, $r$ is set to 0.5. We denote by ${\mathtt D}_t$ the output of $\Gt$ with 1k instances. For downstream evaluation at the $t$th iteration, random forest (RF), support vector machine (SVM), and logistic regression (LR) are trained on both ${\mathtt D}_{\text{train}}$ and $\mathtt{D}_t$. Then the area under the ROC curve (AUC) for these models is measured on ${\mathtt D}_{\text{test}}$. These AUC scores are denoted as ${\mathtt S}_{\text{train}}$ and ${\mathtt S}_t$ respectively. We calculate ${\mathtt S}_T=\sum_{t=\ceil*{T/2}+1}^T |{\mathtt S}_{\text{train}} - {\mathtt S}_t|/(T-\ceil*{T/2}+2)$ that implicitly evaluates the quick and accurate convergence of GAN models for the downstream task. Only MP is considered for comparison because of its superiority over other competitors in the previous section. Table~\ref{simul:tabgan} summarizes ${\mathtt S}_T$ from 10 independent runs, indicating PTGAN consistently outperforms MP across all datasets and predictive models. In SM~\ref{appen:tabgan}, Table~\ref{simul:tabgan_full} demonstrates PTGAN's superiority for JSD or PD as well, and Table~\ref{simul:tabgan_mp} summarizes ${\mathtt S}_T$ of MP with different penalty parameters but still defeated. 

\begin{table}[ht!]
\caption{Summary of ${\mathtt S}_T$ scores: Smaller scores are preferred.}
\label{simul:tabgan}
\vspace{-0.1in}
\centering
\scriptsize
\begin{tabular}{c|c|ccc}
\hline
Data & Type & RF ($\downarrow$)& SVM ($\downarrow$)& LR ($\downarrow$) \\ 
\hline
\hline
\multirow{2}{*}{Adult}
& PT + CP & \bfseries 0.022 (0.004) & \bfseries 0.038 (0.007) & \bfseries 0.028 (0.002)  \\
& MP & 0.047 (0.029) & 0.060 (0.025) & 0.050 (0.029)  \\
\hline
\multirow{2}{*}{Law School}
& PT + CP & \bfseries 0.018 (0.007) & \bfseries 0.024 (0.007) & \bfseries 0.006 (0.002) \\
& MP & 0.096 (0.018) & 0.099 (0.024) & 0.069 (0.023)  \\
\hline
\multirow{2}{*}{Credit Card}
& PT + CP & \bfseries 0.062 (0.008) & \bfseries 0.071 (0.018) & \bfseries 0.038 (0.010) \\
& MP & 0.159 (0.040) & 0.168 (0.047) & 0.147 (0.043)  \\
\hline
\end{tabular}
\end{table}

\subsection{Fair data generation}
\label{sec:tab_fairgen}

The intriguing property of PTGAN, learning $p_{\Qal,\alpha}$, can open up new generative modeling tasks. This work focuses on fair data generation, addressing the growing demand for morality control in machine learning. In algorithmic fairness, the goal is to reduce discrimination by decision models $h$ against certain subpopulations. We consider a classification setting with covariates $C \in {\cal C}$, binary sensitive attribute $A \in \{0,1\}$ (e.g., race or gender), and binary outcome $Y \in \{0,1\}$, where $h:{\cal C} \to [0,1]$ predicts $Y$. Fairness requires statistical independence between $h(C)$ and $A$, ensuring $h$ is unaffected by $A$. Discrimination is quantified by $\lvert\bE[\hat{Y}|A=1] - \bE[\hat{Y}|A=0]\rvert$ with $\hat{Y} = 1(h(C) > \tau)$, known as statistical or demographic parity (SP), though enforcing SP often compromises utility such as accuracy. See \cite{baro:etal:17, sohn:etal:23} for more details and recent advances.


Interestingly, the PTGAN framework can be used to enable $G$ to produce various levels of fair synthetic data while holding the training stability. Let's denote by $X_i^{(j)} = (C_i^{(j)}, j, Y_i^{(j)})$ the tuple of the $j$th group for $j=0,1$, and define $\check{X}^{\alpha}_i = \alpha X_i^{(0)} + (1-\alpha) X_j^{(1)}$ and $\check{X}^{1-\alpha}_i = (1-\alpha) X_i^{(0)} + \alpha X_j^{(1)}$. By creating $Q^{\alpha}$ as an equal mixture of $\check{X}^{\alpha}_i$ and $\check{X}^{1-\alpha}_i$, PTGAN can synthesize fair data sets with $\alpha$ measuring the level of fairness. To clarify the underlying fairness mechanism, let's consider the exemplary situations with the data points $(c_0,0,y_0)$ and $(c_1,1,y_1)$: $\text{(i)}~c_0=c_1, y_0=y_1$, $\text{(ii)}~c_0\neq c_1, y_0=y_1$, $\text{(iii)}~c_0=c_1, y_0\neq y_1$, and $\text{(iv)}~ c_0\neq c_1, y_0\neq y_1$. Case $\text{(iii)}$ explicitly contributes to discrimination, as $A$ directly determines $Y$; $\text{(i)}$ avoids discrimination. $\text{(ii)}$ and $\text{(iv)}$ permit dependency between $C$ and $A$, allowing discrimination through $C$. In this regard, $\Qal$ removes such dependency observed in $\text{(ii)}$, $\text{(iii)}$, and  $\text{(iv)}$ for $0<\alpha <1$ while preserving the relationship between $Y$ and $C$ in $\text{(i)}$.

\begin{figure}[t]
\centering
\includegraphics[width=1.0\textwidth]{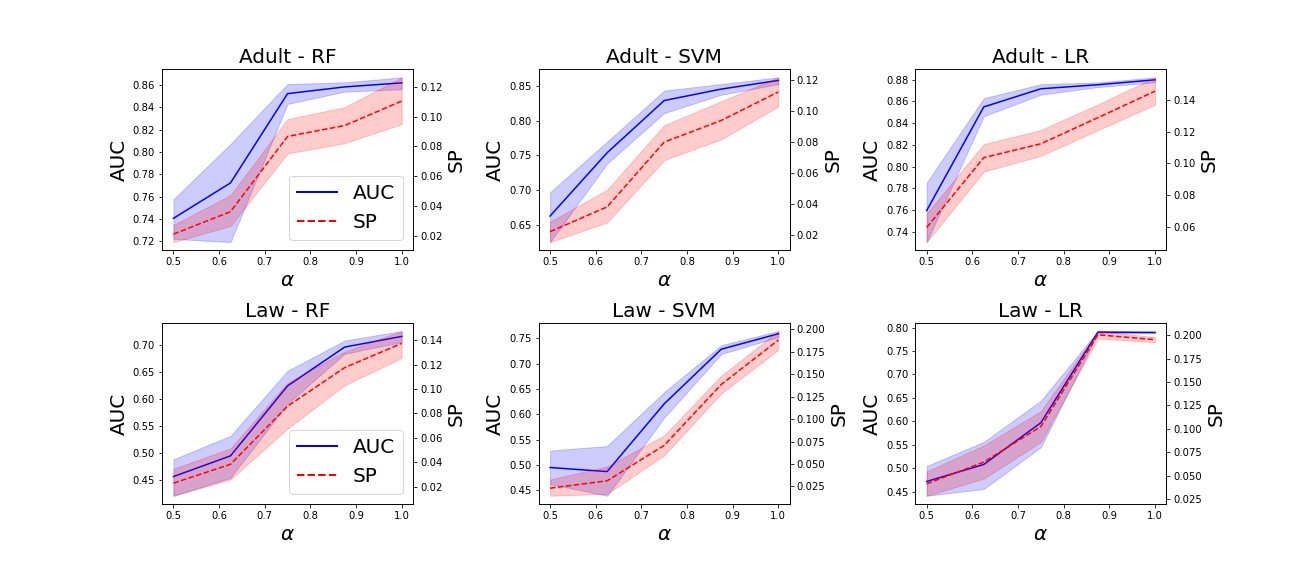}
\vspace{-0.6in}
\caption{The trade-off curves between AUC and SP for the downstream classifiers.}
\label{fig:uftradeoff}
\end{figure}

Our FairPTGAN framework is verified by comparing the behavior of trade-offs between utility and fairness to a recent fair generative model \citep[FairWGANGP,][]{raja:etal:22} and a seminar preprocessing method \citep[GeoRepair,][]{feld:etal:15}. FairWGANGP places a penalty term $\lambda_f \lvert\bE[\tilde{Y}|\tilde{A}=1]-\bE[\tilde{Y}|\tilde{A}=0]\rvert$ when updating $\Gt$ where $(\tilde{C},\tilde{A},\tilde{Y}) \sim \Gt(Z)$, so that $\Gt$ produces societally unbiased synthetic data. 
GeoRepair solves a Wasserstein-median problem between  $C|A=1$ and $C|A=0$ with a weighting parameter $0\leq \lambda_p \leq 1$. For PTGAN, $r=0.2$ is set to encourage the generator to learn the intermediate (fair) distributions more effectively. For evaluation, the Pareto frontiers of AUC and SP are adopted as in \cite{sohn:etal:23}, where the cutting-off parameter $\tau$ is chosen to maximize AUC. Following the same evaluation procedure in tabular data generation, the Pareto frontiers are found by evaluating AUC and SP of the downstream models on the remaining 10\% test data. GeoRepair is applied to the FairPTGAN model with $\alpha=1$.

FairPTGAN is computationally efficient and achieves favorable Pareto frontiers. As shown in Table~\ref{tab:fair_comp}, it attains lower statistical parity (SP) than competitors at certain utility levels. Our joint learning structure in \eqref{builtin} enables the generator to produce datasets with different fairness levels by simply varying $\alpha$ in $\Gt(Z, \alpha)$. In contrast, FairWGANGP and GeoRepair require retraining or repeated processing when their fairness parameters $\lambda_f$, $\lambda_p$ change. While GeoRepair is model-free, its computational cost gets brutally expensive as the number of entities and variables in the data increases. Additionally, GeoRepair ignores the multivariate structure of $C$, which may lead to sacrificing too much utility.


\begin{table}[t]
\caption{Averages of the 10 smallest SP scores whose AUCs are greater than the thresholds ($\geq 0.85$ for Adult and $\geq 0.65$ for Law School) are reported. Table~\ref{tab:fair_comp2} in SM~\ref{appen:tab_fairgan} presents consistent results with different thresholds. }
\label{tab:fair_comp}
\centering
\scriptsize
\begin{tabular}{c|c|ccc}
\hline
Data & Model & RF ($\downarrow$)& SVM ($\downarrow$)& LR ($\downarrow$) \\ 
\hline
\hline
\multirow{3}{*}{Adult}  
& \multirow{1}{*}{FairPTGAN} & \bf{0.064 (0.006)} & \bf{0.077 (0.014)} & \bf{0.084 (0.006)} \\
& \multirow{1}{*}{FairWGANGP}& 0.083 (0.010) & 0.088 (0.010) & 0.095 (0.005) \\
&\multirow{1}{*}{GeoRepair}  & 0.082 (0.012) & 0.089 (0.009) & 0.106 (0.009) \\
\hline
\multirow{3}{*}{Law School.} 
& \multirow{1}{*}{FairPTGAN}  & \bf{0.054 (0.014)} & \bf{0.056 (0.006)} & \bf{0.079 (0.020)} \\
& \multirow{1}{*}{FairWGANGP} & 0.105 (0.006) & 0.115 (0.007) & 0.175 (0.003) \\
&\multirow{1}{*}{GeoRepair}   & 0.102 (0.011) & 0.129 (0.013) & 0.187 (0.003) \\
\hline
\end{tabular}
\end{table}



\section{Discussion}
\label{sec:discussion}

Recent work in generative modeling has focused on diffusion models, which often outperform GANs in various applications \citep{ho:etal:20, song:etal:23}. The main reason GANs lag behind is the long-standing challenge of balancing $D$ and $G$ having large and complex network architectures \citep{saue:etal:25}. Still, GANs offer key advantages such as fast sampling and flexible applicability. For instance, \cite{saue:etal:25} proposed a hybrid model that replaces the GAN generator with a diffusion model to speed up the sampling procedure. Leveraging the GAN framework, \cite{wang;etal;22} developed a Bayesian sampler for posterior inference, and \cite{zhou:etal:22} proposed a generative sampler for a conditional density estimation in a regression setting. In this context, our PTGAN framework, which stabilizes the variance of gradients in GAN training, can provide a promising direction for further advancing various generative models.


This work can be extended in several directions. First, the convex interpolation scheme could be replaced with advanced data augmentation techniques based on the interpolation structure \citep{shen:etal:24}. Interpolating more than three samples may also help capture a wider range of subpopulations to enhance fairness. Applying the parallel tempering framework to other generative models, such as a restricted Boltzmann machine or diffusion model, could further promote diversity in synthetic data. Refer to the extra discussion in SM~\ref{supp:extension} to see a possible extension. As noted in \cite{sada:etal:23}, diffusion models may face diversity issues, particularly with limited data or in conditional settings.

\bibliographystyle{apalike}
\bibliography{main}

\clearpage
\pagenumbering{arabic}

\appendix
\bigskip
\begin{center}
{\large\bf Supplementary Material (SM)}
\end{center}

\begin{center}
    {\bf Title: Parallelly Tempered Generative Adversarial Nets: \\ Toward Stabilized Gradients}
\end{center}

\section{Additional Discussion}

\subsection{Training stability affected by gradients variance on $\Dt$}
\label{supp:noisy_gradient}
This section, we aims to empirically justify the importance of gradient variance of $\Dt$ by showing that a large gradient variance of $\Dt$ during the training can lead unstable GAN training. We choose the 8-mixture toy example (Figure~\ref{fig:mode_collapse}) and explicitly control the gradient variance by injecting additional Gaussian noise ${\rm N}(0,\sigma^2)$ to each gradient of $\Dt$ during training (Figure~\ref{fig:noisy_gradients}).
To heighten the effect of increasing gradient variance, this experiment is conducted for 
the Lipschitz GAN \citepSupp{zhou:etal:22}. Note that the Lipschitz GAN pursues more stable training than the original GAN by encouraging the critic function $\Dt$ to be Lipschitz via adding a gradient penalty, and it shows successful performance on the toy example when no additional noise is injected to the gradient. 

The experiment runs the GAN model 10 times independently. For each run, we evaluate the 1-Wasserstein distance\footnote{We use the Python library ``ot" to calculate the 1-Wasserstein distance approximately.} between the random samples of $p_X$ and $p_{\Gt(Z)}$ in training iterations. As shown in the figure, the GAN training with $\sigma=0.01$ shows a significantly fluctuating performance among 10 independent runs, which implies that GAN training with high variance of gradients on the critic can undergo difficult optimization for the generator. This experiment intentionally magnifies the variance of the $\Dt$'s gradients to observe that GAN training with high variance of gradients can fail. In practice, we hypothesize that there may be various sources causing high variance of gradients, such as mode collapse, misspecified hyperparameters, overfitting, etc.  
\begin{figure}[ht!]
    \centering
    \includegraphics[width=1.0\linewidth]{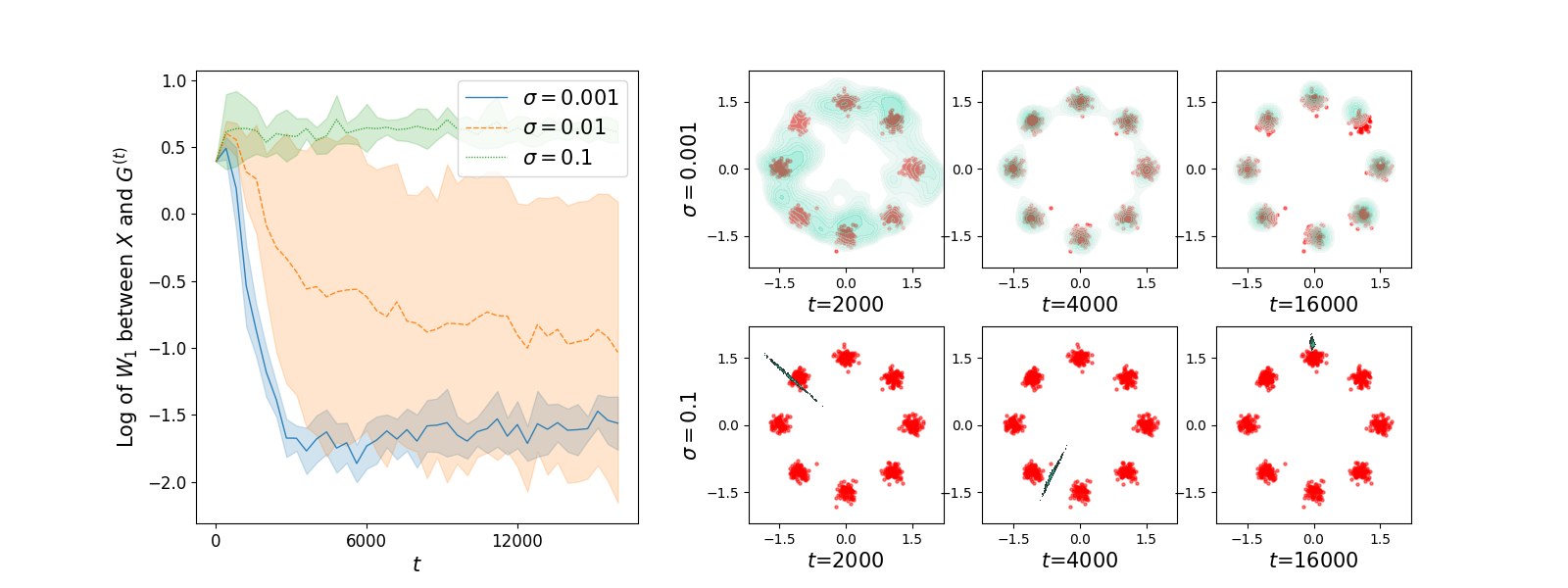}
    \caption{The left plot shows the logarithm of the 1-Wasserstein distance between $p_X$ and $p_{\Gt(Z)}$ of the Lipschitz GAN. Shaded areas represent one standard deviation from the average lines at every iteration. Note $\sigma$ stands for the size of the standard deviation of Gaussian noise added to the gradients of $\Dt$ during the training.}
    \label{fig:noisy_gradients}
\end{figure}

\subsection{Theorem~\ref{prop:grad_upbd1}}
\label{supp:disc_thm1}
First of all, the main theorem is restated for readers' convenience. After introducing the specific forms of the technical constants depending on the weight matrix's location $l$ and the type of activation function, we provide an intuitive explanation of the technical constants, particularly for the ReLU case in Remark~\ref{rem:interpre_relu} to appear later. The proof appears in Section~\ref{supp:prop_thm1}.
\begin{theorem*}
Under (A1-3), $\left\lvert \pr \Lhat_b(\Dt,\Gt)/\pr W_{l,r,c}\right\rvert/C_{\bw}^{(t)}(l)$ is bounded by
\begin{align*}
 \leq C_{\kappa,1}^{(t)}(l)d_{\cal D}(p_X,p_{\Gt(Z)}) +  
C_{\kappa,2}^{(t)}(l)d_{\kappa}(p_X,p_{\Gt(Z)}) + C_{\kappa,3}^{(t)}(l) +O_p\left(1/{\sqrt{n_b}}\right), 
\end{align*}
for any $l,r,c$, where the constant $C_{\kappa,j}^{(t)}(l)$ for $j=1,2,3$ relies on the type of activation. The discrepancy $d_{\kappa}$ is the 1-Wasserstein distance $d_{W_1}$ or the total variation $d_{\text{TV}}$, respectively, depending on the Lipschitzness or boundness of the activation's derivative $\kappa'_l(x)$ for all $l$.
\end{theorem*}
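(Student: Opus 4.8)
The plan is to compute $\partial \hat{L}_b(\Dt,\Gt)/\partial W_{l,r,c}$ explicitly via backpropagation, then bound the resulting expression term by term. First I would write $\hat L_b(D,G) = \frac{1}{n_b}\sum_i D(X_i) - \frac{1}{n_b}\sum_j D(G(Z_j))$, so the derivative splits as a difference of empirical averages of single-sample derivatives. For one input $x$, the chain rule through the fully-connected architecture $D(x) = w_d^\top \kappa_{d-1}(W_{d-1}\kappa_{d-2}(\cdots W_1 x))$ gives $\partial D(x)/\partial W_{l,r,c}$ as a product of three pieces: a ``forward'' factor equal to the $c$th coordinate of the $(l-1)$th hidden activation $h_{l-1}(x)$ (or the input coordinate when $l=1$), a ``backward'' factor that is the $r$th coordinate of the backpropagated gradient vector $\delta_l(x) := \big(\prod_{j=l+1}^{d-1} \mathrm{diag}(\kappa_j'(\cdot)) W_j^\top\big)\,\mathrm{diag}(\kappa_l'(\cdot))\, w_d$ (with the obvious degenerate forms near $l=d$), and the constants coming from the weight norms. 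So $\partial D(x)/\partial W_{l,r,c} = h_{l-1,c}(x)\,\big[\kappa_l'(W_l h_{l-1}(x))\big]_r\,\big[\text{stuff}\big]_r$, and summing/differencing over the data and generator batches is what we must control.

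The next step is to organize the difference $\frac{1}{n_b}\sum_i \partial D(X_i)/\partial W_{l,r,c} - \frac{1}{n_b}\sum_j \partial D(G(Z_j))/\partial W_{l,r,c}$ by adding and subtracting population expectations, turning it into (a) $\bE_{X}[\partial D(X)/\partial W_{l,r,c}] - \bE_{Z}[\partial D(G(Z))/\partial W_{l,r,c}]$ plus (b) two centered empirical averages, each $O_p(1/\sqrt{n_b})$ by the bounded-support assumption (A3) together with (A1)--(A2), which make every activation, derivative, and product of weight norms bounded; this yields the $O_p(1/\sqrt{n_b})$ tail term. For the population part (a), I would decompose the product $h_{l-1}(x)\cdot(\text{backward factor})$ using the algebraic identity $a_1 b_1 - a_2 b_2 = a_1(b_1-b_2) + (a_1-a_2)b_2$ applied coordinate-wise, or more carefully $\bE[a(X)b(X)] - \bE[a(Y)b(Y)] = \bE[a(X)(b(X)-b(Y))\text{-type terms}] + \mathrm{Cov}\text{-type terms}$, which is exactly where the three structural constants arise. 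The term proportional to the difference of the forward activations $h_{l-1}(X)$ vs $h_{l-1}(G(Z))$ is Lipschitz in $x$ (composition of Lipschitz activations and bounded linear maps), so bounding $|\bE[f(X)]-\bE[f(G(Z))]|$ for an $L$-Lipschitz $f$ gives $L\cdot d_{W_1}(p_X,p_{G(Z)})$, and since $d_{\cal D}$ dominates $d_{W_1}$ up to the universal constant $\prod M_w \prod K_\kappa$ (Proposition~\ref{prop:nd_wd}), this feeds the $C_{\kappa,1}^{(t)}(l) d_{\cal D}$ term. The term involving the difference of the \emph{derivatives} $\kappa_l'(\cdot)$ evaluated along $X$ versus $G(Z)$ is handled by either Lipschitzness of $\kappa_l'$ (giving another $d_{W_1}$ contribution, hence $d_\kappa = d_{W_1}$) or, when $\kappa_l'$ is merely bounded (e.g.\ ReLU, where $\kappa_l'$ is an indicator), by the total-variation bound $|\bE[g(X)] - \bE[g(G(Z))]| \le \|g\|_\infty \, d_{\mathrm{TV}}(p_X, p_{G(Z)})$, giving $d_\kappa = d_{\mathrm{TV}}$ — this dichotomy is precisely the case split in the statement. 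The leftover genuinely quadratic cross-term, a covariance between hidden-layer activations and post-layer backprop gradients evaluated under the same distribution, becomes the residual constant $C_{\kappa,3}^{(t)}(l)$, which one observes vanishes as $X \overset{d}{\approx} \Gt(Z)$ because then both expectations coincide.

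The main obstacle I anticipate is \textbf{the non-smoothness of ReLU-type activations}: $\kappa_l'$ is discontinuous, so the ``difference of derivatives'' term cannot be bounded by any Wasserstein distance and genuinely needs the total-variation route; making this rigorous requires care that the indicator-type factors $\mathbf{1}\{W_l h_{l-1}(x) > 0\}$ are treated as bounded measurable (not Lipschitz) functions of $x$, and that the downstream weight-norm products multiplying them stay controlled by (A1). A secondary difficulty is bookkeeping the degenerate boundary cases $l = d$, $l = d-1$, $l = 1$ where the generic formula for $C_{\bw}^{(t)}(l)$ (namely $\|W_{l+1,r,\cdot}^{(t)\top}\| \prod_{j=l+2}^{d-1}\|W_j^{(t)}\|_F \|w_d^{(t)}\|$ for $l \le d-3$) changes form — this is routine but must be stated so the constants in Theorem~\ref{prop:grad_upbd1} match the ones promised in the surrounding text. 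Finally, I would close by noting that squaring the bound and using $\var[\,\cdot\,] \le \bE[|\cdot|^2]$ converts \eqref{eqn:grad_upbd1} into the advertised variance bound, and that the $d_{\cal D}$-dependence transfers to $d_{\cal D}(p_{\calS^c}, p_{\Gt(Z)})$ via the convex decomposition $p_X = \zeta_t p_{\calS} + (1-\zeta_t) p_{\calS^c}$ exactly as argued after the statement.
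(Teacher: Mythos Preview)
Your overall architecture is right: the backpropagation expansion, the add-and-subtract to isolate an $O_p(1/\sqrt{n_b})$ fluctuation, the three-way split into a forward-activation difference, a derivative difference, and a covariance residual, and the Lipschitz/bounded dichotomy for $\kappa'$ that yields the $d_{W_1}$ vs.\ $d_{\mathrm{TV}}$ alternative for $d_\kappa$. These all match the paper's proof, including the symmetrized $a_1b_1-a_2b_2$ decomposition that produces the averaged coefficients $\tfrac12(\bE[\cdot]+\bE[\cdot])$ in $C_{\kappa,1}$ and $C_{\kappa,2}$.

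The gap is in how you obtain the $d_{\cal D}$ term. You propose to bound the forward-activation difference $|\bE[h_{l-1,c}(X)] - \bE[h_{l-1,c}(G(Z))]|$ by Lipschitzness to get $L\cdot d_{W_1}$, and then invoke Proposition~\ref{prop:nd_wd} to pass to $d_{\cal D}$. But Proposition~\ref{prop:nd_wd} gives $d_{\cal D} \le \prod M_w \prod K_\kappa \cdot d_{W_1}$, which is the \emph{wrong direction}: it does not let you replace $d_{W_1}$ by a constant times $d_{\cal D}$. The only inequality going your way is $d_{W_1}\le d_{\cal D}+\omega_{\cal D}$, which would inject the approximation error $\omega_{\cal D}$ as an extra additive constant absent from the theorem. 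Moreover, your route would make $C_{\kappa,1}$ \emph{grow} with the Lipschitz constant of $h_{l-1}$ (i.e.\ with $\prod_{j<l}M_w(j)K_\kappa(j)$), whereas the paper's $C_{\kappa,1}^{(t)}(l)$ \emph{decays} like $1/\prod_{j\ge l}M_w(j)$ --- a qualitatively different dependence that matters for the later discussion of weight-norm control.

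The paper's device (Lemma~\ref{lemma:dist_uwb} in SM~\ref{supp:key_lemmas}) does not pass through $d_{W_1}$ at all. It exploits that $A_{l,r}(\cdot)$ is itself, up to the later weights, a sub-network in $\cal D$: by choosing a specific $\bw'\in\bW$ in which every weight above layer $l$ is zero except a single chain of maximal entries $M_w(l{+}1),\dots,M_w(d)$ selecting the $r$th node, one manufactures a critic $D'\in{\cal D}$ with $D'(x)=\big(\prod_{j>l}M_w(j)\big)A_{l,r}(x)$ (exactly for ReLU; with an extra $\prod C_{\kappa'}(j)$ factor via the mean-value theorem under the differentiable-activation assumption (A5)). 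Since $d_{\cal D}$ is a \emph{supremum} over $\cal D$, the bound $|\hat L_b(D',G)|\le d_{\cal D}(p_X,p_{G(Z)})+O_p(n_b^{-1/2})$ then forces
\[
\Big|\tfrac{1}{n_b}\sum_i A_{l,r}(X_i)-A_{l,r}(G(Z_i))\Big|\ \le\ \frac{d_{\cal D}(p_X,p_{G(Z)})}{\prod_{j>l}M_w(j)}\ +\ O_p(n_b^{-1/2}).
\]
This is what feeds the first term with the stated form of $C_{\kappa,1}^{(t)}(l)$, and it is the step your plan is missing.
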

To account for the constants more specifically, we first present the notations. 
\paragraph{Notation} Let $W_{l,r}$ (or $W_{l,r,\cdot}$) be the $r$th row vector of $W_l\in \mathbb{R}^{N^D_{l+1}\times N^D_l}$. Likewise, the $c$th column vector of $W_l$ is denoted by $W_{l,\cdot,c}$ or $W^{\top}_{l,c}$. Note $W_{l,r,c}$ is the $(r,c)$th parameter of $W_l$. $B_l(x)=W_lA_{l-1}(x)$ is the $l$th pre-activation layer and $A_{l-1}$ is the $(l-1)$th post-activation layer, i.e., $A_l(x)=\kappa_{l}(B_l(x))$, where $W_l\in \mathbb{R}^{N_{l+1}^D\times N_{l}^D}$, $A_{l-1}(x)\in \mathbb{R}^{N_{l}^D}$, and $B_l(x),A_l(x)\in \mathbb{R}^{N_{l+1}^D}$. For instance, $D(x)=w_{d}^{\top}A_{d-1}(x)$. Note the activation function applies element-wisely. We denote by $({\bf x})_r$ the $r$th component of a generic vector ${\bf x}$, i.e., $B_{l,r}(x)$ and $A_{l-1,c}(x)$ are the $r$th and $c$th pre/post-activation nodes, respectively. The derivative of $\kappa_l(x)$ is denoted by $\kappa_l'(x)$. We also define a technical term $p_{l,r,k_{l\sim d}}(x)=\prod_{j=l}^{d-1}\kappa_{j}'(B_{j,k_j}(x))$ with $k_{l}=r$ and for some index $k_{l+1},\dots,k_{d-1}$, i.e., $k_{l\sim d}=(r,k_{l+1},\dots,k_{d-1})$  and if $l=d-1$, $p_{l,r,k_{l\sim d}}(x)=\kappa'_{d-1}(B_{d-1,r}(x))$. We use these notations with the superscript $(t)$ when they are based on $t$th iterates $\Dt$ and $\Gt$, e.g., $A_{l-1,c}^{(t)}(x)$ is the $c$th post-activation node in the $(l-1)$th hidden layer of $\Dt(x)$. 

We provide the form of such constants across different $l$ and the type of activation. First of all, the $\Dt$'s capacity constant $C_{\bw}^{(t)}(l)$ during backpropagation appears as follows:
\begin{itemize}
    \item If $l=d$, $C_{\bw}^{(t)}(d)=1$; 
    \item If $l=d-1$, $C_{\bw}^{(t)}(d-1)=|w_{d,r}^{(t)}|$; 
    \item If $l=d-2$, $C_{\bw}^{(t)}(d-2)=\lVert W_{d-1,r,\cdot}^{(t)\top} \rVert \lVert w_d^{(t)}\rVert$; 
    \item If $l\leq d-3$, $C_{\bw}^{(t)}(l)=\lVert W_{l+1,r,\cdot}^{(t)\top} \rVert \prod_{j=l+2}^{d-1} \lVert W_{j}^{(t)} \rVert_F \lVert w^{(t)}_{d} \rVert$.
\end{itemize}
Secondly for $C_{\kappa,j}^{(t)}(l)$ ($j=1,2,3$), if $l=d$, regardless of the type of activation function, 
\begin{align*}
    C_{\kappa,1}^{(t)}(d)=1/M_w(d),\quad  \text{and} \quad C_{\kappa,2}^{(t)}(d)=C_{\kappa,3}^{(t)}(d)=0. 
\end{align*}
If $l\leq d-1$, then 
\begin{itemize}
    \item for the identity activation, $C_{\kappa,1}^{(t)}(l)=1/\prod_{j=l}^dM_w(j)$, $C_{\kappa,2}^{(t)}(l)=0$, and $C_{\kappa,3}^{(t)}(l)=0$;
    \item for any nonlinear activation functions, 
    \begin{align*}
        C_{\kappa,3}^{(t)}(l)&=\max_{k_{l\sim d}} |\text{Cov}(p_{l,r,k_{l\sim d}}^{(t)}(X_i),A_{l-1,c}^{(t)}(X_i))-\text{Cov}(p_{l,r,k_{l\sim d}}^{(t)}(G^{(t)}(Z_i)),A_{l-1,c}^{(t)}(G^{(t)}(Z_i))|.
    \end{align*}
    \item for the ReLU activation, 
    \begin{align*}
            C_{\kappa,1}^{(t)}(l)&=\dfrac{\max_{k_{l\sim d}}\lvert \bE[p_{l,r,k_{l\sim d}}^{(t)}(X_i)] + \bE[p_{l,r,k_{l\sim d}}^{(t)}(G^{(t)}(Z_i))] \rvert}{2\prod_{j=l}^d M_w(j)}, \\ 
            C_{\kappa,2}^{(t)}(l)&=\dfrac{\lvert \bE[A_{l-1,c}^{(t)}(X_i)]+\bE[A_{l-1,c}^{(t)}(G^{(t)}(Z_i))] \rvert}{2}; 
    \end{align*}
    \item for differentiable and non-decreasing activation satisfying $\kappa'_l(x)\geq C_{\kappa'}(l) > 0$ for all x, 
    \begin{align*}
            C_{\kappa,1}^{(t)}(l)&=\dfrac{\max_{k_{l\sim d}}\lvert \bE[p_{l,r,k_{l\sim d}}^{(t)}(X_i)] + \bE[p_{l,r,k_{l\sim d}}^{(t)}(G^{(t)}(Z_i))] \rvert}{2\prod_{j=l}^d M_w(j)C_{\kappa'}(j)}, \\ 
            C_{\kappa,2}^{(t)}(l)&=\dfrac{\lvert \bE[A_{l-1,c}^{(t)}(X_i)]+\bE[A_{l-1,c}^{(t)}(G^{(t)}(Z_i))] \rvert}{2}\prod_{j=l}^{d-1}K_{\kappa}(j).
    \end{align*}
    Note the existence of the lower bound constant is justified in Remark~\ref{rem:kappa}. For instance, there are Sigmoid, Tanh, ELU ($\alpha=1$) activation functions. 
\end{itemize}
Finally, $d_{\kappa}$ relies on the choice of activation as well: 
\begin{itemize}
    \item if the derivative is not Lipschitz (e.g., ReLU), $d_{\kappa}=d_{\text{TV}}$; 
    \item if the derivative is Lipschitz, $d_{\kappa}=d_{W_1}$. 
\end{itemize}

\begin{remark}
\label{rem:interpre_relu}
    To simplify the discussion, we focus on $l=d-1$ for the ReLU case, but a similar explanation can be made for $l\leq d-2$. Since the ReLU activation $\kappa(x)=\max\{x,0\}$ is not continuous at $x=0$, we can observe 
    \begin{align*}
        C_{\kappa,1}^{(t)}(d-1)&=\dfrac{\bE[\kappa'_{d-1}(B_{d-1,r}^{(t)}(X_i))]+\bE[\kappa'_{d-1}(B_{d-1,r}^{(t)}(G(Z_i))]}{2M_w(d)M_w(d-1)}, \\ 
        &\bE[\kappa'_{d-1}(B_{d-1,r}^{(t)}(x))]=P(B_{d-1,r}^{(t)}(x)>0),
    \end{align*}
    i.e., $P(B_{d-1,r}^{(t)}(x)>0)$ implies the probability of the $r$th node being activated when the initial input is $x$. Hence, $C_{\kappa,1}^{(t)}(d-1)$ becomes larger as the hidden nodes are more likely to be activated. The second constant $C_{\kappa,2}^{(t)}(d-1)=(\bE[A_{d-2,c}^{(t)}(X_i)]+\bE[A_{d-2,c}^{(t)}(G(Z_i)])/2$ becomes larger in accordance with the size of post-activation node. Being aware of $A_{d-2,c}^{(t)}(x)=\kappa_{d-2}(B_{d-2,c}^{(t)}(x))$, applying a normalization technique, e.g., Batch Normalization, to the pre-activation node would help control the size of $C_{\kappa,2}^{(t)}(d-1)$ to a moderate extent. In the third constant $C_{\kappa,3}^{(t)}(d-1)=|\text{Cov}(\kappa'_{d-1}(B^{(t)}_{d-1,r}(X_i)),A_{d-2,c}^{(t)}(X_i))-\text{Cov}(\kappa'_{d-1}(B_{d-1,r}^{(t)}(G^{(t)}(Z_i))),A_{d-2,c}^{(t)}(G^{(t)}(Z_i))|$, the covariance $\text{Cov}(\kappa'_{d-1}(B^{(t)}_{d-1,r}(x)),A_{d-2,c}^{(t)}(x))$ can be seen to represent the degree of information alignment between the $r$th pre-node in $(d-1)$th hidden layer and the $c$th post-node in the $(d-2)$th layer, e.g., the covariance would be negligible if the $c$th post-node does not contribute much to the $r$th pre-node. In general, $C_{\kappa,3}^{(t)}(d-1)$ tends to vanish as $p_X\approx p_{\Gt(Z)}$. 
\end{remark}

\subsection{Variance reduction within a linear class}
\label{supp:ad_var_linear}
As a more concrete example, we further investigate the variance reduction mechanism under the linear function class ${\cal D}$. Let's consider a linear critic function $D(\Qal,\alpha)=W_1^{\top}[\Qal,\alpha]$ with $W_1 \in \mathbb{R}^{(d_X + 1) \times 1}$ and $\Gt(\alpha Z_1 + (1-\alpha) Z_2, \alpha)\overset{d}{=} \alpha \Gt(Z_1,1) + (1-\alpha) \Gt(Z_2,1)$, which means $\Gt$ is simultaneously converging to the equilibrium for all $\alpha$. Here, we use the interpolated input for $G$ (See Section~\ref{supp:inter_noise}). The below proposition shows when the gradients' variance reduction occurs under the verifiable assumption.
\begin{proposition}
\label{prop:var_linear}
    Suppose $D$ is linear, $n_b=m_b$, and $\Gt(\alpha Z_1 + (1-\alpha) Z_2, \alpha)\overset{d}{=} \alpha \Gt(Z_1,1) + (1-\alpha) \Gt(Z_2,1)$ with $\alpha \sim r \delta_1(\cdot) + (1-r) p_{\alpha^*}(\cdot)$. Then, $\text{tr}\left(\cov\left(\frac{\partial \hat{L}^{\alpha}_b(\Dt,\Gt)}{\partial W_1}\right)\right)$ is equal to 
    \begin{align*}
        \left(\dfrac{2}{3}+\dfrac{1}{3}r\right)\text{tr}\left(\cov\left(\dfrac{\partial \hat{L}_b^1(\Dt,\Gt)}{\partial W_1}\right)\right)+\var(\alpha)\left(\dfrac{1}{n_b}+\dfrac{1}{m_b}\right).
    \end{align*}
\end{proposition}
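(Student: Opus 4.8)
The plan is to use the linearity of $D$ in $W_1$ to reduce $\partial\hat{L}_b^{\alpha}/\partial W_1$ to a difference of empirical averages of i.i.d.\ concatenated vectors, and then to compute the trace of its covariance by conditioning on $\alpha$. Since $D(x,a)=W_1^{\top}[x,a]$ gives $\partial D(x,a)/\partial W_1=[x,a]$, I would first write
\[
\frac{\partial\hat{L}_b^{\alpha}(\Dt,\Gt)}{\partial W_1}
=\frac1{n_b}\sum_{i=1}^{n_b}[Q_i^{\alpha_i},\alpha_i]-\frac1{m_b}\sum_{j=1}^{m_b}[\Gt(\tilde Z_j^{\alpha_j},\alpha_j),\alpha_j],
\]
where $\tilde Z_j^{\alpha}=\alpha Z_{j_1}+(1-\alpha)Z_{j_2}$ is the interpolated noise input built from i.i.d.\ $Z_{j_1},Z_{j_2}\sim p_Z$. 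The $n_b$ real-data summands are i.i.d., so are the $m_b$ generated summands, and the two groups are mutually independent; hence $\cov(\partial\hat{L}_b^{\alpha}/\partial W_1)=\frac1{n_b}\cov([Q^{\alpha},\alpha])+\frac1{m_b}\cov([\Gt(\tilde Z^{\alpha},\alpha),\alpha])$. Since the trace of the covariance of a concatenated vector equals the sum of the traces of its diagonal blocks,
\[
\text{tr}\,\cov\!\Big(\frac{\partial\hat{L}_b^{\alpha}}{\partial W_1}\Big)
=\frac1{n_b}\big(\text{tr}\,\cov(Q^{\alpha})+\var(\alpha)\big)+\frac1{m_b}\big(\text{tr}\,\cov(\Gt(\tilde Z^{\alpha},\alpha))+\var(\alpha)\big).
\]

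Next I would evaluate $\cov(Q^{\alpha})$ by the law of total covariance, conditioning on $\alpha$. With $X_{i_1},X_{i_2}$ i.i.d.\ $\sim p_X$ and independent of $\alpha$, the conditional mean $\bE[Q^{\alpha}\mid\alpha]=\alpha\bE[X]+(1-\alpha)\bE[X]=\bE[X]$ does not depend on $\alpha$, so the between-$\alpha$ ("explained'') part vanishes, while the conditional covariance is $\cov(Q^{\alpha}\mid\alpha)=(\alpha^2+(1-\alpha)^2)\cov(X)$ by independence of the two copies. Hence $\cov(Q^{\alpha})=\bE_{\alpha}[\alpha^2+(1-\alpha)^2]\,\cov(X)$. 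Invoking the hypothesis $\Gt(\alpha Z_1+(1-\alpha)Z_2,\alpha)\overset{d}{=}\alpha\Gt(Z_1,1)+(1-\alpha)\Gt(Z_2,1)$ conditionally on each $\alpha$, with $Z_1,Z_2$ i.i.d., the identical computation gives $\cov(\Gt(\tilde Z^{\alpha},\alpha))=\bE_{\alpha}[\alpha^2+(1-\alpha)^2]\,\cov(\Gt(Z,1))$. Under $\alpha\sim r\delta_1(\cdot)+(1-r){\rm Unif}(0,1)$, noting $\bE[\alpha^2]=\bE[(1-\alpha)^2]=\frac13$ for $\alpha\sim{\rm Unif}(0,1)$, a direct integration gives $\bE_{\alpha}[\alpha^2+(1-\alpha)^2]=r\cdot 1+(1-r)\cdot\frac23=\frac23+\frac13 r$.

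Substituting these two identities and factoring out $\frac23+\frac13 r$ leaves $\big(\frac23+\frac13 r\big)\big(\frac1{n_b}\text{tr}\,\cov(X)+\frac1{m_b}\text{tr}\,\cov(\Gt(Z,1))\big)+\var(\alpha)\big(\frac1{n_b}+\frac1{m_b}\big)$. It then remains to recognize the first parenthesized factor as $\text{tr}\,\cov(\partial\hat{L}_b^1/\partial W_1)$: in vanilla training the summands are $[X_i,1]$ and $[\Gt(Z_j,1),1]$, whose last ($\alpha$-)coordinate is the constant $1$ and contributes zero variance, so $\text{tr}\,\cov(\partial\hat{L}_b^1/\partial W_1)=\frac1{n_b}\text{tr}\,\cov(X)+\frac1{m_b}\text{tr}\,\cov(\Gt(Z,1))$, which closes the argument.

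All the computations are elementary; the only point requiring care is the observation that convex interpolation preserves the mean, so the conditional means $\bE[Q^{\alpha}\mid\alpha]$ and $\bE[\Gt(\tilde Z^{\alpha},\alpha)\mid\alpha]$ are constant in $\alpha$ and the "explained'' part of the total covariance drops out — this is exactly what yields the clean multiplicative factor $\frac23+\frac13 r$ rather than an additional additive term. Relatedly, the generator hypothesis must be read as an equality in distribution holding conditionally on each fixed $\alpha$ (not merely marginally), so that the law of total covariance and the i.i.d.-ness of $\Gt(Z_1,1)$ and $\Gt(Z_2,1)$ can be applied inside the conditioning.
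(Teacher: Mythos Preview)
Your proposal is correct and follows essentially the same route as the paper: both use the law of total covariance conditioning on $\alpha$, exploit that $\bE[Q^\alpha\mid\alpha]$ is constant in $\alpha$ so the explained part vanishes, and compute $\bE[\alpha^2+(1-\alpha)^2]=\tfrac{2}{3}+\tfrac{1}{3}r$ under the mixture prior. The only cosmetic difference is that the paper splits $W_1=[W_{1,1},W_{1,2}]$ at the outset and handles the $x$- and $\alpha$-coordinates separately, whereas you work with the concatenated vector $[x,\alpha]$ and decompose the trace into diagonal blocks at the end; the computations are otherwise identical.
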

\noindent This proposition shows that $\text{tr}\left(\cov\left(\frac{\partial \hat{L}_b^{\alpha}(\Dt,\Gt)}{\partial W_1}\right)\right)\leq \text{tr}\left(\cov\left(\frac{\partial \hat{L}_b^1(\Dt,\Gt)}{\partial W_1}\right)\right)$ holds for any $r$ if $\text{tr}(\cov(X_1))+\text{tr}(\cov(\Gt(Z,1)))\geq 3\var(\alpha)$ is satisfied, where the equality only holds with $r=1$. The variance reduction effect tends to be stronger as $r\rightarrow 0$. 
The extra assumption demands that the randomness of $\alpha$ does not have to dominate the randomness from $X_1$ and $\Gt(Z,1)$ while $0\leq\var(\alpha)\leq 1/9$. This assumption can be usually satisfied in deep learning applications; it is a convention to standardize the input space such that $\max(X)=1$ and $\min(X)=-1$ (or $\min(X)=0$) for efficient optimization. Also, the input dimension $d_X$ is usually large, e.g., CIFAR10 (the benchmark data set in Section~\ref{sec:simul} with $d_X=32\times 32\times 3$).

\subsection{Comparison to Mixup}
\label{supp:mixup}
The idea of interpolating data points was first introduced by \citetSupp{zhan:etal:17}. They mainly discussed that the use of the convex combinations, so-called \emph{Mixup}, greatly improves generalization errors and robustness against adversarial testing data within the supervised learning framework. Based on the idea of Mixup, there has been a strand of research designing better ``mixed" data augmentation, mostly focusing on computer vision tasks \citepSupp[e.g., ][]{yun:etal:19, verm:etal:19, hend:etal:19}. It is worth mentioning that the original work of \citetSupp{zhan:etal:17} also briefly discussed applying the Mixup technique for GAN training by introducing linear combinations of real and generated data points. Despite the similarity of Mixup and our convex mixture (\ref{def:convex}), there are fundamental differences: Mixup technique serves as a penalization that aims to regularize and smooth the optimization objective and hence to improve the generalization and robustness; in contrast, our usage of convex combination doesn't change the optimization objective but creates auxiliary intermediate distributions that helps stabilize and accelerate the original GAN training. Finally, the Mixup strategy only applies to a specific type of $d_{\cal D}$, whereas ours is universally applicable to most probability metrics.

Figure~\ref{fig:mixture_comp} compares the Mixup GAN (MixGAN) and our approach (PTGAN) for the toy example where the original training fails (Figure~\ref{fig:mode_collapse}). For MixGAN implementation, we consider two hyperparameters for the label distribution defined by a $\text{Beta}(\alpha,\beta)$ distribution. The figure illustrates the density plot of $G^{(t)}$ over the different training iterations, where the shaded region from one standard deviation is found based on 10 independent runs. The figure implies that PTGAN more quickly captures the entire distribution and begins to represent all unimodal components than MixGAN.

\begin{figure}[ht!]
\centering
\includegraphics[width=1.0\textwidth]{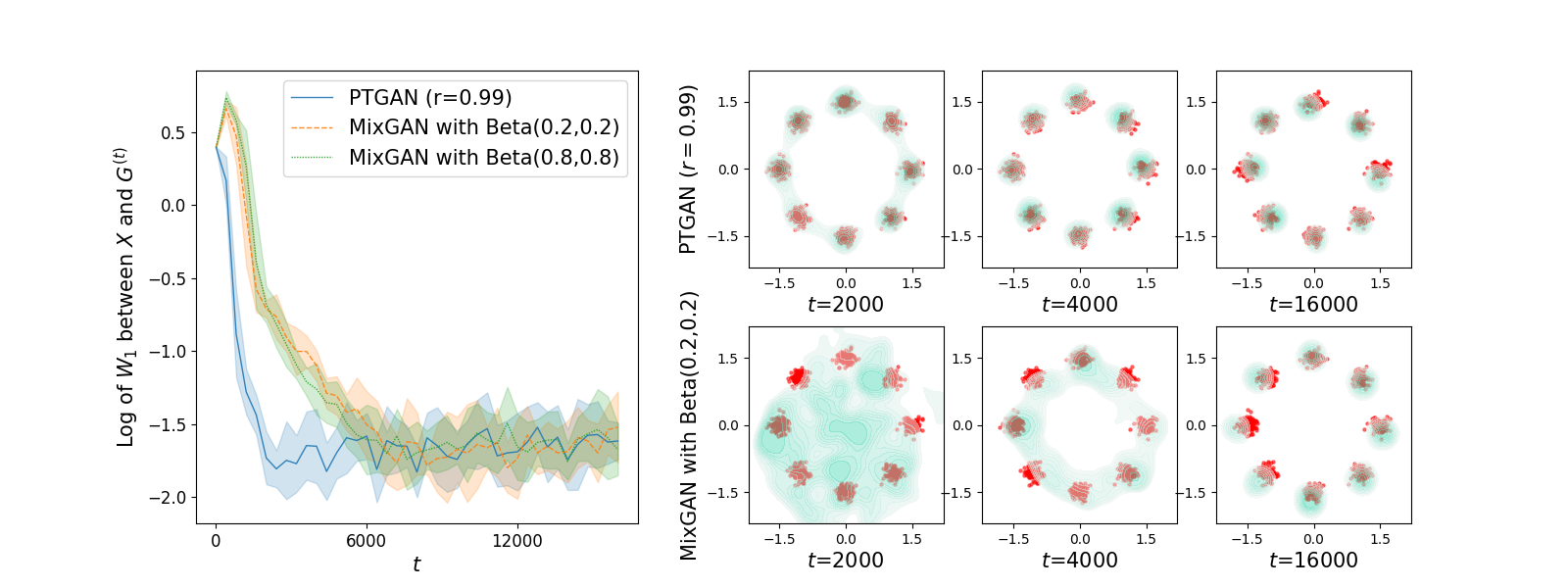}\vspace{-0.00in}
\caption{The left plot shows the logarithm of the 1-Wasserstein distance between the data and generated data for ours (PTGAN) and Mixup GAN (MixGAN). Plots draw the kernel density plots of generated distributions by $\Gt$ over the target distribution (red dots).}
\label{fig:mixture_comp}
\end{figure}

\paragraph{Simulation setup} The toy data used to show Figure~\ref{fig:mode_collapse} is considered. Also, the network architectures of $D$ and $G$ in the figure are used. The Adam optimizer's hyperparameters are set to $\beta_1=0.0$ and $\beta_2=0.9$ with the learning rates for $D$ and $G$ as 0.0001.

\subsection{Extension to a RBM model}
\label{supp:extension}
Our tempering scheme can be technically extended to a deep belief network model. For concise discussion, let's consider a restricted Boltzmann machine (RBM) model, i.e., 
\begin{align*}
        p(x|h)&\propto \exp\left(-E(x,h)\right),\\
        E(x,h)&=-\sum_{i}x_ia_i-\sum_{i,j}x_iW_{ij}h_j-\sum_j h_j b_j,
\end{align*}
where $x=(x_1,\dots,x_p)$ and $h=(h_1,\dots,h_K)$ for all $x_i,h_j\in\{0,1\}$, and $W,b,c$ are parameters. The marginal distribution is described by 
\begin{align*}
    p(x)=\sum_{h\in \{0,1\}^K} p(x|h)p(h), 
\end{align*}
which can be seen as a distribution with $2^K$ components. Therefore, as Figure~\ref{fig:tempered_dist} in the manuscript hints, taking convex interpolation in the input space can also be effective in training a DBM model if $2^K$ modes in the latent space makes $2^K$ distinguished distributions in the space of $x$. 

In this case, by adopting a Gaussian-Bernoulli RBM, we can adapt the real-valued inputs $v=(q_{\alpha},\alpha)\in \mathbb{R}^{p+1}$ where  $q_{\alpha}=\alpha x_1 + (1-\alpha) x_2$ and $\alpha \sim p_{\alpha}$ in the place of $x$, i.e., 
  \begin{align*}
        p(q_{\alpha},\alpha|h)&=p(v|h)\propto \exp\left(-E_{\rm{GB}}(v,h)\right), \\
        E_{\rm{GB}}(v,h)&=\sum_{i}\frac{(v_i-b_i)^2}{2\sigma_i^2} - \sum_{i,j}\frac{v_i}{\sigma_i}W_{ij}h_j-\sum_{j}c_jh_j.
    \end{align*}
Once this RBM model is trained, one can generate samples by fixing $\alpha=1$. However, for successful training and sampling procedures, it may be required to devise additional optimization techniques to harness the interpolation scheme more effectively in the training of the RBM model, such as our coherency penalty.

\newpage

\section{Proof}
\label{supp:proof}

\subsection{Proposition~\ref{prop:nd_wd}}
\label{supp:prop1}

Suppose that two distributions $X\sim p_1$ and $Y\sim p_2$ are defined on the compact ${\cal X}$ with finite second moments in $\mathbb{R}^{d_X}$. Under (A1) and (A2), for any coupling $p_{1,2}$ whose marginal densities $p_{1}$ and $p_2$, $\bE_{p_{1,2}} [D(X)] - \bE_{p_{1,2}} [D(Y)]=\bE_{p_{1}} [D(X)] - \bE_{p_{2}} [D(Y)]$ holds, therefore,  
\begin{align*}
    \expect_{p_1} [D(X)] - \expect_{p_2} [D(Y)] &\leq \int |D(x)-D(y)|p_{12}(x,y)dxdy, \\
    & \leq \prod_{s=1}^d M_w(s) \prod_{u=1}^{d-1} K_{\kappa}(u) \int |x-y| p_{12}(x,y) dxdy, \\ 
    & = \prod_{s=1}^d M_w(s) \prod_{u=1}^{d-1} K_{\kappa}(u) \expect [\lVert X-Y \rVert],
\end{align*}
by the Cauchy-Schwarz inequality, and it implies
\begin{align*}
 d_{\cal D}(p_1,p_2)\leq \prod_{s=1}^d M_w(s) \prod_{u=1}^{d-1} K_{\kappa}(u) \times W_1(p_1, p_2),
\end{align*}
where $W_k(p_1, p_2)$ is the $k$th-order Wasserstein distance between $p_1$ and $p_2$. 

For the lower bound, let's denote by ${\cal L}$ the class of 1-Lipschitz continuous functions. It is well-known that neural networks have the universal approximation property for $L^{\infty}$ norm under the compact domain ${\cal X}$ or for $L^p$ norm \citepSupp{lu:etal:17,park:etal:20}. Thus there is an approximation error $\omega_{\cal D}/2>0$ of ${\cal D}\cap {\cal L}$ to ${\cal L}$ characterized by the structure of ${\cal D}$, i.e., for any function $f\in {\cal L}$, there always exists a network $D\in {\cal D}\cap {\cal L}$, such that $|f(x)-D(x)|\leq \omega_{\cal D}/2$ for all $x\in \mathcal X$. Such an approximation holds due to the universal approximation properties of neural network \citepSupp[e.g.,][]{park:etal:20}. Trivially, this implies that $|d_{{\cal D}\cap {\cal L}}(p_1,p_2) - d_{\cal L}(p_1,p_2)| < \omega_{\cal D}$. Since $W_1(p_1,p_2)= d_{\cal L}(p_1,p_2)$, we obtain the bound, $W_1(p_1,p_2) - \omega_{\cal D}\leq d_{{\cal D}\cap {\cal L}}(p_1,p_2)\leq d_{\cal D}(p_1,p_2)$, concluding 
\begin{align*}
    W_1(p_1,p_2) - \omega_{\cal D}\leq d_{\cal D}(p_1,p_2) \leq \prod_{s=1}^d M_w(s) \prod_{u=1}^{d-1} K_{\kappa}(u) \times W_1(p_1, p_2).
\end{align*}
In the meantime, by the Jensen's inequality, $\lVert \mu_1-\mu_2\rVert=\lVert \bE[X-Y] \rVert \leq \bE[\lVert X-Y\rVert]$, so, for any coupling between $p_1$ and $p_2$, $\lVert \mu_1-\mu_2\rVert \leq W_1(p_1,p_2)$. Since $W_2(p_1,p_2)=\inf_{\pi}(\int \lVert X-Y \rVert^2d\pi)^{1/2}$, we consider an independent coupling to see 
\begin{align*}
    W_1(p_1,p_2)\leq W_2(p_1,p_2)\leq (\bE[\lVert X-Y\rVert^2])^{1/2}&=\sqrt{\mathrm{Tr}(\mathrm{Cov}(X)+\mathrm{Cov}(Y)))+\lVert \mu_1-\mu_2 \rVert ^2}, 
\end{align*}
where $W_1(p_1,p_2)\leq W_2(p_1,p_2)$ holds by the Jensen's inequality. Therefore, for the univariate case, by setting $\sigma_1=\sigma_2=\sigma$ for simplicity, we have 
\begin{align*}
    |\mu_1-\mu_2|-\omega_{\cal D}\leq d_{\cal D}(p_1,p_2)\leq \prod_{s=1}^d M_w(s) \prod_{u=1}^{d-1} K_{\kappa}(u) \sqrt{2\sigma^2+(\mu_1-\mu_2)^2}.
\end{align*}


\subsection{Proposition~\ref{prop:grad_lwbd}}

The argument in the main text is based on the following two lemmas.
\begin{lemma}
\label{lemma:cov_lwb}
    Assume $\lVert w_d^{(t)}\rVert>0$. The norm of the covariance of the $w_d$'s gradient is bounded below
    \begin{align*}
            \left\lVert \text{Cov}\left(\dfrac{\partial\hat{L}_b(\Dt,\Gt)}{\partial  w_d}\right)\right\rVert_{2} \geq \dfrac{\var(\hat{L}_b(\Dt,\Gt))}{\lVert  w_d^{(t)} \rVert^2},
    \end{align*}
    where the $\lVert \cdot\rVert_2$ for the covariance matrix is the induced 2-norm. 
\end{lemma}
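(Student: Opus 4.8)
The plan is to exploit the fact that the critic $D$ is \emph{linear} in its last weight vector $w_d$, which reduces the lemma to an elementary Rayleigh-quotient bound. With the SM notation $D(x)=w_d^{\top}A_{d-1}(x)$, where the last post-activation layer $A_{d-1}(\cdot)$ depends on $W_1,\dots,W_{d-1}$ but carries no dependence on $w_d$, the gradient of the empirical loss with respect to $w_d$ at the $t$th iterate is the random vector
\[
g_b \;:=\; \frac{\partial \hat{L}_b(D^{(t)},G^{(t)})}{\partial w_d}
\;=\; \frac{1}{n_b}\sum_{i=1}^{n_b} A_{d-1}^{(t)}(X_i) \;-\; \frac{1}{m_b}\sum_{j=1}^{m_b} A_{d-1}^{(t)}(G^{(t)}(Z_j)),
\]
and, since $\hat{L}_b(D^{(t)},G^{(t)})$ is a linear (degree-one homogeneous) function of $w_d^{(t)}$, Euler's identity gives the exact scalar relation $\hat{L}_b(D^{(t)},G^{(t)}) = w_d^{(t)\top} g_b$.

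Next I would take variances and covariances over the minibatch randomness with $\bw^{(t)},\bv^{(t)}$ held fixed: from $\hat{L}_b = w_d^{(t)\top} g_b$ one gets $\var\bigl(\hat{L}_b(D^{(t)},G^{(t)})\bigr) = w_d^{(t)\top}\,\text{Cov}(g_b)\,w_d^{(t)}$. Writing $\Sigma := \text{Cov}(g_b)$, which is symmetric positive semidefinite, its induced $2$-norm equals its largest eigenvalue, and the Rayleigh-quotient characterization together with the assumption $\lVert w_d^{(t)}\rVert>0$ yields
\[
\lVert \Sigma\rVert_2 \;=\; \sup_{v\neq 0}\frac{v^{\top}\Sigma v}{\lVert v\rVert^2}
\;\geq\; \frac{w_d^{(t)\top}\Sigma w_d^{(t)}}{\lVert w_d^{(t)}\rVert^2}
\;=\; \frac{\var\bigl(\hat{L}_b(D^{(t)},G^{(t)})\bigr)}{\lVert w_d^{(t)}\rVert^2},
\]
which is exactly the claimed inequality.

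There is essentially no obstacle here; the only points that need care are (i) confirming that $A_{d-1}$ does not depend on $w_d$, so that $g_b$ is genuinely the gradient and the identity $\hat{L}_b=w_d^{(t)\top}g_b$ holds \emph{exactly} rather than merely to first order, and (ii) being explicit that $\var(\cdot)$ and $\text{Cov}(\cdot)$ are taken over the same source of randomness, namely the minibatch draws $\{X_i\},\{Z_j\}$ at fixed network iterates. Everything else is the standard fact that a Rayleigh quotient lower-bounds the top eigenvalue of a PSD matrix. This lemma then feeds into Proposition~\ref{prop:grad_lwbd}: it remains only to lower-bound $\var\bigl(\hat{L}_b(D^{(t)},G^{(t)})\bigr)$, which one does via independence of the $\Dt(\Gt(Z_j))$ term and the law of total variance applied to the mixture decomposition of $p_X$ induced by conditioning on $\{L_i^{(t)}\le\epsilon\}$ and its complement.
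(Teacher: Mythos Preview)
Your proposal is correct and follows essentially the same route as the paper: both exploit the linearity $\hat{L}_b = w_d^{(t)\top} g_b$ to write $\var(\hat{L}_b)=w_d^{(t)\top}\text{Cov}(g_b)\,w_d^{(t)}$, and then bound this quadratic form by $\lVert w_d^{(t)}\rVert^2\,\lVert\text{Cov}(g_b)\rVert_2$. The only cosmetic difference is that the paper passes through Cauchy--Schwarz ($w_d^\top\Sigma w_d\le\lVert w_d\rVert\,\lVert\Sigma w_d\rVert$) followed by the definition of the induced norm, whereas you invoke the Rayleigh-quotient characterization of the top eigenvalue directly; these are equivalent for the symmetric PSD matrix $\Sigma$.
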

\begin{proof}
    Let $\Lhat_i = D(X_i) - D(G(Z_i))$ and  $\hat{L}_{d-1,i}=A_{d-1}(X_i) - A_{d-1}(G(Z_i))$ where $A_{d-1}$ is the output of the $(d-1)$th post-activation layer. Since $\hat{L}_i=w_d^{\top} \hat{L}_{d-1,i}$ and accordingly $\partial \hat{L}_i / \partial w_d=\hat{L}_{d-1,i}$,  
\begin{align*}
    \var(\hat{L}_i) &= w_d^{\top} \text{Cov}(\hat{L}_{d-1,i}) w_d, \\
    \iff & \dfrac{\var(\hat{L}_i)}{n_b} = w_d^{\top} \dfrac{\text{Cov}(\hat{L}_{d-1,i})}{n_b} w_d\leq \lVert w_d \rVert \times \left \lVert \cov\left(\frac{\partial \hat{L}_b(D,G)}{\partial w_d}\right) w_d \right \rVert,\\ 
    \Rightarrow & \dfrac{\var(\hat{L}_b(D,G))}{\lVert w_d \rVert^2}\leq  \left \lVert \cov\left(\frac{\partial \hat{L}_b(D,G)}{\partial w_d}\right) w_d \right \rVert / {\lVert w_d \rVert},
\end{align*}
by the Cauchy–Schwarz inequality, where $\var(\hat{L}_b(D,G))=\var(\hat{L}_i)/n_b$ and $\cov\left(\partial \hat{L}_b(D,G)/\partial w_d\right)=\cov(\hat{L}_{d-1,i})/n_b$ with $n_b=m_b$ under the i.i.d. assumption. Therefore, by applying the definition of the induced norm to the right-hand side, we obtain the result. 
\end{proof}

\begin{lemma}
\label{lemma:var_lwb_i}
    Define $\zetaeps = P(\Lti\leq \epsilon)$,  $\mu_{\calS}=\bE(\Lti| \Lti \leq \epsilon)$, $\sigma_{\calS}^2=\var(\Lti|\Lti\leq \epsilon)$, and $\sigma_{\calS^c}^2=\var(\Lti|\Lti> \epsilon)$. The variance of $\Lti$ is characterized by 
    \begin{align*}
    \var(\Lti)= \zetaeps\sigma_{\calS}^2 +(1-\zetaeps)\sigma^2_{\calS^c}+ \zetaeps(1-\zetaeps)(\bE(\Lti|\Lti\geq \epsilon)-\mu_{\calS})^2. 
    \end{align*}
\end{lemma}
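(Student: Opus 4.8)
\textbf{Proof proposal for Lemma~\ref{lemma:var_lwb_i}.}

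The plan is to apply the standard law of total variance, decomposing the variance of $\Lti$ according to the event $\{\Lti \leq \epsilon\}$ versus its complement. Introduce the Bernoulli indicator $J = \mathbf{1}(\Lti \leq \epsilon)$, so that $P(J=1) = \zetaeps$. The law of total variance gives $\var(\Lti) = \bE[\var(\Lti \mid J)] + \var(\bE[\Lti \mid J])$, and I would evaluate each of the two terms separately.

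For the first term, conditioning on $J$ yields $\bE[\var(\Lti \mid J)] = \zetaeps \var(\Lti \mid \Lti \leq \epsilon) + (1-\zetaeps)\var(\Lti \mid \Lti > \epsilon) = \zetaeps \sigma_{\calS}^2 + (1-\zetaeps)\sigma_{\calS^c}^2$, which is immediate from the definitions. For the second term, $\bE[\Lti \mid J]$ is a two-valued random variable taking value $\mu_{\calS} = \bE[\Lti \mid \Lti \leq \epsilon]$ with probability $\zetaeps$ and value $\bE[\Lti \mid \Lti > \epsilon]$ with probability $1-\zetaeps$. The variance of a two-point random variable taking values $a$ and $b$ with probabilities $p$ and $1-p$ is $p(1-p)(a-b)^2$; applying this with $a = \mu_{\calS}$, $b = \bE[\Lti \mid \Lti > \epsilon]$, $p = \zetaeps$ gives $\var(\bE[\Lti \mid J]) = \zetaeps(1-\zetaeps)(\bE[\Lti \mid \Lti \geq \epsilon] - \mu_{\calS})^2$. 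Summing the two terms produces the claimed identity. (Here I use that $\{\Lti > \epsilon\}$ and $\{\Lti \geq \epsilon\}$ agree up to a null set under the implicit continuity of the distribution, or simply read the conditional expectation on $\{\Lti > \epsilon\}$ as what the statement denotes by $\bE[\Lti \mid \Lti \geq \epsilon]$.)

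There is essentially no obstacle here: the lemma is a direct instance of the law of total variance with a binary conditioning variable, and the only mild point to be careful about is the bookkeeping of the two-point variance formula and the harmless identification of the strict and non-strict tail events. The same computation could alternatively be carried out ``by hand'' via $\var(\Lti) = \bE[\Lti^2] - (\bE[\Lti])^2$, splitting each expectation over the two events and collecting terms, but the conditioning argument is cleaner and I would present it that way.
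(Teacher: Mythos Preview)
Your proposal is correct and follows essentially the same approach as the paper: both apply the law of total variance with a binary conditioning on $\{\Lti \leq \epsilon\}$ versus its complement, compute the within-group term as $\zetaeps\sigma_{\calS}^2 + (1-\zetaeps)\sigma_{\calS^c}^2$, and obtain the between-group term as the two-point variance $\zetaeps(1-\zetaeps)(\bE[\Lti\mid \Lti>\epsilon]-\mu_{\calS})^2$. The only cosmetic difference is that the paper expands the two-point variance by hand rather than quoting the formula.
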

\begin{proof}
    Recall $\Lti=\Dt(X_i)-\bE[\Dt(\Gt(Z))]$ and $\zetaeps = P(\Lti\leq \epsilon)$. Let's denote by $A$ the classification rule such that $A_1:\{ \Lti  \leq \epsilon\}$ and $A_2:\{ \Lti > \epsilon\}$.  We define $\mu_{\calS}=\bE(\Lti|A_1)$ and $\sigma_{\calS}^2=\var(\Lti|A_1)$. Then, by the Jensen's inequality, we have
\begin{align*}
    \bE(\var(\Lti|A)) &= P(A_1)\var(\Lti|A_1) + P(A_2)\var(\Lti|A_2), \\
    &= \zetaeps\var(\Lti|A_1) + (1-\zetaeps)\var(\Lti|A_2),\\
    &= \zetaeps\sigma_{\calS}^2+(1-\zetaeps)\sigma^2_{\calS^c}.
\end{align*}
On the one hand, 
\begin{align*}
    \var(\bE(\Lti|A)) &= \bE(\bE(\Lti|A)^2) - \bE(\bE(\Lti|A))^2,\\
    &= P(A_1)\bE(\Lti|A_1)^2 + P(A_2)\bE(\Lti|A_2)^2 - (P(A_1)\bE(\Lti|A_1)+P(A_2)\bE(\Lti|A_2))^2,\\
    &= P(A_1)\bE(\Lti|A_1)^2 - P(A_1)^2\bE(\Lti|A_1)^2 + P(A_2)\bE(\Lti|A_2)^2 -P(A_2)^2\bE(\Lti|A_2))^2 \\ 
    &~ -2P(A_1)P(A_2)\bE(\Lti|A_1)\bE(\Lti|A_2), \\
    &=\zetaeps(1-\zetaeps)(\bE(\Lti|A_2)-\mu_{\calS})^2.
\end{align*}
Therefore, 
\begin{align*}
    \var(\Lti)&=\bE(\var(\Lti|A)) + \var(\bE(\Lti|A)), \\
    & =\zetaeps\sigma_{\calS}^2 +(1-\zetaeps)\sigma^2_{\calS^c} + \zetaeps(1-\zetaeps)(\bE(\Lti|A_2)-\mu_{\calS})^2.
\end{align*}
\end{proof}

Suppose $\lVert w_d^{(t)}\rVert>0$. Let $\zetaeps = P(\Lti\leq \epsilon)$,  $\sigma_{\calS}^2=\var(\Lti|\Lti\leq \epsilon)$, $\sigma^2_{\Gt}=\var(\Dt(\Gt(Z_j)))$, and $\bE(\Lti| \Lti \leq \epsilon)=0$ for some $\epsilon>0$. Since the variance of $\hat{L}_b(\Dt,\Gt)$ is bounded below
\begin{align*}
    \var\left(\hat{L}_b(\Dt,\Gt)\right) &= \var\left(\dfrac{1}{n_b}\sum_{i=1}^{n_b} \Lti \right) \\ 
    & ~~ + \var\left(\dfrac{1}{m_b}\sum_{j=1}^{m_b} (\bE(\Dt(G^{(t)}(Z)))-\Dt(G^{(t)}(Z_j)))\right),  \\
    &\geq \dfrac{1}{n_b}\left(\zetaeps\sigma_{\calS}^2 +(1-\zetaeps)\sigma^2_{\calS^c}+ \zetaeps(1-\zetaeps) \bE(\Lti|A_2)^2 \right) + \dfrac{\sigma^2_{\Gt}}{m_b},
\end{align*}
by Lemma~\ref{lemma:var_lwb_i} and, therefore, by Lemma~\ref{lemma:cov_lwb}, we have the statement.

\subsection{Proposition~\ref{prop:reduction}}
Let's consider the 2-mixture example discussed in Section~\ref{sec:multimodal}, i.e., $p_X(x)=\sum_{k=1}^2 p_k(x;\mu_k,\sigma)/2$. Let's denote by $u\sim p_1(x;\mu_1,\sigma)$ and $v\sim p_2(x;\mu_2,\sigma)$ each mixture component. Suppose that $\Qal$ follows a mixture distribution $\Qal \sim \sum_{k=1}^2 p_k^{\alpha}(x;\mu_k^*,\sigma^*)/2$. In this work, we define each $p_k^{\alpha}$ based on 
\begin{align*}
    &\alpha u + (1-\alpha)X_2 \sim p_1^{\alpha} \quad \text{w.p.} \quad 0.5, \\
    &\alpha v + (1-\alpha)X_2 \sim p_2^{\alpha} \quad \text{w.p.} \quad 0.5,
\end{align*}
for $\alpha\sim{\rm Unif}(0.5,1)$. In this construction, $\Qal$ has the same distribution with $\alpha\sim {\rm Unif}(0,1)$. This construction helps illustrate $p_1^{\alpha}$ and $p_2^{\alpha}$ are separated unimodal distributions as shown in Figure~\ref{fig:separability} that visually compares $p_k$ and $p_k^{\alpha}$. 

\begin{figure*}[t]
    \centering
    \includegraphics[width=0.8\linewidth]{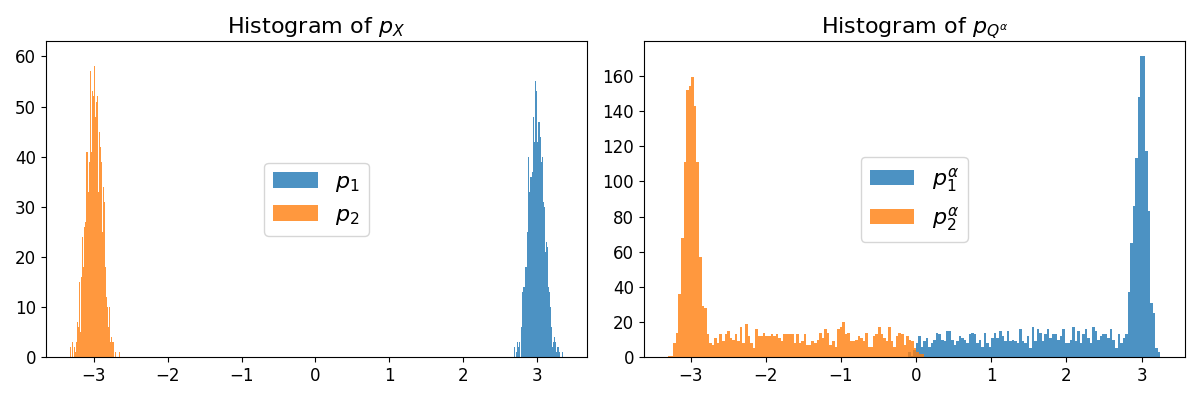}
    \caption{The unimodal components $p_1$ and $p_2$ are generated from normal distribution with $\mu_1=3$, $\mu_2=-3$, and the variance $\sigma^2=0.01$.}
    \label{fig:separability}
\end{figure*}

Based on the decomposition, we obtain 
\begin{align*}
    \mu_1^* = \bE[\alpha u+ (1-\alpha)X_2]=\bE[\alpha] \mu_1 + (1-\bE[\alpha])(\mu_1+\mu_2)/2, \\
    \mu_2^* = \bE[\alpha v+ (1-\alpha)X_2]=\bE[\alpha] \mu_2 + (1-\bE[\alpha])(\mu_1+\mu_2)/2, 
\end{align*}
so $|\mu_1^*-\mu_2^*|=3|\mu_1-\mu_2|/4$
and the variance is 
\begin{align*}
    (\sigma^*)^2 &= \var(\alpha u + (1-\alpha)X_2), \\
    & =\bE[\var(\alpha u + (1-\alpha)X_2|\alpha)] + \var(\bE[\alpha u + (1-\alpha)X_2|\alpha]),\\
    &= \bE[\alpha^2 \var(u) + (1-\alpha)^2 \var(X_2)] + \var(\alpha \bE[u] + (1-\alpha)\bE[X_2]), \\ 
    &= \sigma^2 \bE[\alpha^2] + \var(X_2)\bE[(1-\alpha)^2] + (\mu_1 - \bE[X_2])^2\var(\alpha), \\
    &= \dfrac{3}{4}\sigma^2 + \dfrac{5}{192}(\mu_1-\mu_2)^2,
\end{align*}
where $\bE[\alpha^2]=7/12$, $\bE[(1-\alpha)^2]=1/12$, $\var(\alpha)=1/48$, $\var(X_2)=\sigma^2 + (\mu_1-\mu_2)^2/4$, and $\bE[X_2]=(\mu_1+\mu_2)/2$.



\subsection{Theorem~\ref{prop:grad_upbd1}}
\label{supp:prop_thm1}

For readers' convenience, this section is categorized by three subsections: 1) the backpropagation mechanism of GAN (Section~\ref{supp:backprop}), 2) key Lemmas and Remarks (Section~\ref{supp:key_lemmas}), and 3) the main proof (Section~\ref{supp:proof_thm1}). 

\subsubsection{Backpropagation in GAN}
\label{supp:backprop}
Recall that $B_l(x)=W_lA_{l-1}(x)$ is the $l$th pre-activation layer and $A_{l-1}$ is the $(l-1)$th post-activation layer, i.e., $A_l(x)=\kappa_{l}(B_l(x))$, where $W_l\in \mathbb{R}^{N_{l+1}^D\times N_{l}^D}$, $A_{l-1}(x)\in \mathbb{R}^{N_{l}^D}$, and $B_l(x),A_l(x)\in \mathbb{R}^{N_{l+1}^D}$. Set $A_0(x)=x$. Note the activation function $\kappa_l$ applies element-wisely. For simplicity, we implicitly consider $D(x)=A_d(x)=\kappa_d(w_d^{\top}A_{d-1}(x))=\kappa_d(B_d(x))$ where $\kappa_d(x)=x$. Let's denote by $\delta_{X_i}^{l}$ and $\delta_{Z_i}^{l}$ the derivative of $D(X_i)$ and $D(G(Z_i))$ w.r.t. the $l$th pre-activation layer $B_l$ in $D$, i.e., $\delta_{X_i}^{l}=\frac{\pr D(X_i)}{\pr B_{l}}$ and $\delta_{Z_i}^{l}=\frac{\pr D(G(Z_i))}{\pr B_{l}}$. By abusing the notation $\cdot$ for matrix multiplication, we can observe that 
\begin{align*}
    \delta^l_{X_i} = \dfrac{\pr D(X_i)}{\pr B_{l+1}} \cdot \dfrac{\pr B_{l+1}}{\pr A_{l}}\cdot \dfrac{\pr A_l}{\pr B_l} = \left((W_{l+1})^{\top }\delta^{l+1}_{X_i}\right)\odot \kappa_l'(B_l(X_i))\in\mathbb{R}^{N_{l+1}^D},
\end{align*}
where $\odot$ stands for the Hadamard product and $\delta^l_{Z_i}$ presents in the same way. For instance, 
\begin{align*}
    &\delta^d_{X_i} =\dfrac{\pr D(X_i)}{\pr B_d} = 1, \\ 
    &\delta^{d-1}_{X_i}=\dfrac{\pr D(X_i)}{\pr B_d}\cdot\dfrac{\pr B_d}{\pr A_{d-1}}\cdot \dfrac{\pr A_{d-1}}{\pr B_{d-1}}=\delta^d_{X_i}\cdot w_d \odot \kappa_{d-1}'(B_{d-1}(X_i))\in \mathbb{R}^{N^D_d}.
\end{align*}
Based on the above characterization, the gradient of $W_{d-1}$ through the backpropagation can be presented by 
\begin{align*}
    \dfrac{\pr \hat{L}_b}{\pr W_{d-1}}&=\dfrac{1}{n_b}\sum_{i=1}^{n_b}\dfrac{\pr }{\pr W_{d-1}}D(X_i)-\dfrac{\pr }{\pr W_{d-1}}D(G(Z_i)), \\
    &=\dfrac{1}{n_b}\sum_{i=1}^{n_b}\dfrac{\pr D(X_i)}{\pr B_{d-1}}\cdot \dfrac{\pr B_{d-1}}{\pr W_{d-1}}-\dfrac{\pr D(G(Z_i))}{\pr B_{d-1}}\cdot \dfrac{\pr B_{d-1}}{\pr W_{d-1}}, \\
    &=\dfrac{1}{n_b}\sum_{i=1}^{n_b}\delta^{d-1}_{X_i}\cdot A_{d-2}^{\top}(X_i)-\delta^{d-1}_{Z_i}\cdot A_{d-2}^{\top}(G(Z_i)). 
\end{align*}
Therefore, the gradient of $W_l$ is represented by 
\begin{align*}
    \dfrac{\pr \hat{L}_b}{\pr W_l} = \dfrac{1}{n_b}\sum_{i=1}^{n_b} \delta^l_{X_i} \cdot A_{l-1}^{\top }(X_i) - \dfrac{1}{n_b}\sum_{i=1}^{n_b} \delta^l_{Z_i}\cdot A^{\top}_{l-1}(G(Z_i))\in\mathbb{R}^{N_{l+1}^D\times N_{l}^D},
\end{align*}
where $\delta^l \in \mathbb {R}^{N^D_{l+1}}$ and $A_{l-1}(x)\in\mathbb{R}^{N_{l}^D}$. This also implies that the $(r,c)$th parameter of $W_l$ is represented by 
\begin{align}
\label{eqn:backprop_bd1}
    \dfrac{\pr \hat{L}_b}{\pr W_{l,r,c}} = \dfrac{1}{n_b}\sum_{i=1}^{n_b} \delta^l_{X_i,r} A_{l-1,c}(X_i) - \dfrac{1}{n_b}\sum_{i=1}^{n_b} \delta^l_{Z_i,r}A_{l-1,c}(G(Z_i)),
\end{align}
where the additional index $r$ and $c$ in $\delta^l$ and $A_{l-1}$ stand for the $r$th and $c$th element in the vector, respectively.

\subsubsection{Key Lemmas and Remarks}
\label{supp:key_lemmas}
\begin{lemma}
\label{lemma:wln}
Under (A1-3), for any $G\in {\cal G}$ and $D\in {\cal D}$, the size of the minibatch loss is bounded by 
    \begin{align*}
        |\Lhat_b (D, G)|  \leq d_{\cal D}(p_X, p_{G(Z)})+ O_p(1/\sqrt{n_{b}}). 
    \end{align*}
\end{lemma}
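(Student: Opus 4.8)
The plan is to decompose $\Lhat_b(D,G)$ into its population mean plus a sampling fluctuation, bound the mean by the neural distance, and show the fluctuation is $O_p(n_b^{-1/2})$.

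First I would introduce $L(D,G) = \bE[D(X)] - \bE[D(G(Z))]$, the population analogue of $\Lhat_b(D,G)$ (recall $m_b=n_b$), and apply the triangle inequality:
\[
|\Lhat_b(D,G)| \le |L(D,G)| + |\Lhat_b(D,G) - L(D,G)|.
\]
To control the first term by $d_{\cal D}(p_X, p_{G(Z)})$, note that (A1) constrains the last-layer weight $w_d$ to a norm ball, so $D \in {\cal D}$ implies $-D \in {\cal D}$; since also the zero network lies in ${\cal D}$, the supremum in the definition of $d_{\cal D}$ is nonnegative, and therefore
\[
|L(D,G)| = \max\{L(D,G),\, L(-D,G)\} \le \sup_{D'\in{\cal D}} L(D',G) = d_{\cal D}(p_X, p_{G(Z)}).
\]

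Next I would bound the fluctuation term. Under (A1)--(A2), composing the Lipschitz activations with the Frobenius-bounded weight matrices shows $D$ is Lipschitz with constant at most $\prod_{s=1}^d M_w(s)\prod_{u=1}^{d-1} K_\kappa(u)$, and similarly $G$ is Lipschitz in $z$; together with the compact supports in (A3) this makes $D(X_i)$ and $D(G(Z_i))$ almost surely bounded, hence square-integrable with variance controlled by a constant depending only on $M_w,M_v,K_\kappa,K_\psi,B_X,B_Z$ and the activation values at $0$. Writing $\Lhat_b(D,G) - L(D,G)$ as the difference of the two centered empirical averages $\tfrac1{n_b}\sum_i (D(X_i) - \bE[D(X)])$ and $\tfrac1{n_b}\sum_i (D(G(Z_i)) - \bE[D(G(Z))])$, each is $O_p(n_b^{-1/2})$ by Chebyshev's inequality (or the CLT), so their difference is too. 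Combining the two displays yields the claim; if a bound uniform over ${\cal D}$ is wanted, one replaces Chebyshev by a Rademacher-complexity estimate on $\bE[\sup_{D\in{\cal D}}|\Lhat_b(D,G)-L(D,G)|]$, which still decays like $n_b^{-1/2}$ because of the bounded weight norms, as in the later sample-complexity analysis.

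The only genuinely non-automatic point is the first step: one needs $|L(D,G)|\le d_{\cal D}$, not merely $L(D,G)\le d_{\cal D}$, and this relies on ${\cal D}$ being closed under negation --- a feature of the norm-ball parametrization in (A1). Everything else is a standard law-of-large-numbers estimate, so I expect no real difficulty there.
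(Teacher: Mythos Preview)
Your proposal is correct and follows essentially the same route as the paper: decompose $|\Lhat_b|$ into $|L|$ plus a fluctuation, bound $|L|$ by $d_{\cal D}$ via the symmetry $-D\in{\cal D}$, and control the fluctuation by Chebyshev using the boundedness of $D$ under (A1)--(A3). The paper's write-up is terser---it simply asserts $|L|\le d_{\cal D}$ and applies Chebyshev via the reverse triangle inequality $||\Lhat_b|-|L||\le |\Lhat_b-L|$---but the content is the same; your explicit remark that the absolute value requires closure of ${\cal D}$ under negation is a point the paper leaves implicit.
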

\begin{proof}
    Since the support and the norm of matrices are bounded, and the activation is Lipschitz, $D(\cdot)$ is bounded. Let $L_i = D(X_i) - D(G(Z_i))$ and denote by $L=\bE[L_i]$. Since $|\Lhat_b - L|\geq||\Lhat_b| - |L||$, 
    \begin{align*}
        P(||\Lhat_b| - |L||\geq C) \leq P(|\Lhat_b - L|\geq C)\leq \dfrac{\var(\Lhat_b)}{C^2}=\dfrac{\var(D(X_i))+\var(D(G(Z_i)))}{n_b C^2},
    \end{align*}
    for some $C >0$, by the Chebyshev's inequality. Let's express $|\Lhat_b|=|L|+ O_p(1/\sqrt{n_{b}})$ by  definition. Since $|L|\leq d_{\cal D}(p_X,p_{G(Z)})$, we obtain the result. 
\end{proof}

For further analysis, we make an extra assumption:
\begin{enumerate}[label={(A\arabic*)}]
    \setcounter{enumi}{4}
    \item For all $l$, the activation function $\kappa_l(x)$ in the $l$th hidden layer is differentiable and its derivative is lower bounded by $C_{\kappa'}(l)$, i.e., $\kappa_l'(x)\geq C_{\kappa'}(l)>0$ for all $x$.
\end{enumerate}
This assumption will be discussed in the later Remark \ref{rem:kappa} titled ``Characterization of $C_{\kappa'}(l)$", especially for the Sigmoid activation.
\begin{lemma}
\label{lemma:dist_uwb}
    Suppose (A1-3) holds and set $n_b=m_b$. For some $G\in {\cal G}$, we define $\hat{L}_{l}=n_b^{-1}\sum_{i=1}^{n_b} \left\{A_{l}(X_i) - A_{l}(G(Z_i))\right\}$ where $A_{l}(x)=\kappa_{l}(W_lA_{l-1}(x))$ with $W_l\in\mathbb{R}^{N_{l+1}^D\times N_{l}^D}$ is the $l$th post-activation layer output of the input $x$. 
    Then, the $r$th post-activation node of $\hat{L}_l$, denoted by $\hat{L}_{l,r}=(\hat{L}_l)_r$, satisfies
    \begin{align*}
        |\hat{L}_{l,r}|\leq 
        \begin{cases}
            \dfrac{1}{\prod_{j=l+1}^d M_w(j)}d_{\cal D}(p_X,p_{G(Z)})+O_p(1/\sqrt{n_{b}}) & \mbox{if } \kappa_l \mbox{ is ReLU}, \\
            \dfrac{1}{\prod_{j=l+1}^d M_w(j)C_{\kappa'}(j)}d_{\cal D}(p_X,p_{G(Z)})+O_p(1/\sqrt{n_{b}}) & \mbox{if (A5) holds}, \\
        \end{cases}
    \end{align*} for all $r=1,\dots,N_{l+1}^D$ and  $l=1,\dots,d-1$.  
\end{lemma}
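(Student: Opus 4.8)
The plan is to build, out of the first $l$ layers that define $A_l$, a complete critic $D\in\mathcal D$ whose output is (essentially) a positive scalar multiple of the single coordinate $A_{l,r}(\cdot)$, and then read off the bound from Lemma~\ref{lemma:wln}. Concretely, I would choose the remaining weight matrices $W_{l+1},\dots,W_{d-1},w_d$ to be ``single‑path'': each has a single nonzero entry, equal to its maximal allowed norm $M_w(\cdot)$, positioned so that it forwards the $r$th node of the preceding post‑activation layer into the (first) node of the next pre‑activation layer. Writing $g$ for the resulting scalar map $t\mapsto M_w(d)\,\kappa_{d-1}(M_w(d-1)\,\kappa_{d-2}(\cdots\kappa_{l+1}(M_w(l+1)\,t)\cdots))$, this gives a legitimate $D\in\mathcal D$ (the single‑path weights clearly respect (A1)) with $D(x)=g(A_{l,r}(x))+\text{const}$, the additive constant coming from $\kappa_j(0)\neq 0$ being harmless because $\hat L_b(D,G)$ only sees the difference $D(X_i)-D(G(Z_i))$.

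For the ReLU case the argument closes immediately. Since $\kappa_l$ is ReLU, $A_{l,r}(x)\ge 0$ for every $x$; since every factor $M_w(j)$ is positive and ReLU is positively homogeneous, $g$ acts linearly on the nonnegative ray, i.e. $g(A_{l,r}(x))=\big(\textstyle\prod_{j=l+1}^{d}M_w(j)\big)A_{l,r}(x)$. Hence $\hat L_b(D,G)=\big(\prod_{j=l+1}^{d}M_w(j)\big)\hat L_{l,r}$, and Lemma~\ref{lemma:wln} yields $\big(\prod_{j=l+1}^{d}M_w(j)\big)|\hat L_{l,r}|=|\hat L_b(D,G)|\le d_{\mathcal D}(p_X,p_{G(Z)})+O_p(1/\sqrt{n_b})$, which is exactly the claimed inequality.

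Under (A5) the composition $g$ is no longer linear, but each $\kappa_j$ is strictly increasing with $\kappa_j'\ge C_{\kappa'}(j)>0$ on the bounded range of its argument (bounded by (A1) and (A3)), so $g$ is a strictly increasing bijection onto its image with $g'(t)\ge c:=\prod_{j=l+1}^{d}M_w(j)C_{\kappa'}(j)$ (using $C_{\kappa'}(d)=1$ for the identity output layer). Therefore $g^{-1}$ is well defined and $1/c$‑Lipschitz and $A_{l,r}(x)=g^{-1}(D(x)+\text{const})$. The plan is then to transfer the bound through $g^{-1}$ at the population level: write $\bar L_{l,r}:=\bE[A_{l,r}(X)]-\bE[A_{l,r}(G(Z))]$, note $\hat L_{l,r}=\bar L_{l,r}+O_p(1/\sqrt{n_b})$ by boundedness and the CLT, and argue that the monotonicity of $g$ together with $g'\ge c$ forces $c\,|\bar L_{l,r}|\le |\bE[D(X)]-\bE[D(G(Z))]|\le d_{\mathcal D}(p_X,p_{G(Z)})$; dividing by $c$ finishes.

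The step I expect to be the main obstacle is precisely this last one in the (A5) case: passing from ``the single‑path critic $D=g(A_{l,r})$ carries at least $c\,|\bar L_{l,r}|$ of discriminative power'' to a clean inequality, since $g$ is only monotone (not linear) while $d_{\mathcal D}$ is a supremum over all critics, so one must check that the discriminative direction encoded by $A_{l,r}$ is not ``hidden'' after being pushed through $g$. I would handle this by working with the one‑dimensional pushforwards $P,Q$ of $p_X,p_{G(Z)}$ under $x\mapsto A_{l,r}(x)$, decomposing $g=c(\mathrm{id}+h)$ with $h$ nondecreasing, and bounding $|\bE_P[\mathrm{id}]-\bE_Q[\mathrm{id}]|$ against $|\bE_P[g]-\bE_Q[g]|=c\,|(\bE_P-\bE_Q)[\mathrm{id}]+(\bE_P-\bE_Q)[h]|$; the ReLU sub‑case is the one where $h\equiv 0$ and the bound is effortless, and it is in the general case that the constant $\prod_{j=l+1}^{d}C_{\kappa'}(j)$ (rather than the Lipschitz product) enters. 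The remaining ingredients — verifying the single‑path weights satisfy the norm constraints, confirming the additive constants cancel, and invoking Lemma~\ref{lemma:wln} — are routine.
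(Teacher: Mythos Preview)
Your ReLU argument is essentially the paper's: the same single-path weight construction, the same use of positive homogeneity on the nonnegative outputs $A_{l,r}(x)\ge 0$, and the same appeal to Lemma~\ref{lemma:wln}. That part is correct and matches the paper.

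The gap is in your (A5) argument. The population-level inequality you are aiming for, $c\,|\bar L_{l,r}|\le |\bE_P[g]-\bE_Q[g]|$ for strictly increasing $g$ with $g'\ge c$, is \emph{false} in general. A one-dimensional counterexample: take $P=\delta_{3/2}$, $Q=\tfrac12(\delta_0+\delta_2)$, $c=1$, and $g(t)=t+10\max(t-1.9,0)$ (so $g'\ge1$ everywhere). Then $\bE_P[\mathrm{id}]-\bE_Q[\mathrm{id}]=1/2$, yet $g(3/2)=3/2$, $g(0)=0$, $g(2)=3$, so $\bE_P[g]-\bE_Q[g]=3/2-3/2=0$. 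In your decomposition $g=c(\mathrm{id}+h)$ the nondecreasing residual $h$ can exactly cancel the linear part; the $1/c$-Lipschitzness of $g^{-1}$ is a pointwise statement and does not survive taking expectations under two distinct laws. You correctly flagged this step as the obstacle, but the proposed resolution does not close it.

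The paper does not pass to the population level under (A5). It stays with the empirical sum and, recursively one layer at a time, applies the mean value theorem to each \emph{paired} summand,
\[
\kappa_{j}\!\big(M_w(j)A_{j-1,r}(X_i)\big)-\kappa_{j}\!\big(M_w(j)A_{j-1,r}(G(Z_i))\big)=\kappa'_{j}(\xi_{j,i})\,M_w(j)\big(A_{j-1,r}(X_i)-A_{j-1,r}(G(Z_i))\big),
\]
then lower-bounds each factor $\kappa'_{j}(\xi_{j,i})$ by $C_{\kappa'}(j)$ before peeling the next layer. The termwise MVT exploits the coupling of $X_i$ with $G(Z_i)$ built into $\hat L_b$---precisely the structure your passage to the marginal expectations $\bE_P,\bE_Q$ discards---so if you want to follow the paper's route, keep the argument at the empirical, paired-sample level throughout.
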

\begin{proof}
Note the $(d-1)$th hidden layer consisting of $N_{d}^D$ nodes is the matrix multiplication of $W_{d-1}$ and the $(d-2)$th hidden layer output passing the elementwise activation $\kappa_{d-1}$. Considering ${\cal D}$ is a symmetric family (i.e., if $D\in {\cal D}$, then $-D \in {\cal D}$), the optimal critic given $G$ is obtained by
\begin{align*}
    \hat{D}^*_t&=\arg_{D\in {\cal D}}\max\hat{L}_b(D,G),\\
    &=\arg_{D\in {\cal D}}\max \left|\frac{1}{n_b}\sum_{i=1}^{n_b} D(X_i) - \frac{1}{n_b}\sum_{j=1}^{n_b} D(G(Z_j))\right|,\\
    &=\arg_{D\in {\cal D}}\max \left|\frac{1}{n_b}\sum_{i=1}^{n_b} \left\{w_d^{\top}A_{d-1}(X_i) - w_d^{\top}A_{d-1}(G(Z_i))\right\}\right|,\\
    &=\arg_{D\in {\cal D}}\max \left|w_d^{\top}\Lhat_{d-1}(D,G)\right|.
\end{align*}
Because of the maximizing procedure of the critic, it follows $|\hat{L}_b(\hat{D}_t^*,G)|\geq |\hat{L}_b(D,G)|$ for any $D\in {\cal D}$, i.e.,  $|\hat{w}_d^{*\top}\Lhat_{d-1}^*|\geq |w_d^{\top}\Lhat_{d-1}|$ where $\hat{L}_{d-1}^*$ and $\hat{L}_{d-1}$ are the $(d-1)$th post-activation layer -- $\hat{w}_d^*$ and $w_d$ are the last weight -- of $\hat{D}_t^*$ and some $D$. 

Now, let's define $w_d={\bf 0}_{-r}=(0,\dots,M_w(d),\dots,0)^{\top}$ where the $r$th element is $M_w(d)$ but 0 otherwise. Then 
\begin{align*}
    |\hat{L}_b(\hat{D}^*_t,G)|\geq |{\bf 0}_{-r}\hat{L}_{d-1}| = M_w(d) |\hat{L}_{d-1,r}|.
\end{align*}
Therefore, by Lemma~\ref{lemma:wln}, for any $r$th node in the $(d-1)$th hidden layer, 
\begin{align*}
    |\hat{L}_{d-1,r}|\leq \dfrac{1}{M_w(d)}d_{\cal D}(p_X,p_{G(Z)})+O_p(1/\sqrt{n_{b}}).
\end{align*}
For the case of $(d-2)$th post-activation layer, we first observe that 
\begin{align*}
    \hat{L}_{d-1}= \dfrac{1}{n_b}\sum_{i=1}^{n_b} \left\{\kappa_{d-1}(W_{d-1} A_{d-2}(X_i))-\kappa_{d-1}(W_{d-1}A_{d-2}(G(Z_i)))\right\},  
\end{align*}
where $A_{d-2}(x)$ is the $(d-2)$th post-activation layer, and for any other $w_d$ and $W_{d-1}$, the following holds, 
 \begin{align}
 \label{eqn:post1}
 |\hat{L}_b(\hat{D}_t^*,\Gt)| \geq \left\lvert w_d^{\top}\dfrac{1}{n_b}\sum_{i=1}^{n_b} \left\{\kappa_{d-1}(W_{d-1} A_{d-2}(X_i))-\kappa_{d-1}(W_{d-1}A_{d-2}(G(Z_i)))\right\}\right\rvert.
\end{align}
Now, let $W_{d-1,1}$ be the first row vector of $W_{d-1}$, and set $W_{d-1,1}=(0,\dots,M_w(d-1),\dots,0)$ whose $r$th component is $M_w(d-1)$ and 0 otherwise, and $W_{d-1,q}={\bf 0}$ for any other rows $q\neq 1$. Also, we set $w_{d}^{\top}=(M_w(d),0,\dots,0)$ to select the $r$th component, then, \eqref{eqn:post1} becomes
\begin{align}
\label{eqn:post2}
    \left\lvert M_w(d)\dfrac{1}{n_b}\sum_{i=1}^{n_b} \left\{\kappa_{d-1}\left(M_w(d-1) A_{d-2,r}(X_i)\right)-\kappa_{d-1}\left(M_w(d-1) A_{d-2,r}(G(Z_i))\right)\right\}\right\rvert.
\end{align}

\paragraph{Case 1: $\kappa_l$ is ReLU}  Suppose $\kappa_{l}$ is ReLU for all $l=1,\dots,d-1$, i.e., $\kappa_{l}(x)=\max\{x,0\}$. Since all elements of $A_{d-2}(x)$ are non-negative, \eqref{eqn:post2} reduces to $\left\lvert M_w(d)M_w(d-1)\hat{L}_{d-2,r}\right\rvert$ where $\hat{L}_{d-2,r}=\sum_{i=1}^{n_b} A_{d-2,r}(X_i)/n_b-A_{d-2,r}(G(Z_i))/n_b$ and $A_{d-2,r}$ is the $r$th component of the post-activation layer. 
Hence, by Lemma~\ref{lemma:wln}, $|\hat{L}_{d-2,r}|$ is bounded above by 
\begin{align*}
    |\hat{L}_{d-2,r}|\leq \dfrac{1}{M_w(d)M_w(d-1)}d_{\cal D}(p_X,p_{G(Z)})+O_p(1/\sqrt{n_{b}}). 
\end{align*}
By repeating this process to all hidden layers, we can derive 
\begin{align}
\label{lemma:dist_uwb_relu}
    |\hat{L}_{l,r}|\leq \dfrac{1}{\prod_{j=l+1}^dM_w(j)}d_{\cal D}(p_X,p_{G(Z)})+O_p(1/\sqrt{n_{b}}), 
\end{align}
for all $1\leq l\leq d-1,r=1,\dots,N_{l+1}^D$. 

\paragraph{Case 2: $\kappa_l$ follows Assumption A5.} We apply the mean value theorem to \eqref{eqn:post2}, i.e., $\kappa_{d-1}(M_w(d-1) A_{d-2,r}(X_i))=\kappa_{d-1}(M_w(d-1) A_{d-2,r}(G(Z_i)))+\kappa_{d-1}'(\xi_{d-1})M_w(d-1)(A_{d-2,r}(X_i)-A_{d-2,r}(G(Z_i)))$ for some $\xi_{d-1}$ between $M_w(d-1)A_{d-2,r}(X_i)$ and $M_w(d-1)A_{d-2,r}(G(Z_i))$. By (A5), there exist a constant $C_{\kappa'}(d-1)$ such that $|\kappa_{d-1}'(\xi_{d-1})|\geq C_{\kappa'}(d-1) >0$, and we observe
\begin{align*}
    &\left\lvert M_w(d)\dfrac{1}{n_b}\sum_{i=1}^{n_b} \left\{\kappa_{d-1}(M_w(d-1) A_{d-2,r}(X_i))-\kappa_{d-1}(M_w(d-1) A_{d-2,r}(G(Z_i)))\right\}\right\rvert, \\ 
    &=\left\lvert M_w(d)\dfrac{1}{n_b}\sum_{i=1}^{n_b} \left\{\kappa_{d-1}'(\xi_{d-1})(M_w(d-1) A_{d-2,r}(X_i)-M_w(d-1)A_{d-2,r}(G(Z_i)))\right\}\right\rvert, \\ 
    &\geq\left\lvert M_w(d)M_w(d-1)\kappa_{d-1}'(\xi_{d-1})\dfrac{1}{n_b}\sum_{i=1}^{n_b} \left\{A_{d-2,r}(X_i)- A_{d-2,r}(G(Z_i))\right\}\right\rvert, \\ 
    &\geq \left\lvert M_w(d)C_{\kappa'}(d-1)M_w(d-1)\hat{L}_{d-2,r}\right\rvert.  
\end{align*}
In the same way, the $r$th element in $\hat{L}_{d-3}$ has 
\begin{align*}
    |\hat{L}_b(\hat{D}^*_t,\Gt)|\geq |M_w(d)M_w(d-1)M_w(d-2)\kappa'_{d-1}(\xi_{d-1})\kappa'_{d-2}(\xi_{d-2})\hat{L}_{d-3,r}|.
\end{align*}
Hence, we conclude, for $1\leq l\leq d-1$ and $r=1,\dots N^D_{l+1}$,  
\begin{align*}
    |\hat{L}_{l,r}|\leq \dfrac{1}{\prod_{j=l+1}^d M_w(j)C_{\kappa'}(j)}d_{\cal D}(p_X,p_{G(Z)})+O_p(1/\sqrt{n_b}),
\end{align*}
where $C_{\kappa'}(d)=1$.
\end{proof}

 Note that for Sigmoid activation, the above result emphasizes that some tricks for normalizing intermediate layers could be useful to stabilize the size of the gradients of weight matrices (i.e., by inducing large positive $C_{\kappa'}$'s). To support that $C_{\kappa'}(j)$ is bounded below based on (A1-3) for Sigmoid activation, we first show the boundness of post-activation nodes.
 
\begin{remark}[Boundness of $A_{l,c}(x)$]
\label{rem:boundA}
We first characterize an upper bound of $A_{l,c}(X_i)$ and $A_{l,c}(G(Z_i))$, the $c$th element in the $l$th post-activation layer of $D$. Since $A_1(x)=\kappa_1(W_1x)$, denoting by $W_{l,c}$ the $c$th row vector of $W_l$, 
\begin{align*}
    |A_{1,c}(x)|&=|\kappa_1(W_{1,c}x)|=|\kappa_1(0)+\kappa_1'(\xi_x)(W_{1,c}x)|, \\
    &\leq |\kappa_1(0)| + K_{\kappa}(1)\lVert W_{1,c} \rVert \lVert x \rVert. 
\end{align*}
Obviously, ReLU can derive the same result since $A_{1,c}(x)=\max\{W_{1,c}x,0\}$. In particular, if $\kappa_1(0)=0$, then 
\begin{align*}
    \lVert A_{1}(x)\rVert\leq K_{\kappa}(1)\lVert x \rVert \sqrt{\sum_{c=1}^{N_2^D}\lVert W_{1,c} \rVert^2 } = K_{\kappa}(1)\lVert W_{1} \rVert_F \lVert x \rVert.
\end{align*}
Likewise, for the $l$th layer, we have 
\begin{align*}
    \lvert A_{l,c}(x)\rvert &=
    \lvert \kappa_{l}(W_{l,c}A_{l-1}(x))\rvert, \\
    &\leq  |\kappa_{l}(0)|+|\kappa'_l(\xi_x)||(W_{l,c}A_{l-1}(x))\rvert|,\\
    &\leq |\kappa_{l}(0)|+K_{\kappa}(l)\lVert W_{l,c} \rVert \lVert A_{l-1}(x) \rVert. 
\end{align*}
Therefore, for ReLU as well, we see 
\begin{align*}
    |A_{l,c}(x)|\leq K_{\kappa}(l)\lVert W_{l,c} \rVert \prod_{j=1}^{l-1}K_{\kappa}(j)\lVert W_{j} \rVert_F\lVert x \rVert, 
\end{align*}
and set 
\begin{align}
\label{eqn:Asize}
C_{w,\kappa}(l,c)=K_{\kappa}(l)\lVert W_{l,c} \rVert \prod_{j=1}^{l-1}K_{\kappa}(j)\lVert W_{j} \rVert_F,    
\end{align}
for $2\leq l \leq d-1$, and set $C_{w,\kappa}(1,c)=\lVert W_{1,c} \rVert K_{\kappa}(1)$.
\end{remark}


\begin{remark}[Characterization of $C_{\kappa'}(l)$]\label{rem:kappa} Now we can bound the impact of $\kappa_l'(\xi_l)$ for differentiable activation functions where $\xi_l$ is between $M_w(l)A_{l-1,r}(X)$ and $M_w(l)A_{l-1,r}(G(Z))$. For simplicity, we assume $\kappa(0)=0$, which as shown in the previous paragraph derives  
\begin{align*}
    |M_w(l)A_{l-1,r}(X_i)|&\leq \prod_{j=1}^l \lVert W_{j} \rVert_F \prod_{s=1}^{l-1}K_{\kappa}(s) \lVert X_i \rVert \leq \prod_{j=1}^l M_w(j) \prod_{s=1}^{l-1}K_{\kappa}(s) B_X,
\end{align*}
and 
\begin{align*}
    |M_w(l)A_{l-1,r}(G(Z_i))|&\leq \prod_{j=1}^l \lVert W_{j} \rVert_F \prod_{s=1}^{l-1}K_{\kappa}(s) \lVert G(Z_i) \rVert \leq \prod_{j=1}^l M_w(j) \prod_{s=1}^{l-1}K_{\kappa}(s) \prod_{k=1}^g M_v(k)B_Z. 
\end{align*}
Thus we can define 
 \begin{align}
 \label{eqn:C_kappa_prime}
     C_{\kappa'}(l)=|\inf_{x\in {\cal X}_{l}} \kappa_l'(x)|, \quad l=1,\dots,d-1,
 \end{align}
 where 
 \begin{align*}
     {\cal X}_{l}= \prod_{j=1}^l M_w(j) \prod_{s=1}^{l-1}K_{\kappa}(s)\times \left[-\max\left\{B_X,\prod_{j=1}^g M_v(j)B_Z\right\}, \max \left\{B_X,\prod_{j=1}^g M_v(j)B_Z\right\}\right].\,
 \end{align*}
 and $C_{\kappa'}(l)$ is then strictly positive.
 \end{remark}

\begin{lemma}
\label{lem:lip_prod}
Let $\{f_i\}_{i=1}^n$ be a finite collection of functions where each $f_i: {\cal X} \to \mathbb{R}$ for some domain ${\cal X}\subseteq \mathbb{R}$. Suppose that for each $i \in \{1, \dots, n\}$: 1) the function $f_i$ is $L_i$-Lipschitz and 2) bounded by $|f_i(x)| \le M_i$ for all $x \in {\cal X}$. Then the product function $g(x) = \prod_{i=1}^{n} f_i(x)$ is also bounded and Lipschitz continuous on ${\cal X}$. Specifically, $g(x)$ is bounded by $M_g = \prod_{i=1}^{n} M_i$ and has a Lipschitz constant $L_g$ satisfying the inequality:
$$L_g \le \sum_{i=1}^{n} \left( L_i \prod_{j=1, j \ne i}^{n} M_j \right)$$
\end{lemma}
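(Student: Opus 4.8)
The plan is to dispatch the boundedness claim in one line and then obtain the Lipschitz bound through a telescoping (hybrid-swapping) decomposition of the product difference. Boundedness is immediate: since $|f_i(x)|\le M_i$ for every $i$ and every $x\in{\cal X}$, we get $|g(x)|=\prod_{i=1}^n|f_i(x)|\le\prod_{i=1}^n M_i=M_g$.

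For the Lipschitz estimate, fix $x,y\in{\cal X}$ and expand $g(x)-g(y)$ as a telescoping sum in which the factors are switched from evaluation at $y$ to evaluation at $x$ one index at a time:
\[
g(x)-g(y)=\sum_{k=1}^n\Bigl(\prod_{i=1}^{k-1}f_i(y)\Bigr)\bigl(f_k(x)-f_k(y)\bigr)\Bigl(\prod_{i=k+1}^{n}f_i(x)\Bigr),
\]
with the convention that an empty product equals $1$. This identity follows because the partial products $\prod_{i\le k}f_i(x)\prod_{i>k}f_i(y)$ for $k=0,\dots,n$ telescope; I would verify it on the cases $n=2,3$ and then by a one-line induction in general. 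Taking absolute values, applying the triangle inequality termwise, bounding the surviving product factors by the $M_i$ and the middle factor by $|f_k(x)-f_k(y)|\le L_k|x-y|$ yields
\[
|g(x)-g(y)|\le|x-y|\sum_{k=1}^n L_k\prod_{j\ne k}M_j,
\]
which is exactly the asserted bound with $L_g=\sum_{k=1}^n L_k\prod_{j\ne k}M_j$.

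An equivalent route is induction on $n$: the case $n=1$ is the hypothesis, and for the step write $g=g_{n-1}f_n$ with $g_{n-1}=\prod_{i=1}^{n-1}f_i$, use $|g_{n-1}(x)f_n(x)-g_{n-1}(y)f_n(y)|\le|g_{n-1}(x)|\,|f_n(x)-f_n(y)|+|f_n(y)|\,|g_{n-1}(x)-g_{n-1}(y)|$, and substitute the inductive bounds on $|g_{n-1}|$ and its Lipschitz constant. The only point requiring care --- hardly an obstacle --- is the bookkeeping of empty products at the endpoints $k=1$ and $k=n$ and getting the order of the hybrid identity right (already-swapped indices evaluated at $y$ on the left, not-yet-swapped indices at $x$ on the right); a single slip there is the most likely source of error, so I would double-check the identity on small $n$ before writing the general argument.
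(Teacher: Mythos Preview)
Your proposal is correct and matches the paper's proof essentially line for line: the paper uses the identical telescoping decomposition $g(x)-g(y)=\sum_{i=1}^n\bigl(\prod_{j<i}f_j(y)\bigr)(f_i(x)-f_i(y))\bigl(\prod_{j>i}f_j(x)\bigr)$, verifies it on the case $n=3$, and then applies the triangle inequality together with the bounds $M_j$ and Lipschitz constants $L_i$ exactly as you do. Your parenthetical description of the hybrid products has $x$ and $y$ swapped relative to your displayed identity, but the displayed formula itself is correct and coincides with the paper's.
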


\begin{proof}
First, it is trivial to see that
$|g(x)| = \prod_{i=1}^{n} |f_i(x)|\le \prod_{i=1}^{n} M_i = M_g$ for any $x$, so
$|g(x)| \le \prod_{i=1}^{n} M_i = M_g$ since each $|f_i(x)| \le M_i$. Next, we establish the Lipschitz continuity of $g(x)$. For any $x, y \in {\cal X}$, we can characterize the absolute difference $|g(x) - g(y)|$ as follows, 
\begin{align*}
g(x) - g(y) = \prod_{i=1}^{n} f_i(x) - \prod_{i=1}^{n} f_i(y) = \sum_{i=1}^{n} \left( \left(\prod_{j=1}^{i-1} f_j(y)\right) \left(\prod_{j=i+1}^{n} f_j(x)\right)(f_i(x) - f_i(y))  \right).    
\end{align*}
For instance, in the case of $n=3$, denoting by $f_i(x)=a_i$ and $f_i(y)=b_i$, since $a_2a_3-b_2b_3=a_2a_3-b_2b_3\pm a_3b_2=a_3(a_2-b_2)+b_2(a_3-b_3)$, we see 
\begin{align*}
    a_1a_2a_3-b_1b_2b_3=a_1a_2a_3-b_1b_2b_3\pm b_1a_2a_3 &= a_2a_3(a_1-b_1)+b_1(a_2a_3-b_2b_3), \\
    &=a_2a_3(a_1-b_1) + b_1b_2(a_3-b_3)+a_3b_1(a_2-b_2).
\end{align*}
Therefore, its size is upper bounded by
\begin{align*}
    |g(x) - g(y)| &\le \sum_{i=1}^{n} \left| \left(\prod_{j=1}^{i-1} f_j(y)\right) \left(\prod_{j=i+1}^{n} f_j(x)\right)(f_i(x) - f_i(y))  \right|, \\
     &\le \sum_{i=1}^{n} \left( \left(\prod_{j=1}^{i-1} |f_j(y)|\right) \left(\prod_{j=i+1}^{n} |f_j(x)|\right)|f_i(x) - f_i(y)|  \right). 
\end{align*}
Since each function $f_i$ is bounded by $M_i$ and is $L_i$-Lipschitz, 
\begin{align*}
    |g(x) - g(y)| \le \sum_{i=1}^{n} \left( \left(\prod_{j=1}^{i-1} M_j\right) \left(\prod_{j=i+1}^{n} M_j\right)L_i |x-y|  \right).
\end{align*}
Factoring out the common term $|x-y|$ from the sum derives
\begin{align*}
    |g(x) - g(y)| \le \left( \sum_{i=1}^{n} L_i \prod_{j=1, j \ne i}^{n} M_j \right) |x-y|.
\end{align*}
This shows that $g(x)$ is Lipschitz continuous with the constant $L_g \le \sum_{i=1}^{n} \left( L_i \prod_{j=1, j \ne i}^{n} M_j \right)$.
\end{proof}

\subsubsection{Proof of Theorem~\ref{prop:grad_upbd1}}
\label{supp:proof_thm1}
Now we derive Theorem~\ref{prop:grad_upbd1} based on the previous lemmas. Recall that ${\cal D}= \{D(x) = w^{\top}_d \kappa_{d-1}(W_{d-1}\kappa_{d-2}(\cdots W_1 x)) :  \bw=(W_1,\dots, W_{d-1}, w_d)\in \bW\}$ where $w_d\in \mathbb{R}^{N^D_d\times 1}$ and $W_l \in \mathbb{R}^{N^D_{l+1}\times N^D_l}$ for $l=1,\dots,d-1$. Note $W_{l}$ consists of the $N_{l+1}^D$ number of row vectors, i.e., $W_{l,r} \in \mathbb{R}^{1\times N^D_l}$ for $r=1,\dots,N_{l+1}^D$. Let $A_{l}(x)$ and $B_l(x)$ be the $l$th post-/pre-activation layer, respectively, i.e., $A_l(x)=\kappa_l(B_l(x))$ and $B_l(x)=W_lA_{l-1}(x)$. Under (A1-3), we derive the results where $\kappa(x)$ is differentiable and satisfies (A5) (e.g., Sigmoid, Tanh, ELU activation functions) or $\kappa'(x)$ is bounded (e.g. ReLU). 

Recall that $\delta^{d-1}_{x}=1\times w_d \odot \kappa_{d-1}'(B_{d-1}(x))$ and that $\delta^l_{x}=(W_{l+1}^{\top}\delta^{l+1}_{x})\odot \kappa_{l}'(B_l(x))$ where $B_l(x)\in {\mathbb R}^{N_{l+1}^D}$ (Section~\ref{supp:backprop}). To simplify the discussion, we define a diagonal matrix $T_l(x)=\diag(\kappa_{l}'(B_{l,1}(x)),\dots ,\kappa_l'(B_{l,N_{l+1}^D}(x)))\in \mathbb{R}^{{N_{l+1}^D}\times {N_{l+1}^D}}$. Then it appears that 
\begin{align*}
    \delta^l_{x}=(W_{l+1}^{\top}\delta^{l+1}_{x})\odot \kappa_{l}'(B_l(x))=T_l(x)\cdot W^{\top}_{l+1}\cdot  \delta_{x}^{l+1},
\end{align*}
which means that $\delta^l_{x,r} = T_{l,r,r}(x)\times (W_{l+1}^{\top}\delta_x^{l+1})_r$ where $({\bf x})_r$ is the $r$th component of the vector ${\bf x}$. Through the recursive structure of $\delta^l_x$, we observe 
\begin{align*}
    \delta^{d-1}_x&=T_{d-1}(x)\cdot w_d, \\
    \delta^{d-2}_x&=T_{d-2}(x)\cdot W_{d-1}^{\top} \cdot T_{d-1}(x)\cdot w_d,\\ 
    \delta^{d-3}_x&=T_{d-3}(x)\cdot W_{d-2}^{\top} \cdot T_{d-2}(x)\cdot W_{d-1}^{\top}\cdot T_{d-1}(x)\cdot w_d,\\ 
    &\vdots
\end{align*}
The $r$th component presents $\delta^{l}_{x,r}=T_{l,r,r}(x)\times W_{l+1,r}^{\top}\cdot v$ where $W_{l+1,r}^{\top}$ is the $r$th row vector of $W^{\top}_{l+1}$, and $v$ is the output of the remaining products of the matrices and vector. In the next paragraphs, we will characterize the size of the gradient of $w_d$, $W_{d-1}$, and $W_{d-2}$, which is then generalized to the $l$th layer. 

\paragraph{Proof for $w_d$} The gradient of $w_{d,r}$ is 
\begin{align*}
    \dfrac{\pr \hat{L}_b}{\pr w_{d,r}} = \dfrac{1}{n_b}\sum_{i=1}^{n_b} A_{d-1,r}(X_i)-A_{d-1,r}(G(Z_i)),
\end{align*}
so by Lemma~\ref{lemma:dist_uwb}, 
\begin{align*}
    \left\lvert \dfrac{\pr \hat{L}_b}{\pr w_{d,r}}  \right\rvert \leq \dfrac{1}{M_w(d)}d_{\cal D}(p_X,p_{G(Z)})+O_p(1/\sqrt{n_b}).
\end{align*}

\paragraph{Proof for $W_{d-1}$} Let's recall the gradient of the $(r,c)$th parameter in $W_{d-1}$ is 
\begin{align*}
    \dfrac{\pr \hat{L}_b}{\pr W_{d-1,r,c}}&=\dfrac{1}{n_b}\sum_{i=1}^{n_b}\delta^{d-1}_{X_i,r}\cdot A_{d-2,c}(X_i)-\delta^{d-1}_{Z_i,r}\cdot A_{d-2,c}(G(Z_i)),\\
    &=\dfrac{w_{d,r}}{n_b}\sum_{i=1}^{n_b} \kappa'_{d-1,r}(X_i)\times  A_{d-2,c}(X_i)-\kappa'_{d-1,r}(G(Z_i))\times  A_{d-2,c}(G(Z_i)).
\end{align*}
where $\kappa'_{d-1,r}(X_i)=\kappa'_{d-1}(B_{d-1,r}(X_i))$ and $\kappa'_{d-1}(B_{d-1,r}(G(Z_i))=\kappa'_{d-1,r}(G(Z_i))$.
Before characterizing the size of this gradient, we first make the following remarks. 

\begin{remark}[Boundness and Lipschitzness of $A_{l,c}(x)$]
Note $A_{l,c}(x)=\kappa_l(W_lA_{l-1}(x))$ is the $c$th post-activation node in the $l$th hidden layer. Under (A1-3), the function $A_{l,c}(x)$ is bounded as discussed in Remark~\ref{rem:boundA} (when $\kappa(0)=0$), i.e., 
\begin{align*}
     |A_{l,c}(x)|&\leq K_{\kappa}(l)\lVert W_{l,c} \rVert \prod_{j=1}^{l-1}K_{\kappa}(j)\lVert W_{j} \rVert_F\lVert x \rVert, \\
     &\leq K_{\kappa}(l)\lVert W_{l,c} \rVert C_{A_{l-1}}\times \max\left\{B_X,B_{G(Z)}\right\},
\end{align*}
where $B_{G(Z)}=\prod_{j=1}^g M_v(j) K_{\psi}(j) B_Z$ and set $C_{A_{l-1}}=\prod_{s=1}^{l-1}K_{\kappa}(s)\lVert W_{s}  \rVert_F$. Note $B_X$ and $B_{G(Z)}$ stand for the size of the domain ${\cal X}$ and $\{G(z):z\in {\cal Z}\}$. We also remark that some activation functions, such as Sigmoid or Tanh, are trivially bounded. For Lipschitzness, the Cauchy-Schwarz inequality derives 
\begin{align*}
    |A_{l,c}(x)-A_{l,c}(y)|&=|\kappa_l(W_{l,c}A_{l-1}(x))-\kappa_l(W_{l,c}A_{l-1}(y))|, \\
    &\leq K_{\kappa}(l) \lVert W_{l,c}\rVert \lVert A_{l-1}(x)-A_{l-1}(y) \rVert,\\
    &\leq K_{\kappa}(l)\lVert W_{l,c} \rVert \prod_{s=1}^{l-1}K_{\kappa}(s)\lVert W_{s}  \rVert_F \times \lVert x - y\rVert, \\
    &= K_{\kappa}(l)\lVert W_{l,c} \rVert \times C_{A_{l-1}}\times \lVert x-y\rVert. 
\end{align*}
We define constants $M_{A_{l,c}}$ and $L_{A_{l,c}}$ such that 
\begin{align*}
 &|A_{l,c}(x)|\leq M_{A_{l,c}}, \\ 
 &|A_{l,c}(x)-A_{l,c}(y)|\leq L_{A_{l,c}} \lVert x-y\rVert, 
\end{align*}
for all $x$ and $y$.    
\end{remark}

\begin{remark}[Boundness and Lipschitzness of $\kappa'_{l}(B_{l,r}(x))$]
\label{rem:bd_lip_k}
Note $B_{l,r}(x)=W_{l,r}A_{l-1}(x)$ where $W_{l,r}$ is the $r$th row of $W_l$. If $\kappa_{l}'(x)$ is $K_{\kappa'}(l)$-Lipschitz, it follows that 
\begin{align*}
    |\kappa'_{l}(B_{l,r}(x))-\kappa'_{l}(B_{l,r}(y))|&=|\kappa'_{l}(W_{l,r}A_{l-1}(x))-\kappa'_{l}(W_{l,r}A_{l-1}(y)|\\
    &\leq K_{\kappa'}(l)|W_{l,r}A_{l-1}(x)-W_{l,r}A_{l-1}(y))|\\ 
    &\leq  K_{\kappa'}(l)\lVert W_{l,r} \rVert \lVert A_{l-1}(x)-A_{l-1}(y)\rVert \\
    &\leq K_{\kappa'}(l)\lVert W_{l,r} \rVert \times C_{A_{l-1}}\times \lVert x -y\rVert.
\end{align*}
For the boundeness, $|\kappa'_{l}(x)|\leq K_{\kappa}(l)$ for all $x$. 
We define constants $M_{\kappa'_{l,r}}$ and $L_{\kappa'_{l,r}}$ such that 
\begin{align*}
    &|\kappa'_{l}(B_{l,r}(x))|\leq M_{\kappa'_{l,r}}, \\ 
 &|\kappa'_{l}(B_{l,r}(x))-\kappa'_{l}(B_{l,r}(y))|\leq L_{\kappa'_{l,r}} \lVert x-y\rVert, 
\end{align*}
for all $x,y$, and also for any $l,r$. 
\end{remark}

Now, the backpropagated gradient can be decomposed as 
\begin{align*}
    \dfrac{1}{w_{d,r}}\times \dfrac{\pr \hat{L}_b}{\pr W_{d-1,r,c}}=&\dfrac{1}{n_b}\sum_{i=1}^{n_b}\kappa_{d-1,r}'(X_i)A_{d-2,c}(X_i)- \kappa_{d-1,r}'(G(Z_i))A_{d-2,c}(G(Z_i)) \\
    =&\underbrace{\dfrac{1}{n_b}\sum_{i=1}^{n_b}\kappa_{d-1,r}'(X_i)A_{d-2,c}(X_i)- \kappa_{d-1,r}'(X_i)A_{d-2,c}(G(Z_i))}_{\text{(I)}}\\
    &+\underbrace{\dfrac{1}{n_b}\sum_{i=1}^{n_b}(\kappa_{d-1,r}'(X_i)-\kappa_{d-1,r}'(G(Z_i)))A_{d-2,c}(G(Z_i))}_{\text{(II)}}.
\end{align*}
For (I), 
\begin{align*}
    {\text{(I)}}=&\dfrac{1}{n_b}\sum_{i=1}^{n_b}\bE[\kappa_{d-1,r}'(X_i)]\left\{A_{d-2,c}(X_i)-A_{d-2,c}(G(Z_i))\right\} \\
    &+ \dfrac{1}{n_b}\sum_{i=1}^{n_b}(\kappa'_{d-1,r}(X_i)-\bE[\kappa'_{d-1,r}(X_i))])\left\{A_{d-2,c}(X_i)-A_{d-2,c}(G(Z_i))\right\},
\end{align*}
where the Chebyshev's inequality approximates the second term 
\begin{align*}
&\dfrac{1}{n_b}\sum_{i=1}^{n_b}(\kappa'_{d-1,r}(X_i)-\bE[\kappa'_{d-1,r}(X_i))])\left\{A_{d-2,c}(X_i)\pm\bE[A_{d-2,c}(X_i)]-A_{d-2,c}(G(Z_i))\right\}, \\
    &= \text{Cov}(\kappa'_{d-1,r}(X_i), A_{d-2,c}(X_i))+ O_p(1/\sqrt{n_b}),
\end{align*}
which conceptually represents the degree of alignment between the $r$th and $c$th nodes. 

In the meantime, (II) can be approximated by  
\begin{align*}
    \text{(II)}=&\bE[(\kappa_{d-1,r}'(X_i)-\kappa_{d-1,r}'(G(Z_i)))A_{d-2,c}(G(Z_i))] + O_p(1/\sqrt{n_b})\\
    =&\bE[\kappa_{d-1,r}'(X_i)]\bE[A_{d-2,c}(G(Z_i))]-\bE[\kappa_{d-1,r}'(G(Z_i))A_{d-2,c}(G(Z_i))]+ O_p(1/\sqrt{n_b})\\ 
    =&\bE[\kappa_{d-1,r}'(X_i)]\bE[A_{d-2,c}(G(Z_i))]-\bE[\kappa_{d-1,r}'(G(Z_i))]\bE[A_{d-2,c}(G(Z_i))] \\ &+\bE[\kappa_{d-1,r}'(G(Z_i))]\bE[A_{d-2,c}(G(Z_i))]-\bE[\kappa_{d-1,r}'(G(Z_i))A_{d-2,c}(G(Z_i))]+ O_p(1/\sqrt{n_b})\\
    =&\bE[A_{d-2,c}(G(Z_i))](\bE[\kappa_{d-1,r}'(X_i)]-\bE[\kappa_{d-1,r}'(G(Z_i))])\\
    &-\text{Cov}(\kappa_{d-1,r}'(G(Z_i)),A_{d-2,c}(G(Z_i)))+ O_p(1/\sqrt{n_b}).
\end{align*}
Likewise, we can decompose the gradient as 
\begin{align*}
    \dfrac{1}{w_{d,r}}\times \dfrac{\pr \hat{L}_b}{\pr W_{d-1,r,c}}=&\dfrac{1}{n_b}\sum_{i=1}^{n_b}\kappa_{d-1,r}'(X_i)A_{d-2,c}(X_i)- \kappa_{d-1,r}'(G(Z_i))A_{d-2,c}(G(Z_i)) \\
    =&\underbrace{\dfrac{1}{n_b}\sum_{i=1}^{n_b}\kappa_{d-1,r}'(G(Z_i))A_{d-2,c}(X_i)- \kappa_{d-1,r}'(G(Z_i))A_{d-2,c}(G(Z_i))}_{\text{(I)}}\\
    &+\underbrace{\dfrac{1}{n_b}\sum_{i=1}^{n_b}(\kappa_{d-1,r}'(X_i)-\kappa_{d-1,r}'(G(Z_i)))A_{d-2,c}(X_i)}_{\text{(II)}}.
\end{align*}
Following the previous approach, we have
\begin{align*}
    {\text{(I)}}=
    &\dfrac{1}{n_b}\sum_{i=1}^{n_b}\bE[\kappa_{d-1,r}'(G(Z_i))]\left\{A_{d-2,c}(X_i)-A_{d-2,c}(G(Z_i))\right\} \\ 
    &-\text{Cov}(\kappa'_{d-1,r}(G(Z_i)),A_{d-2,c}(G(Z_i)))+O_p(1/\sqrt{n_b}),
\end{align*}
and 
\begin{align*}
    \text{(II)}=
    &\text{Cov}(\kappa_{d-1,r}'(X_i),A_{d-2,c}(X_i))\\ 
    &+\bE[A_{d-2,c}(X_i)](\bE[\kappa'_{d-1,r}(X_i)]-\bE[\kappa'_{d-1,r}(G(Z_i))])+O_p(1/\sqrt{n_b}).
\end{align*}
Therefore, the gradient can be expressed by averaging the two previous representations. 
\begin{align*}
    \dfrac{1}{w_{d,r}}\times\dfrac{\pr \hat{L}_b}{\pr W_{d-1,r,c}}=&\dfrac{\bE[\kappa_{d-1,r}'(X_i)]+\bE[\kappa_{d-1,r}'(G(Z_i))]}{2}\dfrac{1}{n_b}\sum_{i=1}^{n_b}\left\{A_{d-2,c}(X_i)-A_{d-2,c}(G(Z_i))\right\}\\ 
    &+\text{Cov}(\kappa'_{d-1,r}(X_i),A_{d-2,c}(X_i))-\text{Cov}(\kappa'_{d-1,r}(G(Z_i)),A_{d-2,c}(G(Z_i))) \\ 
    &+\dfrac{\bE[A_{d-2,c}(X_i)]+\bE[A_{d-2,c}(G(Z_i))]}{2}(\bE[\kappa'_{d-1,r}(X_i)]-\bE[\kappa'_{d-1,r}(G(Z_i))])\\
    &+O_p(1/\sqrt{n_b}).
\end{align*}
By Lemma~\ref{lemma:dist_uwb}, 
\begin{align*}
    \dfrac{1}{n_b}\sum_{i=1}^{n_b}\left\{A_{d-2,c}(X_i)-A_{d-2,c}(G(Z_i))\right\}\leq \dfrac{1}{\prod_{j=d-1}^d M_w(j)C_{\kappa'}(j)}d_{\cal D}(p_X,p_{\Gt(Z)})+O_p(1/\sqrt{n_b}),
\end{align*}
where $C_{\kappa'}(j)$ disappears if ReLU activation is used.
The third term $\lvert \bE[\kappa'_{d-1,r}(X_i)]-\bE[\kappa'_{d-1,r}(G(Z_i))]\rvert$ can also be represented by other probability metrics: 
\begin{itemize}
    \item If $\kappa'_{d-1}(x)$ is Lipschitz, then there exists a Lipschitz constant $L_{\kappa'_{d-1,r}}>0$ such that $\kappa'_{d-1,r}(x)$ is $L_{\kappa'_{d-1,r}}$-Lipschitz by Remark~\ref{rem:bd_lip_k}. Therefore, 
    \begin{align*}
     \lvert \bE[\kappa'_{d-1,r}(X_i)]-\bE[\kappa'_{d-1,r}(G(Z_i))]\rvert&\leq  \sup_{g\in \left\{L_{\kappa'_{d-1,r}}\text{-Lipschitz}\right\}}\bE[g(X_i)]-\bE[g(G(Z_i))], \\
     &\leq L_{\kappa'_{d-1,r}}W_1(p_X,p_{G(Z)}).
    \end{align*}
    \item If $\kappa'_{d-1}(x)$ is bounded, i.e., $|\kappa'_{d-1}(x)|\leq K_{\kappa}(d-1)$, 
    \begin{align*}
        \lvert \bE[\kappa'_{d-1,r}(X_i)]-\bE[\kappa'_{d-1,r}(G(Z_i))]\rvert &\leq K_{\kappa}(d-1) \int|p_X(x)-p_{G(Z)}(x)|dx,\\ 
        & = 2K_{\kappa}(d-1)d_{\text{TV}}(p_X,p_{G(Z)}). 
    \end{align*}
\end{itemize}
Let's define the technical constants 
\begin{align*}
    C'_{\kappa,1}(d-1)&=\left\lvert \dfrac{\bE[\kappa_{d-1,r}'(X_i)]+\bE[\kappa_{d-1,r}'(G(Z_i))]}{2}\right \rvert, \\
    C_{\kappa,2}(d-1)&=\left\lvert \dfrac{\bE[A_{d-2,c}(X_i)]+\bE[A_{d-2,c}(G(Z_i))]}{2}\right\rvert\times \begin{cases}
       L_{\kappa'_{d-1,r}},~\text{$\kappa'_{d-1}(x)$ is Lipschitz},\\
       2K_{\kappa}(d-1),~\text{$\kappa'_{d-1}(x)$ is bounded},\\
    \end{cases}\\
    C_{\kappa,3}(d-1)&=\lvert \text{Cov}(\kappa'_{d-1,r}(X_i),A_{d-2,c}(X_i))-\text{Cov}(\kappa'_{d-1,r}(G(Z_i)),A_{d-2,c}(G(Z_i))) \rvert.
\end{align*}
Hence, the size of gradient is eventually characterized by 
\begin{itemize}
    \item When $\kappa'$ is Lipschitz, 
    {\footnotesize
    \begin{align*}
    \left\lvert \dfrac{\pr \hat{L}_b(D,G)}{\pr W_{d-1,r,c}}\right\rvert\leq |w_{d,r}|\left(\dfrac{C'_{\kappa,1}(d-1)}{\prod_{j=d-1}^d M_w(j)C_{\kappa'}(j)}d_{\cal D}(p_X,p_{G(Z)})+C_{\kappa,2}(d-1)W_1(p_X,p_{G(Z)})+C_{\kappa,3}(d-1)\right),
    \end{align*}}
    \item When $\kappa'$ is bounded, 
    {\footnotesize
    \begin{align*}
    \left\lvert \dfrac{\pr \hat{L}_b(D,G)}{\pr W_{d-1,r,c}}\right\rvert\leq |w_{d,r}|\left(\dfrac{C'_{\kappa,1}(d-1)}{\prod_{j=d-1}^d M_w(j)C_{\kappa'}(j)}d_{\cal D}(p_X,p_{G(Z)})+C_{\kappa,2}(d-1)d_{\text{TV}}(p_X,p_{G(Z)})+C_{\kappa,3}(d-1)\right),
    \end{align*}}
\end{itemize}
and set $C_{\kappa,1}(d-1)=C'_{\kappa,1}(d-1)/\prod_{j=d-1}^d M_w(j)C_{\kappa'}(j)$. Note ReLU activation does not need $C_{\kappa'}(j)$. 

\paragraph{Proof for $W_{d-2}$} To characterize the gradient in the $(d-2)$th layer, first recall that $\delta^{d-2}_x=T_{d-2}(x)\cdot W_{d-1}^{\top} \cdot T_{d-1}(x)\cdot w_d$ and 
\begin{align*}
\dfrac{\pr \hat{L}_b}{\pr W_{d-2,r,c}}&=\dfrac{1}{n_b}\sum_{i=1}^{n_b}\delta^{d-2}_{X_i.r}\cdot A_{d-3,c}(X_i)-\delta^{d-2}_{Z_i,r}\cdot A_{d-3,c}(G(Z_i)).
\end{align*}
Note $W_{d-2}\in \mathbb{R}^{N^D_{d-1}\times N^D_{d-2}}$  and $W_{d-1}\in \mathbb{R}^{N^D_{d}\times N^D_{d-1}}$. Then, we have 
\begin{align*}
    \dfrac{\pr \hat{L}_b}{\pr W_{d-2,r,c}}&=\dfrac{1}{n_b}\sum_{i=1}^{n_b}\left\{\delta^{d-2}_{X_i,r}\cdot A_{d-3,c}(X_i)-\delta^{d-2}_{Z_i,r}\cdot A_{d-3,c}(G(Z_i))\right\},\\
    &=\dfrac{1}{n_b}\sum_{i=1}^{n_b}(T_{d-2}(X_i)\cdot W_{d-1}^{\top} \cdot T_{d-1}(X_i)\cdot w_d)_r\times   A_{d-3,c}(X_i)\\
    &-\dfrac{1}{n_b}\sum_{i=1}^{n_b}(T_{d-2}(G(Z_i))\cdot W_{d-1}^{\top} \cdot T_{d-1}(G(Z_i))\cdot w_d)_r \times A_{d-3,c}(G(Z_i)),
\end{align*}
where $({\bf x})_r$ is the $r$th component of the vector ${\bf x}$. It follows that 
\begin{align*}
    (T_{d-2}(x)\cdot W_{d-1}^{\top} \cdot T_{d-1}(x)\cdot w_d)_r = T_{d-2,r,r}^{x} \sum_{k=1}^{N_d^D} W^{\top}_{d-1,r,k}T_{d-1,k,k}^{x}w_{d,k},
\end{align*}
where $T^{x}_{d-2,r,r}$ is the $(r,r)$th component of the diagonal matrix $T_{d-2}(x)$. Therefore, 
{\footnotesize
\begin{align*}
&\dfrac{\pr \hat{L}_b}{\pr W_{d-2,r,c}}=\\
    &\dfrac{1}{n_b}\sum_{i=1}^{n_b}T_{d-2,r,r}^{X_i} \sum_{k=1}^{N_d^D} W^{\top}_{d-1,r,k}T_{d-1,k,k}^{X_i}w_{d,k} A_{d-3,c}(X_i) - 
    \dfrac{1}{n_b}\sum_{i=1}^{n_b}T_{d-2,r,r}^{Z_i} \sum_{k=1}^{N^D_d} W^{\top}_{d-1,r,k}T_{d-1,k,k}^{Z_i}w_{d,k} A_{d-3,c}(G(Z_i)),
\end{align*}}
and due to the linearity of the summation, 
{\footnotesize
\begin{align*}
    \dfrac{\pr \hat{L}_b}{\pr W_{d-2,r,c}}&=\sum_{k=1}^{N_d^D}W_{d-1,k,r}w_{d,k}\underbrace{\left(\dfrac{1}{n_b}\sum_{i=1}^{n_b}\left\{T_{d-2,r,r}^{X_i}T_{d-1,k,k}^{X_i}A_{d-3,c}(X_i)-T_{d-2,r,r}^{Z_i}T_{d-1,k,k}^{Z_i}A_{d-3,c}(G(Z_i))\right\}\right)}_{\text{(I)}},
\end{align*}}
where $W_{d-1,r,k}^{\top}=W_{d-1,k,r}$. Let $p_{d-2,r,k}(x)=T_{d-2,r,r}^{x}T_{d-1,k,k}^{x}$. By Lemma~\ref{lem:lip_prod}, $p_{d-2,r,k}(x)$ is $L_{p_{d-2,r,k}}$-Lipschitz and bounded by $K_{\kappa}(d-2)K_{\kappa}(d-1)$. By following the proof procedure for the case of $W_{d-1}$, we see 
\begin{align*}
    \text{(I)}=&\dfrac{\bE[p_{d-2,r,k}(X_i)]+\bE[p_{d-2,r,k}(G(Z_i))]}{2}\dfrac{1}{n_b}\sum_{i=1}^{n_b}\left\{A_{d-3,c}(X_i)-A_{d-3,c}(G(Z_i))\right\}\\ 
    &+\text{Cov}(p_{d-2,r,k}(X_i),A_{d-3,c}(X_i))-\text{Cov}(p_{d-2,r,k}(G(Z_i)),A_{d-3,c}(G(Z_i))) \\ 
    &+\dfrac{\bE[A_{d-3,c}(X_i)]+\bE[A_{d-3,c}(G(Z_i))]}{2}(\bE[p_{d-2,r,k}(X_i)]-\bE[p_{d-2,r,k}(G(Z_i))])\\
    &+O_p(1/\sqrt{n_b}).
\end{align*}
Therefore, if $\kappa'$ is Lipschitz, 
\begin{align*}
    |\text{(I)}|\leq &C_{\kappa,1}'(d-2)\dfrac{1}{\prod_{j=d-2}^dM_w(j)C_{\kappa'}(j)}d_{\cal D}(p_X,p_{G(Z)})+C_{\kappa,2}(d-2)W_1(p_X,p_{G(Z)})\\
    &+C_{\kappa,3}(d-2)+O_p(1/\sqrt{n_b}),
\end{align*}
and, if $\kappa'$ is ReLU, 
\begin{align*}
    |\text{(I)}|\leq &C_{\kappa,1}'(d-2)\dfrac{1}{\prod_{j=d-2}^dM_w(j)}d_{\cal D}(p_X,p_{G(Z)})+C_{\kappa,2}(d-2)d_{\text{TV}}(p_X,p_{G(Z)})\\
    &+C_{\kappa,3}(d-2)+O_p(1/\sqrt{n_b}),
\end{align*}
where 
\begin{align*}
    C_{\kappa,1}'(d-2)&=\max_{k}\left\lvert \dfrac{\bE[p_{d-2,r,k}(X_i)]+\bE[p_{d-2,r,k}(G(Z_i))]}{2}\right\rvert, \\
    C_{\kappa,2}(d-2)&= \left\lvert \dfrac{\bE[A_{d-3,c}(X_i)]+\bE[A_{d-3,c}(G(Z_i))]}{2} \right\rvert\times \begin{cases}
        \max_k L_{p_{d-2,r,k}},~\text{if $\kappa'$ is Lipschitz}, \\ 
        \prod_{j=d-2}^{d-1}K_{\kappa}(j),~\text{if $\kappa'$ is bounded},\\
    \end{cases}\\
    C_{\kappa,3}(d-2)&=\max_k|\text{Cov}(p_{d-2,r,k}(X_i),A_{d-3,c}(X_i))-\text{Cov}(p_{d-2,r,k}(G(Z_i)),A_{d-3,c}(G(Z_i)))|.
\end{align*}
Thus, the size of the gradient is upper bounded 
\begin{align*}
    \left \lvert \dfrac{\pr \hat{L}_b}{\pr W_{d-2,r,c}}\right\rvert\leq \sum_{k=1}^{N_d^D}|W_{d-1,k,r}w_{d,k}||(\text{I})|,
\end{align*}
and 
\begin{align*}
    \sum_{k=1}^{N_d^D}|W_{d-1,k,r}w_{d,k}|&\leq \left(\sum_{k=1}^{N_d^D}|W_{d-1,k,r}|^2\right)^{1/2} \left(\sum_{k=1}^{N_d^D}|w_{d,k}|^2\right)^{1/2}, \\
    &= \lVert W_{d-1,\cdot,r} \rVert \times \lVert w_{d} \rVert. 
\end{align*}
In consequence, Let $C_{\bw}(d-2)=\lVert W_{d-1,\cdot,r} \rVert \times \lVert w_{d} \rVert$. 
\begin{itemize}
    \item If $\kappa'$ is Lipschitz, 
\begin{align*}
    \dfrac{1}{C_{\bw}(d-2)}\left\lvert \dfrac{\pr \hat{L}_b(D,G)}{\pr W_{d-2,r,c}}\right\rvert \leq &\dfrac{C_{\kappa,1}'(d-2)}{\prod_{j=d-2}^dM_w(j)C_{\kappa'}(j)}d_{\cal D}(p_X,p_{G(Z)})+C_{\kappa,2}(d-2)W_1(p_X,p_{G(Z)})\\
    &+C_{\kappa,3}(d-2)+O_p(1/\sqrt{n_b}),
\end{align*}
and set $C_{\kappa,1}(d-2)=\frac{C_{\kappa,1}'(d-2)}{\prod_{j=d-2}^dM_w(j)C_{\kappa'}(j)}$.
\item If $\kappa'$ is ReLU, 
\begin{align*}
    \dfrac{1}{C_{\bw}(d-2)}\left\lvert \dfrac{\pr \hat{L}_b(D,G)}{\pr W_{d-2,r,c}}\right\rvert \leq &\dfrac{C_{\kappa,1}'(d-2)}{\prod_{j=d-2}^dM_w(j)}d_{\cal D}(p_X,p_{G(Z)})+C_{\kappa,2}(d-2)d_{\text{TV}}(p_X,p_{G(Z)})\\
    &+C_{\kappa,3}(d-2)+O_p(1/\sqrt{n_b}), 
\end{align*}
and set $C_{\kappa,1}(d-2)=\frac{C_{\kappa,1}'(d-2)}{\prod_{j=d-2}^dM_w(j)}$.
\end{itemize}

\paragraph{Proof for $W_{l}$} Similarly, we can apply the proof technique to any hidden layer. Recall that $\delta^l_x$ is 
\begin{align*}
    \delta^{l}_x&=T_{l}(x)\cdot W_{l+1}^{\top} \cdot T^{}_{l+1}(x)\cdot W_{l+2}^{\top}\cdot T_{l+2}(x)\cdots T_{d-1}(x)\cdot w_d,
\end{align*}
where either $X_i$ or $G(Z_i)$ is put for $x$. 
For $l\leq d-2$, we observe 
\begin{align*}
    \delta^{l}_{x,r} A_{l-1,c}(x)=T_{l,r,r}^{x}\sum_{k_{l+2}=1}^{N_{l+2}^D}\cdots \sum_{k_d=1}^{N^D_d} \left[\left(\prod_{j=l+1}^{d-1}W_{j,k_{j},k_{j+1}}^{\top}T^{x}_{j,k_{j+1},k_{j+1}}\right)w_{d,k_d}\right]A_{l-1,c}(x),
\end{align*}
where $k_{l+1}=r$ is fixed and 
\begin{align*}
    &\dfrac{\pr \hat{L}_b}{\pr W_{l,r,c}} \notag \\
    &=\sum_{k_{l+2}=1}^{N_{l+2}^D}\cdots \sum_{k_d=1}^{N_{d}^D}\prod_{j=l+1}^{d-1}W_{j,k_{j},k_{j+1}}^{\top} w_{d,k_d}\notag \\
    &\times \underbrace{\left(\dfrac{1}{n_b}\sum_{i=1}^{n_b}\left\{T_{l,r,r}^{X_i} \prod_{j=l+1}^{d-1}T^{X_i}_{j,k_{j+1},k_{j+1}}A_{l-1,c}(X_i)-T_{l,r,r}^{Z_i} \prod_{j=l+1}^{d-1}T^{Z_i}_{j,k_{j+1},k_{j+1}}A_{l-1,c}(G(Z_i))\right\}\right)}_{\text{(I)}}.
\end{align*}
Note we can say $p_{l,r,k_{l\sim d}}(x)=T_{l,r,r}^{x}\prod_{j=l+1}^{d-1}T^{x}_{j,k_{j+1},k_{j+1}}$ for some $k_{l\sim d}=(r,k_{l+2},\dots,k_{d})$.

By Lemma~\ref{lem:lip_prod}, there exists a Lipschitz constant $L_{l,r,k_{l\sim d}}$ for $p_{l,r,k_{l\sim d}}(x)$, and it is $\prod_{j=l}^{d-1}K_{\kappa}(j)$ bounded. The size of the gradient is upper bounded by
\begin{align*}
    \left\lvert \dfrac{\pr \hat{L}_b}{\pr W_{l,r,c}} \right\rvert \leq \underbrace{\sum_{k_{l+2}=1}^{N_{l+2}^D}\cdots \sum_{k_d=1}^{N_{d}^D}\prod_{j=l+1}^{d-1}\lvert W_{j,k_{j},k_{j+1}}^{\top}\rvert  \lvert w_{d,k_d}\rvert}_{\text{(II)}} \times |\text{(I)}|.
\end{align*}
Following the previous argument, (I) is characterized as
\begin{itemize}
    \item if $\kappa'$ is Lipschitz, 
\begin{align*}
    |\text{(I)}|\leq &\dfrac{C_{\kappa,1}'(l)}{\prod_{j=l}^dM_w(j)C_{\kappa'}(j)}d_{\cal D}(p_X,p_{G(Z)})+C_{\kappa,2}(l)W_1(p_X,p_{G(Z)})\\
    &+C_{\kappa,3}(l)+O_p(1/\sqrt{n_b}),
\end{align*}
\item if $\kappa'$ is ReLU, 
\begin{align*}
    |\text{(I)}|\leq &\dfrac{C_{\kappa,1}'(l)}{\prod_{j=l}^dM_w(j)}d_{\cal D}(p_X,p_{G(Z)})+C_{\kappa,2}(l)d_{\text{TV}}(p_X,p_{G(Z)})\\
    &+C_{\kappa,3}(l)+O_p(1/\sqrt{n_b}),
\end{align*}
\end{itemize}
where 
\begin{align*}
    C_{\kappa,1}'(l)&=\max_{k_{l\sim d}}\left\lvert \dfrac{\bE[p_{l,r,k_{l\sim d}}(X_i)]+\bE[p_{l,r,k_{l\sim d}}(G(Z_i))]}{2}\right\rvert, \\
    C_{\kappa,2}(l)&= \left\lvert \dfrac{\bE[A_{l-1,c}(X_i)]+\bE[A_{l-1,c}(G(Z_i))]}{2} \right\rvert\times \begin{cases}
        \max_{k_{l\sim d}} L_{l,r,k_{l\sim d}},~\text{if $\kappa'$ is Lipschitz}, \\ 
        \prod_{j=l}^{d-1}K_{\kappa}(j),~\text{if $\kappa'$ is bounded},\\
    \end{cases}\\
    C_{\kappa,3}(l)&=\max_{k_{l\sim d}}|\text{Cov}(p_{l,r,k_{l\sim d}}(X_i),A_{l-1,c}(X_i))-\text{Cov}(p_{l,r,k_{l\sim d}}(G(Z_i)),A_{l-1,c}(G(Z_i)))|.
\end{align*}

Meanwhile, the Cauchy-Schwarz inequality finds 
\begin{align*}
    \text{(II)}&=\sum_{k_{l+2}=1}^{N_{l+2}^D}\cdots \sum_{k_d=1}^{N_{d}^D}\prod_{j=l+1}^{d-1}\lvert W_{j,k_{j},k_{j+1}}^{\top}\rvert  \lvert w_{d,k_d}\rvert, \\ 
    &= \sum\cdots \sum_{k_{d-1}=1}^{N_{d-1}^D}\prod_{j=l+1}^{d-2}\lvert W_{j,k_{j},k_{j+1}}^{\top}\rvert 
    \sum_{k_d=1}^{N_d^D}|W^{\top}_{d-1,k_{d-1},k_d}||w_{d,k_d}|, \\ 
    &\leq \sum \cdots  \sum_{k_{d-1}=1}^{N_{d-1}^D} |W^{\top}_{d-2,k_{d-2},k_{d-1}}| \lVert W^{\top}_{d-1,k_{d-1},\cdot} \rVert  \lVert w_d \rVert, \\
    &\leq \sum \cdots \lVert W^{\top}_{d-2,k_{d-2},\cdot} \rVert \lVert W_{d-1} \rVert_F  \lVert w_d \rVert,
\end{align*}
where $W^{\top}_{l,r,\cdot}$ stands for the $r$th row vector in $W^{\top}_l$. By repeating the process, 
\begin{align*}
    \text{(II)}\leq \lVert W_{l+1,r,\cdot}^{\top} \rVert \prod_{j=l+2}^{d-1} \lVert W_{j} \rVert_F \lVert w_d\rVert. 
\end{align*}
In consequence, letting $C_{\bw}(l)=\lVert W_{l+1,\cdot,r} \rVert \prod_{j=l+2}^{d-1} \lVert W_{j} \rVert_F \lVert w_d\rVert$, the bound of $\left\lvert \frac{\pr \hat{L}_b(D,G)}{\pr W_{l,r,c}} \right\rvert$ appears as 
\begin{itemize}
    \item if $\kappa'$ is Lipschitz, 
\begin{align*}
    \dfrac{1}{C_{\bw}(l)}\left\lvert \frac{\pr \hat{L}_b(D,G)}{\pr W_{l,r,c}} \right\rvert\leq &C_{\kappa,1}(l)d_{\cal D}(p_X,p_{G(Z)})+C_{\kappa,2}(l)W_1(p_X,p_{G(Z)})\\&+C_{\kappa,3}(l)+O_p(1/\sqrt{n_b}),\\ 
    C_{\kappa,1}(l)=&\dfrac{C_{\kappa,1}'(l)}{\prod_{j=l}^dM_w(j)C_{\kappa'}(j)},
\end{align*}
\item if $\kappa$ is differentiable and $\kappa'$ is bounded, 
\begin{align*}
    \dfrac{1}{C_{\bw}(l)}\left\lvert \frac{\pr \hat{L}_b(D,G)}{\pr W_{l,r,c}} \right\rvert\leq &C_{\kappa,1}(l)d_{\cal D}(p_X,p_{G(Z)})+C_{\kappa,2}(l)d_{\text{TV}}(p_X,p_{G(Z)})\\&+C_{\kappa,3}(l)+O_p(1/\sqrt{n_b}),\\
    C_{\kappa,1}(l)=&\dfrac{C_{\kappa,1}'(l)}{\prod_{j=l}^dM_w(j)C_{\kappa'}(j)},
\end{align*}
\item if $\kappa'$ is ReLU, 
\begin{align*}
    \dfrac{1}{C_{\bw}(l)}\left\lvert \frac{\pr \hat{L}_b(D,G)}{\pr W_{l,r,c}} \right\rvert\leq &C_{\kappa,1}(l)d_{\cal D}(p_X,p_{G(Z)})+C_{\kappa,2}(l)d_{\text{TV}}(p_X,p_{G(Z)})\\&+C_{\kappa,3}(l)+O_p(1/\sqrt{n_b}),\\
    C_{\kappa,1}(l)=&\dfrac{C_{\kappa,1}'(l)}{\prod_{j=l}^dM_w(j)}.
\end{align*}
\end{itemize}

\subsection{Corollary~\ref{prop:grad_upbd2_2} and Remark~\ref{rem:var_reduction}}

By following the proof scheme of Theorem~\ref{prop:grad_upbd1}, the result in the main text can be obtained. For the condition in Remark~\ref{rem:var_reduction}, the distance $d_{\cal D}$ upper bounds 
\begin{align*}
    d_{\cal D}&(p_{\Qal,\alpha},p_{\Gt(Z,\alpha),\alpha}) 
    \\ 
    &= \sup_{D} \int D(q_a,a) p(q_a|a)p(a)d{q_a}da - \int D(G(z,a),a) p(G(z,a)|a)p(a)d{z}da \\ 
    &\leq r\left(\sup_D \int_{a=1} D(x,1)p(x) dx - \int_{a=1} D(G(z,1),1) p(G(z,1))dz \right)\\ 
    &+(1-r)\left(\sup_D \int_{a \neq 1} D(q_a^*,a^*) p(q_a^*,a^*)dq_a^*da^* - \int_{a \neq 1} D(G(z,a^*),a^*) p(G(z,a^*),a^*)dzda^*\right), \\  
    &= (1-r') d_{\cal D}(p_{X,1},p_{\Gt(Z,1),1})+ r' d_{\cal D}(p_{Q^{\alpha^*},\alpha^*},p_{\Gt(Z,\alpha^*),\alpha^*}), 
\end{align*}
where $r+r'=1$. Hence, the condition in the statement is obtained. It is straightforward to check the conditions for the 1-Wasserstein distance $d_{W_1}$ and the total variation $d_{\text{TV}}$.

\subsection{Proposition~\ref{prop:ae}}
\label{supp:prop:approx}
We first present technical Lemmas required to show the approximation error in the main text. The main proof appears then. 

\begin{lemma}[\citeSupp{yaro:17}]
\label{lem:yaro_square}
The function $f(x)=x^2$ on the segment of $[0,1]$ can be approximated by a feedforward ReLU network $\eta$ having the depth $L$ and width $7$ for $L\geq 2$ such that  
\begin{align}
    \sup_{x\in[0,1]} |f(x)-\eta(x)|\leq 4^{-L}.
\end{align}
\end{lemma}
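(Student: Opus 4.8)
The plan is to reproduce Yarotsky's sawtooth construction; since the statement is quoted from \citeSupp{yaro:17} one may simply cite it, but the argument is short. Write $(x)_+=\max\{x,0\}$ for the ReLU. The building block is the tooth map $g:[0,1]\to[0,1]$, $g(t)=2\min\{t,1-t\}$, realized exactly by a single ReLU layer of width three through $g(t)=2(t)_+-4(t-\tfrac12)_+ +2(t-1)_+$. Setting $g_1:=g$ and $g_m:=g\circ g_{m-1}$, a trivial induction shows $g_m$ is the uniform sawtooth on $[0,1]$ with $2^{m-1}$ teeth, still $[0,1]$-valued, and is computed by composing $m$ copies of the three-unit gadget, i.e.\ by an $m$-layer ReLU network of constant width.

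Next I would establish the telescoping identity for the piecewise-linear interpolant $f_m$ of $t\mapsto t^2$ on the dyadic grid $\{k2^{-m}:0\le k\le 2^m\}$. One has $f_0(x)=x$, and since $t\mapsto t^2$ is convex, passing from $f_{m-1}$ to $f_m$ only inserts kinks at the new midpoints, so $f_{m-1}-f_m$ is a sum of congruent tents; at the midpoint of an interval $[a,b]$ the chord value $(a^2+b^2)/2$ exceeds the parabola value $((a+b)/2)^2$ by $(b-a)^2/4$, which with $b-a=2^{-(m-1)}$ equals $4^{-m}$, hence $f_{m-1}-f_m=4^{-m}g_m$ on $[0,1]$. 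Summing, $x^2=x-\sum_{m\ge1}4^{-m}g_m(x)$, and the same midpoint computation with $b-a=2^{-m}$ gives the (exact) interpolation error
\[
\sup_{x\in[0,1]}\bigl|x^2-f_m(x)\bigr|=4^{-(m+1)}.
\]

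Then I would assemble the network $\eta:=f_{L-1}$, i.e.\ $\eta(x)=x-\sum_{m=1}^{L-1}4^{-m}g_m(x)$, accumulating the partial sums on the fly so that the width stays bounded. The $k$-th hidden layer carries three ReLU units that apply one further copy of $g$ to the current iterate, one channel holding the running sum $\sum_{j\le k}4^{-j}g_j(x)$, and one pass-through channel for $x$; the affine map into the next layer forms the next iterate $g_{k+1}(x)=g(g_k(x))$ from the three units, adds $4^{-(k+1)}g_{k+1}(x)$ to the running sum, and copies $x$, and the output layer returns $x$ minus the final running sum. Because every $g_k(x)\in[0,1]$, the running sum lies in $[0,1]$ and $x\ge0$, so each carried quantity is nonnegative and each affine-then-ReLU step realizes the intended affine update exactly; this uses at most $3+1+1=5\le7$ units per layer and $L-1\ge1$ applications of $g$, hence $L$ layers in total, which is why $L\ge2$ is needed, with uniform error $\sup_{[0,1]}|x^2-f_{L-1}|=4^{-((L-1)+1)}=4^{-L}$.

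The main obstacle is not any single estimate but the bookkeeping in the last step: one has to verify that the finitely many channels close up under the layerwise transition and that every carried quantity stays in the regime where ReLU acts as the identity (so that a composition of affine maps with ReLU genuinely implements the linear recursion $g_{k+1}=g(g_k)$, $S_{k+1}=S_k+4^{-(k+1)}g_{k+1}$), and that the depth and width accounting matches the stated constants — the value $7$ being a deliberately loose bound that absorbs bias terms or a less economical realization of $g$. Everything else reduces to the elementary convexity and midpoint computations above.
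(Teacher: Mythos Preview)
Your proposal is correct and follows essentially the same approach as the paper: both use Yarotsky's sawtooth construction $f_m(x)=x-\sum_{s=1}^m 4^{-s}g_s(x)$ with the three-unit tooth $g$, set $m=L-1$ so that depth equals $L$ and the error is $4^{-(m+1)}=4^{-L}$, and pass the running sum and the input $x$ through the layers alongside the $g$-iterates. The only cosmetic difference is the width bookkeeping: the paper allocates $3+2+2=7$ units per layer by using the two-unit identity $i(x)=\sigma(x)-\sigma(-x)$ for both the pass-through of $x$ and the accumulator, whereas you exploit nonnegativity on $[0,1]$ to get away with $3+1+1=5\le 7$, which you already anticipated.
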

\begin{proof}
    In this work, depth and width are the number of weight matrices and hidden neurons in a standard feedforward neural network. Let's denote by $\sigma(x)=\max\{x,0\}$ the ReLU activation function. For instance, the identity function $i(x)=\sigma(x)-\sigma(-x)$ has depth 2 and width 2. From \citeSupp{yaro:17}, let's define $f_m(x)=x-\sum_{s=1}^mg_s(x)/2^{2s}$ where $g_s(x)$ is the $s$ compositions of the tooth function $g(x)=2\sigma(x)-4\sigma(x-0.5)+2\sigma(x-1)$ where $g(x)$ has depth 2 and width 3. The author found that 
    \begin{align*}
        \sup_{x\in[0,1]} |f(x)-f_m(x)|\leq 2^{-2(m+1)},
    \end{align*}
    and argued that $f_m$ requires essentially $m+1$ depth, i.e., $m$ depth for $g_m$ and 1 depth for the final affine transformation. Note the non-standard ReLU network $f_m$ has 3 width. However, since such $f_m$ is not a standard feedforward neural network, we construct a standard feedforward neural network $\eta$ such that $\eta(x)=f_m(x)$. It is straightforward to see that this $\eta$ consists of $L=m+1$ depth $(L\geq 2)$ and has $7$ width (3 width for $g(x)$, 2 width for $i(x)$, and 2 width to accumulate $g_s(x)$ through $i(g(x))$), where all hidden neurons have the same width. 
\end{proof}

\begin{lemma}[\citeSupp{yaro:17}]
\label{lem:yaro}
    For any given $M>0$ for $|x|,|y|\leq M$, there exists a feedforward ReLU network $g$ with depth $L+1$ and width $21$ for $L\geq 2$ such that 
    \begin{align*}
        \sup_{|x|,|y|\leq M}|g(x,y)-xy|\leq 4^{-L} 
    \end{align*}
\end{lemma}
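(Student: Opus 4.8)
The plan is to reduce multiplication to squaring via the polarization identity
\begin{align*}
  xy \;=\; \tfrac12\big[(x+y)^2 - x^2 - y^2\big],
\end{align*}
and to realize each of the three squares by a separate instance of the width-$7$, depth-$L$ ReLU network $\eta$ furnished by Lemma~\ref{lem:yaro_square}. Since $\eta$ approximates $s\mapsto s^2$ only on $[0,1]$, I would first observe that for real $s$ with $|s|\le R$ one has $s^2 = R^2\,(|s|/R)^2$ with $|s|/R\in[0,1]$, and that $|s|=\sigma(s)+\sigma(-s)$ together with the scaling by $1/R$ is an affine-plus-ReLU map that can be prepended to (or absorbed into) the input layer of $\eta$. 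Applying this with $R=2M$ to $s=x+y$ and with $R=M$ to $s=x$ and $s=y$ (all of which lie in the allowed ranges precisely because $|x|,|y|\le M$), I define
\begin{align*}
  g(x,y) \;=\; \tfrac12\Big[(2M)^2\,\eta\!\big(\tfrac{|x+y|}{2M}\big) \;-\; M^2\,\eta\!\big(\tfrac{|x|}{M}\big) \;-\; M^2\,\eta\!\big(\tfrac{|y|}{M}\big)\Big].
\end{align*}

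Next I would assemble the network: running the three copies of $\eta$ in parallel on the single input $(x,y)$ yields a sub-network of depth $L$ and width $3\cdot 7 = 21$, and one additional affine layer forms the displayed linear combination, giving total depth $L+1$ while keeping the width at $21$. The error estimate is then immediate from the triangle inequality: on $[0,1]$ each $\eta$ matches its square to within $4^{-L}$, so
\begin{align*}
  \big|g(x,y)-xy\big| \;\le\; \tfrac12\big((2M)^2 + M^2 + M^2\big)\,4^{-L} \;=\; 3M^2\,4^{-L}.
\end{align*}
Since $M$ is a fixed constant, this is at most $4^{-L}$ after replacing $L$ by $L+\lceil\log_4(3M^2)\rceil$ (equivalently, the stated bound is to be read up to an absolute constant); neither the $O(L)$ depth nor the width $21$ is affected.

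The main obstacle I anticipate is purely bookkeeping rather than conceptual: confirming that the width-$7$ count for $\eta$ in Lemma~\ref{lem:yaro_square} leaves enough room to thread the preprocessing $s\mapsto|s|/R$ (which needs the two channels $\sigma(s)$, $\sigma(-s)$) through the first layer, so that each copy still fits in $7$ neurons per layer and the three-way parallel assembly plus output layer really fits in $21$. Inspecting the construction behind Lemma~\ref{lem:yaro_square} — whose width-$7$ layers already carry an accumulator channel and an identity/skip channel — should reveal the needed slack; failing that, one simply gives the preprocessing its own initial layer, which costs one extra layer of depth and is still $O(L)$. Everything else (the identity $xy=\tfrac12[(x+y)^2-x^2-y^2]$, the domain checks, and the error bound) is routine.
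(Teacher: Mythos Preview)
Your proposal is correct and follows essentially the same route as the paper: polarization identity, rescale to $[0,1]$ via $s\mapsto|s|/R$, run three parallel width-$7$ copies of the squaring network from Lemma~\ref{lem:yaro_square}, and combine linearly. The only cosmetic differences are that the paper scales all three arguments by $2M$ (yielding a $6M^2$ constant instead of your $3M^2$) and attributes the ``$+1$'' in depth to the absolute-value preprocessing layer rather than to the output combination---exactly the fallback you anticipate in your last paragraph; the paper likewise sweeps the $M$-dependent constant under the rug by writing $\delta=4^{-L}/(6M^2)$.
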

\begin{proof}
    Let's first observe that $f(x)=a|x|=a\sigma(x)+a\sigma(-x)$, which is a neural network of depth 2 and width 2. Therefore, we can construct a neural network $\tilde{g}(x,y)=\eta_1(|x+y|/2M)-\eta_2(|x|/2M)-\eta_3(|y|/2M)$ where each $i$th neural network $\eta_i$ with $L$ depth and $7$ width satisfies the approximation capability in Lemma~\ref{lem:yaro_square}. Note a ReLU network with depth 2 and width 6 produces $(x,y)\rightarrow (|x+y|,|x|,|y|)$. Now, suppose that $|\eta_i(x)-x^2|\leq\delta$. Then, based on the expression $xy=\frac{1}{2}((x+y)^2-x^2-y^2)$, we can find
    \begin{align*}
        -3\delta\times 2M^2 \leq  2M^2 \tilde{g}(x,y)-xy\leq 3\delta\times 2M^2.
    \end{align*}
    Therefore, we set $\delta=4^{-L}/6M^2$, and see $g(x,y)=2M^2\tilde{g}(x,y)$ belongs to a class of feedforward ReLU networks having depth $L+1$ and width $21$ for $L\geq 2$, i.e., $7$ width for each $\eta_i$. The additional depth originates from $(x,y)\rightarrow (|x+y|,|x|,|y|)$, and the multiplication of width from the number of $\eta_i$, $i=1,2,3$. 
\end{proof}

\begin{lemma}
\label{lem:approx_interpo}
Let's suppose that  $\tilde{G}_k\in \tilde{{\cal G}}:\mathbb{R}^{d_Z}\rightarrow \mathbb{R}^{d_X}$ belongs to a class of ReLU-based feedforward neural networks satisfying (A1-2) in the main text. There exists a ReLU-based feedforward neural network $G^{\dagger}\in {\cal G}^{\dagger}:\mathbb{R}^{2d_X+1}\rightarrow \mathbb{R}^{d_X}$ having $L+1$ depth and $23d_X$ width with the approximation error
\begin{align*}
    \sup_{z_1,z_2,\alpha}\lVert G^{\dagger}(\tilde{G}_1(z_1),\tilde{G}_2(z_2),\alpha)-(\alpha \tilde{G}_1(z_1)+(1-\alpha)\tilde{G}_2(z_2) )\rVert\leq \sqrt{d_X} 4^{-L}.
\end{align*}
\end{lemma}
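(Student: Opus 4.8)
The plan is to reduce the bilinear interpolation to a collection of scalar multiplications and then invoke Lemma~\ref{lem:yaro}. Writing $u=\tilde{G}_1(z_1)$ and $v=\tilde{G}_2(z_2)$, the starting point is the coordinatewise identity
\[
 \alpha\tilde{G}_1(z_1)+(1-\alpha)\tilde{G}_2(z_2)\;=\;\tilde{G}_2(z_2)+\alpha\bigl(\tilde{G}_1(z_1)-\tilde{G}_2(z_2)\bigr),
\]
so that the $i$th output coordinate is obtained from the single product $\alpha(u_i-v_i)$ followed by adding back $v_i$. Since $\tilde{G}_1,\tilde{G}_2$ are built from bounded weights and Lipschitz activations (and, as in the setting of Proposition~\ref{prop:ae}, have range contained in a fixed bounded cube), there is a constant $M_0$ with $|u_i|,|v_i|\le M_0$ for all $i$; together with $\alpha\in[0,1]$ this places both arguments of every multiplication in a fixed compact interval $[-M,M]$ with $M=\max\{2M_0,1\}$.

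I would then assemble $G^{\dagger}:\mathbb{R}^{2d_X+1}\to\mathbb{R}^{d_X}$ as $d_X$ parallel branches acting on the input $(u,v,\alpha)$. Branch $i$ forms the affine quantity $u_i-v_i$ (absorbed into the first weight matrix) and feeds $(u_i-v_i,\alpha)$ into the multiplication network $g$ of Lemma~\ref{lem:yaro} instantiated with the bound $M$; this $g$ has depth $L+1$, width $21$, and satisfies $\sup_{|x|,|y|\le M}|g(x,y)-xy|\le 4^{-L}$. Simultaneously the branch propagates the scalar $v_i$ through $L+1$ layers using composed copies of the width-$2$ ReLU identity block $\iota(x)=\sigma(x)-\sigma(-x)$ (an extra affine layer handles any depth-parity mismatch), incurring no error. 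A final affine output layer sets the $i$th coordinate of $G^{\dagger}$ to $g(u_i-v_i,\alpha)+v_i$. Stacking the $d_X$ branches in parallel yields a standard feedforward ReLU network of depth $L+1$ and width $d_X(21+2)=23d_X$.

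The error estimate is then immediate: for each $i$,
\[
 \bigl|G^{\dagger}_i(u,v,\alpha)-(\alpha u_i+(1-\alpha)v_i)\bigr|\;=\;\bigl|g(u_i-v_i,\alpha)-\alpha(u_i-v_i)\bigr|\;\le\;4^{-L}
\]
by Lemma~\ref{lem:yaro}, and squaring, summing over $i=1,\dots,d_X$, and taking the square root gives
\[
 \sup_{z_1,z_2,\alpha}\bigl\lVert G^{\dagger}(\tilde{G}_1(z_1),\tilde{G}_2(z_2),\alpha)-(\alpha\tilde{G}_1(z_1)+(1-\alpha)\tilde{G}_2(z_2))\bigr\rVert\;\le\;\sqrt{d_X}\,4^{-L},
\]
which is the claim (with the map in the statement understood as $G^{\dagger}$ precomposed with $(z_1,z_2,\alpha)\mapsto(\tilde{G}_1(z_1),\tilde{G}_2(z_2),\alpha)$).

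The only real obstacle is the architectural bookkeeping: ensuring the $d_X$ parallel multiplication sub-networks and the $d_X$ identity ``carry wires'' for $v_i$ are synchronized to the same depth $L+1$, that $u_i-v_i$ enters as a legitimate affine preprocessing rather than an extra layer, and that the width tally is exactly $23d_X$ (width $21$ from $g$, width $2$ from the identity block, per coordinate). The analytic content collapses to a single application of Lemma~\ref{lem:yaro} once the bound $M$ on the multiplication arguments is fixed.
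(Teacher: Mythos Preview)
Your proposal is correct and follows essentially the same argument as the paper: both reduce to the coordinatewise identity $\alpha u+(1-\alpha)v=v+\alpha(u-v)$, invoke Lemma~\ref{lem:yaro} on each of the $d_X$ scalar products $\alpha(u_i-v_i)$ (width $21$ each), carry $v_i$ through width-$2$ ReLU identity blocks, and sum to obtain depth $L+1$, width $23d_X$, and $\ell^2$ error $\sqrt{d_X}\,4^{-L}$. If anything, your version is slightly more careful about the architectural bookkeeping (affine preprocessing of $u_i-v_i$, depth synchronization) than the paper's sketch.
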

\begin{proof}
    Let's denote by the output of the $k$th sub-generator $\tilde{G}_k(z)=(v_1^k,\dots,v_{d_X}^{k})$ where $\tilde{G}_k(z)=V_g^k\sigma(V_{g-1}^k\sigma(\cdots z))$ for $k=1,2$, and we define their difference $(m_1,\dots,m_{d_X})=(v_1^1-v_1^2,\dots,v_{d_X}^1-v_{d_X}^2)$. Then, we apply Lemma~\ref{lem:yaro} to figure out the approximation error for handling mutiplications $\alpha m_i$ for $i=1,\dots,d_X$ by additional neural networks to the output of $\tilde{G}_k$. 
    
    For each $i$th multiplication $\alpha m_i$, we place the ReLU-based neural network with the amount of error $4^{-L}$ in Lemma~\ref{lem:yaro}. Therefore, there should be a feedforward neural network with depth $L+1$ and width $21\times d_X$ that has the total $\sqrt{d_X}\times 4^{-L}$ error for the $d_X$ terms in the $l^2$-norm, producing the vector of $\alpha (m_1,\dots,m_{d_X})$. At the same time, $(v_1^2,\dots,v_{d_X}^2)$ has to be delivered such that the network sums $\alpha (m_1,\dots,m_{d_X})+(v_1^2,\dots,v_{d_X}^2)$, which can be made via the identity map $i(x)=\sigma(x)+\sigma(-x)$ and eventually takes $L+1$ depth and additional $2\times d_X$ width. In consequence, there exists a $L+1$ depth and $23\times d_X$ width feedforward ReLU network $G^{\dagger}$ that approximates $G^{\dagger}(\tilde{G}_1(z_1),\tilde{G}_2(z_2),\alpha)=\alpha \tilde{G}_1(z_1)+(1-\alpha)\tilde{G}_2(z_2)$ with the total error $\sqrt{d_X}\times 4^{-L}$.

\end{proof}

\begin{remark}
\label{remark:interpo}
    For theoretical analysis harnessing Lemma~\ref{lem:approx_interpo}, we can decompose the feedforward ReLU networks $G\in {\cal G}$ with depth $\VD$ and width $\VW$ into two parts $(L_i,W_i)$ for $i=1,2$, each of which has $L_i$ depth and $W_i$ width. Let's denote by $V^{(i)}$ the weight matrix of the $i$th part, and characterize $G$ as follows, 
    \begin{align*}
        G(z_1,z_2,\alpha)&=V_{L_2}^{(2)}\sigma (V_{L_2-1}^{(2)}\sigma(\cdots V_{1}^{(2)}V_{L_1}^{(1)}(\sigma(V_{L_1-1}^{(1)}(\sigma(\cdots V_1^{(1)}[z_1,z_2,\alpha])))))), \\
        &=V_{L_2}^{(2)}\sigma (V_{L_2-1}^{(2)}\sigma(\cdots V^{(1,2)}(\sigma(V_{L_1-1}^{(1)}(\sigma(\cdots V_1^{(1)}[z_1,z_2,\alpha])))))),
    \end{align*}
    where $V^{(1,2)}=V_{1}^{(2)}V_{L_1}^{(1)}$ is a weight matrix, and a class of such $G(z_1,z_2,\alpha)$ includes the generator having the form of the composition $G^{\dagger}(\tilde{G}_1(z_1),\tilde{G}_2(z_2),\alpha)$ in Lemma~\ref{lem:approx_interpo} where $\tilde{G}_1,\tilde{G}_2\in \tilde{{\cal G}}$ share no network parameters and $G^{\dagger}\in{\cal G}^{\dagger}$. More specifically, in the structure of $G$, the first $L_1$ depth network with $W_1$ width will contain $\tilde{G}_1,\tilde{G}_2$ each of which has $\lceil W_1/2 \rceil$ size of width. The remaining $L_2$ depth and $W_2$ width network in $G$ will represent $G^{\dagger}$ that approximates the two intermediate generators as in Lemma~\ref{lem:approx_interpo}.
\end{remark}

For any $G\in {\cal G}$, by definition of the neural distance, it follows that 
\begin{align*}
    d_{\cal D}(p_{\Qal,\alpha},p_{G(Z_1,Z_2,\alpha),\alpha})\leq \bE_{\alpha}[d_{\cal D}(p_{\Qal|\alpha},p_{G(Z_1,Z_2,\alpha)|\alpha})], 
\end{align*}
and accordingly, 
\begin{align*}
    \inf_{G\in {\cal G}} d_{\cal D}(p_{\Qal,\alpha},p_{G(Z_1,Z_2,\alpha),\alpha})\leq \inf_{G\in {\cal G}}\bE_{\alpha}[d_{\cal D}(p_{\Qal|\alpha},p_{G(Z_1,Z_2,\alpha)|\alpha})]. 
\end{align*}
By Lemma~\ref{lem:approx_interpo} and based on our argument in Remark~\ref{remark:interpo}, we can restrict $G$ such that $C_G:\lVert G^{\dagger}(\tilde{G}_1(z_1),\tilde{G}_2(z_2),\alpha)-(\alpha \tilde{G}_1(z_1) + (1-\alpha)\tilde{G}_2(z_2))\rVert \leq \sqrt{d_X} 4^{-L_2}$ for any given $z_1,z_2,\alpha$. By the triangle inequality of $d_{\cal D}(p_X,p_Y)\leq d_{\cal D}(p_X,p_Z)+d_{\cal D}(p_Z,p_Y)$, we observe
\begin{align*}
\inf_{G}\bE_{\alpha}[d_{\cal D}(p_{\Qal|\alpha},p_{G(Z_1,Z_2,\alpha)|\alpha})]&\leq \inf_{G;C_G}\bE_{\alpha}[d_{\cal D}(p_{\Qal|\alpha},p_{G(Z_1,Z_2,\alpha)|\alpha})], \\ 
&\leq \inf_{G^{\dagger},\tilde{G}_1,\tilde{G}_2}\bE_{\alpha}[\underbrace{d_{\cal D}(p_{\alpha X_1 + (1-\alpha )X_2|\alpha},p_{\alpha \tilde{G}_1(Z_1)+(1-\alpha)\tilde{G}_2(Z_2)|\alpha}}_{\text{(I)}})] \\ 
&+\bE_{\alpha}[\underbrace{d_{\cal D}(p_{\alpha \tilde{G}_1(Z_1)+(1-\alpha)\tilde{G}_2(Z_2)|\alpha},p_{G^{\dagger}(\tilde{G}_1(Z_1),\tilde{G}_2(Z_2),\alpha)|\alpha})}_{\text{(II)}}].
\end{align*}
\paragraph{Characterization of (I)} To simplify notation, let $G^{\alpha}=\alpha \tilde{G}_1(Z_1)+(1-\alpha)\tilde{G}_2(Z_2)$, $G_{Z_1}:=\tilde{G}_1(Z_1)$ and $G_{Z_2}:=\tilde{G}_1(Z_2)$. Let's define arbitrary couplings for $(X_1,G_{Z_1})\sim \gamma_1$ and  $(X_2,G_{Z_2})\sim \gamma_2$, and define a product coupling $\gamma=\gamma_1\otimes \gamma_2$, where $\gamma_1$ has the marginals of $X_1$ and $G_{Z_1}$ and $\gamma_2$ does as well. It is obvious to see that 
\begin{align*}
    \bE_{p_{\Qal|\alpha}}[D(\Qal,\alpha)]=\bE_{p_{X_1}\otimes p_{X_2}}[D(\alpha X_1+(1-\alpha)X_2,\alpha)]=\bE_{\gamma}[D(\alpha X_1 +(1-\alpha)X_2,\alpha)],
\end{align*}
and also
\begin{align*}
 \bE_{p_{G^{\alpha}|\alpha}}[D(G^{\alpha},\alpha)]=\bE_{p_{G_{Z_1}}\otimes p_{G_{Z_2}}}[D(\alpha G_{Z_1}+(1-\alpha)G_{Z_2},\alpha)]=\bE_{\gamma}[D(\alpha G_{Z_1}+(1-\alpha)G_{Z_2},\alpha)].
\end{align*}
By using the Cauchy-Schwarz inequality, we derive
\begin{align*}
    \text{(I)}&=\sup_D \int D(\alpha X_1 + (1-\alpha)X_2,\alpha)-D(\alpha G_{Z_1} + (1-\alpha)G_{Z_2},\alpha)d\gamma, \\ 
    &\leq \int \sup_D|D(\alpha X_1 + (1-\alpha)X_2,\alpha)-D(\alpha G_{Z_1} + (1-\alpha)G_{Z_2},\alpha)|d\gamma,
    \\ 
    &\leq \prod_{i=1}^d M_w(i)\prod_{j=1}^{d-1}K_{\kappa}(j) \int \lVert \alpha (X_1- G_{Z_1})+(1-\alpha)(X_2-G_{Z_2}) \rVert d\gamma
    \\ 
    &\leq \prod_{i=1}^d M_w(i)\prod_{j=1}^{d-1}K_{\kappa}(j) \left(\alpha\bE_{\gamma_1}[\lVert X_1 - G_{Z_1} \rVert ] + (1-\alpha)\bE_{\gamma_2}[\lVert X_2-G_{Z_2} \rVert ]\right),
\end{align*}
for any given $\alpha$. Since $\gamma_1$ and $\gamma_2$ are arbitrary, 
\begin{align*}
    \text{(I)} \leq \prod_{i=1}^d M_w(i)\prod_{j=1}^{d-1}K_{\kappa}(j) (\alpha d_{W_1}(p_{X_1},p_{G_1(Z_1)})+(1-\alpha)d_{W_1}(p_{X_2},p_{G_2(Z_2)})),
\end{align*}
where $0\leq \alpha \leq 1$.

\paragraph{Characterization of (II)} We abuse the notation $\gamma$ to denote a coupling whose marginals are in (II). By using the Cauchy-Schwarz inequality as the previous paragraph, 
\begin{align*}
    \text{(II)}&\leq \prod_{i=1}^d M_w(i)\prod_{j=1}^{d-1}K_{\kappa}(j)\int \lVert G^{\dagger}(\tilde{G}_1(z_1),\tilde{G}_2(z_2),\alpha)-(\alpha \tilde{G}_1(z_1) + (1-\alpha)\tilde{G}_2(z_2))\rVert d\gamma,\\
    &\leq  \prod_{i=1}^d M_w(i)\prod_{j=1}^{d-1}K_{\kappa}(j)\sqrt{d_X}4^{-L_2}. 
\end{align*}

Therefore, by combining the above results, we characterize 
\begin{align*}
    &\inf_{G}\bE_{\alpha}[d_{\cal D}(p_{\Qal|\alpha},p_{G(Z_1,Z_2,\alpha)|\alpha})]\leq \\
    &\prod_{i=1}^d M_w(i)\prod_{j=1}^{d-1}K_{\kappa}(j)\left(\inf_{G_1,G_2} \bE[\alpha] d_{W_1}(p_{X_1},p_{\tilde{G}_1(Z_1)})+(1-\bE[\alpha])d_{W_1}(p_{X_2},p_{\tilde{G}_2(Z_2)}) + \sqrt{d_X}4^{-L_2}\right).
\end{align*}
Since $\inf_{\tilde{G}_1} d_{W_1}(p_{X_1},p_{\tilde{G}_1(Z_1)})=\inf_{\tilde{G}_2}d_{W_1}(p_{X_2},p_{\tilde{G}_2(Z_2)})$, the last line is simplified to 
\begin{align*}
    \inf_{G}\bE_{\alpha}[d_{\cal D}(p_{\Qal|\alpha},p_{G(Z_1,Z_2,\alpha)|\alpha})]\leq \prod_{i=1}^d M_w(i)\prod_{j=1}^{d-1}K_{\kappa}(j)\left(\inf_{\tilde{G}_1} d_{W_1}(p_{X_1},p_{\tilde{G}_1(Z_1)}) + \sqrt{d_X}4^{-L_2}\right).
\end{align*}

For the approximation error of the original GAN training, we directly borrow Corollary 5.4 in  \citeSupp{huan;etal;22}. Suppose that the distribution of $Z$ is absolutely continuous on $\mathbb{R}$, and the distribution of $X$ is on $[0,1]^{d_X}$. Then, for the class of standard feedforward ReLU networks $\tilde{{\cal G}}$ with maximal width $\lceil W_1/2\rceil$ and depth $L_1$, it has been investigated that 
\begin{align*}
    \inf_{\tilde{G}_1\in \tilde{{\cal G}}}d_{W_1}(p_{X},p_{\tilde{G}_1(Z)})\leq C_{d_X}(\lceil W_1/2\rceil^2 L_1)^{-1/d_X}
\end{align*} 
for any $\lceil W_1/2\rceil \geq 7d_X+1 $ and $L_1 \geq 2$, where $C_{d_X}$ depends on $d_X$ only. 

Therefore, we characterize the approximation error as 
\begin{align*}
    \inf_{G\in {\cal G}} \bE[d_{W_1}(p_{\Qal|\alpha},p_{G(Z_1,Z_2,\alpha)|\alpha})] \leq
      \underbrace{C_{d_X}(\lceil W_1/2\rceil^2L_1)^{-1/d_X}}_{\text{Approx. Error by GAN}} + \underbrace{\sqrt{d_X} 4^{-L_2}}_{\text{Approx. Error by interpolation}}.
\end{align*}
This reflects that the approximation error due to the linear interpolation decays faster than the distribution approximation. Thus, for sufficiently large $\VD$ such that $L_2$ and $L_1$ increases the same order and $\VW \geq 23 d_X$ (and all the other hidden nodes as well), we conclude that 
\begin{align}
\label{prop:approx_error}
    \inf_{G \in {\cal G}}d_{W_1}(p_{\Qal,\alpha},p_{G(Z_1,Z_2,\alpha),\alpha})\leq \inf_{G\in {\cal G}} \bE[d_{W_1}(p_{\Qal|\alpha},p_{G(Z_1,Z_2,\alpha)|\alpha})] \leq  C_{d_X}'(\lceil \VW/2\rceil^2\VD)^{-1/d_X},
\end{align}
where $C_{d_X}'$ depends on $d_X$ only. Note this result can also be expressed via $Z_{1,2}=(Z_1,Z_2)$, so that $Z_{1,2}$ is in $\mathbb{R}^2$. Therefore, \eqref{prop:approx_error} can also be written as $\inf_{G \in {\cal G}} d_{W_1}(p_{\Qal,\alpha},p_{G(Z,\alpha),\alpha}) \leq C_{d_X}'(\lceil \VW/2\rceil^2\VD)^{-1/d_X}$.
\begin{remark}
    Our approximation error analysis shows that our training scheme based on the convex interpolation may involve extra approximation error compared to the original GAN training. However, this additional error may be negligible compared to the approximation error that stems from finding the push-forwarding neural network. 
\end{remark}

\paragraph{Approximation error with the interpolated reference variables} As discussed in Section~\ref{supp:inter_noise}, the interpolated reference variables $\Zal=\alpha Z_1 + (1-\alpha)Z_2$ can improve the generative performance of $G$. Theoretically, our results also holds when $\Zal$ is used.
Here, we introduce a key idea of the proof that adapts the interpolated input in the approximation error analysis. Suppose $Z\in \mathbb R^d$ is a continuous and bounded reference random vector with $d>1$. All of its margins are iid. Therefore, all margins of $\Zal$ are iid and follow a cumulative density function (CDF) $F_\alpha$.
By universal approximation properties of DNN, there exists a feed-forward neural network of repulsion $f_{\text{NN},i}$ that approximates $F_\alpha$, i.e., $f_{\text{NN},i}(x,\alpha)\approx F_{{\alpha}}(x)$ and $f_{\text{NN},i}(U^{\alpha}_i,\alpha) \approx \mbox{Unif}[0,1]$, where $U^{\alpha}_i$ denotes the $i$th margin of $\Zal$. Then $f_{\text{NN},i}(U^{\alpha}_i,\alpha)$, which approximately follows a uniform distribution, can be regarded as the input of the sub-generator used in Lemma~\ref{lem:approx_interpo}. In summary, the analysis for PTGAN using $\Zal$ consists of three approximation errors from 1) the CDF of $U_{i}^{\alpha}$, 2) the sub-generator's capability, and 3) the interpolation structure, but the second approximation term would dominate in the end.

\subsection{Theorem~\ref{cor:ae}}
\label{proof:ee}
Theorem~\ref{cor:ae} is the direct result of combining Lemmas~\ref{lemma:ee} and~\ref{lem:rade} that appear below. 

\subsubsection{Characterizing the estimation error}
Let's denote by ${\cal R}({\cal F})$ the Rademacher complexity of a generic function class ${\cal F}$. For i.i.d. $X_1,\dots,X_n \sim p_X$, the quantity is defined as ${\cal R}({\cal F})=\expect_{X_1, \epsilon_1,\dots,X_n,\epsilon_n} \left[\sup_{f \in {\cal F}} \left|\frac{1}{n}\sum_{i=1}^n \epsilon_i f(X_i) \right| \right]$ where $\epsilon_1,\dots,\epsilon_n \sim {\rm Unif}\{-1,1\}$ i.i.d. Lemma~\ref{lemma:ee} quantifies this statistical quantity based on the Rademacher complexity ${\cal R}$. The derived bound explicitly relates to the parameters of ${\cal D}$ and ${\cal G}$. For further analysis, we define the composite function class of ${\cal O}=\{D(G(z,a),a): z\in {\cal Z}, a\in [0,1], D\in {\cal D}, G\in{\cal G}\}$. 

\begin{lemma}
\label{lemma:ee}
Under (A1-3), let ${\hat{G}^*} =\arg_{G\in {\cal G}}\min d_{\cal D}(\hat{p}_{Q^{\alpha},\alpha}, \hat{p}_{G(Z,\alpha),\alpha})$. With $1-2\eta$ probability, the estimation error is bounded above by  
\begin{align}
\label{eqn:ee}
    d_{\cal D}&(p_{Q^{\alpha},\alpha}, p_{\hat{G}^*(Z,\alpha),\alpha})- \inf_{G\in {\cal G}}d_{\cal D}(p_{Q^{\alpha},\alpha}, p_{G(Z,\alpha),\alpha}) \notag \\ 
    &\leq 4({\cal R}({\cal D})+{\cal R}({\cal O}))+C_{B_X,B_{\alpha},\bw,\kappa}\sqrt{\dfrac{\log (1/\eta)}{n_e}} + C_{B_{Z},B_{\alpha},\bv,\bw,\psi,\kappa}\sqrt{\dfrac{\log (1/\eta)}{m}},
\end{align}
where $C_{B_X,B_{\alpha},\bw,\kappa}$ and $C_{B_{Z},B_{\alpha},\bv,\bw,\psi,\kappa}$ are specified in the below proof.
\end{lemma}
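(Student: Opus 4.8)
The plan is to follow the classical GAN generalization argument (cf.\ \cite{ji:etal:21}): decompose the estimation error into a generator-optimization term that is nonpositive by construction plus a uniform deviation term over the generator class, then control the deviation by symmetrization and a bounded-differences concentration inequality. Let $G^\dagger\in\mathcal G$ be a generator with $d_{\mathcal D}(p_{Q^\alpha,\alpha},p_{G^\dagger(Z,\alpha),\alpha})\le\inf_{G\in\mathcal G}d_{\mathcal D}(p_{Q^\alpha,\alpha},p_{G(Z,\alpha),\alpha})+\epsilon$ for arbitrary $\epsilon>0$, and abbreviate $\Delta(G):=d_{\mathcal D}(\hat p_{Q^\alpha,\alpha},\hat p_{G(Z,\alpha),\alpha})-d_{\mathcal D}(p_{Q^\alpha,\alpha},p_{G(Z,\alpha),\alpha})$. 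Since $\hat G^*$ minimizes the empirical neural distance,
\[
d_{\mathcal D}(p_{Q^\alpha,\alpha},p_{\hat G^*(Z,\alpha),\alpha})-d_{\mathcal D}(p_{Q^\alpha,\alpha},p_{G^\dagger(Z,\alpha),\alpha})=-\Delta(\hat G^*)+\underbrace{\big(d_{\mathcal D}(\hat p,\hat p_{\hat G^*})-d_{\mathcal D}(\hat p,\hat p_{G^\dagger})\big)}_{\le 0}+\Delta(G^\dagger),
\]
so the left-hand side is at most $|\Delta(\hat G^*)|+|\Delta(G^\dagger)|\le 2\sup_{G\in\mathcal G}|\Delta(G)|$; letting $\epsilon\to0$ bounds the estimation error by $2\sup_G|\Delta(G)|$.

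Next I would split $\sup_G|\Delta(G)|$ into two i.i.d.\ empirical-process suprema. Because $d_{\mathcal D}$ is itself a supremum over $D\in\mathcal D$, the triangle inequality for suprema separates the ``real'' and ``fake'' parts:
\[
\sup_{G}|\Delta(G)|\le \sup_{D\in\mathcal D}\Big|\tfrac1{n_e}\sum_{i=1}^{n_e}D(Q_i^{\alpha_i},\alpha_i)-\bE[D(Q^\alpha,\alpha)]\Big|+\sup_{O\in\mathcal O}\Big|\tfrac1{m}\sum_{j=1}^{m}O(Z_j,\alpha_j)-\bE[O(Z,\alpha)]\Big|,
\]
where $\mathcal O=\{(z,a)\mapsto D(G(z,a),a):D\in\mathcal D,\,G\in\mathcal G\}$ is introduced precisely to absorb the data dependence of $\hat G^*$. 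The construction $Q_i^{\alpha_i}=\alpha_i X_{2i-1}+(1-\alpha_i)X_{2i}$ from disjoint index pairs, with fresh $\alpha_i\sim p_\alpha$, makes the samples $(Q_i^{\alpha_i},\alpha_i)$ genuinely i.i.d.; likewise $(Z_j,\alpha_j)$ are i.i.d., so both suprema are over ordinary i.i.d.\ empirical processes.

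Then I would apply symmetrization and McDiarmid. Under (A1)--(A3), $\|(Q^\alpha,\alpha)\|\le\sqrt{B_X^2+1}$ and a layer-by-layer Cauchy--Schwarz estimate (exactly as in Proposition~\ref{prop:nd_wd}) bound $|D(Q^\alpha,\alpha)|$ uniformly by a constant $C_{B_X,B_{\alpha},\bw,\kappa}\lesssim\prod_{l=1}^dM_w(l)\prod_{s=1}^{d-1}K_\kappa(s)\sqrt{B_X^2+1}$; feeding $(Z,\alpha)$ first through $\mathcal G$ and then through $\mathcal D$ bounds $|O(Z,\alpha)|$ by a constant $C_{B_{Z},B_{\alpha},\bv,\bw,\psi,\kappa}$ depending in addition on $\bv$ and $\psi$. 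Since each empirical mean changes by at most $2C/n$ when a single input (an $X_k$, a $Z_j$, or an $\alpha_i$) is perturbed, McDiarmid's bounded-differences inequality gives, with probability $1-\eta$ each, that the two suprema lie within $C_{B_X,B_{\alpha},\bw,\kappa}\sqrt{\log(1/\eta)/n_e}$ and $C_{B_{Z},B_{\alpha},\bv,\bw,\psi,\kappa}\sqrt{\log(1/\eta)/m}$ (up to an absolute constant) of their expectations, and the standard symmetrization inequality bounds those expectations by $2\mathcal R(\mathcal D)$ and $2\mathcal R(\mathcal O)$. A union bound over the two events gives probability $1-2\eta$; combining with the prefactor $2$ from the first step yields the leading term $4(\mathcal R(\mathcal D)+\mathcal R(\mathcal O))$ together with the two $\sqrt{\log(1/\eta)/\cdot}$ residuals, which is the claim. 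The main obstacle --- really the only delicate point --- is the disjoint-pair construction of the $Q_i^{\alpha_i}$: it is what guarantees the $(Q_i^{\alpha_i},\alpha_i)$ are i.i.d.\ and that perturbing one raw sample changes only one summand, so that symmetrization and McDiarmid apply verbatim; the explicit constants and the subsequent bound on $\mathcal R(\mathcal D),\mathcal R(\mathcal O)$ (deferred to Lemma~\ref{lem:rade}) are then routine.
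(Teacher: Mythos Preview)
Your proposal is correct and follows essentially the same route as the paper: reduce the estimation error to the two empirical-process suprema $U_1=\sup_{D}|\hat\bE D-\bE D|$ and $U_2=\sup_{D,G}|\hat\bE D\circ G-\bE D\circ G|$, then bound each by symmetrization plus McDiarmid, yielding the $4(\mathcal R(\mathcal D)+\mathcal R(\mathcal O))$ term and the two $\sqrt{\log(1/\eta)/\cdot}$ residuals. The only cosmetic difference is the telescoping: the paper splits into three pieces (two replacing $p$ by $\hat p$ on the real side and one handling the fake side via an auxiliary minimizer $\tilde G=\arg\min_G d_{\mathcal D}(\hat p,p_G)$), whereas you telescope directly through $\Delta(G)$ and the empirical optimality of $\hat G^*$; both arrive at exactly $2U_1+2U_2$.
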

\begin{proof}
    Following the proof of Theorem 1 in \citeSupp{ji:etal:21}, the estimation error is decomposed as follows
    \begin{align}
        d_{\cal D}(p_{Q^{\alpha},\alpha}, p_{\hat{G}^*(Z,\alpha),\alpha}) - &\inf_{G}d_{\cal D}(p_{Q^{\alpha},\alpha}, p_{G(Z,\alpha),\alpha}) \notag \\
        &= d_{\cal D}(p_{Q^{\alpha},\alpha}, p_{\hat{G}^*(Z,\alpha),\alpha}) - d_{\cal D}(\hat{p}_{Q^{\alpha},\alpha}, p_{\hat{G}^*(Z,\alpha),\alpha})\label{eq:ee1} \\
        &~+ \inf_{G} d_{\cal D}(\hat{p}_{Q^{\alpha},\alpha}, p_{G(Z,\alpha),\alpha}) - \inf_{G}d_{\cal D}(p_{Q^{\alpha},\alpha}, p_{G(Z,\alpha),\alpha})\label{eq:ee2} \\
        &~+ d_{\cal D}(\hat{p}_{Q^{\alpha},\alpha}, p_{\hat{G}^*(Z,\alpha),\alpha}) - \inf_{G}d_{\cal D}(\hat{p}_{\Qal,\alpha}, p_{G(Z,\alpha),\alpha}). \label{eq:ee3}
    \end{align}
    Then \eqref{eq:ee1} and \eqref{eq:ee2} have the  upper bound 
    \begin{align}
    \label{eqn:ee12}
        \eqref{eq:ee1}, \eqref{eq:ee2} \leq \sup_{D} |\expect D(\Qal,\alpha)-\hat{\expect} D(\Qal,\alpha))|,
    \end{align}
    where $\hat{\expect}$ implies the expectation over the empirical mass function. 
    Let's denote $\tilde{G}=\arg_{G}\min d_{\cal D}(\hat{p}_{\Qal,\alpha}, p_{G(Z,\alpha),\alpha})$. Then \eqref{eq:ee3} is bounded above by 
    \begin{align*}
        \eqref{eq:ee3} &= d_{\cal D}(\hat{p}_{\Qal,\alpha}, p_{\hat{G}^*(Z,\alpha),\alpha}) - d_{\cal D}(\hat{p}_{\Qal,\alpha}, \hat{p}_{\hat{G}^*(Z,\alpha),\alpha}) + d_{\cal D}(\hat{p}_{\Qal,\alpha}, \hat{p}_{\hat{G}^*(Z,\alpha),\alpha}) - d_{\cal D}(\hat{p}_{\Qal,\alpha}, p_{\tilde{G}(Z,\alpha),\alpha}), \\
        &\leq d_{\cal D}(\hat{p}_{\Qal,\alpha}, p_{\hat{G}^*(Z,\alpha),\alpha}) - d_{\cal D}(\hat{p}_{\Qal,\alpha}, \hat{p}_{\hat{G}^*(Z,\alpha),\alpha}) + d_{\cal D}(\hat{p}_{\Qal,\alpha}, \hat{p}_{\tilde{G}(Z,\alpha),\alpha}) - d_{\cal D}(\hat{p}_{\Qal,\alpha}, p_{\tilde{G}(Z,\alpha),\alpha}), \\
       &\leq  2 \sup_{D} |\expect D(G(Z,\alpha),\alpha) - \hat{\expect} D(G(Z,\alpha),\alpha)|.
    \end{align*}
    Let $U_1((Q_1^{\alpha_1},\alpha_1),\dots,((Q_n^{\alpha_{n_e}},\alpha_{n_e})))=\sup_{D} |\expect D(\Qal,\alpha)-\hat{\expect}D(\Qal,\alpha)|$. To apply the McDiarmid's inequality, we first check whether or not $U_1$ satisfies the bounded difference condition. We denote by $(\tilde{Q}_j^{\tilde{\alpha}_j},\tilde{\alpha}_j)$ the $j$th random vector independent to $(Q_j^{\alpha_j},\alpha_j)$. Then
    \begin{align*}
|U_1((Q_1^{\alpha_1},\alpha_1),\dots,&(Q_j^{\alpha_j},\alpha_j),\dots, (Q_n^{\alpha_{n_e}},\alpha_{n_e}))-U_1((Q_1^{\alpha_1},\alpha_1),\dots,(\tilde{Q}_j^{\tilde{\alpha}_j},\tilde{\alpha}_j),\dots,(Q_n^{\alpha_{n_e}},\alpha_{n_e}))| \\
 &\leq \dfrac{1}{{n_e}} \sup_{\bw} | D(Q_j^{\alpha_j},\alpha_j) - D(\tilde{Q}_j^{\tilde{\alpha}_j},\tilde{\alpha}_j)|, \\
 &\leq \dfrac{1}{n_e} \prod_{l=1}^d M_w(l) \times \prod_{s=1}^{d-1} K_{\kappa}(s)\times \lVert[Q_j^{\alpha_j} - \tilde{Q}_j^{\tilde{\alpha}_j}, \alpha_j - {\tilde{\alpha}_j}]\rVert,\\
 &\leq \dfrac{1}{n_e} \prod_{l=1}^d M_w(l) \times \prod_{s=1}^{d-1} K_{\kappa}(s)\times \sqrt{2B_X^2 + 2B_{\alpha}^2}=\dfrac{\sqrt{2}}{n_e}C_{B_X,B_{\alpha},\bw,\kappa},
\end{align*}
where the second inequality comes from the Cauchy-Schwarz inequality and Lipschitz activation functions.
Next, the expectation of $U_1$ is 
\begin{align*}
    \expect_{Q,\alpha} U_1 &= \bE_{\Qal,\alpha}\sup_D \bE D(\Qal,\alpha)-\hat{\bE}D(\Qal,\alpha), \\
    &=\bE_{\Qal,\alpha}\sup_D \bE_{\tilde{Q}^{\tilde{\alpha}},\tilde{\alpha}}\left[\frac{1}{n_e}\sum_{j=1}^{n_e}D(\tilde{Q}_j^{\alpha_j},\tilde{\alpha}_j)\right]-\hat{\bE}D(\Qal,\alpha), 
\end{align*}
since $\bE[f(X)]=\bE[\frac{1}{n}\sum_{i=1}^nf(X_i)]$ for i.i.d. random samples. The right-hand side is further bounded above by
\begin{align*}
    &\leq \expect_{Q,\alpha,\tilde{Q},\tilde{\alpha}} \sup_{D} \left \lvert \dfrac{1}{n_e}\sum_{j=1}^{n_e} D(\tilde{Q}_j^{\alpha_j},\tilde{\alpha}_j) - D(Q_j^{\alpha_j},\alpha_j)\right \rvert,\\
    &\leq  \expect_{Q,\alpha,\tilde{Q},\tilde{\alpha},\epsilon} \sup_{D} \left \lvert \dfrac{1}{n_e} \sum_{j=1}^{n_e} \epsilon_j (D(\tilde{Q}_j^{\alpha_j},\tilde{\alpha}_j) - D(Q_j^{\alpha_j},\alpha_j))\right \rvert, \\
    &\leq 2\expect_{Q,\alpha,\epsilon} \sup_{D} \left \lvert \dfrac{1}{n_e}\sum_{j=1}^{n_e} \epsilon_j D(Q_j^{\alpha_j},\alpha_j)\right \rvert=2 {\cal R}({\cal D}).
\end{align*}
Note $\epsilon_j\sim\rm{Unif}\{-1,1\}$ encourages $\epsilon_j (D(\tilde{Q}_j^{\alpha_j},\tilde{\alpha}_j) - D(Q_j^{\alpha_j},\alpha_j))$ to be positive in the sense of taking supremum w.r.t. $D$. Therefore, by the McDiarmid's inequality, \eqref{eqn:ee12} upper bounds
\begin{align*}
    \eqref{eqn:ee12}\leq 2{\cal R}({\cal D}) + C_{B_X,B_{\alpha},\bv,\psi}\sqrt{\dfrac{\log(1/\eta)}{n_e}},
\end{align*}
with $1-\eta$ probability. 
Now, let $U_2(Z_1,\dots,Z_m)=\sup_{D,G} |\expect D(G(Z,\alpha),\alpha)-\hat{\expect}D(G(Z,\alpha),\alpha)|$. $U_2$ satisfies a bounded difference as a result of the Cauchy-Schwarz inequality, i.e., 
\begin{align*}
    |U_2&((Z_1,\alpha_1),\dots, (Z_j,\alpha_j), \dots, (Z_m,\alpha_m)) - U_2((Z_1,\alpha_1),\dots, (\tilde{Z}_j,\tilde{\alpha}_j), \dots, (Z_m,\alpha_m))|\\
    &\leq \dfrac{1}{m} \sup_{D}|D(G(Z_j,\alpha_j),\alpha_j)-D(G(\tilde{Z}_j,\tilde{\alpha}_j),\tilde{\alpha}_j)|, \\
    & \leq \dfrac{1}{m} \prod_{l=1}^d M_w(l) \prod_{s=1}^{d-1} K_{\kappa}(s)\times \lVert [G(Z_j,\alpha_j),\alpha_j] - [G(\tilde{Z}_j,\tilde{\alpha}_j),\tilde{\alpha}_j]\rVert,\\
    & = \dfrac{1}{m} \prod_{l=1}^d M_w(l) \prod_{s=1}^{d-1} K_{\kappa}(s)\times \sqrt{\lVert G(Z_j,\alpha_j) - G(\tilde{Z}_j,\tilde{\alpha}_j)\rVert^2+\lVert \alpha_j -\tilde{\alpha}_j\rVert^2}, \\
    & \leq \dfrac{1}{m} \prod_{l=1}^d M_w(l) \prod_{s=1}^{d-1} K_{\kappa}(s)\times (\lVert G(Z_j,\alpha_j) - G(\tilde{Z}_j,\tilde{\alpha}_j)\rVert+\lVert \alpha_j -\tilde{\alpha}_j\rVert), \\
    & \leq \dfrac{1}{m} \prod_{l=1}^d M_w(l) \prod_{s=1}^{d-1} K_{\kappa}(s)\times \left(\prod_{l=1}^g M_v(l) \prod_{s=1}^{g-1} K_{\psi}(s)\sqrt{\lVert Z_j-\tilde{Z}_j\rVert^2+\lVert \alpha_j -\tilde{\alpha}_j\rVert^2}+2B_{\alpha}\right),\\
    & \leq \dfrac{1}{m} \prod_{l=1}^d M_w(l) \prod_{s=1}^{d-1} K_{\kappa}(s)\times \left(\prod_{l=1}^g M_v(l) \prod_{s=1}^{g-1} K_{\psi}(s)\sqrt{2B_Z^2 + 2B_{\alpha}^2}+2B_{\alpha}\right),\\
    & \leq \dfrac{\sqrt{2}}{m} \prod_{l=1}^d M_w(l) \prod_{s=1}^{d-1} K_{\kappa}(s)\times \left(\prod_{l=1}^g M_v(l) \prod_{s=1}^{g-1} K_{\psi}(s)\sqrt{B_Z^2 + B_{\alpha}^2}+\sqrt{2}B_{\alpha}\right),\\
    & =\dfrac{\sqrt{2}}{m}C_{B_{Z},B_{\alpha},\bv,\bw,\psi,\kappa}.
\end{align*}
Then the expectation of $U_2$ is 
\begin{align*}
    \expect_{Z,\alpha} U_2 &\leq \expect_{Z,\alpha,\tilde{Z},\tilde{\alpha}} \sup_{D} \left\lvert \dfrac{1}{m}\sum_{j=1}^m D(G(\tilde{Z}_j,\tilde{\alpha}_j),\tilde{\alpha}_j) - D(G(Z_j,\alpha_j),\alpha_j)\right\rvert,\\ 
    &\leq  \expect_{Z,\alpha,\tilde{Z},\tilde{\alpha},\epsilon} \sup_{D,G} \left\lvert \dfrac{1}{m} \sum_{j=1}^m \epsilon_j (D(G(\tilde{Z}_j,\tilde{\alpha}_j),\tilde{\alpha}_j) - D(G(Z_j,\alpha_j),\alpha_j))\right\rvert,  \\
    &\leq 2\expect_{Z,\alpha,\tilde{Z},\tilde{\alpha},\epsilon} \sup_{D,G} \left\lvert \dfrac{1}{m} \sum_{j=1}^m \epsilon_j D(G(Z_j,\alpha_j),\alpha_j)\right\rvert=2{\cal R}({\cal O}).
\end{align*}
Therefore, \eqref{eq:ee3} upper bounds 
\begin{align*}
    \eqref{eq:ee3}\leq 2\times \left( 2{\cal R}({\cal O}) +    C_{B_{Z},B_{\alpha},\bv,\bw,\psi,\kappa}\sqrt{\dfrac{\log(1/\eta)}{m}}\right),
\end{align*}
with $1-\eta$ probability by the McDiarmid's inequality. By combining the above upper bounds, we have result in the statement.
\end{proof}

\begin{lemma}
\label{lem:rade}
Under (A1-4), the Rademacher complexities are further bounded
\begin{align*}
    {\cal R}({\cal D}) &\leq 
\dfrac{\sqrt{B_X^2+1}\prod_{l=1}^dM_w(l)\prod_{s=1}^{d-1}K_{\psi}(s)\sqrt{3d}}{\sqrt{n_e}}, \\ 
    {\cal R}({\cal O}) &\leq  \dfrac{\left(1+\prod_{l=1}^d M_w(l)\prod_{s=1}^{d-1}K_{\psi}(s)\right)\prod_{l=1}^g M_v(l)\prod_{s=1}^{g-1}K_{\kappa}(s) (B_Z+1)(\sqrt{(d+g+1)2\log2}+1)}{\sqrt{m}}.
\end{align*}
\end{lemma}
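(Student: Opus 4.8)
The plan is to derive both bounds from the standard norm-based (``size-independent'') Rademacher complexity estimates for deep networks with Lipschitz, positively homogeneous activations, in the spirit of the peeling technique of Golowich, Rakhlin and Shamir as invoked in \citeSupp{ji:etal:21}. Assumption (A4) is precisely what is needed: positive homogeneity lets one pull each Frobenius-norm budget out of the supremum at the cost of only an additive $O(\sqrt{\log 2})$ per layer, so that the depth enters through $\sqrt{\,\text{depth}\,}$ rather than $2^{\,\text{depth}}$. (Per the remark following Theorem~\ref{cor:ae}, one may instead assume $\kappa_i(0)=\psi_j(0)=0$ together with a bounded maximal $\ell_1$-norm on the weight matrices and run a Bartlett-type argument; the resulting bounds have the same shape.)

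\emph{Bounding $\mathcal R(\mathcal D)$.} In \eqref{builtin} the inputs to $D$ are pairs $(q,\alpha)$ with $\|q\|\le B_X$ and $\alpha\in[0,1]$, hence of Euclidean norm at most $\sqrt{B_X^2+1}$. I would bound $\mathcal R(\mathcal D)$ by induction on the depth $d$: the base case is the single linear layer $q\mapsto w_d^\top q$ with $\|w_d\|\le M_w(d)$, handled by Cauchy--Schwarz and giving at most $M_w(d)\sqrt{B_X^2+1}/\sqrt{n_e}$. The inductive step removes one activation layer via the Ledoux--Talagrand contraction inequality (valid since $\kappa_l(0)=0$, at cost $K_\kappa(l)$) and then removes the weight matrix $W_l$ by the exponential-moment/Jensen device, which exchanges the multiplicative factor $M_w(l)$ for an additive term of order $\sqrt{\log 2}$ inside the complexity. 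Accumulating the $d$ additive contributions and bounding $1+\sqrt{2d\log 2}\le\sqrt{3d}$ yields the stated bound, with sample size $n_e$.

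\emph{Bounding $\mathcal R(\mathcal O)$.} Here $\mathcal O=\{(z,\alpha)\mapsto D(G(z,\alpha),\alpha)\}$ and the inputs $(z,\alpha)$ have norm at most $\sqrt{B_Z^2+1}\le B_Z+1$. I would first peel the $d$ layers of $D$ from the outside, exactly as above, which contributes the budget $\prod_{l=1}^d M_w(l)\prod_{s=1}^{d-1}K_\psi(s)$ (together with the additive depth terms) and reduces the problem to the vector-valued Rademacher complexity of the map $\Phi_G:(z,\alpha)\mapsto(G(z,\alpha),\alpha)$. The last coordinate of $\Phi_G$ is the fixed function $(z,\alpha)\mapsto\alpha$, whose Rademacher complexity is at most $B_\alpha/\sqrt m=1/\sqrt m$ by a direct second-moment bound; the first $d_X$ coordinates form the depth-$g$ network $G$, whose vector complexity I bound by a second round of the same peeling, contributing $\prod_{l=1}^g M_v(l)\prod_{s=1}^{g-1}K_\kappa(s)$ and input radius $B_Z+1$. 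Combining the $\alpha$-path and the $G$-path produces the prefactor $1+\prod_l M_w(l)\prod_s K_\psi(s)$ multiplying $\prod_l M_v(l)\prod_s K_\kappa(s)\,(B_Z+1)$, and merging the additive depth contributions of the $D$-peeling, the $G$-peeling, and the extra merge layer collapses to $\sqrt{(d+g+1)2\log 2}+1$, giving the claim with sample size $m$.

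\emph{Main obstacle.} The $\mathcal R(\mathcal D)$ estimate is essentially a direct invocation of a known bound. The delicate point is $\mathcal R(\mathcal O)$: since $D$ and $G$ both range over classes, one cannot apply the scalar contraction principle to the composition $D\circ\Phi_G$ naively, so one must either fold $D\circ G$ into a single norm-bounded deeper network (carrying the scalar $\alpha$ forward, which is what produces the extra ``$+1$'' layer and the ``$1+$'' prefactor) or use a vector-valued contraction combined with careful layerwise peeling; in either route, the bookkeeping of how the two Frobenius-norm budgets, the activation Lipschitz constants, and the combined depth enter must be tracked precisely to land on exactly the stated constants.
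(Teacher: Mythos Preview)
Your proposal is correct and follows essentially the same approach as the paper: both invoke the Golowich--Rakhlin--Shamir exponential-moment peeling (Theorem~1 of \citeSupp{golo:etal:18}) for $\mathcal R(\mathcal D)$, and for $\mathcal R(\mathcal O)$ the paper peels the $d$ layers of $D$ down to $\|\sum_i\epsilon_i[G(Z_i,\alpha_i),\alpha_i]\|$, then splits the $G$-part and the $\alpha$-part via convexity of $\exp$ (with a tuned weight $p$) before peeling $G$, which is exactly your ``$G$-path plus $\alpha$-path'' decomposition and yields the same $(1+C_D)C_G$ prefactor and $\sqrt{(d+g+1)2\log 2}+1$ depth term. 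The only cosmetic difference is that the paper carries out the split inside the log-MGF rather than phrasing it as a vector contraction, but the bookkeeping and constants coincide.
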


\begin{proof}
In this lemma, the positive homogeneous condition is necessary to characterize the Rademacher complexity of ${\cal D}$ and the composition class induced by $D\circ G$ in terms of the sample size and the characteristics of ${\cal D}$ and ${\cal G}$. By referring to Theorem~1 in \citetSupp{golo:etal:18} and the proof of Corollary~1  in \citeSupp{ji:etal:21}, the Rademacher complexity of ${\cal D}$ upper bounds
\begin{align*}
   {\cal R}({\cal D}) &\leq \dfrac{\bE\left[\sqrt{\sum_{j=1}^{n_e}\lVert[Q_j^{\alpha_j},\alpha_j]\rVert^2}\right]\prod_{l=1}^d M_w(l)\prod_{s=1}^{d-1}K_{\psi}(s)(\sqrt{2d\log 2}+1)}{n_e},\\
   &\leq  \dfrac{\sqrt{B_X^2+1}\prod_{l=1}^dM_w(l)\prod_{s=1}^{d-1}K_{\psi}(s)\sqrt{3d}}{\sqrt{n_e}}, 
\end{align*}
because of $\sqrt{2d\log 2}+1 \leq \sqrt{3d}$ and $\lVert[Q_j^{\alpha_j},\alpha_j]\rVert^2\leq B_X^2+1$.

For the composition function class ${\cal O}=\{D(G(z,\alpha),\alpha):D\in {\cal D}, G\in{\cal G}\}$, the proof has to consider the input $\alpha$ for $D$. Let's denote by the empirical Rademacher complexity $\hat{{\cal R}}({\cal O})=\bE_{\epsilon}\left[\sup_{D,G}\frac{1}{m}\sum_{i=1}^m\epsilon_i D(G(Z_i,\alpha_i),\alpha_i)\right]$. By following the proof of Theorem 1 in \citeSupp{golo:etal:18}, 
\begin{align*}
    m \hat{{\cal R}}({\cal O}) &= \bE_{\epsilon} \sup_{w_d,\cdots,V_1} \sum_{i=1}^m \epsilon_i w_d\psi_{d-1}(W_{d-1}(\cdots([G(Z_i,\alpha_i),\alpha_i])))\\
    &\leq \frac{1}{\lambda}\log \bE_{\epsilon} \sup \exp \left(\lambda \sum_{i=1}^m \epsilon_i w_d\psi_{d-1}(W_{d-1}\cdots)\right)\\
    &\leq \frac{1}{\lambda}\log \bE_{\epsilon} \sup \exp \left(\lambda \lVert w_d \rVert \left\lVert \sum_{i=1}^m \epsilon_i \psi_{d-1}(W_{d-1}\cdots)\right\rVert \right)\\
    &\leq \frac{1}{\lambda}\log \left(2\cdot \bE_{\epsilon} \sup \exp \left(\lambda M_w(d)K_{\psi}(d-1)\left\lVert \sum_{i=1}^m \epsilon_i W_{d-1}(\psi_{d-2}(\cdots))\right\rVert \right)\right) 
\end{align*}
where the last inequality comes from Lemma~1 in \citeSupp{golo:etal:18}. Let $C_D=\prod_{l=1}^dM_w(l)\prod_{s=1}^{d-1}K_{\psi}(s)$.  By the same peeling-off argument, the last line is bounded above 
\begin{align*}
    &\leq \frac{1}{\lambda}\log \left(2^d\cdot \bE_{\epsilon} \sup_G \exp \left(\lambda C_D\left\lVert \sum_{i=1}^m \epsilon_i [G(Z_i,\alpha_i),\alpha_i]\right\rVert \right)\right) \\
    &\leq \frac{1}{\lambda}\log \left(2^d \bE_{\epsilon} \sup_G \exp \left(\lambda C_D\left\lVert \sum_{i=1}^m \epsilon_i G(Z_i,\alpha_i)\right\rVert+\lambda C_D\left\lvert \sum_{i=1}^m \epsilon_i\alpha_i \right\rvert \right)\right)\\
    &\leq \frac{1}{\lambda}\log \left(2^d \bE_{\epsilon} \sup_G \exp \left(p\frac{\lambda C_D}{p}\left\lVert \sum_{i=1}^m \epsilon_i G(Z_i,\alpha_i)\right\rVert+(1-p)\frac{\lambda C_D}{1-p}\left\lvert \sum_{i=1}^m \epsilon_i\alpha_i \right\rvert \right)\right),
\end{align*}
for some $0<p<1$. Since $\exp(x)$ is convex, the last line is further bounded by
\begin{align*}
    \leq \frac{1}{\lambda}\log \left(2^d\cdot\underbrace{\bE_{\epsilon} \sup_G p\exp \left(\frac{\lambda C_D}{p}\left\lVert \sum_{i=1}^m \epsilon_i G(Z_i,\alpha_i)\right\rVert\right)}_{\text{(I)}}+2^d\cdot\bE_{\epsilon}(1-p)\exp\left(\frac{\lambda C_D}{1-p}\left\lvert \sum_{i=1}^m \epsilon_i\alpha_i \right\rvert \right)\right),
\end{align*}
where, by the same peeling-off argument, 
\begin{align*}
    \text{(I)}\leq 2^g p\bE_{\epsilon} \exp\left(\frac{\lambda C_D}{p}\prod_{l=1}^g M_v(l)\prod_{s=1}^{g-1}K_{\kappa}(s)\left\lVert\sum_{i=1}^m \epsilon_i[Z_i,\alpha_i]\right\rVert\right).
\end{align*}
Let $C_G=\prod_{l=1}^g M_v(l)\prod_{s=1}^{g-1}K_{\kappa}(s)$ and write
\begin{align*}
    m \hat{\cal R}({\cal O})\leq \dfrac{1}{\lambda}\log\left(2^{d+g} p\bE_{\epsilon}\exp\left(\frac{\lambda C_D C_G}{p}\left\lVert\sum_{i=1}^m \epsilon_i[Z_i,\alpha_i]\right\rVert\right)+2^d(1-p)\bE_{\epsilon }\exp\left(\frac{\lambda C_D}{1-p}\left\lvert \sum_{i=1}^m \epsilon_i\alpha_i \right\rvert\right)\right).
\end{align*}
By setting $p=\frac{\lambda C_D C_G}{\lambda C_G + \lambda C_DC_G}$ and $k=\lambda C_G +\lambda C_DC_G$, then the last line is bounded above by 
\begin{align*}
    \leq \dfrac{1}{\lambda }\log\left(2^{d+g} \bE_{\epsilon}\exp\left(k\left\lVert\sum_{i=1}^m \epsilon_i[Z_i,\alpha_i]\right\rVert\right)+2^{d+g}\bE_{\epsilon }\exp\left(k\left\lvert \sum_{i=1}^m \epsilon_i\alpha_i \right\rvert\right)\right).
\end{align*}
Since $\left\lVert\sum_{i=1}^m \epsilon_i[Z_i,\alpha_i]\right\rVert\leq \lVert \sum_{i=1}^m\epsilon _i Z_i\rVert + \lvert\sum_{i=1}^m\epsilon_i\alpha_i\rvert $, the last line is simplified to 
\begin{align}
\label{eqn:RO}
    &\leq \dfrac{1}{\lambda }\log\left(2^{d+g+1} \bE_{\epsilon}\exp\left(\lambda(C_G+C_DC_G)\left(\left\lVert\sum_{i=1}^m \epsilon_i Z_i\right\rVert+\left\lvert \sum_{i=1}^m \epsilon_i\alpha_i\right\rvert \right)\right)\right)\\
    &=\dfrac{1}{\lambda}\log\left(2^{d+g+1}\bE_{\epsilon}\exp(\lambda U)\right)\notag,
\end{align}
where $U=C_* (\left\lVert\sum_{i=1}^m \epsilon_i Z_i\right\rVert+\left\lvert \sum_{i=1}^m \epsilon_i\alpha_i\right\rvert)$ with $C_*=C_G+C_DC_G$. 

Now we observe that $\bE[U]\leq C_*\left(\sqrt{\bE_{\epsilon }\lVert\sum_{i=1}^m \epsilon_i Z_i\rVert^2}+\sqrt{\bE_{\epsilon}\lvert \sum_{i=1}^m \epsilon_i\alpha_i\rvert^2}\right)$ by the linearity of the expectation and the Jensen's inequality. It is straightforward to show $\bE_{\epsilon }\lVert\sum_{i=1}^m \epsilon_i Z_i\rVert^2=\lVert\sum_{i=1}^mZ_i\rVert^2\leq mB_Z^2$ and also $\bE_{\epsilon}[\lvert\sum_{i=1}^m\epsilon_i\alpha_i\rvert^2]\leq m$ since $\epsilon_i \sim {\rm Unif}\{1,-1\}$ i.i.d. and the support of $Z$ and $\alpha$ is bounded. Moreover, we observe that 
\begin{align*}
&U(\epsilon_1,\dots,\epsilon_i,\dots,\epsilon_m)-U(\epsilon_1,\dots,-\epsilon_i,\dots,\epsilon_m)\\
&\leq C_*\left(\left\lVert \sum_{i=1}^m \epsilon_i Z_i - \sum_{i=1}^m\epsilon_i' Z_i\right \rVert +\left\lvert \sum_{i=1}^m\epsilon_i\alpha_i - \sum_{i=1}^m \epsilon_i'\alpha_i\right\rvert\right)\\ 
&\leq 2C_*\left(\left\lVert Z_i \right \rVert + \left\lvert \alpha_i \right \rvert \right),
\end{align*}
by $\lVert x\rVert- \lVert y\rVert\leq \lVert x-y\rVert$. Due to this bounded difference condition, $U-\bE_{\epsilon}(U)$ is a sub-Gaussian, and therefore it satisfies 
\begin{align*}
    \dfrac{1}{\lambda}\log \left(\bE_{\epsilon }\exp \left(\lambda(U-\bE_{\epsilon}(U)\right)\right)\leq \dfrac{1}{\lambda}\dfrac{\lambda^2 \sigma^2_U}{2}.
\end{align*}
where $\sigma^2_U=C_*^2\sum_{i=1}^m\left(\lVert Z_i\rVert +\lvert \alpha_i\rvert\right)^2$. Therefore, 
\begin{align*}
    \dfrac{1}{\lambda}\log\left(2^{d+g+1}\bE_{\epsilon}\exp(\lambda U)\right)&=\dfrac{(d+g+1)\log 2}{\lambda}+\dfrac{1}{\lambda}\log\left(\bE_{\epsilon}\exp(\lambda(U-\bE_{\epsilon}(U))\right)+\bE_{\epsilon}(U), \\
    &\leq \dfrac{(d+g+1)\log 2}{\lambda}+\dfrac{\lambda C_*^2\sum_{i=1}^m\left(\lVert Z_i\rVert +\lvert \alpha_i\rvert\right)^2}{2} + \sqrt{m}C_*(B_Z+1).
\end{align*}
Now, set $\lambda=\frac{\sqrt{(d+g+1)2\log 2}}{C_*\sqrt{\sum_{i=1}^m\left(\lVert Z_i\rVert +\lvert \alpha_i\rvert\right)^2}}$, and therefore,
\begin{align*}
m \hat{{\cal R}}({\cal O})\leq \sqrt{m}C_* (B_Z+1)\left(\sqrt{(d+g+1)2\log 2}+1\right),
\end{align*}
so
\begin{align*}
    {\cal R}({\cal O})=\bE\left[\hat{{\cal R}}({\cal O})\right]\leq \dfrac{C_* (B_Z+1)\sqrt{(d+g+1)2\log2}+1}{\sqrt{m}}.
\end{align*}
where $C_*=\left(1+\prod_{l=1}^d M_w(l)\prod_{s=1}^{d-1}K_{\psi}(s)\right)\prod_{l=1}^g M_v(l)\prod_{s=1}^{g-1}K_{\kappa}(s)$.


\begin{remark}
\label{remark:dropa4}
The positive homogeneous condition can be alleviated to include other nonlinear-type activation functions, e.g., Tanh, shifted Sigmoid, etc. \citeSupp{golo:etal:18} showed that the Rademacher complexity in Lemma~\ref{lem:rade} can also be characterized with the Lipschitz activation function $\sigma(x)$ satisfying $\sigma(0)=0$ if the maximal 1-norm of the rows of the weight matrices (i.e., $\lVert W_i\rVert_{1,\infty}=\max_j \lVert \bw_{i,j} \rVert_1\leq M_w(i)$ and also for $V_i$) are bounded. The following paragraph explains how to characterize ${\cal R}({\cal O})$ in Lemma~\ref{lem:rade} with the maximal 1-norm condition more specifically. 
\end{remark}

\paragraph{Under the maximal 1-norm condition} Deriving Lemma~\ref{lemma:ee} and characterizing ${\cal R}({\cal D})$ under the maximal 1-norm condition are straightforward based on the following basic property. For a matrix $A$ and a vector $b$, it follows that $\lVert Ab\rVert_{\infty} = \max_j \lvert {\bf a}_j^{\top}b \rvert=\max_j\sum_{j,i} \lvert a_{j,i} b_i \rvert \leq \max_j\sum_{j,i} \lvert a_{j,i}\rvert  \lVert b \rVert_{\infty}= \lVert A\rVert _{1,\infty}\lVert b\rVert_{\infty}$ where $\lVert (x_1,\dots,x_k)\rVert_{\infty}=\max_i \lvert x_i\rvert$ and ${\bf a}_i$ be the $i$th row vector of $A$. 

Here we provide a proof for the characterization of ${\cal R}({\cal O})$ in detail which includes the concatenation layer in ${\cal O}$ when transiting from $D$ to $G$. Suppose the weight matrices of $W_i$ and $V_i$ satisfy the bounded maximal 1-norm, instead of the Frobenius norm. Following the proof in the above with the modified condition and Lemma~2 in \citeSupp{golo:etal:18}, the step in \eqref{eqn:RO} can be shown as
\begin{align*}
    &\leq \dfrac{1}{\lambda }\log\left(2^{d+g+1} \bE_{\epsilon}\exp\left(\lambda C_*\left(\left\lVert\sum_{i=1}^m \epsilon_i Z_i\right\rVert_{\infty}+\left\lvert \sum_{i=1}^m \epsilon_i\alpha_i\right\rvert \right)\right)\right) \\
    &= \dfrac{1}{\lambda }\log\left(2^{d+g+1} \bE_{\epsilon}\exp\left(\lambda C_*\left(\max_j \left \lvert\sum_{i=1}^m \epsilon_i Z_{i,j}\right\rvert+\left\lvert \sum_{i=1}^m \epsilon_i\alpha_i\right\rvert \right)\right)\right)\\ 
    &\leq  \dfrac{1}{\lambda }\log\left(2^{d+g+1} \sum_{j=1}^{d_Z}\bE_{\epsilon}\exp\left(\lambda C_*\left(\left \lvert\sum_{i=1}^m \epsilon_i Z_{i,j}\right\rvert+\left\lvert \sum_{i=1}^m \epsilon_i\alpha_i\right\rvert \right)\right)\right).
\end{align*}
Now, let $A=\sum_{i=1}^m \epsilon_i Z_{i,j}$ and $B=\sum_{i=1}^m \epsilon_i\alpha_i$. By using the relationship $\exp(|x|)\leq \exp(x)+\exp(-x)$, we first observe that $\exp(\lambda C_*(|A|+|B|))$ is bounded above 
\begin{align*}
    \exp(\lambda C_*|A|)\exp(\lambda C_*|B|)\leq (\exp(\lambda C_*A)+\exp(-\lambda C_*A))(\exp(\lambda C_*B)+\exp(-\lambda C_*B)), 
\end{align*}
and also $\bE_{\epsilon}\exp(\lambda C_*(A+B))=\bE_{\epsilon}\exp(-\lambda C_*(A+B))$ and $\bE_{\epsilon}\exp(\lambda C_*(A-B))=\bE_{\epsilon}\exp(-\lambda C_*(A-B))$. In the meantime, $\bE_{\epsilon}\exp(\lambda C_*(A+B))$ is characterized to
\begin{align*}
    \prod_{i=1}^m\bE_{\epsilon} \exp\left(\lambda C_* \epsilon_i (Z_{i,j}+\alpha_i)\right)&=\prod_{i=1}^m\dfrac{\exp\left(\lambda C_* (Z_{i,j}+\alpha_i)\right)+\exp\left(-\lambda C_* (Z_{i,j}+\alpha_i)\right)}{2}, \\ 
    &\leq \exp\left(\dfrac{\lambda^2C_*^2\sum_{i=1}^m (Z_{i,j}+\alpha_i)^2}{2}\right),
\end{align*}
using the property $(\exp(x)+\exp(-x))/2\leq \exp(x^2/2)$, and also similarly, we have
\begin{align*}
  \bE_{\epsilon}\exp(\lambda C_*(A-B))  \leq \exp\left(\dfrac{\lambda^2C_*^2\sum_{i=1}^m (Z_{i,j}-\alpha_i)^2}{2}\right),
\end{align*}
Therefore, 
\begin{align*}
    &\sum_{j=1}^{d_Z}\bE_{\epsilon}\exp\left(\lambda C_*\left(\left \lvert\sum_{i=1}^m \epsilon_i Z_{i,j}\right\rvert+\left\lvert \sum_{i=1}^m \epsilon_i\alpha_i\right\rvert \right)\right)\\
    &\leq \sum_{j=1}^{d_Z}2\exp\left(\dfrac{\lambda^2C_*^2\sum_{i=1}^m (Z_{i,j}+\alpha_i)^2}{2}\right)+\sum_{j=1}^{d_Z}2\exp\left(\dfrac{\lambda^2C_*^2\sum_{i=1}^m (Z_{i,j}-\alpha_i)^2}{2}\right),\\
    &\leq \sum_{j=1}^{d_Z}4\exp\left(\dfrac{\lambda^2C_*^2\sum_{i=1}^m (|Z_{i,j}|+|\alpha_i|)^2}{2}\right),\\
    &\leq 4 d_Z  \max_{j}\exp\left(\dfrac{\lambda^2C_*^2\sum_{i=1}^m (|Z_{i,j}|+|\alpha_i|)^2}{2}\right).
\end{align*}
Thus, 
\begin{align*}
    m{\hat{\cal R}}({\cal O})\leq \dfrac{(d+g+1)\log2 + \log4d_Z}{\lambda}+\dfrac{\lambda C_*^2}{2}\max_j \sum_{i=1}^m (|Z_{i,j}|+|\alpha_i|)^2,
\end{align*}
By setting $\lambda=\sqrt{\frac{(d+g+1)\log2+\log d_Z}{C_*^2 \max_j \sum_{i=1}^m (|Z_{i,j}|+|\alpha_i|)^2}}$, we observe that the complexity relies on $m^{-1/2}$. 

\end{proof}
 
\subsection{Theorem~\ref{thm:lbd}}
We introduce the Fano's lemma in \citeSupp{ji:etal:21}.
\begin{lemma*}[Fano's Lemma]
For $M\geq 2$, assume that there exists $M$ hypotheses $\theta_0, \dots, \theta_M \in \Theta$ satisfying (i) $d(\theta_i,\theta_j) \geq 2s >0$ for all $0\leq i<j\leq M$; (ii) $\frac{1}{M}\sum_{i=1}^M KL(P_{\theta_i}||P_{\theta_0})\leq \alpha \log M$, $0<\alpha \leq 1/8$, where $d(\cdot,\cdot)$ is a semi-distance and $P_{\theta}$ is a probability measure with respect to the randomness of data $D$. Then, we have 
\begin{align*}
\inf_{\hat{\theta}} \sup_{\theta\in\Theta} P_{D\sim P_{\theta}}\left[ d(\hat{\theta},\theta) \geq s \right] \geq \dfrac{\sqrt{M}}{1+\sqrt{M}}\left(1-2\alpha - \dfrac{2\alpha}{\log M}\right).
\end{align*}
\end{lemma*}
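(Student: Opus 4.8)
The plan is to apply Fano's Lemma with the minimal number of hypotheses $M=2$ (i.e.\ Le~Cam's two-point method) and a construction supported on finitely many atoms of ${\cal Q}\times[0,1]$, arranged so that the neural distance between the two hypotheses is controlled exactly by a single width-one discriminator of the form $x\mapsto c(v^\top x)$.

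\textbf{Construction and separation.} First I would fix two points $q^*,q^{**}\in{\cal Q}$ with $\lVert q^*\rVert=\lVert q^{**}\rVert=B_X$ and $q^{**}=-q^*$ (in the worst case consistent with (A3), ${\cal Q}$ contains such an antipodal pair on the sphere of radius $B_X$), and set $v=(q^*,1)\in\mathbb R^{d_X+1}$, so $\lVert v\rVert^2=B_X^2+1=C_X^2$. Consider the width-one network $D_v(x)=M_w(d)\,\kappa_{d-1}\!\bigl(\cdots\kappa_1(M_w(1)\,v^\top x/C_X)\bigr)=c(v^\top x)$: its first layer is $W_1=(M_w(1)/C_X)\,v^\top$, which has Frobenius norm $M_w(1)$; its intermediate width-one weights equal $1$ (admissible after absorbing $M_w(i)\ge 1$); and its last weight has norm $M_w(d)$. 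Hence $D_v\in{\cal D}$, and by symmetry of ${\cal D}$ also $-D_v\in{\cal D}$. Evaluating $D_v$ at the four points $z_1=(q^*,1)$, $z_2=(q^*,0)$, $z_3=(-q^*,1)$, $z_4=(-q^*,0)$ yields precisely $c(C_X^2),\,c(B_X^2),\,c(1-B_X^2),\,c(-B_X^2)$. I would then let $P_0$ place mass $\tfrac14$ at each $z_i$ and let $P_1$ place mass $\tfrac14+\rho$ at the atoms whose $D_v$-value is nonnegative and $\tfrac14-\rho$ at the remaining two (this is a mass-conserving perturbation, valid for $\rho<\tfrac14$). Plugging $\pm D_v$ into $d_{\cal D}(P_0,P_1)=\sup_{D\in{\cal D}}\lvert\bE_{P_0}D-\bE_{P_1}D\rvert$ then gives, after a short case split on the sign of $1-B_X^2$ (and hence on $c(1-B_X^2),c(-B_X^2)$), the lower bound $d_{\cal D}(P_0,P_1)\ge\rho\bigl(\lvert c(C_X^2)\rvert+\lvert c(B_X^2)\rvert+\lvert c(1-B_X^2)\rvert+\lvert c(-B_X^2)\rvert\bigr)\ge\rho\bigl\lvert c(C_X^2)+c(B_X^2)+c(1-B_X^2)+c(-B_X^2)\bigr\rvert$, which plays the role of $2s$ in Fano's condition (i).

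\textbf{Information bound and Fano.} Since $P_0,P_1$ share the same finite support and differ only in atom weights by $O(\rho)$, $\mathrm{KL}(P_1\Vert P_0)\le\chi^2(P_1\Vert P_0)=O(\rho^2)$, and tensorizing over the $n_e$ i.i.d.\ samples, $\mathrm{KL}(P_1^{\otimes n_e}\Vert P_0^{\otimes n_e})=n_e\,\mathrm{KL}(P_1\Vert P_0)=O(n_e\rho^2)$. I would choose $\rho=\Theta(n_e^{-1/2})$, with the constant calibrated so that $n_e\,\mathrm{KL}(P_1\Vert P_0)\le\alpha\log 2$ for some $\alpha\le\tfrac18$ and, simultaneously, $s:=\tfrac12 d_{\cal D}(P_0,P_1)=C_{\mathrm{LB}}/\sqrt{n_e}$ with $C_{\mathrm{LB}}=\tfrac{\log 2}{160}\bigl\lvert c(C_X^2)+c(B_X^2)+c(1-B_X^2)+c(-B_X^2)\bigr\rvert$; the factor $\tfrac{\log 2}{160}$ is exactly the residue of matching the Fano constraint $n_e\,\mathrm{KL}\le\alpha\log 2$ to the separation bound above. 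Applying Fano's Lemma with $M=2$ and the resulting $\alpha$ then gives $\inf_{\hat p_{n_e}}\sup_{p}P[d_{\cal D}(p,\hat p_{n_e})\ge s]\ge\frac{\sqrt2}{1+\sqrt2}\bigl(1-2\alpha-\tfrac{2\alpha}{\log 2}\bigr)>0.55$, since $\tfrac{\sqrt2}{1+\sqrt2}\approx0.586$ and $\alpha$ is small; substituting $s=C_{\mathrm{LB}}/\sqrt{n_e}$ and restricting the sup to $\{P_0,P_1\}\subset{\cal P}_{{\cal Q},[0,1]}$ yields the claim.

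\textbf{Main obstacle.} The delicate step is the separation estimate: I must choose the four support points so that the augmented inner products with $v$ land exactly on $C_X^2,B_X^2,1-B_X^2,-B_X^2$, verify that the width-one network $D_v$ (together with $-D_v$) is an admissible element of ${\cal D}$, and arrange the $\pm\rho$ perturbation so that the four contributions $c(\cdot)$ add with consistent signs rather than cancel — this is precisely where the unusual additive form of $C_{\mathrm{LB}}$ comes from and where the case split on $B_X\lessgtr 1$ enters (for instance, ReLU activations force $c(1-B_X^2)=c(-B_X^2)=0$ when $B_X\ge 1$). Checking that $q^*,-q^*\in{\cal Q}$ in the worst case and that the intermediate unit weights satisfy the norm constraints are minor but necessary verifications; the $\chi^2$/KL estimate and the numerical bookkeeping in Fano's inequality are routine.
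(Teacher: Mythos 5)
Your proposal does not prove the statement it was written for. The statement is the generic Fano-type lemma itself — a purely information-theoretic bound involving an arbitrary parameter set $\Theta$, a semi-distance $d$, and hypotheses $\theta_0,\dots,\theta_M$ — which the paper simply quotes from Ji et al.\ (2021) without proof and then applies. Your argument begins by saying ``apply Fano's Lemma with $M=2$'' and uses the lemma as a black box to derive a minimax lower bound for the GAN estimation problem; as a proof of the stated lemma this is circular, and the conclusion you reach (the $C_{\text{LB}}/\sqrt{n_e}$ separation with probability $>0.55$) is the paper's Theorem~\ref{thm:lbd}, a different statement. A genuine proof of the lemma needs two ingredients that are entirely absent from your write-up: (i) the reduction from estimation to testing — since the $\theta_j$ are $2s$-separated in the semi-distance, any estimator $\hat\theta$ induces a test (pick the closest hypothesis), and by the triangle inequality $d(\hat\theta,\theta_j)\geq s$ whenever the test errs, so the minimax estimation probability is bounded below by the minimax error of an $(M+1)$-ary testing problem; and (ii) the information-theoretic lower bound on that testing error in terms of the averaged Kullback--Leibler divergence (Fano's inequality in the form of Tsybakov's Proposition 2.3/Theorem 2.5), which is exactly where the factor $\frac{\sqrt{M}}{1+\sqrt{M}}\left(1-2\alpha-\frac{2\alpha}{\log M}\right)$ and the restriction $0<\alpha\leq 1/8$ come from. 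None of this machinery appears in your proposal.

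What you did sketch is, in substance, the paper's own proof of Theorem~\ref{thm:lbd} that immediately follows the lemma in the supplement: four-atom distributions at $(\pm q_1,1)$ and $(\pm q_1,0)$, a rank-one critic realizing the values $c(C_X^2)$, $c(B_X^2)$, $c(1-B_X^2)$, $c(-B_X^2)$, a perturbation of size $\delta\asymp n_e^{-1/2}$ calibrated against the KL condition, and the final $>0.55$ bound — so as an account of that theorem your outline is broadly on track. Even there, note a mismatch with the lemma you invoke: you construct only two measures $P_0,P_1$ while claiming to apply the lemma with $M=2$, whereas the lemma requires hypotheses $\theta_0,\dots,\theta_M$ with $M\geq 2$ (at least three), and the paper accordingly builds three distributions $p_0,p_1,p_2$ with $d(p_i,p_j)\propto (j-i)\delta$. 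The bottom line, however, is that the statement actually assigned — Fano's lemma itself — is left unproved.
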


Now, let's consider the following hypothetical distribution 
    \begin{align*}
        p_u(q,\alpha) &= \begin{cases}
                        1/4 - u \delta, & \mbox{if } q=q_1,\alpha=1, \\
                        1/4 + u\delta, & \mbox{if } q=-q_1,\alpha=1, \\
                        1/4 - u \delta, & \mbox{if } q=q_1,\alpha=0, \\
                        1/4 + u\delta, & \mbox{if } q=-q_1,\alpha=0,
                        \end{cases}
    \end{align*}
    where $\lVert q_1\rVert=B_X$ for $q_1, -q_1 \in {\cal Q}$. 
    
    The $(l,k)$th element of $W_{i}$ for $1\leq i<d$ is denoted by $W_{i,l,k}$. The $k$th column vector of the $i$th layer is denoted by $W_{i,\cdot,k}$. The final layer $w_d$ is a $N_d^D \times 1$ vector, and $w_{d,l}$ denotes the $l$th element. We select $\bw^{\dagger} \in \bW$ such that $w^{\dagger}_{d,1}=M_w(d)$, $w^{\dagger}_{d,l}=0$ for $l\neq 1$, $W^{\dagger}_{i,1,1}=M_w(i)$ for $2\leq i\leq d-1$, $W^{\dagger}_{i,l,k}=0$ for $(l,k)\neq (1,1)$, $W^{\dagger}_{1,\cdot,1}=M_w(1)\frac{\tilde{q}}{\lVert \tilde{q}\rVert}$, and $W^{\dagger}_{1,\cdot,l}={\bf 0}$ for $l\neq 1$ where ${\bf 0}$ is a zero vector and $\tilde{q}=(q_1,1)$. Then the value of $D$ at each point is  
    \begin{align*}
        D(q,\alpha) = \begin{cases}
            M_w(d)\left(\kappa_{d-1}\left(\cdots M_w(1)\sqrt{B_X^2 + 1}\right)\right) 
            &\mbox{if } q=q_1,\alpha=1, \\ 
            M_w(d)\left(\kappa_{d-1}\left(\cdots M_w(1)\frac{B_X^2}{\sqrt{B_X^2 + 1}}\right)\right) & \mbox{if } q=q_1,\alpha=0, \\
            M_w(d)\left(\kappa_{d-1}\left(\cdots M_w(1)\frac{1-B_X^2}{\sqrt{B_X^2 + 1}}\right)\right) & \mbox{if } q=-q_1,\alpha=1, \\
             M_w(d)\left(\kappa_{d-1}\left(\cdots M_w(1) \frac{-B_X^2}{\sqrt{B_X^2+1}} \right)\right) & \mbox{if } q=-q_1,\alpha=0. 
        \end{cases}
    \end{align*}

     \noindent For $0\leq i < j \leq 2$, the neural distance $d$ is described as follows, 
    \begin{align*}
        d(p_i,p_j)&=\sup_{D} |\bE_{p_i} D(\Qal,\alpha)-\bE_{p_j} D(\Qal,\alpha)| \\
        &= (j-i)\delta |(D(q_1,1)-D(-q_1,1)) + (D(q_1,0)-D(-q_1,0))|,\\
        &\geq \delta  |(D(q_1,1)+D(q_1,0)) - (D(-q_1,0)+D(-q_1,1))|, 
    \end{align*}
    \noindent On the basis of the distribution, we set $\delta = \log(2)/(80\sqrt{n}) < 0.005$. 
    \begin{align*}
        n \text{KL}(p_i || p_0)&=2n\left(\dfrac{1}{4}-i\delta\right)\log(1-4i\delta)+2n\left(\dfrac{1}{4}+i\delta\right)\log(1+4i\delta), \\ 
        &=\dfrac{n}{2}\log(1-4^2i^2 \delta^2) + 2ni\delta\log\left(1+\dfrac{8i\delta}{1-4i\delta}\right), \\
        & \leq n 4^2 i^2 \delta^2 \left(\dfrac{1}{2}\times \dfrac{1+4i\delta}{1-4i\delta}\right),\\
        & \leq n 4^2 i^2 \delta^2,
    \end{align*}
    so we have 
    \begin{align*}
        \dfrac{1}{2}\sum_{i=1}^2n\text{KL}(p_i||p_0)\leq 80n\delta^2\leq \dfrac{\log (2)}{80}\log(2).
    \end{align*} 
    Hence, by Fano's lemma, we obtain 
    \begin{align*}
        \inf_{\hat{p}_n} \sup_{p_{\Qal,\alpha} \in {\cal P}_{{\cal Q},[0,1]}} P\left[d(p_{\Qal,\alpha},\hat{p}_n) \right] \geq \dfrac{\sqrt{2}}{1+\sqrt{2}}\left(\dfrac{39}{40}-\dfrac{\log(4)}{40}\right) > 0.55.
    \end{align*}

\subsection{Proposition~\ref{prop:var_linear}}

Let's denote $W_1 = [W_{1,1},W_{1,2}]$ with $W_{1,1}\in \mathbb{R}^{p_1}, W_{1,2}\in \mathbb{R}$. The derivative w.r.t. $W_{1,1}$ can be expressed as 
    \begin{align*}
     \dfrac{\partial \hat{L}_b^{\alpha}(\Dt,\Gt)}{\partial W_{1,1}}&=\frac{1}{n_b}\sum_{i=1}^{n_b}Q^{\alpha_i}_i - \frac{1}{m_b} \sum_{j=1}^{m_b}\Gt(Z^{\alpha_j}_j,\alpha_j). 
    \end{align*}
    By the iterative rule of the covariance, 
    \begin{align*}
        \cov(Q^{\alpha_i}_i)&=\bE(\cov(Q^{\alpha_i}_i|\alpha_i)) + \cov(\bE(Q^{\alpha_i}_i|\alpha_i)), \\
        &=\bE((\alpha_i^2 + (1-\alpha_i)^2)\cov(X_1)).
    \end{align*}
    Note $\cov(\bE(Q^{\alpha_i}_i|\alpha_i))=\cov(\alpha_i \bE(X_i) + (1-\alpha_i)\bE(X_j))=0$. Since $\alpha_i\sim r \delta_1(\cdot) + (1-r) p_{\alpha^*}(\cdot)$,
    \begin{align*}
        \bE((\alpha_i^2 + (1-\alpha_i)^2)\cov(X_1))&=(r+(1-r)\bE_{\alpha\sim{\rm Unif}[0,1]}(2\alpha^2-2\alpha+1))\cov(X_1), \\
        &= \left(\dfrac{r}{3}+\dfrac{2}{3}\right)\cov(X_1). 
    \end{align*}
    By the assumption of the generator, we also obtain $\cov(G(Z^{\alpha_i}))=\bE(\alpha_i^2 + (1-\alpha_i)^2 \cov(G(Z_1,1)))$. Therefore, 
    \begin{align}
    \label{var_W11}
        \cov\left(\dfrac{\partial \hat{L}^{\alpha}_b(\Dt,\Gt)}{\partial W_{1,1}}\right)= \left(\dfrac{2}{3}+\dfrac{r}{3}\right)\left(\dfrac{\cov(X_1)}{n_b}+\dfrac{\cov(G(Z_1,1))}{m_b}\right). 
    \end{align}
    For the single parameter $W_{1,2}$, the derivative is 
    \begin{align*}
        \dfrac{\partial \hat{L}^{\alpha}_b(\Dt,\Gt)}{\partial W_{1,2}}=\dfrac{1}{n_b}\sum_{i=1}^{n_b} \alpha_i - \dfrac{1}{m_b}\sum_{j=1}^{m_b} \alpha_j,
    \end{align*}
    and its variance is
    \begin{align*}
        \var\left(\dfrac{\partial \hat{L}^{\alpha}_b(\Dt,\Gt)}{\partial W_{1,2}}\right)=\var(\alpha_i)\left(\dfrac{1}{n_b}+\dfrac{1}{m_b}\right), 
    \end{align*}
    where the maximum $0\leq \var(\alpha_i)\leq \frac{1}{9}$ is found at $r=1/3$ and the variance is 0 at $r=1$. On the other hand, the counterpart gradient's variance is 
     \begin{align*}
         \text{tr}\left(\cov\left(\dfrac{\partial \hat{L}_b^1(\Dt,\Gt)}{\partial W_1}\right)\right)=\left(\dfrac{\text{tr}(\cov(X_1))}{n_b}+\dfrac{\text{tr}(\cov(G(Z_1,1)))}{m_b}\right).
     \end{align*}
     Therefore, \eqref{var_W11} implies that 
     \begin{align*}
         \text{tr}\left(\cov\left(\dfrac{\partial \hat{L}^{\alpha}_b(\Dt,\Gt)}{\partial W_{1,1}}\right)\right)&= \left(\dfrac{2}{3}+\dfrac{r}{3}\right)\left(\dfrac{\text{tr}(\cov(X_1))}{n_b}+\dfrac{\text{tr}(\cov(G(Z_1,1)))}{m_b}\right) \\ 
         &=\left(\dfrac{2}{3}+\dfrac{r}{3}\right)\text{tr}\left(\cov\left(\dfrac{\partial \hat{L}_b^1(\Dt,\Gt)}{\partial W_1}\right)\right),
     \end{align*}
    Note the derivative of $W_{1,2}$ for $\hat{L}_b^1$ has no variability. Hence, we have 
    {\small 
    \begin{align*}
        \text{tr}\left(\cov\left(\dfrac{\partial \hat{L}^{\alpha}_b(\Dt,\Gt)}{\partial W_{1}}\right)\right)-\var\left(\dfrac{\partial \hat{L}^{\alpha}_b(\Dt,\Gt)}{\partial W_{1,2}}\right)=\left(\dfrac{2}{3}+\dfrac{r}{3}\right)\text{tr}\left(\cov\left(\dfrac{\partial \hat{L}_b^1(\Dt,\Gt)}{\partial W_1}\right)\right).
    \end{align*}
    }
    We observe
    {\small 
    \begin{align*}
        \text{tr}\left(\cov\left(\dfrac{\partial \hat{L}^{\alpha}_b(\Dt,\Gt)}{\partial W_{1}}\right)\right)-\text{tr}\left(\cov\left(\dfrac{\partial \hat{L}_b^1(\Dt,\Gt)}{\partial W_1}\right)\right) \\
        = \left(\dfrac{r-1}{3}\right)\text{tr}\left(\cov\left(\dfrac{\partial \hat{L}_b^1(\Dt,\Gt)}{\partial W_1}\right)\right)+\var(\alpha_i)\left(\dfrac{1}{n_b}+\dfrac{1}{m_b}\right).
    \end{align*}
    Therefore, the condition $3\var(\alpha_i)\leq \text{tr}(\cov(X_1))+\text{tr}(\cov(G(Z_1,1)))$ is found. 
    }
\newpage

\section{Implementation for PTGAN}
\label{supp:algorithm}

This section explains the implementation in detail for PTGAN and FairPTGAN and suggests using interpolated reference noises as well to advance the flexibility of the generator. The implementation of PTGAN is similar to the usual GAN training except for the construction of training samples at every iteration. 

\subsection{Algorithm}

\paragraph{PTGAN} Algorithm~\ref{alg:ptgan} describes handling the discrepancy term $d_{\cal D}(p_{\Qal,\alpha}, p_{G(Z, \alpha),\alpha}) = \sup_{D \in {\cal D}}\{{\bf E}_{\Qal,\alpha} [D(\Qal, \alpha)] - {\bf E}_{Z,\alpha}[D(G(Z,\alpha),\alpha)]\}$ and the coherency penalty \eqref{penalty} within the gradient descent/ascent framework. The algorithm consists of mainly four parts: 1) creating minibatch for $\Qal$ (Algorithm~\ref{alg:minibatch}), 2) constructing the penalty $H$ and the minibatch loss $\hat{L}_b$, 3) taking the gradient-ascent step for $\Dt$, and 4) taking the gradient-descent step for $\Gt$. In this work, we specify $T'=1$ and $\lambda=100$ in all simulation studies. Note, in Algorithm~\ref{alg:minibatch}, $q_{(i)}^{(2)}$, $\nu_{(i)}$, and $\alpha_{(i)}^{(2)}$ are for the penalty term. 

\paragraph{FairPTGAN} The optimization scheme shares Algorithm~\ref{alg:ptgan} except for the minibatch constrution by replacing MC($\{x_i\}_{i=1}^{n}$) (Algorithm~\ref{alg:minibatch}) with MC($\{x_i^{(0)}\}_{i=1}^{n_0},\{x_i^{(1)}\}_{i=1}^{n_1}$) (Algorithm~\ref{alg:fair-minibatch}). Algorithm~\ref{alg:fair-minibatch} shows how to construct a minibatch where observed samples are partitioned in accordance with the binary group label $A\in \{0,1\}$. 

\begin{algorithm}[ht!]
\caption{Minibatch Construction (MC) for \textbf{PTGAN}}\label{alg:minibatch}
\KwData{$\{x_i\}_{i=1}^{n}$. The subscript $(i)$ denotes the $i$th randomly selected sample.} 
\KwResult{$\{q_{(i)}^{(1)}\}_{i=1}^{n_b}$, $\{q_{(i)}^{(2)}\}_{i=1}^{n_b}$, $\{\tilde{q}_{(i)}\}_{i=1}^{n_b}$, $\{\alpha_{(i)}^{(1)}\}_{i=1}^{n_b}$, and $\{\tilde{\alpha}_{(i)}\}_{i=1}^{n_b}$}

Randomly choose $\{x_{(i)}\}_{i=1}^{n_b}$ and $\{x_{(i)}'\}_{i=1}^{n_b}$ from $\{x_i\}_{i=1}^n$ independently; \\
Generate $\{\alpha_{(i)}^{(1)}\}_{i=1}^{n_b}\sim p_{\alpha}$, $\{\alpha_{(i)}^{(2)}\}_{i=1}^{n_b}\sim {\rm Unif}(0,1)$, and $\{\nu_{(i)}\}_{i=1}^{n_b}\sim {\rm Unif}(0,1)$; \\

Create $\{q_{(i)}^{(1)}=\alpha_{(i)}^{(1)} x_{(i)} + (1-\alpha_{(i)}^{(1)}) x_{(i)}'\}_{i=1}^{n_b}$, $\{q_{(i)}^{(2)}=\alpha_{(i)}^{(2)} x_{(i)} + (1-\alpha_{(i)}^{(2)}) x_{(i)}'\}_{i=1}^{n_b}$, $\{\tilde{q}_{(i)}=\nu_{(i)} q_{(i)}^{(1)} + (1-\nu_{(i)}) q_{(i)}^{(2)}\}_{i=1}^{n_b}$, and $\{\tilde{\alpha}_{(i)}=\nu_{(i)}\alpha_{(i)}^{(1)} + (1-\nu_{(i)})\alpha_{(i)}^{(2)}\}_{i=1}^{n_b}$; \\ 
\end{algorithm}

\begin{algorithm}[ht!]
\caption{Minibatch Construction (MC) for \textbf{FairPTGAN}}\label{alg:fair-minibatch}
\KwData{$\{x_i^{(0)}\}_{i=1}^{n_0}$ and $\{x_i^{(1)}\}_{i=1}^{n_1}$ are the sets of either $A=0$ or $A=1$. The subscript $(i)$ denotes the $i$th randomly selected sample. Let $n_b'=n_b/2 < n_0,n_1$.}
\KwResult{$\{q_{(i)}^{(1)}\}_{i=1}^{n_b}$, $\{q_{(i)}^{(2)}\}_{i=1}^{n_b}$, $\{\tilde{q}_{(i)}\}_{i=1}^{n_b}$, $\{\alpha_{(i)}^{(1)}\}_{i=1}^{n_b}$, and $\{\tilde{\alpha}_{(i)}\}_{i=1}^{n_b}$}

Randomly choose $\{x_{(i)}^{(0)}\}_{i=1}^{n_b'}$ from $\{x_i^{(0)}\}_{i=1}^{n_0}$ and $\{{x'}_{(i)}^{(0)}\}_{i=1}^{n_b'}$ from $\{x^{(0)}_i\}_{i=1}^{n_0}$; \\ 

Randomly choose $\{x_{(i)}^{(1)}\}_{i=1}^{n_b'}$ from $\{x_i^{(1)}\}_{i=1}^{n_1}$ and $\{{x'}_{(i)}^{(1)}\}_{i=1}^{n_b'}$ from $\{x^{(1)}_i\}_{i=1}^{n_1}$; \\ 

Generate $\{\alpha_{(i)}^{(1)}\}_{i=1}^{n_b'}\sim p_{\alpha}$, $\{\alpha_{(i)}^{(2)}\}_{i=1}^{n_b'}\sim {\rm Unif}(0,1)$, and $\{\nu_{(i)}\}_{i=1}^{n_b'}\sim {\rm Unif}(0,1)$. \\

Create $\{\check{x}_{(i)}^{(1)}=\alpha_{(i)}^{(1)} x_{(i)}^{(0)} + (1-\alpha_{(i)}^{(1)}) x_{(i)}^{(1)}\}_{i=1}^{n_b'}$ and $\{\check{x}_{(i)}^{(2)}=(1-\alpha_{(i)}^{(1)}) x_{(i)}^{(0)} + \alpha_{(i)}^{(1)} x_{(i)}^{(1)}\}_{i=1}^{n_b'}$; \\ 

Create $\{\hat{x}_{(i)}^{(1)}=\alpha_{(i)}^{(2)} {x'}_{(i)}^{(0)} + (1-\alpha_{(i)}^{(2)}) {x'}_{(i)}^{(1)}\}_{i=1}^{n_b'}$ and $\{\hat{x}_{(i)}^{(2)}=(1-\alpha_{(i)}^{(2)}) {x'}_{(i)}^{(0)} + \alpha_{(i)}^{(2)} {x'}_{(i)}^{(1)}\}_{i=1}^{n_b'}$; \\ 

Produce $\{q_{(i)}^{(1)}\}_{i=1}^{n_b}=\{\check{x}_{(i)}^{(1)}\}_{i=1}^{n_b'}\cup \{\check{x}_{(i)}^{(2)}\}_{i=1}^{n_b'}$, $\{q_{(i)}^{(2)}\}_{i=1}^{n_b}=\{\hat{x}_{(i)}^{(1)}\}_{i=1}^{n_b'}\cup \{\hat{x}_{(i)}^{(2)}\}_{i=1}^{n_b'}$, $\{\tilde{q}_{(i)}=\nu_{(i)} q_{(i)}^{(1)} + (1-\nu_{(i)}) q_{(i)}^{(2)}\}_{i=1}^{n_b}$, 
and $\{\tilde{\alpha}_{(i)}=\nu_{(i)}\alpha_{(i)}^{(1)} + (1-\nu_{(i)})\alpha_{(i)}^{(2)}\}_{i=1}^{n_b}$; 
\end{algorithm}

\begin{algorithm}[ht!]
\caption{Parallelly Tempered Generative Adversarial Nets}\label{alg:ptgan}
\KwData{$\{x_i\}_{i=1}^n$ be a set of training data set. Set the training iteration $T$ and for the inner loop $T'$, the minibatch size $n_b=m_b$, $t=0$, the penalty size $\lambda$, the ratio $r$, the learning rate $\gamma_D$ and $\gamma_G$, and initialize $\bw^{(0)}$ and $\bv^{(0)}$.}
\KwResult{$\bv^{(T)}$}
{\small 
\While{$t \leq T$}{
    Set $t' = 0$ and $t = t + 1$; \\ 
    /\% Create Minibatch (Algorithm~\ref{alg:minibatch} or~\ref{alg:fair-minibatch}) \%/ \\
    $\{q_{(i)}^{(1)}\}_{i=1}^{n_b}$, $\{q_{(i)}^{(2)}\}_{i=1}^{n_b}$, $\{\tilde{q}_{(i)}\}_{i=1}^{n_b}$, $\{\alpha_{(i)}^{(1)}\}_{i=1}^{n_b}$, $\{\tilde{\alpha}_{(i)}\}_{i=1}^{n_b}$~=~MC($\{x_i\}_{i=1}^{n}$); \\ 
    
    Generate $\{z_{(i)}\}_{i=1}^{n_b}$ from $p_{Z}$,
    
    \While{$t' < T'$}{
    $t' = t' + 1$; \\ 
    /\% Evaluate the loss and penalty \%/ \\
    $\hat{L}_b(\bw^{(t)},\bv^{(t)}) = \frac{1}{n_{b}}\sum_{i=1}^{n_{b}}D_{\bw^{(t)}}(q_{(i)}^{(1)},\alpha_{(i)}^{(1)})-D_{\bw^{(t)}}(G_{\bv^{(t)}}(z_{(i)},\alpha_{(i)}^{(1)}),\alpha_{(i)}^{(1)})$; \\ 

    $\hat{H}(\bw^{(t)})=\frac{1}{n_b}\sum_{i=1}^{n_b} \left(\nabla_{\tilde{q}_{i}}D_{\bw^{(t)}}(\tilde{q}_{i},\tilde{\alpha}_{(i)})\cdot (q^{(1)}_{(i)}-q^{(2)}_{(i)})\right)^2$;

    /\% Update $\Dt$ \%/ \\
    $\bw^{(t+1)} = \bw^{(t)} + \gamma_D \frac{\partial}{\partial \bw}\left(\hat{L}_b(\bw^{(t)},\bv^{(t)}) - \lambda \hat{H}(\bw^{(t)})\right)$;\\ 
    }

    $\hat{L}_b^G(\bw^{(t+1)},\bv^{(t)})=-\frac{1}{n_{b}}\sum_{i=1}^{n_{b}}D_{\bw^{(t+1)}}(G_{\bv^{(t)}}(z_{(i)},\alpha_{(i)}^{(1)}),\alpha_{(i)}^{(1)})$; \\

    /\% Update $\Gt$ \%/ \\
    $\bv^{(t+1)} = \bv^{(t)} - \gamma_G \frac{\partial}{\partial \bv} \hat{L}_b^G(\bw^{(t+1)},\bv^{(t)})$; \\ 
}}
\end{algorithm}

\subsection{Interpolated reference variables}
\label{supp:inter_noise}
To enhance the flexibility of $G$, we also observe the effects of using the interpolated reference noise $\Zal = \alpha Z_{i} + (1 - \alpha) Z_{j}$ with $Z_{i}, Z_{j} \sim p_Z$ (instead of using $Z\sim p_Z$ as the input of $G$). Especially when the generator is not sufficiently large to learn the complexity of $p_X$, the use of $\Zal$ may be helpful. 

Intuitively, the generator network can be viewed as a transport mapping between input reference noise and the target distribution. Since $(\Zal,\alpha)$ shares similarly convex interpolating structure as $(\Qal,\alpha)$, we expect that transport from $(\Zal,\alpha)$ to $(\Qal,\alpha)$ can be less complex than that from $(Z,\alpha)$ to $(\Qal,\alpha)$. To be more specific, our generator shall satisfy the following relationship $G(\cdot,\alpha)\overset{d}{=} \alpha G(\cdot,1) + (1-\alpha) G(\cdot,0)$ where $\cdot$ represents the respective network input noise. When $\Zal$ is used, the above identity reduces to the linearity property which may be easier to approximate. 
As an extreme example, we consider a linear generator $G(z,\alpha)=\beta z$ where samples of $X$ and $Z$ are from ${\cal X} = \{-1,1\}$ and ${\cal Z} = \{-1,1\}$. Then it is straightforward to see that $\alpha x_1 + (1-\alpha) x_2$ can be reproduced by $G(\alpha z_1 + (1-\alpha) z_2,\alpha)$ but not by $G(z,\alpha)$. 

Figure~\ref{fig:mixture_zalpha} compares the performance when using either $\Zal$ or $Z$ as generator input for the 8-component mixture example (Figure~\ref{fig:mode_collapse}). We compare two 2-depth (i.e., 1 hidden and 1 output layers) generators whose intermediate layer have either $N_2^G=4$ or $N_2^G=256$ under our PTGAN scheme (Algorithm~\ref{alg:ptgan}), where the reference noise $Z=(Z_1,Z_2)\in\mathbb{R}^2$ where $Z_1 \sim {\rm Unif}(-1,1)$ and $Z_2 \sim {\rm Unif}\{-1,1\}$. The figure illustrates the logarithm of the 1-Wasserstein distance between $p_{\Qal}$ and $p_{\Gt(\Zal,\alpha)}$ (or $p_{\Gt(Z,\alpha)}$) for specific $\alpha=0.5$ and $\alpha=0.9$ over the training iterations, showing that the use of $\Zal$ can enhance the performance of the generator. 
\begin{figure}[ht!]
    \centering
    \includegraphics[width=1.0\linewidth]{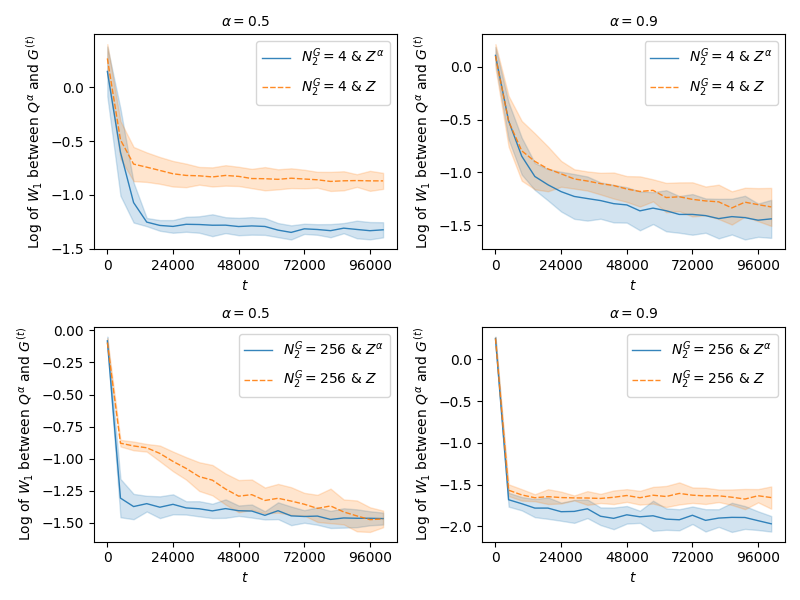}
    \caption{Plots illustrate the logarithm of the 1-Wasserstein distance between $p_{\Qal}$ and $p_{\Gt(\Zal,\alpha)}$ (or $p_{\Gt(Z,\alpha)}$) over training iterations. Each colored area stands for one standard deviation from the average line.}
    \label{fig:mixture_zalpha}
\end{figure}
This work, therefore, basically uses the interpolated reference variables for training of PTGAN and FairPTGAN. Notably, taking convex interpolation between reference variables does not affect our theoretical investigation in Section~\ref{sec:parallel}. To see more details, refer to SM~\ref{supp:proof}. In practice, however, if the generator and the reference noise is sufficiently flexible and large, the performance of $\Zal$ and $Z$ is similar.

\newpage

\section{Simulation}
\label{supp:simulation}

\subsection{Details of Figure~\ref{fig:value_variance}}
\label{appen:fig}

Proposition~\ref{prop:grad_lwbd} and Theorem~\ref{prop:grad_upbd1} imply that the size of weight matrices should be appropriately controlled to calculate the gradients' variance. The same ${\cal D}$ in drawing Figure~\ref{fig:mode_collapse} is used. In every iteration, therefore, $\Dt$ is trained with the penalty of \citeSupp{mesc:etal:18}. Since $\Gt$ is fixed to generate the left mode, it does not need to be updated. After the one-step update of $\Dt$, $\partial \hat{L}_b^i / \partial \bw$ is found for all $i=1,\dots,n_b$ where $\hat{L}_b^i=\Dt(X_i)-\Gt(Z_i)$, so that there are $n_b$ number of gradient for every single element in $\bw^{(t)}$. Then the variance is calculated elementwisely and summed up.

\subsection{Details in Section~\ref{sec:data_gen}}

\subsubsection{Image data generation}
\label{appen:image}

\paragraph{Evaluation metrics} Inception Score \citepSupp[IS,][]{sali:etal:16} and Fr\'echet Inception Distance \citepSupp[FID,][]{heus:etal:17} scores are calculated based on InceptionV3 \citepSupp{szeg:etal:15} which is a pre-trained classifier on {\bf ImageNet}. Let $p(y|x)$ be the classifier and denote by $\tilde{y}(x)$ the intermediate output after the global max-pooling layer in the classifier where labels and images match $y$ and $x$ respectively. The scores are calculated as follows: 
\begin{align*}
    \text{IS}&=\exp\left( \bE_{X\sim p_{G(Z)}}(\text{KL}(p(y|X)||p(y))) \right), \\
    \text{FID}&=\lVert \mu_X - \mu_G\rVert_2^2 + \text{tr}\left(\Sigma_X + \Sigma_{G} -2 \left(\Sigma_X^{\frac{1}{2}}\Sigma_{G} \Sigma_X^{\frac{1}{2}}\right)^{\frac{1}{2}}\right),
\end{align*}
where $\mu_X = \bE(\tilde{y}(X))$,  $\mu_G = \bE(\tilde{y}(G(Z)))$, $\Sigma_X = \cov(\tilde{y}(X))$, and $\Sigma_G = \cov(\tilde{y}(G(Z)))$. For IS, the lower the entropy of $p(y|G(Z))$ is, the higher fidelity images $G$ produces. The marginal probability $p(y)=\int p(y|G(z))p(G(z))p(z)dz$ having higher entropy implies more diversity. Therefore, as the discrepancy of the KL divergence increases, it can be said that the generator achieves higher performance on both the high quality of images and the diversity. On the one hand, FID measures the distance in the latent space under the assumption that the latent feature follows multivariate Gaussian distributions. 

\paragraph{Fine-tuning procedure to calculate IS/FID scores} The pre-trained InceptionV3 model is fine-tuned for {\bf BloodMnist} and {\bf CelebA-HQ} by updating the weight and bias parameters in later layers (after the 249th layer) of the model. After the global pooling layer, a dropout and a linear layer are placed whose size matches the output's dimension of each learning objective. For BloodMnist, the parameters are updated to minimize the cross-entropy based on eight different labels via the Adam optimizer and stopped by the early stopping process. Then, we calculate IS/FID scores using this fine-tuned Inception model. For CelebA-HQ, the model minimizes the sum of cross-entropy losses, each of which measures the discrepancy between a facial attribute and its corresponding probability. This is one simple way to execute multi-label learning to make the model figure out all facial attributes simultaneously. Other learning procedures are the same with BloodMnist. For CelebA-HQ, we report two FID scores where the first is based on the original pre-trained Inception model but the second on the fine-tuned model. Note that reporting the first-type FID is usual in the literature. 

\paragraph{Simulation setup for CIFAR10 and BloodMnist} The network architectures of $D$ and $G$ follow the CNN-based structure (Table~\ref{appen:cnn_spec}) used in the spectral normalization GAN training  \citepSupp{miya:etal:18}. A convolutional layer with $3\times 3$ kernel, 1 stride, and 64 filters is denoted as {[conv: $3\times 3$, 1, 64]}, and a deconvolutional layer is also written in the same way. For ours, the temperature $\alpha_i$ is concatenated to every hidden layer. For $p_x$, CIFAR10 and BloodMnist have 32 and 64. The total number of iterations $T$ is set to both 100k with minibatches having 100 data instances for each data set. Referring to \citeSupp{zhou:etal:19}, the Adam optimizer's hyperparameters \citepSupp{king:ba:14} are set to $\beta_1=0.0$ and $\beta_2=0.9$ with the learning rates for $D$ and $G$ as 0.0001. The spectral normalization layer (SN) is applied only to the original competitor \citepSupp{miya:etal:18}. The penalty parameters for $\lambda_{\text{MP}}$ \citeSupp[MP,][]{zhou:etal:19} and $\lambda_{\text{GP}}$ \citeSupp[GP,][]{gulr:etal:17} are specified as $\lambda_{\text{MP}}=1$ and $\lambda_{\text{GP}}=10$ by referring to their works. For PTGAN, the generator uses the interpolated uniform variables based on the discussion in \ref{supp:inter_noise} while competitors use the uniform distribution. IS/FID scores are measured at 10 different $t$ points that equally space the total number of iterations $T$ since the evaluation of IS and FID is computationally heavy. The best score is determined from those.  
\begin{table}[ht!]
\begin{subtable}[h]{0.45\textwidth}
\centering
\begin{tabular}{c}
\hline
\hline
$X~\in~\mathbb{R}^{p_x \times p_x \times 3}$ \\
\hline
{[conv: 3$\times 3$ , 1, 64]} (SN) lReLU(0.1) \\
{[conv: 4$\times 4$ , 2, 64]} (SN) lReLU(0.1) \\
\hline
{[conv: 3$\times 3$, 1, 128]}  (SN) lReLU(0.1) \\
{[conv: 4$\times 4$, 2, 128]} (SN) lReLU(0.1) \\
\hline
{[conv: 3$\times 3$, 1, 256]} (SN) lReLU(0.1) \\
{[conv: 4$\times 4$, 2, 256]} (SN) lReLU(0.1) \\
\hline
{[conv: 3$\times 3$, 1, 512]} (SN) lReLU(0.1) \\
\hline
dense $\rightarrow$ 1 \\ 
\hline
\hline
\end{tabular}
\caption{Critic}
\end{subtable}
\hfill
\begin{subtable}[h]{0.45\textwidth}
\centering
\begin{tabular}{c}
\hline
\hline
$Z~\in~\mathbb{R}^{128}$ \\
\hline
dense $\rightarrow M_Z \times M_Z \times 512$  \\
\hline
{[deconv: 4$\times 4$, 2, 256]} BN ReLU \\
\hline
{[deconv: 4$\times 4$, 2, 128]} BN ReLU \\
\hline
{[deconv: 4$\times 4$, 2, 64]} BN ReLU \\
\hline
{[deconv: 3$\times 3$, $S_G$, 3]} \\
\hline
reshape $p_x \times p_x \times 3$ \\ 
\hline
\hline
\end{tabular}
\caption{Generator}
\end{subtable}
\caption{Convolutional neural network structures for $D$ and $G$ in CIFAR10 $p_x=32$ and $M_Z=4$ with the stride $S_G=1$ and BloodMnist $p_x=64$ and $M_Z=4$ with $S_G = 2$.}
\label{appen:cnn_spec}
\end{table}

\paragraph{Simulation setup for CelebA-HQ} To adapt $256\times 256 \times 3$, we use a customized CNN-based structure shown in Table~\ref{appen:cnn_spec_celeba}. Considering our computational budget, we set $T$ as 70k and 50 size minibatch in every iteration. Other configurations follow the same setups used in the previous image experiments.
\begin{table}[ht!]
\begin{subtable}[h]{0.45\textwidth}
\centering
\begin{tabular}{c}
\hline
\hline
$X~\in~\mathbb{R}^{256 \times 256 \times 3}$ \\
\hline
{[conv: 4$\times 4$ , 2, 64]} lReLU(0.2) \\
\hline
{[conv: 4$\times 4$, 2, 128]} lReLU(0.2) \\
\hline
{[conv: 4$\times 4$, 2, 256]} lReLU(0.2) \\
\hline
{[conv: 4$\times 4$, 2, 512]} lReLU(0.2) \\
\hline
{[conv: 4$\times 4$, 2, 1024]} lReLU(0.2) \\
\hline
{[conv: 4$\times 4$, 2, 2048]} lReLU(0.2) \\
\hline
dense $\rightarrow$ 1 \\ 
\hline
\hline
\end{tabular}
\caption{Critic}
\end{subtable}
\hfill
\begin{subtable}[h]{0.45\textwidth}
\centering
\begin{tabular}{c}
\hline
\hline
$Z~\in~\mathbb{R}^{256}$ \\
\hline
dense $\rightarrow 4 \times 4 \times 512$  \\
\hline
{Up. [conv: 4$\times 4$, 1, 2048]} BN ReLU \\
\hline
{Up. [conv: 4$\times 4$, 1, 1024]} BN ReLU \\
\hline
{Up. [conv: 4$\times 4$, 1, 512]} BN ReLU \\
\hline
{Up. [conv: 3$\times 3$, 1, 256]} BN ReLU \\
\hline
{Up. [conv: 3$\times 3$, 1, 128]} BN ReLU \\
\hline
{Up. [conv: 3$\times 3$, 1, 3]} \\
\hline
reshape $p_x \times p_x \times 3$ \\ 
\hline
\hline
\end{tabular}
\caption{Generator}
\end{subtable}
\caption{Convolutional neural network structures for $D$ and $G$ in CelebA-HQ. Up. stands for an upsampling layer.}
\label{appen:cnn_spec_celeba}
\end{table}

\paragraph{Other GAN metrics} Our simulation study further considers the popular GAN metrics such as the Jensen-Shannon divergence \citepSupp[JSD,][]{good:etal:14} and the Pearson $\chi^2$ divergence  \citepSupp[PD,][]{mao:etal:17}. In our notations, their loss functions are written as: 
\begin{align*}
    \text{JSD} &= \sup_D \bE(\log D(X)) + \bE (\log (1-D(G(Z)))), \\
    \text{PD} &= \sup_D \dfrac{1}{2}{\bf E}_X\left((D(X)-1)^2\right) + \dfrac{1}{2}{\bf E}_Z\left(D(G(Z))^2\right).
\end{align*}
To see more details, refer to the original works. 

\paragraph{Penalty-based GAN training} The Lipschitz GAN \citepSupp{zhou:etal:19} uses the maximum penalty is defined as $\text{MP}= \lambda_{\text{MP}} \max_i \lVert \nabla_{\tilde{X}_i} D(\tilde{X}_i)\rVert^2$ where $\tilde{X}_i=\nu X_i + (1-\nu)G(Z_i)$ where $\nu$ is randomly drawn from ${\rm Unif}(0,1)$. The Wasserstein GAN with the gradient penalty \citepSupp{gulr:etal:17} uses $\text{GP}= \lambda_{\text{GP}} \bE ((\lVert \nabla_{\tilde{X}} D(\tilde{X}_i)\rVert-1)^2)$ where $\tilde{X}_i$ is the random interpolation as MP. 

\subsubsection{Additional results}
\label{supp:image_additional}
\paragraph{Different choice of hyperparameters} While we use the penalty parameter for MP and GP recommended in their papers, we find extra results with the different parameters of $\lambda_{\text{MP}}$ and $\lambda_{\text{GP}}$. Due to the limited computation resources, the results are only based on the neural distance, and they are shown in Table~\ref{simul:cifar10_extra}. Table~\ref{tab:impact_r} justifies the high value of $r$ because of the bias-variance trade-off. It highlights that the performance is worse when no interpolation points are used. 

\begin{table}[!ht]
\caption{Summary of IS/FID in CIFAR10 and BloodMnist for MP and GP. Standard deviations are averaged across 10 independent implementations. All values are rounded to the third decimal place.}
\label{simul:cifar10_extra}
\vspace{0.1in}
\centering
\footnotesize
\begin{tabular}{c|c|cc|cc}
\hline
&& \multicolumn{2}{c|}{CIFAR10} & \multicolumn{2}{c}{BloodMnist}  \\ 
\hline
$d_{\cal D}$ & Type & IS ($\uparrow$) & FID ($\downarrow$) & IS ($\uparrow$) & FID ($\downarrow$) \\ 
\hline
\hline
\multirow{4}{*}{ND}  
& MP $(\lambda_{\text{MP}}=10)$  & 6.833 (0.090) & 30.048 (0.979) & 4.998 (0.047) & 49.248 (1.007) \\
& MP $(\lambda_{\text{MP}}=100)$ & 6.722 (0.054) & 30.569 (0.457) & 4.939 (0.046) & 50.352 (2.076)  \\
& GP $(\lambda_{\text{GP}}=1)$   & 6.773 (0.145) & 29.903 (0.973) & 5.033 (0.039) &  50.035 (1.282) \\
& GP $(\lambda_{\text{GP}}=100)$ & 6.759 (0.090) & 29.545 (0.592) & 5.023 (0.034) &  48.841 (1.023) \\
\hline
\end{tabular}
\end{table}

\begin{table}[ht!]
\caption{Comparison by differing the hyperparameter $r$ for the neural distance}
\label{tab:impact_r}
\footnotesize
\centering
\begin{tabular}{c|cc|cc}
\hline
    & \multicolumn{2}{c|}{CIFAR10} & \multicolumn{2}{c}{BloodMnist} \\
    \hline
    & $r=1.0$      & $r=0.9$      & $r=1.0$        & $r=0.9$        \\
\hline
\hline
IS  &    6.885 (0.145)    &   \bf{7.248 (0.067)}    &  4.839 (0.072)  &  \bf{5.071 (0.058)}   \\
FID &   28.551 (2.028)   &    \bf{25.087 (0.962)}   &   56.348 (3.201) & \bf{41.989 (0.897)}   \\
\hline
\end{tabular}
\end{table}

\paragraph{Visual evaluation} Figures~\ref{fig_supp:cifar10_real} and \ref{fig_supp:cifar10_gen} display the original images and generated images in CIFAR10; Figures~\ref{fig_supp:blood_real} and \ref{fig_supp:blood_gen} are for BloodMnist; and Figures~\ref{fig_supp:celeba_real} and \ref{fig_supp:celeba_gen} are for CelebA-HQ. We particularly draw the generated images of PTGAN, SNGAN, and Lipschitz GAN (LGAN) where all methods are trained under the neural distance (ND). From our view, PTGAN and Lipschitz GAN are both partially successful in producing recognizable pictures in CIFAR10 while SNGAN seems not. In BloodMnist, SNGAN even shows mode collapse, i.e., generating similarly looking blood cells. For CelebA-HQ, PTGAN and LGAN have a similar level of visual quality from human perspective but their performance is separated by the Inception model.  
We acknowledge that the CNN-based generator adopted in Figures~\ref{appen:cnn_spec} and \ref{appen:cnn_spec_celeba} may not be large enough to learn the semantic details of all the modalities and may also need many more training iterations with additional training tricks such as scheduling learning rate, weight decay, classifier guided generative modeling, doubling feature maps, etc. However, to adapt to our limited computational resources and also to see the pure effects of the proposed method, this work uses the relatively light network to conduct extensive comparisons.

\begin{figure*}[ht!]
    \centering
    \includegraphics[width=\textwidth]{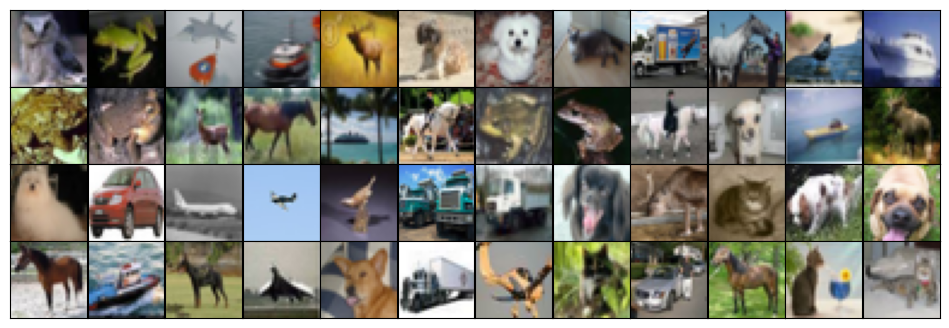}
    \caption{Randomly selected real images of CIFAR10}
\label{fig_supp:cifar10_real}
\end{figure*}

\begin{figure*}[ht!]
\centering
\begin{subfigure}[b]{1.00\textwidth}
    \includegraphics[width=\textwidth]{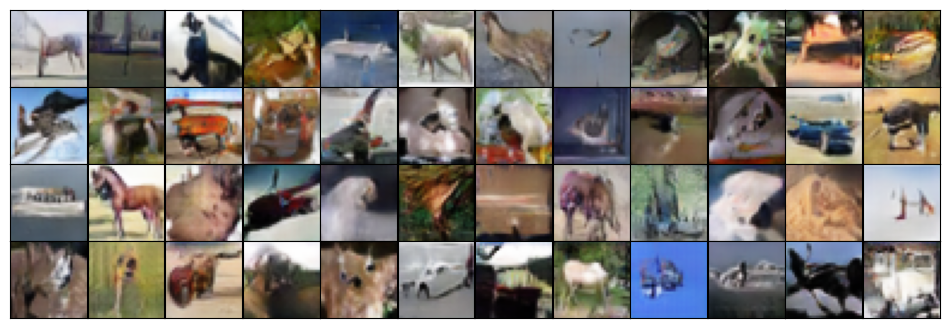}
    \caption{Results of PTGAN trained with ND metric}
\end{subfigure}
\begin{subfigure}[b]{1.00\textwidth}
    \includegraphics[width=\textwidth]{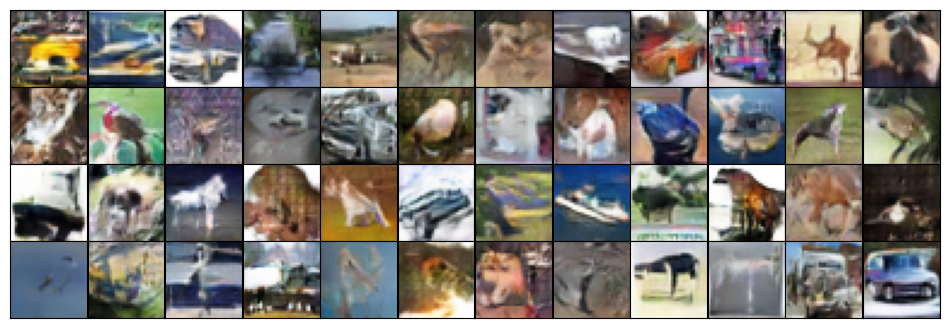}
    \caption{Results of Lipschitz GAN trained with ND metric}
\end{subfigure}
\begin{subfigure}[b]{1.00\textwidth}
    \includegraphics[width=\textwidth]{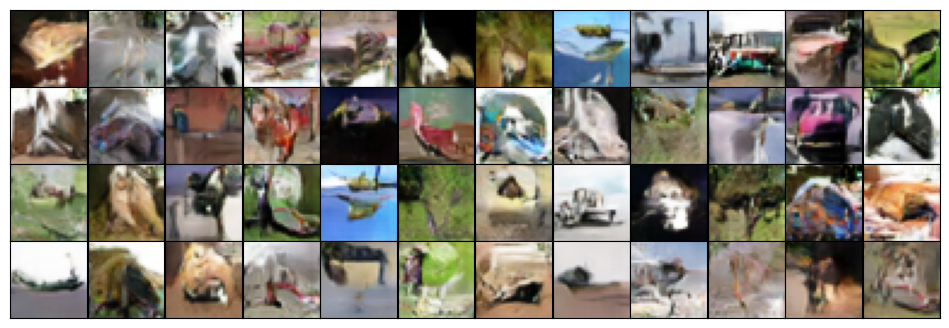}
    \caption{Results of SNGAN trained with ND metric}
\end{subfigure}
\caption{Randomly generated images for CIFAR10}
\label{fig_supp:cifar10_gen}
\end{figure*}

\begin{figure*}[ht!]
    \centering
    \includegraphics[width=\textwidth]{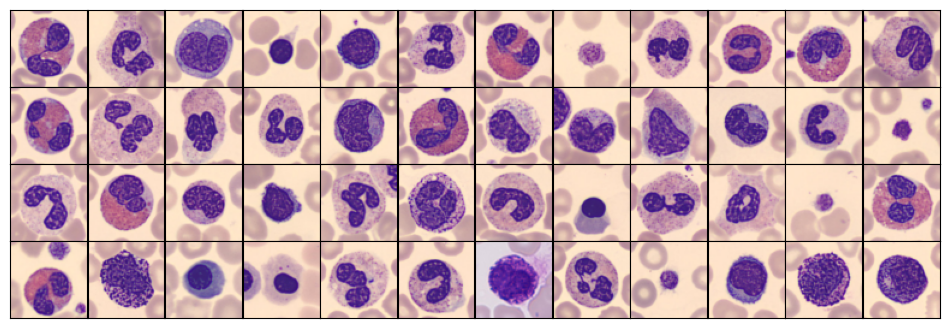}
    \caption{Randomly selected real images of BloodMnist}
    \label{fig_supp:blood_real}
\end{figure*}

\begin{figure*}[ht!]
\centering
\begin{subfigure}[b]{1.00\textwidth}
    \includegraphics[width=\textwidth]{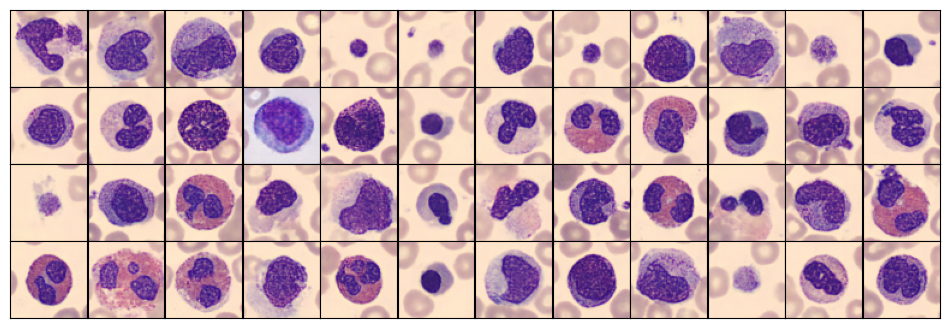}
    \caption{Results of PTGAN trained with ND metric}
\end{subfigure}
\begin{subfigure}[b]{1.00\textwidth}
    \includegraphics[width=\textwidth]{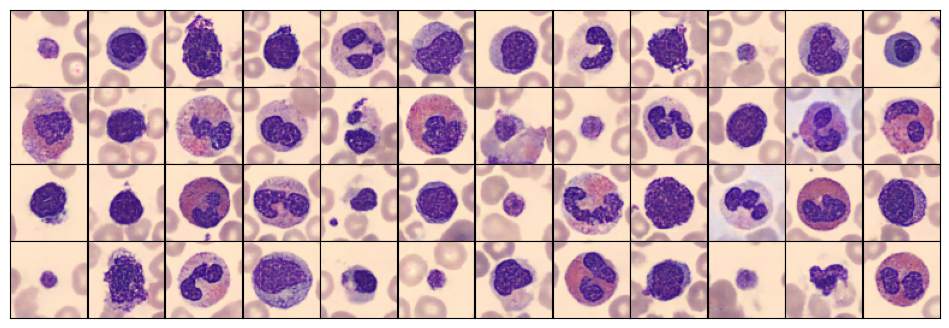}
    \caption{Results of Lipschitz GAN trained with ND metric}
\end{subfigure}
\begin{subfigure}[b]{1.00\textwidth}
    \includegraphics[width=\textwidth]{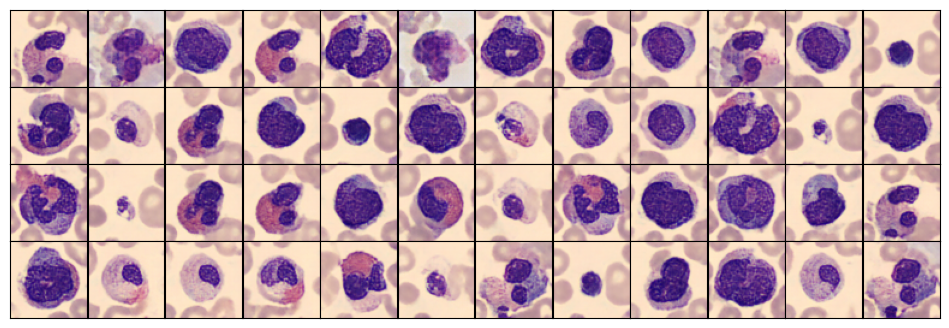}
    \caption{Results of SNGAN trained with ND metric}
\end{subfigure}
\caption{Randomly generated images for BloodMnist}
\label{fig_supp:blood_gen}
\end{figure*}

\begin{figure*}[ht!]
    \centering
    \includegraphics[width=\textwidth]{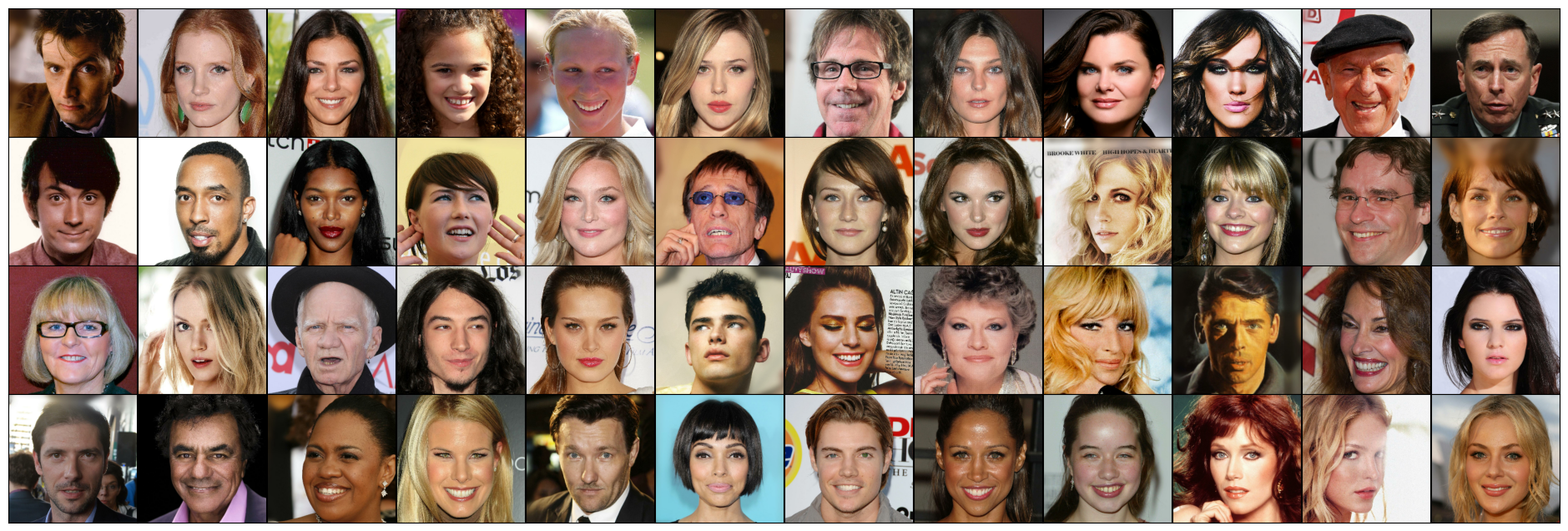}
    \caption{Randomly selected real images of CelebA-HQ}
    \label{fig_supp:celeba_real}
\end{figure*}

\begin{figure*}[ht!]
\centering
\begin{subfigure}[b]{1.00\textwidth}
    \includegraphics[width=\textwidth]{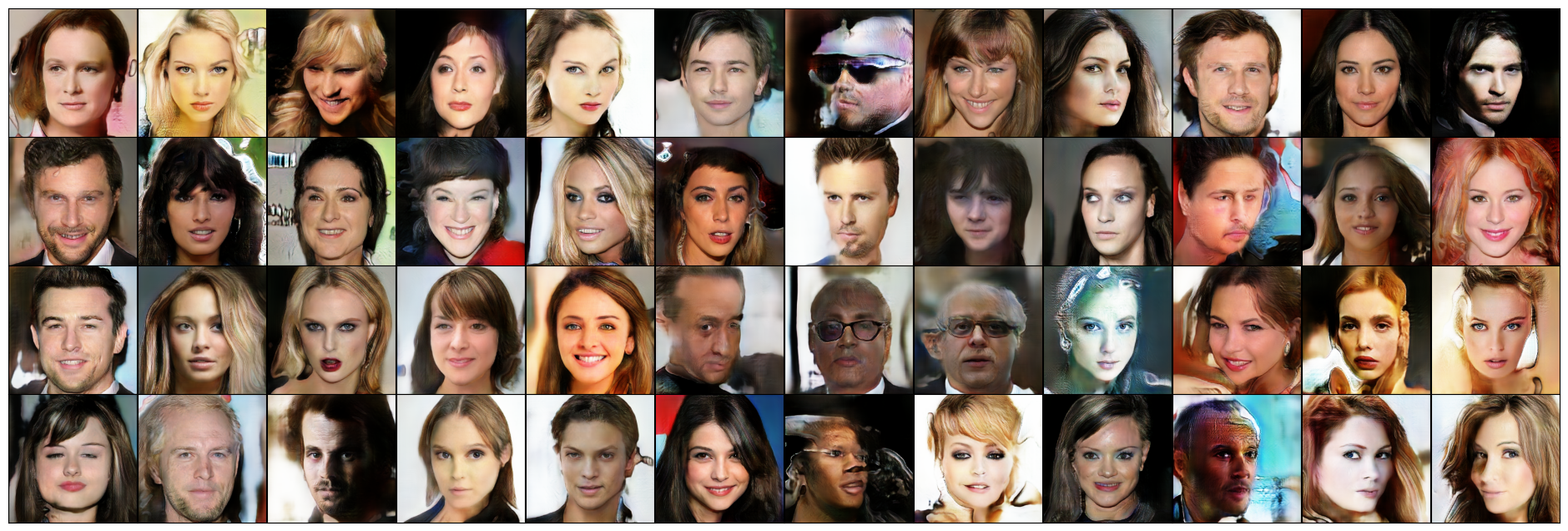}
    \caption{Results of PTGAN trained with ND metric}
\end{subfigure}
\begin{subfigure}[b]{1.00\textwidth}
    \includegraphics[width=\textwidth]{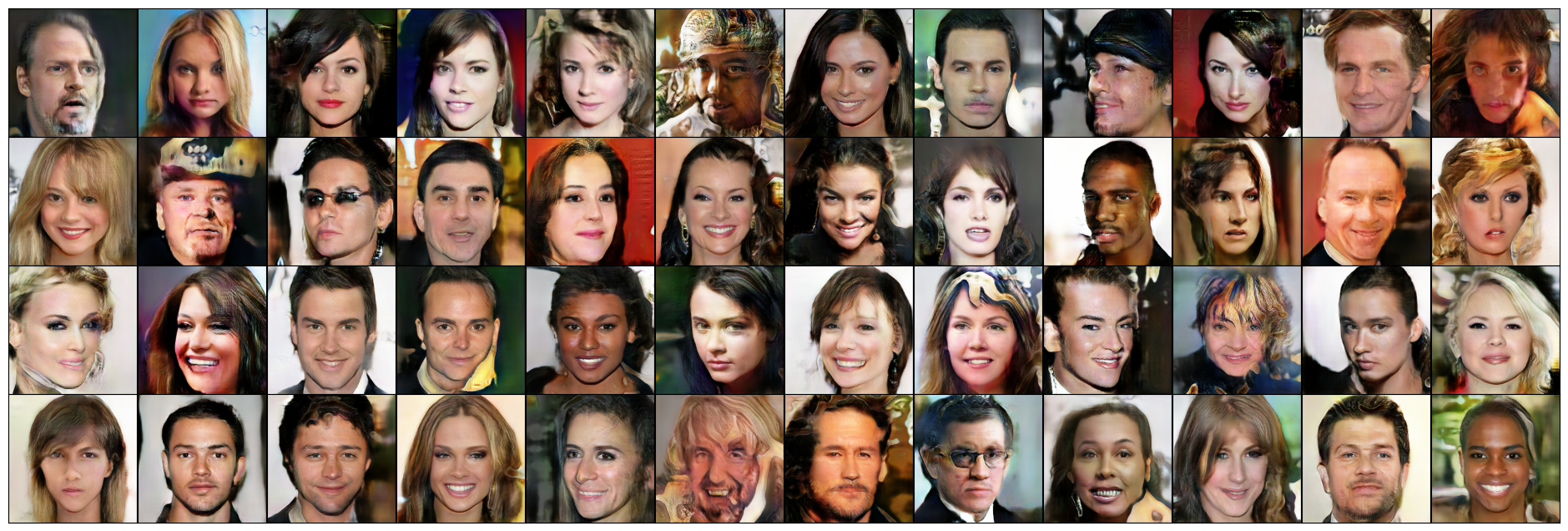}
    \caption{Results of Lipschitz GAN trained with ND metric}
\end{subfigure}
\caption{Randomly generated images for CelebA-HQ}
\label{fig_supp:celeba_gen}
\end{figure*}

\clearpage

\subsubsection{Tabular data generation}
\label{appen:tabgan}

\paragraph{Data description}  All models are tested on the following benchmark data sets: 
\begin{itemize}
    \item {\bf Adult} is for predicting whether an individual's annual income is greater than \$50K or not. The data consists of 32561 individuals with 15 variables, but we exclude `education' and `fnlwgt' by referring to the pre-processing step in \citeSupp{cho:etal:20}. 
    For more details about data, refer to \nolinkurl{https://archive.ics.uci.edu/ml/datasets/adult}.

    \item {\bf Law School Admission} consists of 124557 individuals with 15 variables. Considering the duplication of columns and rows, we select 'LSAT', 'GPA', 'Gender', 'Race', and 'resident'. The task is to predict whether an applicant receives admission. For more details about data, refer to \citeSupp{wigh:98}. 
    
    \item {\bf Credit Card Default} is for predicting whether or not a customer declares default. This data set includes 30000 individuals with 25 variables. We only drop `ID' in the simulation study. For more details about data, refer to \nolinkurl{https://archive.ics.uci.edu/dataset/350/default+of+credit+card+clients}.
\end{itemize}
In all data sets, continuous variables are scaled such that they are within $[-1,1]$. Discrete variables are transformed to one-hot encoding. 

\paragraph{Simulation setup} The network architectures of $D$ and $G$ are based on the dense layers as shown in Table~\ref{appen:dense_spec}. The generator consists of two parts to put different activation functions for continuous and discrete variables. The continuous variables are generated through [common]-[continuous] while each one-hot encoded discrete variable is individually generated through [common]-[discrete]. The final layer in [discrete] employs the Gumbel-softmax function \citepSupp{jang:etal:16} that enables the one-hot encoding procedure to be differentiable. The notations $d_{X}$, $d_{\text{continuous}}$, and $d_{\text{discrete}}$ are generic to denote the dimension of input space, the number of continuous variables, and the total number of discrete variables each of which is one-hot encoded. For each model, we implement 10 experiments with 200 epochs for Adult and Credit Card Default but 40 epochs for Law School Admission while they all have the 100 minibatch size. The Adam optimizer is set to be the same as used in the image generation tasks. For the data sets, the total number of iterations of $T$ is approximately 57k, 35k, and 53k for Adult, Law School Admission, and Credit Card Default. The evaluation of ${\mathtt S}_t$ is made at 50 equally spaced points in $\{0,\dots,T\}$.

\begin{table}[ht]
\begin{subtable}[h]{0.5\textwidth}
\centering
\begin{tabular}{c}
\hline
\hline
$X~\in~\mathbb{R}^{d_{X}}$ \\
\hline
dense 64 ReLU $\times$ 7 \\
\hline
dense $\rightarrow$ 1 \\ 
\hline
\hline
\end{tabular}
\caption{Critic}
\end{subtable}
\hfill
\begin{subtable}[h]{0.5\textwidth}
\centering
\begin{tabular}{c}
\hline
\hline
$Z~\in~\mathbb{R}^{16}$ \\
\hline
{[common]} dense 64 BN ReLU $\times$ 7 \\
\hline
{[continuous]} dense $d_{\text{continuous}}$ \\
\hline
{[discrete]} dense Gumbel-softmax $d_{\text{discrete}}$ \\
\hline
\hline
\end{tabular}
\caption{Generator}
\end{subtable}
\caption{Dense neural network structures for $D$ and $G$.}
\label{appen:dense_spec}
\end{table}

\paragraph{Additional results} We find further results of MP by differing the penalty parameter $\lambda_{\text{MP}}$. Table~\ref{simul:tabgan_mp} also shows that ours defeats the Lipschitz GAN model. For the consistent use of the parameter, the results of $\lambda_{\text{MP}}=1$ appear in the main text.

\begin{table}[ht!]
\caption{Summary of ${\mathtt S}_T$: all scores appearing below are the average of 10 replicated implementations. The standard deviation appears in the parenthesis.}
\label{simul:tabgan_full}
\vspace{0.1in}
\centering
\footnotesize
\begin{tabular}{c|c|c|ccc}
\hline
Data & $d_{\cal D}$ & Type & RF ($\downarrow$)& SVM ($\downarrow$)& LR ($\downarrow$)\\ 
\hline
\hline
\multirow{4}{*}{Adult}
& \multirow{2}{*}{JSD} & PT + CP & \bf{0.022 (0.004)} & \bf{0.037 (0.004)} & \bf{0.028 (0.003)} \\
&                        & MP & 0.059 (0.019) & 0.069 (0.022) & 0.058 (0.019) \\
& \multirow{2}{*}{PD} & PT + CP & \bf{0.023 (0.003)} & \bf{0.039 (0.007)} & \bf{0.026 (0.004)} \\
&                        & MP & 0.047 (0.021) & 0.054 (0.011) & 0.044 (0.011) \\
\hline
\multirow{4}{*}{Law School.}
& \multirow{2}{*}{JSD} & PT + CP & \bf{0.020 (0.014)} & \bf{0.023 (0.009)} & \bf{0.008 (0.006)} \\
&                        & MP & 0.093 (0.022) & 0.101 (0.022) & 0.068 (0.024) \\
& \multirow{2}{*}{PD} & PT + CP & \bf{0.019 (0.007)} & \bf{0.020 (0.004)} & \bf{0.006 (0.001)} \\
&                        & MP & 0.096 (0.017) & 0.099 (0.018) & 0.069 (0.016) \\
\hline
\multirow{4}{*}{Credit Card.}
& \multirow{2}{*}{JSD} & PT + CP& \bf{0.052 (0.009)} & \bf{0.061 (0.017)} & \bf{0.036 (0.008)} \\
&                        & MP   & 0.147 (0.021) & 0.164 (0.038) & 0.146 (0.030) \\
& \multirow{2}{*}{PD} & PT + CP & \bf{0.050 (0.009)} & \bf{0.046 (0.012)} & \bf{0.035 (0.010)} \\
&                        & MP & 0.126 (0.040) & 0.138 (0.047) & 0.122 (0.043) \\
\hline
\end{tabular}
\end{table}

\begin{table}[ht!]
\caption{Summary of ${\mathtt S}_T$ of MP: all scores appearing below are the average of 10 replicated implementations. The standard deviation appears in the parenthesis.}
\label{simul:tabgan_mp}
\vspace{0.1in}
\centering
\footnotesize
\begin{tabular}{c|c|c|ccc}
\hline
Data & $d_{\cal D}$ & Type & RF & SVM & LR \\ 
\hline
\hline
\multirow{6}{*}{Adult}
& \multirow{2}{*}{JSD} & MP ($\lambda_{\text{MP}} =10$) & 0.028 (0.014) & 0.043 (0.019) & 0.034 (0.015) \\
&                        & MP ($\lambda_{\text{MP}} =100$) & 0.030 (0.016) & 0.044 (0.010) & 0.034 (0.012) \\
& \multirow{2}{*}{PD} & MP ($\lambda_{\text{MP}} =10$) & 0.043 (0.024) & 0.051 (0.018) & 0.041 (0.020) \\
&                        & MP ($\lambda_{\text{MP}} =100$) & 0.035 (0.025) & 0.045 (0.015) & 0.034 (0.013) \\
&\multirow{2}{*}{ND} & MP ($\lambda_{\text{MP}} =10$) & 0.043 (0.025) & 0.047 (0.016) & 0.039 (0.013)  \\
&                      & MP ($\lambda_{\text{MP}} =100$) & 0.025 (0.014) & 0.042 (0.016) & 0.032 (0.012)   \\
\hline
\multirow{6}{*}{Law School.}
& \multirow{2}{*}{JSD} & MP ($\lambda_{\text{MP}} =10$) & 0.092 (0.023) & 0.095 (0.024) & 0.063 (0.025) \\
&                        & MP ($\lambda_{\text{MP}} =100$) & 0.064 (0.032) & 0.065 (0.027) & 0.038 (0.026)  \\
& \multirow{2}{*}{PD} & MP ($\lambda_{\text{MP}} =10$) & 0.079 (0.023) & 0.080 (0.026) & 0.057 (0.029)  \\
&                        & MP ($\lambda_{\text{MP}} =100$) & 0.059 (0.035) & 0.060 (0.033) & 0.037 (0.023) \\
&\multirow{2}{*}{ND} & MP ($\lambda_{\text{MP}} =10$) & 0.079 (0.020) & 0.084 (0.022) & 0.056 (0.018)   \\
&                      & MP ($\lambda_{\text{MP}} =100$) & 0.063 (0.027) & 0.066 (0.030) & 0.039 (0.027)  \\
\hline
\multirow{6}{*}{Credit Card.}
& \multirow{2}{*}{JSD} & MP ($\lambda_{\text{MP}} =10$) & 0.121 (0.051) & 0.126 (0.055) & 0.113 (0.058) \\
&                        & MP ($\lambda_{\text{MP}} =100$) & 0.134 (0.041) & 0.153 (0.035) & 0.132 (0.042) \\
& \multirow{2}{*}{PD} & MP ($\lambda_{\text{MP}} =10$) & 0.121 (0.057) & 0.127 (0.058) & 0.114 (0.061) \\
&                        & MP ($\lambda_{\text{MP}} =100$) & 0.147 (0.020) & 0.170 (0.023) & 0.154 (0.031) \\
&\multirow{2}{*}{ND} & MP ($\lambda_{\text{MP}} =10$) & 0.128 (0.045) & 0.136 (0.042) & 0.121 (0.040)   \\
&                      & MP ($\lambda_{\text{MP}} =100$) & 0.150 (0.021) & 0.174 (0.039) & 0.150 (0.023)  \\
\hline
\end{tabular}
\end{table}

\subsection{Details in Section~\ref{sec:tab_fairgen}}
\label{appen:tab_fairgan}

\paragraph{Evaluation metric} A Pareto frontier is a set of solutions that are not dominated by other pairs. For example, $(0.7,0.7)$, a pair of AUC and SP, is dominated by $(0.8,0.4)$ but not by $(0.6,0.6)$. To see more details, refer to \citeSupp{emme:etal:18}.

\paragraph{Implementation of FairPTGAN} The proposed FairPTGAN model first yields minibatches from Algorithm~\ref{alg:fair-minibatch} and then implements Algorithm~\ref{alg:ptgan} to learn $D$ and $G$. 

\paragraph{FairWGANGP and GeoRepair} \citeSupp{raja:etal:22} suggested two-step learning procedure: 1) training $\Gt$ up to $T$ iteration using WGANGP \citepSupp{gulr:etal:17} and then 2) regularizing $G^{(T+l)}(Z)$, for $l=1,\dots, T'$,  with the fairness penalty formulated as $\lambda_f |\bE(\tilde{Y}|\tilde{A}=1)-\bE(\tilde{Y}|\tilde{A}=0)|$ where $(\tilde{C},\tilde{A},\tilde{Y}) \sim G^{(T+l)}(Z)$. Thus, $\lambda_f$ controls the trade-off, and it is set to $\lambda_f=10$ by referring to \citeSupp{raja:etal:22}. In \citeSupp{feld:etal:15}, the authors proposed the geometric repair that transforms a univariate covariate $c$ to $(1-\lambda_p) F_a^{-1}(q) + \lambda_p F_A^{-1}(q)$ where $F_a(x)$ is the conditional cumulative distribution of $c$ given $a\in \{0,1\}$ and $F_A^{-1}(q)=\text{median}_{a\in \{0,1\}}F_a^{-1}(q)$ with $q=F_a(c)$. In our study, this pre-processing step is applied to the FairPTGAN model with $\alpha=1$ with 5 equally spaced $\lambda_p \in [0,1]$ considered. 

\paragraph{Simulation setup} The study particularly considers Adult and Law School Admission data sets showing evident discrimination impact on prediction tasks. For Adult, the ``race" variable is specified as a sensitive attribute that is binarized to be white and non-white. Similarly in Law School Admission, the ``White" variable is used as a sensitive attribute while ``Race" is dropped. For a fair comparison, the total number of iterations for both FairPTGAN and FairWGANGP is specified as $T=100$k but FairWGANGP has extra $T/2$ iterations for its second training phase with  $\lambda_f=10$. As mentioned, GeoRepair is implemented to the produced data set by FairPTGAN models with $\alpha=1$. For PTGAN, $r$ is set to 0.2. In all cases, the minibatch size is specified as 200. Other configurations are the same with Section~\ref{appen:tabgan}. To draw smooth Pareto-frontier curves, each run produces 20 independent data sets with the last iterate of the generator, i.e., $G^{(100k)}$ for FairPTGAN and $G^{(150k)}$ for FairWGANGP. Thus, 200 independent sets from the 10 independent runs are used to draw the results.

\paragraph{Additional results} Similar to Table~\ref{tab:fair_comp} in the main text, we draw Table~\ref{tab:fair_comp2} with different thresholds. It is noteworthy that FairPTGAN captures smoother trade-off curves than the two competitors. GeoRepair and FairWGANGP in Table~\ref{tab:fair_comp2} have the same scores, especially in LR with Table~\ref{tab:fair_comp}. 

\begin{table}[ht!]
\caption{Averages of the 10 smallest SP scores whose AUCs are greater than the thresholds ($\geq 0.70$ for Adult and $\geq 0.70$ for Law School). Standard deviations are in the parentheses next to the averages.}
\label{tab:fair_comp2}
\footnotesize
\centering
\begin{tabular}{c|c|ccc}
\hline
Data & Model & RF ($\downarrow$)& SVM ($\downarrow$)& LR ($\downarrow$) \\ 
\hline
\hline
\multirow{3}{*}{Adult}  
& \multirow{1}{*}{FairPTGAN} & \bf{0.008 (0.004)} & \bf{0.015 (0.009)} & \bf{0.058 (0.012)} \\
& \multirow{1}{*}{FairWGANGP}& 0.051 (0.009) & 0.075 (0.006) & 0.080 (0.005) \\
&\multirow{1}{*}{GeoRepair}  & 0.069 (0.007) & 0.039 (0.019) & 0.098 (0.012) \\
\hline
\multirow{3}{*}{Law School.} 
& \multirow{1}{*}{FairPTGAN}  & \bf{0.111 (0.018)} & \bf{0.107 (0.008)} & \bf{0.137 (0.014)} \\
& \multirow{1}{*}{FairWGANGP} & 0.147 (0.007) & 0.120 (0.005) & 0.175 (0.003) \\
&\multirow{1}{*}{GeoRepair}   & 0.119 (0.019) & 0.144 (0.004) & 0.182 (0.003) \\
\hline
\end{tabular}
\end{table}

\bibliographystyleSupp{apalike}
\bibliographySupp{supp}

\end{document}